\newcommand{\reals}{\mathbb{R}}
\newcommand{\simplex}{\Delta}
\newcommand{\ball}{\mathcal{B}}
\newcommand{\Proj}{\operatorname{Proj}}
\newcommand{\dom}{\operatorname{dom}}
\newcommand{\Ord}{\mathcal{O}}
\newcommand{\dx}{\mathrm{d}}
\newcommand{\expect}{\operatorname{\mathbf{E}}}
\newcommand{\supp}{\operatorname{supp}}
\newcommand{\bigbrace}[1]{\left\{\begin{array}{lr} #1 \end{array} \right.}
\newcommand{\otherwise}{\text{otherwise}}
\DeclareMathOperator*{\argmin}{argmin}
\DeclareMathOperator*{\argmax}{argmax}
\DeclareMathOperator*{\minimize}{minimize}
\DeclareMathOperator*{\maximize}{maximize}
\newcommand{\Bset}{\mathcal{B}}
\newcommand{\Mset}{\mathcal{M}}
\newcommand{\Qset}{\mathcal{Q}}
\newcommand{\pvec}{\mathbf{p}}
\newcommand{\Ppr}{\mathbb{P}}
\newcommand{\Pset}{\mathcal{P}}
\newcommand{\Sspace}{\mathcal{S}}
\newcommand{\Uset}{\mathcal{U}}
\newcommand{\xvec}{\mathbf{x}}
\newcommand{\Xspace}{\mathcal{X}}
\newcommand{\Xrv}{X}
\newcommand{\yvec}{\mathbf{y}}
\newcommand{\Yspace}{\mathcal{Y}}
\newcommand{\Yrv}{Y}
\newcommand{\zvec}{\mathbf{z}}
\newcommand{\Zspace}{\mathcal{Z}}
\newcommand{\Zrv}{Z}
\newcommand{\Wass}{\mathcal{W}}
\begin{document}

\title{Incorporating Unlabeled Data into\\Distributionally-Robust Learning}

\author{\name Charlie Frogner \email frogner@mit.edu
       \AND
       \name Sebastian Claici \email sclaici@mit.edu
       \AND
       \name Edward Chien \email edchien@mit.edu
       \AND
       \name Justin Solomon \email jsolomon@mit.edu \\
       \addr Computer Science \& Artificial Intelligence Laboratory (CSAIL)\\
       Massachusetts Institute of Technology\\
       Cambridge, MA 02139, USA}

\editor{Kevin Murphy and Bernhard Sch{\"o}lkopf}

\maketitle

\begin{abstract}
We study a robust alternative to empirical risk minimization called distributionally robust learning (DRL), in which one learns to perform against an adversary who can choose the data distribution from a specified set of distributions. We illustrate a problem with current DRL formulations, which rely on an overly broad definition of allowed distributions for the adversary, leading to learned classifiers that are unable to predict with any confidence. We propose a solution that incorporates unlabeled data into the DRL problem to further constrain the adversary. We show that this new formulation is tractable for stochastic gradient-based optimization and yields a computable guarantee on the future performance of the learned classifier, analogous to---but tighter than---guarantees from conventional DRL. We examine the performance of this new formulation on $14$ real datasets and find that it often yields effective classifiers with nontrivial performance guarantees in situations where conventional DRL produces neither. Inspired by these results, we extend our DRL formulation to active learning with a novel, distributionally-robust version of the standard model-change heuristic.  Our active learning algorithm often achieves superior learning performance to the original heuristic on real datasets.
\end{abstract}

\begin{keywords}
Distributionally robust optimization, Wasserstein distance, optimal transport, supervised learning, active learning
\end{keywords}

\section{Introduction}

Human learning is robust in ways that statistical learning struggles to replicate. Small changes to image pixel values and audio waveforms, for example, can dramatically alter the outputs of classifiers trained by conventional empirical risk minimization, while remaining imperceptible to human observers \citep{szegedy2013intriguing,carlini2018audio}. Robustness to artificial and natural variations, however, is critical when learning systems are deployed ``in the wild,'' such as in self-driving vehicles \citep{huval2015empirical,bojarski2016end} and speech recognition systems \citep{junqua2012robustness,hannun2014deep}. Hence, the design of robust learning techniques is a key focus of recent machine learning research \citep{eykholt2017robust,madry2017towards,raghunathan2018semidefinite,singh2018fast,sinha2017certifying,cohen2019certified,yuan2019adversarial}.

Distributionally robust learning (DRL) \citep{delage2010distributionally,abadeh2015distributionally,chen2018robust} offers an alternative to empirical risk minimization in which one learns to perform against an adversary who chooses the data distribution from a specified set of distributions. This approach offers several benefits, including robust performance with respect to perturbations of the data distribution and computable guarantees on the generalization of the learned model---provided the adversary's decision set includes the true data distribution.

The robustness guarantees offered by DRL rely on selection of the adversary's decision set; if the set does not include the true data distribution, the guarantees do not necessarily hold. Most previous work has chosen the decision set to be a norm ball around the empirical distribution of the training data \citep{abadeh2015distributionally,chen2018robust,esfahani2018data,sinha2017certifying}. As we show in Section \ref{sec:empirical-performance-of-drl-with-unlabeled}, however, in many cases this ball must be extremely large to contain the true data distribution. As a result, the distributionally-robust learner attempts to be robust to an overly broad set of data distributions, preventing it from making a prediction with any confidence. As a result, it can do no better than assigning equal probability to all of the classes.

In this paper, we address the problem of overwhelmingly-large decision sets by using unlabeled data to further constrain the adversary. In essence, we can remove from the decision set distributions that are ``unrealistic'' in the sense that their marginals in feature space do not resemble the unlabeled data. With a smaller decision set, the distributionally-robust learner can provide a tighter bound on the generalization performance, yielding nontrivial predictors with non-vacuous performance guarantees in situations where conventional DRL offers neither.

Our mechanism for optimizing against a adversary constrained by unlabeled data is general-purpose and applicable beyond supervised learning. We use this same mechanism to formulate a novel distributionally-robust method for active learning; this method frequently outperforms both uniform random sampling and standard methods for active learning.

\section{Background}

\subsection{Notation}

For any Polish space $\Sspace$, we use $\Bset(\Sspace)$ to denote the associated Borel $\sigma$-algebra and $\Mset(\Sspace)$ to denote set of Radon measures on $(\Sspace, \Bset(\Sspace))$. $\Mset_+(\Sspace)$ is the set of nonnegative Radon measures on $(\Sspace, \Bset(\Sspace))$, and $\Qset(\Sspace)$
is the set of probability measures: $\Qset(\Sspace) = \{\pi \in \Mset_+(\Sspace) : \pi(\Sspace) = 1\}$. $C_b(\Sspace)$ is the set of continuous, bounded functions from $\Sspace$ into $\reals$.

\subsection{Statistical learning}

Let $\Xspace$ be an input space and $\Yspace$ a label space, and let $\Ppr$ be the true data distribution, a probability measure over $\Zspace = \Xspace \times \Yspace$. We focus on a classification setting, in which $\Yspace = \{\yvec^k\}_{k=1}^{N_{\Yspace}}$ is a finite collection of discrete labels, while $\Xspace$ can be any compact Polish space. The learning problem chooses a hypothesis $h_{\theta} : \Xspace \rightarrow \Qset(\Yspace)$, parameterized by $\theta \in \Theta \subseteq \reals^q$, that minimizes the expected risk, $\expect^{\Ppr} \ell(h_{\theta}(\Xrv), \Yrv)$, where $\ell : \Qset(\Yspace) \times \Yspace \rightarrow \reals$ is a loss function measuring deviation of the prediction $h_{\theta}(\Xrv)$ from the true label $\Yrv$.\footnote{We will always write $(\Xrv, \Yrv)$ for the pair of random variables $\Xrv : \Omega \rightarrow \Xspace$ and $\Yrv : \Omega \rightarrow \Yspace$ over which we are taking the expectation. Their distributions are to be understood from the context.}

We cannot directly evaluate the expected risk, however, since $\Ppr$ is unknown. 
We instead have a labeled sample $\hat{\Zspace}_l = \{(\xvec_l^i, \yvec_l^i)\}_{i=1}^{N_l} \subset \Xspace \times \Yspace$ consisting of $N_l$ i.i.d.\ samples from $\Ppr$. If $\hat{\Ppr}_l = \frac{1}{N_l} \sum_{i=1}^{N_l} \delta_{(\xvec_l^i, \yvec_l^i)}$ is the empirical distribution of the labeled data, traditional empirical risk minimization substitutes $\hat{\Ppr}_l$ for $\Ppr$ in the statistical learning problem, solving 
\begin{equation}
\label{eq:empirical-risk-minimization}
\minimize_{\theta \in \Theta}\ \expect^{\hat{\Ppr}_l} \ell(h_{\theta}(\Xrv), \Yrv) = \frac{1}{N_l} \sum_{i=1}^{N_l} \ell(h_{\theta}(\xvec_l^i), \yvec_l^i).
\end{equation}
To reduce variance of this approximation and promote generalization, often a regularization term (e.g., penalizing model complexity) is added to the loss.

\subsection{Distributional robustness}

Distributionally-robust learning (DRL) \citep{delage2010distributionally,abadeh2015distributionally,chen2018robust} is an alternative to empirical risk minimization that attempts to learn a predictor with minimal worst-case expected risk, against an adversary who chooses the distribution of the data from a specified decision set $\Pset$:
\begin{equation}
\label{eq:distributionally-robust-learning}
\minimize_{\theta \in \Theta} \sup_{\mu \in \Pset} \expect^{\mu} \ell(h_{\theta}(\Xrv), \Yrv).
\end{equation}
$\Pset$ is typically a norm ball centered at the empirical distribution of the labeled data $\hat{\Ppr}_l$. If $\Pset$ is chosen such that it contains the true data distribution $\Ppr$, the objective in \eqref{eq:distributionally-robust-learning} upper-bounds the expected risk of the hypothesis.

In this paper, we focus on Wasserstein distributional robustness \citep{abadeh2015distributionally,chen2018robust}, in which the adversary's decision set $\Pset$ is a norm ball with respect to the Wasserstein distance:
\begin{definition}[Wasserstein distance]
\label{def:wasserstein-distance}
Let $c : \Zspace \times \Zspace \rightarrow \reals_+$ be a lower-semicontinuous cost function. For any $\mu, \nu \in \Qset(\Zspace)$, the Wasserstein distance between $\mu$ and $\nu$ is
\begin{equation}
\Wass_c(\mu, \nu) = \inf_{\pi \in \Pi(\mu, \nu)} \int_{\Zspace \times \Zspace} c(\zvec, \zvec^{\prime})\,\dx\pi(\zvec, \zvec^{\prime}),
\end{equation}
with $\Pi(\mu, \nu) = \{ \pi \in \mathcal{M}_+(\Zspace \times \Zspace) : \pi(A \times \Zspace) = \mu(A), \pi(\Zspace \times B) = \nu(B), \forall A, B \in \Bset(\Zspace)\}$, i.e. the set of all joint distributions on $\Zspace \times \Zspace$ having marginals $\mu$ and $\nu$. $\pi$ is sometimes also called a ``transportation plan'' for moving the mass in $\mu$ to match $\nu$.
\end{definition}
The Wasserstein distance differs from other common divergences on probability measures, such as the KL divergence, in that it takes into account the geometry of the domain $\Zspace$, via the transport cost $c$. For this reason, it can compare measures with disjoint support, for example.

\subsection{Related work}

Distributionally robust optimization \citep{calafiore2006distributionally} has been explored extensively beyond the learning setting, for a broad variety of objective functions and decision sets. Often decision sets are defined by moment or support conditions \citep{delage2010distributionally,goh2010distributionally,wiesemann2014distributionally} or divergences on probability measures such as the Prokhorov metric \citep{erdougan2006ambiguous} or $f$-divergences \citep{ben2013robust,duchi2016statistics,namkoong2016stochastic,bertsimas2018data,miyato2015distributional}. Directional deviation conditions have also been explored \citep{chen2007robust}. \cite{kuhn2019wasserstein} gives a recent review of applications in machine learning.

Distributionally robust learning over a Wasserstein ball was proposed for logistic regression \citep{abadeh2015distributionally}, regularized linear regression \citep{chen2018robust}, and more general losses \citep{blanchet2016robust,gao2016distributionally,sinha2017certifying,dziugaite2017computing,esfahani2018data}. An equivalence to regularization, under various assumptions on the loss, was shown by \citet{gao2017wasserstein}.

In Section \ref{sec:active}, we discuss an application of the proposed method to active learning, which is a well-studied topic that has inspired a wide variety of algorithms \citep{yang2018benchmark}. We focus on a class of heuristics that seek to maximize the change in the learned model resulting from obtaining a labeled example \citep{settles2008multiple,freytag2014selecting,cai2017active}.

\section{Distributionally-robust learning with unlabeled data}
\label{sec:drl-with-unlabeled-data}

\subsection{A problem with the existing approach}

\begin{wrapfigure}[16]{r}{0.5\textwidth}
  \centering
    \vspace{-.35in}
    \includegraphics[width=\linewidth]{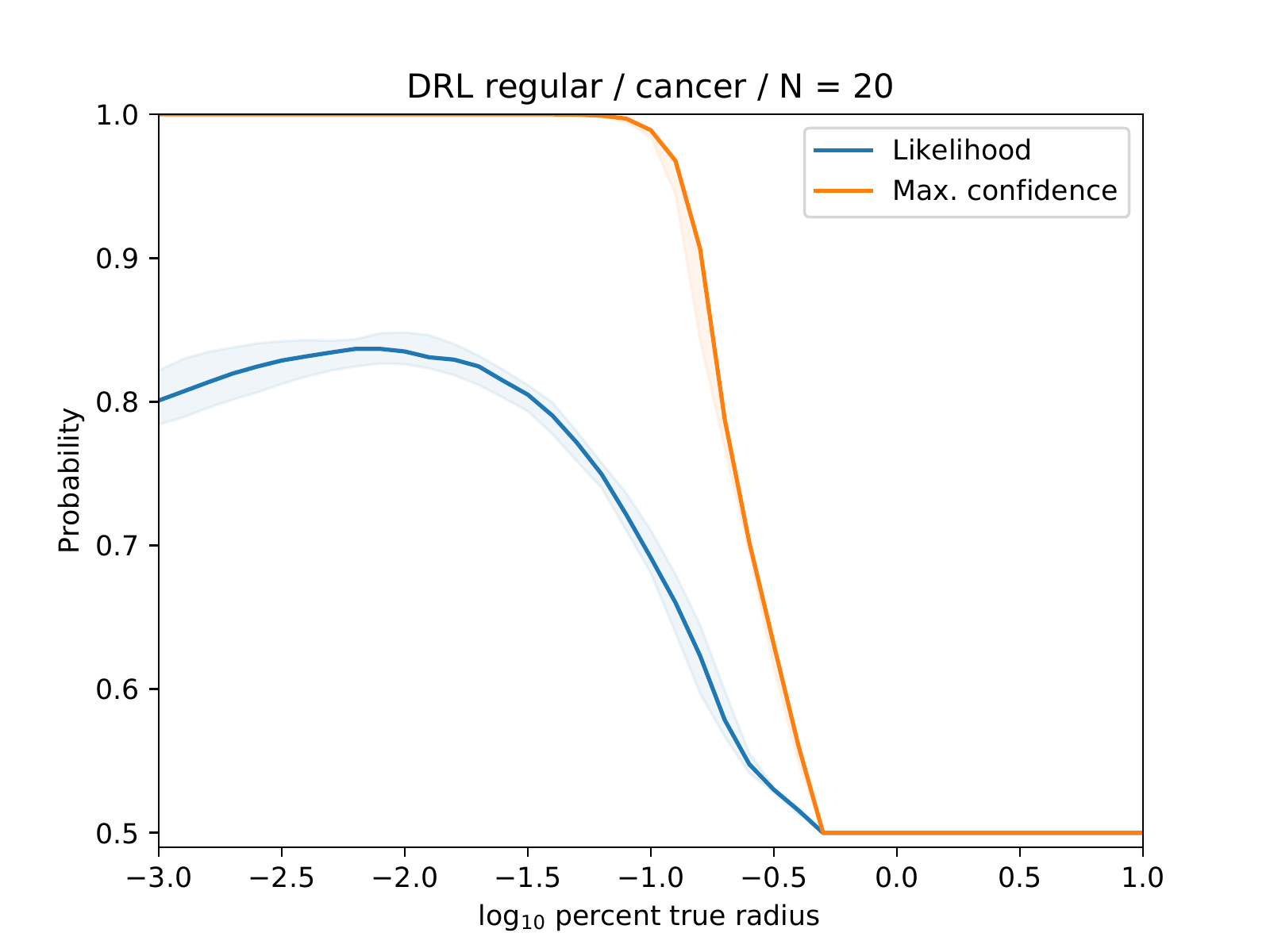}
    \vspace{-.3in}
  \caption{Wasserstein distributionally robust learning yields a no-confidence predictor at radius $\varepsilon$ much smaller than the distance to the true data distribution.}
  \label{fig:likelihood-and-conf-vs-pct-true-radius}
\end{wrapfigure}

In the ``medium-data'' regime, where the labeled sample may be far from the true data distribution $\Ppr$ with respect to Wasserstein distance, Wasserstein distributionally-robust learning suffers from imprecision of the decision set $\Pset$, which is a Wasserstein ball centered at the empirical distribution of the labeled sample. The volume of this ball grows rapidly in its radius, requiring the learner to be robust to an enormous variety of data distributions. This problem manifests as low confidence of the distributionally robust learner, even when the radius $\varepsilon$ is chosen to be much smaller than the true distance to the data distribution---thereby foregoing the performance guarantee implied by \eqref{eq:distributionally-robust-learning}. 

Figure \ref{fig:likelihood-and-conf-vs-pct-true-radius} shows an example; additional illustrations are in Section \ref{sec:empirical-performance-of-drl-with-unlabeled}. We train a Wasserstein distributionally robust logistic regression model using $20$ labeled samples from the Wisconsin breast cancer dataset \citep{dua2019:uci}.
We plot both the test set likelihood and the maximum confidence\footnote{The confidence of a hypothesis $h_{\theta}$ at a point $\xvec \in \Xspace$ we define by $\max\{h_{\theta}(\xvec), 1 - h_{\theta}(\xvec)\}$.} of the learner over input samples as the radius $\varepsilon$ of the decision set is varied. We see that the confidence goes to $0.5$---i.e., the classes are assigned equal probability---at a radius much smaller than the distance to the empirical distribution of the test set. Notably, the radius that maximizes the likelihood is approximately $1\%$ of the distance to the test distribution. This maximum is often suggested as an appropriate target when choosing the radius $\varepsilon$ in practice, as we will discuss in Section \ref{sec:how-important-is-radius}.

\subsection{Constraining the adversary using unlabeled data}
\label{sec:constraining-the-adversary}

We propose to deal with the overwhelming size of the decision set by constraining it further, pruning unrealistic potential data distributions while still allowing the set to contain the true data distribution. Specifically, we intersect two additional constraints with the Wasserstein ball.

The first constraint uses unlabeled data to constrain the marginal in $\Xspace$ of the data distribution. As is common in many learning settings, we assume that unlabeled data are acquired much more readily than labeled data, giving the learner access to large set of unlabeled examples. Let $\Ppr_{\Xspace}$ be the {\bf $\Xspace$-marginal}, defined by $\Ppr_{\Xspace}(A) = \Ppr(A \times \Yspace)$ for all Borel subsets $A \in \Bset(\Xspace)$. Then our {\bf unlabeled data} is a set $\hat{\Xspace}_u \subset \Xspace$ drawn i.i.d.\ from $\Ppr_{\Xspace}$.

The second constraint restricts the {\bf $\Yspace$-marginal} of the data distribution, by defining intervals on the individual label probabilities. Let $\Ppr_{\Yspace}$ be the $\Yspace$-marginal, which in a classification setting is discrete $\Ppr_{\Yspace} = \sum_{k=1}^{N_{\Yspace}} \pvec_{\Yspace}^k \delta_{\yvec^k}$ for $\Yspace = \{\yvec^k\}_{k=1}^{N_{\Yspace}}$ the set of labels and $\pvec_{\Yspace}^k$ the corresponding label probabilities. The {\bf interval} for each label is $[\underline{\pvec}_{\Yspace}^k, \overline{\pvec}_{\Yspace}^k]$. These interval constraints might come from prior knowledge, another dataset as in the ecological inference setting \citep{king2013solution,frogner2019fast}, or directly from the training data.

\subsection{Problem formulation and duality}
\label{sec:drl-problem-formulation}

\begin{table}[t]
\begin{center}
\begin{tabular}{r|l}
$\Xspace$ & Feature space \\
$\Yspace = \{\yvec^k\}_{k=1}^{N_{\Yspace}}$ & Label space \\
$\Zspace$ & $\Xspace \times \Yspace$ \\
$h_{\theta}$ & Hypothesis function, parameterized by $\theta$ \\
$\ell$ & Loss function \\
$\Ppr$ & True data distribution (over $\Zspace$) \\
$\Ppr_{\Xspace}$ & $\Xspace$-marginal of $\Ppr$ \\
$\Ppr_{\Yspace}$ & $\Yspace$-marginal of $\Ppr$ \\
$\hat{\Ppr}_l$ & Labeled data distribution (over $\Zspace$) \\
$\ball_{\varepsilon}(\hat{\Ppr}_l)$ & Wasserstein ball of radius $\varepsilon$ about $\hat{\Ppr}_l$ \\
$\overline{\pvec}_{\Yspace}^k$ ($\underline{\pvec}_{\Yspace}^k)$ & Upper (lower) bound on marginal probability of $\yvec^k$ \\
$\Uset(\Ppr_{\Xspace}, \underline{\pvec}_{\Yspace}, \overline{\pvec}_{\Yspace})$ & Set of probability measures $\pi \in \Qset(\Zspace \times \Zspace)$ whose first \\
& $\Xspace$-marginal is $\Ppr_{\Xspace}$ and first $\Yspace$-marginal satisfies \\
& $\pi((\Xspace \times \{\yvec^k\}) \times \Zspace) \in [\underline{\pvec}_{\Yspace}^k, \overline{\pvec}_{\Yspace}^k]$, $\forall k$.
\end{tabular}
\end{center}
\caption{Notation.}
\end{table}

If we restrict the decision set as described in Section \ref{sec:constraining-the-adversary}, we need to establish that the distributionally robust learning problem is still tractable, particularly since one of the constraints we have added is infinite-dimensional. Recall that $\Ppr_\Xspace$ is the marginal of the unlabeled data on the feature space, and that $\underline{p}_\Yspace$ and $\overline{p}_\Yspace$ are the lower and upper bounds on the marginal on the label. 
We can define a set of possible joint distributions on $\Xspace \times \Yspace$ that are consistent with this data,
\begin{equation}
\begin{aligned}
\Uset(\Ppr_{\Xspace}, \underline{\pvec}_{\Yspace}, \overline{\pvec}_{\Yspace}) = \Bigl\{\Ppr \in \Mset_+(\Xspace \times \Yspace) &: \Ppr(A \times \Yspace) = \Ppr_{\Xspace}(A), \\
& \quad \Ppr(\Xspace \times B) \in [\underline{\Ppr}_{\Yspace}(B), \overline{\Ppr}_{\Yspace}(B)], \\
& \quad \forall A \in \Bset(\Xspace), B \subseteq \Yspace \Bigr\},
\end{aligned}
\end{equation}
with $\underline{\Ppr}_{\Yspace} = \sum_{k=1}^{N_{\Yspace}} \underline{\pvec}_{\Yspace}^k \delta_{\yvec^k}$ and $\overline{\Ppr}_{\Yspace} = \sum_{k=1}^{N_{\Yspace}} \overline{\pvec}_{\Yspace}^k \delta_{\yvec^k}$.

Suppose in addition we observe labeled data $\hat{\Zspace}_l = \{\zvec_{\ell}^i\}_{i=1}^{N_l} \subset \Xspace \times \Yspace$, with $\zvec_{\ell}^i = (\xvec_l^i, \yvec_{\ell}^i)$, that define the empirical distribution $\hat{\Ppr}_l = \frac{1}{N_l} \sum_{i=1}^{N_l} \delta_{\zvec_{\ell}^i}$. We define the adversary's decision set to be the intersection of the set of distributions $\Uset(\Ppr_{\Xspace}, \underline{\pvec}_{\Yspace}, \overline{\pvec}_{\Yspace})$ with a Wasserstein ball of radius $\varepsilon$ around the empirical distribution $\hat{\Ppr}_l$:
\begin{equation}
\Pset = \Uset(\Ppr_{\Xspace}, \underline{\pvec}_{\Yspace}, \overline{\pvec}_{\Yspace}) \cap \ball_{\varepsilon}(\hat{\Ppr}_l),
\end{equation}
where $\ball_{\varepsilon}(\hat{\Ppr}_l) = \{\mu : \Wass_c(\mu, \hat{\Ppr}_l) \leq \varepsilon\}$. Thus, our feasible set contains all distributions that have the correct data and label marginals (and are thus in $\Uset(\Ppr_{\Xspace}, \underline{\pvec}_{\Yspace}, \overline{\pvec}_{\Yspace})$), but which are also close to the known labeled distribution (and thus contained in $\ball_{\varepsilon}(\hat{\Ppr}_l)$).

The resulting distributionally-robust problem is defined identically to \eqref{eq:distributionally-robust-learning}, using this decision set $\Pset$. The inner problem with fixed $\theta$ is that of evaluating a {\bf worst-case expected loss}, 
\begin{equation}
\label{eq:worst-case-primal}
f(\theta) = \sup_{\mu \in \Pset} \expect^{\mu} \ell(h_{\theta}(\Xrv), \Yrv).
\end{equation}
For marginals $\Ppr_{\Xspace}$ with infinite support this an infinite-dimensional linear program as the marginal on $\Xspace$ of the solution $\mu$ must contain the support of $\Ppr_{\Xspace}$.

We can rewrite \eqref{eq:worst-case-primal} by casting it as an optimal transportation problem over the space $\Zspace=\Xspace\times \Yspace$ between our unknown distribution $\mu$ and the given data distribution such that the transport plan $\pi$ satisfies the marginal constraints on $\mu$:
\begin{equation}
f(\theta)=
\left\{
\begin{array}{rll}
\sup_{\pi \in \Mset(\Zspace \times \Zspace)} & \int_{(\Xspace \times \Yspace) \times \Zspace} \ell(h_{\theta}(\xvec), \yvec)\,\dx\pi((\xvec, \yvec), \zvec^{\prime}) \\
\text{s.t.}
 & \int_{\Zspace \times \Zspace} c(\zvec, \zvec^{\prime})\,\dx\pi(\zvec, \zvec^{\prime}) \leq \varepsilon \\
& \int_{\Zspace \times \Zspace} \delta_{\zvec_{\ell}^i}(\zvec^{\prime})\, \dx\pi(\zvec, \zvec^{\prime}) = \frac{1}{N_l}
&\forall i \in \{1, \dots, N_l\}, \\
& \pi((A \times \Yspace) \times \Zspace) = \Ppr_{\Xspace}(A) & \forall A \in \Bset(\Xspace), \\
& \int_{(\Xspace \times \Yspace) \times \Zspace} \delta_{\yvec^k}(\yvec)\, \dx\pi((\xvec, \yvec), \zvec^{\prime}) \leq \overline{\pvec}_{\Yspace}^k
& \forall k \in \{1, \dots, N_{\Yspace}\}, \\
& \int_{(\Xspace \times \Yspace) \times \Zspace} \delta_{\yvec^k}(\yvec) \,\dx\pi((\xvec, \yvec), \zvec^{\prime}) \geq \underline{\pvec}_{\Yspace}^k 
& \forall k \in \{1, \dots, N_{\Yspace}\}, \\
& \pi(A) \geq 0
& \forall A \in \Bset(\Zspace \times \Zspace). 
\end{array}
\right.
\label{eq:worst-case-primal-transport}
\end{equation}
Here the variable $\zvec = (\xvec, \yvec)$ indexes the support of the worst-case measure while $\zvec^{\prime}$ indexes the support of $\hat{\Ppr}_l$. $\pi$ is a transport plan that joins these two measures. 
Observe that only the constraint on the $\Xspace$ marginal is infinite dimensional. We will show that this constraint corresponds in the dual problem to an expectation under $\Ppr_{\Xspace}$ of a finite dimensional cost.

While the program \eqref{eq:worst-case-primal} is infinite dimensional, its dual is a problem in finite dimensions:
%
\begin{equation}
\label{eq:worst-case-dual}
g(\theta)
\!=\! 
\left\{\hspace{-.1in}
\begin{array}{r@{\ }l}
{\displaystyle\inf_{\alpha, \beta, \underline{\lambda}, \overline{\lambda}}}
&
\alpha \varepsilon + \frac{1}{N_l} \sum_{i=1}^{N_l} \beta^i + \sum_{k=1}^{N_{\Yspace}} \left(\overline{\lambda}^k \overline{\pvec}_{\Yspace}^k - \underline{\lambda}^k \underline{\pvec}_{\Yspace}^k \right) \\
&
+ \expect^{\Ppr_{\Xspace}}\!\left[\!
\vcenter{\hbox{$
{\displaystyle \max_{\substack{k \in \{1, \dots, N_{\Yspace}\}\\i \in \{1, \dots, N_l\}}}}
\hspace{-.1in}
\ell(h_{\theta}, (\Xrv, \yvec^k)) - 
\left(\alpha c\left((\Xrv, \yvec^k), \zvec_{\ell}^i\right) + \beta^i)\right) - (\overline{\lambda}^k - \underline{\lambda}^k)
$}}
\!\right] 
\vspace{.1in}
\\
\text{s.t.}
 & \alpha, \underline{\lambda}^k, \overline{\lambda}^k \geq 0 \quad \forall k \in \{1, \dots, N_{\Yspace}\}.
\end{array}
\right.
\end{equation}
Here $\alpha \in \reals$, $\beta \in \reals^{N_{\ell}}$, and $\underline{\lambda}, \overline{\lambda} \in \reals^{N_{\Yspace}}$. Each of these dual variables corresponds to a primal constraint: $\alpha$ corresponds to the constraint on the transport cost, $\beta$ the constraint that the second marginal be $\hat{\Ppr}_l$, $\underline{\lambda}$ the lower bound on the worst-case label probabilities, and $\overline{\lambda}$ the upper bound. The infinite-dimensional constraint that the first marginal of the primal transport plan have $\Xspace$-marginal $\Ppr_{\Xspace}$ corresponds here to the expectation in the objective. This correspondence is established in much more detail in the proof of Theorem \ref{thm:strong-duality} (Appendix \S\ref{sec:appendix-strong-duality}), which shows that the two problems $g(\theta)$ and $f(\theta)$ are in fact equivalent. 

We state our main theoretical result, whose proof is deferred to the appendix (\S\ref{sec:appendix-strong-duality}):
\begin{theorem}[Strong duality]
\label{thm:strong-duality}
Let $\Xspace$ be a compact Polish space and $\Yspace = \{\yvec^k\}_{k=1}^{N_{\Yspace}}$ any finite set. Let $\Ppr_{\Xspace}$ be a probability measure over $\Xspace$ and $\hat{\Ppr}_l = \frac{1}{N_l} \sum_{i=1}^{N_l} \delta_{\zvec_{\ell}^i}$ an empirical probability measure over $\Zspace = \Xspace \times \Yspace$, and define intervals $[\underline{\pvec}_{\Yspace}^k, \overline{\pvec}_{\Yspace}^k] \subseteq [0, 1]$, $k \in \{1, \dots, N_{\Yspace}\}$. Let the transportation cost $c : \Zspace \times \Zspace \rightarrow [0, +\infty)$ be nonnegative and upper semicontinuous with $c(\zvec, \zvec^{\prime}) = 0  \Leftrightarrow \zvec = \zvec^{\prime}$. Assume $\ell(h_{\theta}(\cdot), \cdot) : \Zspace \rightarrow \reals$ is upper semicontinuous. Define $f$ as in \eqref{eq:worst-case-primal} and $g$ as in \eqref{eq:worst-case-dual}. If $\Uset(\Ppr_{\Xspace}, \underline{\pvec}_{\Yspace}, \overline{\pvec}_{\Yspace}) \cap \ball_{\varepsilon}(\hat{\Ppr}_l) \neq \emptyset$, then
\begin{equation}
f(\theta) = g(\theta),\quad \forall \theta \in \Theta.
\end{equation}
Furthermore, if $\operatorname{relint} (\Uset(\Ppr_{\Xspace}, \underline{\pvec}_{\Yspace}, \overline{\pvec}_{\Yspace}) \cap \ball_{\varepsilon}(\hat{\Ppr}_l)) \neq \emptyset$, then there exists a minimizer $(\alpha_{\ast}, \beta_{\ast}, \overline{\lambda}_{\ast}, \underline{\lambda}_{\ast}) \in \reals_+ \times \reals^{N_l} \times \reals_+^{N_{\Yspace}} \times \reals_+^{N_{\Yspace}}$ attaining the infimum in \eqref{eq:worst-case-dual}.
\end{theorem}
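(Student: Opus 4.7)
The plan is to treat $f(\theta)$ as an infinite-dimensional linear program over $\Mset_+(\Zspace\times\Zspace)$ and derive its Lagrangian dual, then close the duality gap using a Fenchel--Rockafellar theorem under the stated constraint qualification. I would assign dual variables to the primal constraints as follows: a scalar $\alpha\ge 0$ to the transportation-cost budget, a vector $\beta\in\reals^{N_l}$ to the atomic equalities fixing the second marginal at $\hat{\Ppr}_l$, a bounded continuous function $\gamma\in C_b(\Xspace)$ (later relaxed to a $\Ppr_\Xspace$-integrable function) to the equality fixing the first $\Xspace$-marginal at $\Ppr_\Xspace$, and $\overline{\lambda},\underline{\lambda}\in\reals_+^{N_\Yspace}$ to the two $\Yspace$-marginal interval constraints.

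First I would form the Lagrangian and compute $\sup_{\pi\ge 0} L$: grouping all terms linear in $\pi$, the integrand acting on $\dx\pi((\xvec,\yvec^k),\zvec_\ell^i)$ equals $\ell(h_\theta(\xvec),\yvec^k)-\alpha c((\xvec,\yvec^k),\zvec_\ell^i)-\beta^i-\gamma(\xvec)-(\overline{\lambda}^k-\underline{\lambda}^k)$, and the supremum over $\pi\ge 0$ is finite iff this quantity is $\le 0$ pointwise. The constant part of $L$ is $\alpha\varepsilon+\frac{1}{N_l}\sum_i\beta^i+\int\gamma\,\dx\Ppr_\Xspace+\sum_k(\overline{\lambda}^k\overline{\pvec}_\Yspace^k-\underline{\lambda}^k\underline{\pvec}_\Yspace^k)$. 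Since it is increasing in $\gamma$ through $\int\gamma\,\dx\Ppr_\Xspace$, minimizing subject to the pointwise inequality is achieved by setting $\gamma(\xvec)$ equal to the pointwise maximum on the right; upper semicontinuity of $\ell$ and $c$ renders this maximum of finitely many USC functions itself USC and bounded (by compactness of $\Zspace$), hence $\Ppr_\Xspace$-integrable. Substituting yields exactly $g(\theta)$ in \eqref{eq:worst-case-dual}, establishing $f(\theta)\le g(\theta)$ (weak duality).

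For the reverse inequality I would apply Rockafellar's conjugate-duality theorem in the pairing between signed Radon measures on $\Zspace\times\Zspace$ (with the weak-$\ast$ topology) and $C_b(\Zspace\times\Zspace)$. Compactness of $\Zspace$ together with upper semicontinuity of $\ell$ and $c$ makes $\pi\mapsto\int\ell\,\dx\pi$ weakly-$\ast$ upper semicontinuous and the feasible set weakly-$\ast$ compact and convex, so $f(\theta)$ is attained. Nonemptiness of $\Uset(\Ppr_\Xspace,\underline{\pvec}_\Yspace,\overline{\pvec}_\Yspace)\cap\ball_\varepsilon(\hat{\Ppr}_l)$ supplies primal feasibility, while the relative-interior hypothesis provides the Slater-type constraint qualification needed to rule out a duality gap and force dual attainment. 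Existence of $(\alpha_\ast,\beta_\ast,\overline{\lambda}_\ast,\underline{\lambda}_\ast)$ then follows from Bolzano--Weierstrass applied to a minimizing sequence, using the CQ to obtain an a priori bound, and the optimal $\gamma_\ast$ is recovered via the pointwise-maximum formula.

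The main obstacle, and the step requiring the most care, is justifying the minimax exchange in the presence of the infinite-dimensional $\Xspace$-marginal equality: one must argue that the corresponding Lagrange multiplier may be represented by the USC pointwise-maximum function (an element of $L^1(\Ppr_\Xspace)$) rather than by an arbitrary element of the abstract topological dual of $C_b(\Xspace)$. The compactness of $\Xspace$, the USC hypotheses on $\ell$ and $c$, and the $\operatorname{relint}$ condition combine precisely at this step to deliver both the strong-duality identity $f(\theta)=g(\theta)$ and the attainment claim for the dual.
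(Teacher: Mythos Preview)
Your overall plan is sound and close in spirit to the paper's, but the direction in which you apply Fenchel--Rockafellar is the opposite of what the paper does, and that choice has a concrete consequence for the first conclusion of the theorem.

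You dualize the \emph{primal}: the variable is $\pi\in\Mset_+(\Zspace\times\Zspace)$, and you pair $\Mset(\Zspace\times\Zspace)$ (weak-$\ast$) with $C_b(\Zspace\times\Zspace)$. The constraint qualification you invoke is then a Slater-type condition on the \emph{primal} feasible set, which is precisely the $\operatorname{relint}$ hypothesis. As you wrote it, you are using $\operatorname{relint}$ both to close the duality gap and to obtain dual attainment. But the theorem asserts $f(\theta)=g(\theta)$ from \emph{mere nonemptiness} of $\Uset\cap\ball_\varepsilon(\hat{\Ppr}_l)$, reserving $\operatorname{relint}$ only for dual attainment. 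Your route, as stated, does not deliver that separation.

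The paper avoids this by dualizing the \emph{dual} instead (the Villani trick from optimal-transport duality): it sets up $\inf_{\xi}\tilde G(\xi)+\chi(A\xi)$ with $\xi=(\alpha,\beta,\phi,\overline{\lambda},\underline{\lambda})$ ranging over a product of $C_b$-type Banach spaces, and applies Fenchel duality there. The constraint qualification is then $A\,\dom\tilde G\cap\operatorname{cont}\chi\neq\emptyset$, which is satisfied trivially because $\ell(h_\theta,\cdot)$ is bounded on the compact $\Zspace$ (take $\beta\equiv M>\sup\ell$ and the other variables zero). Riesz representation identifies $(C_b)^\ast$ with $\Mset$, so the Fenchel dual recovers exactly the primal \eqref{eq:worst-case-primal-transport}, yielding $f(\theta)=g(\theta)$ and primal attainment with no hypothesis beyond primal feasibility. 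This also sidesteps the obstacle you flagged: by starting with $\phi\in C_b(\Xspace)$ one never has to argue that an abstract multiplier in $(C_b)^{\ast\ast}$ or in $\Mset(\Xspace)^\ast$ is represented by a function; one instead shows afterwards that the optimal $\phi$ may be taken to be the pointwise maximum \eqref{eq:optimal-phi}.

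For dual attainment the paper does not invoke an abstract compactness argument but proves coercivity of $G_\theta(\alpha,\beta,\overline{\lambda},\underline{\lambda})$ directly: pick $\pi\in\operatorname{relint}\Phi$ and use it (and small perturbations thereof, to handle the shift-invariances in $\beta$ and in $\overline{\lambda}-\underline{\lambda}$) to produce linear lower bounds that grow in every direction. Your Bolzano--Weierstrass sketch is compatible with this, but the a~priori bound you allude to is exactly this coercivity step and requires some care because of those shift-invariances.
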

The take-away message of Theorem \ref{thm:strong-duality} is that distributionally-robust learning under the model proposed here amounts to minimizing $g$ with respect to $\theta$, a finite dimensional problem that can be tackled with stochastic gradient approaches.

\section{Algorithm and analysis}

\subsection{Optimization by SGD}
\label{sec:optimization-by-sgd}

Problem \eqref{eq:worst-case-dual} is a convex, finite dimensional optimization problem in $\alpha, \beta, \underline{\lambda}, \overline{\lambda}$ that is the sum of a linear term and an expectation under $\Ppr_\Xspace$. To apply stochastic gradient descent, we first need to compute derivatives under the variables $\alpha, \beta, \underline{\lambda}, \overline{\lambda}$.

We first compute derivatives of the term under the expectation. Define $\Phi^{i,k}(\cdot; \theta, \alpha, \beta, \underline{\lambda}, \overline{\lambda})$ as the function 
\begin{align*}
    \Phi^{i,k}(\xvec; \theta, \alpha, \beta, \underline{\lambda}, \overline{\lambda}) = \ell(h_{\theta}, (\Xrv, \yvec^k)) -  \left(\alpha c\left((\Xrv, \yvec^k), \zvec_{\ell}^i\right) + \beta^i\right) - (\overline{\lambda}^k - \underline{\lambda}^k).
\end{align*}
The dual objective can be expressed as a function of $\theta, \alpha, \beta, \underline{\lambda}, \overline{\lambda}$ as 
\begin{align*}
    \alpha \varepsilon + \frac{1}{N_l} \sum_{i=1}^{N_l} \beta^i + \sum_{k=1}^{N_{\Yspace}} \left(\overline{\lambda}^k \overline{\pvec}_{\Yspace}^k - \underline{\lambda}^k \underline{\pvec}_{\Yspace}^k \right) + \expect^{\Ppr_{\Xspace}}\left[
    \vcenter{\hbox{$
    \max_{\substack{k \in \{1, \dots, N_{\Yspace}\}\\i \in \{1, \dots, N_l\}}} \Phi^{i,k}(\xvec; \theta, \alpha, \beta, \underline{\lambda}, \overline{\lambda})$}}\right].
\end{align*}
For a given choice of $i$ and $k$, there is a set of points $\xvec$ where the maximum is achieved at $(i, k)$. We can define a collection of subsets of $\Xspace$ 
\begin{equation}
\label{eq:voronoi-sets}
V^{ik} = \left\{\xvec \in \Xspace : \Phi^{i,k}(\xvec;\theta, \alpha, \beta, \underline{\lambda}, \overline{\lambda}) \geq \Phi^{i^\prime,k^\prime}(\xvec;\theta, \alpha, \beta, \underline{\lambda}, \overline{\lambda})\ \forall\ i^{\prime},
k^{\prime}
\right\},
\end{equation}
The sets $V^{ik}$ partition $\Xspace$, up to boundary points where the sets meet one another.
We can decompose the expectation as a finite sum of integrals over domains $V^{ik}$, i.e.
\begin{equation}
\expect^{\Ppr_{\Xspace}}\left[
\vcenter{\hbox{$
\max_{\substack{k \in \{1, \dots, N_{\Yspace}\}\\i \in \{1, \dots, N_l\}}} \Phi^{i,k}(\xvec; \theta, \alpha, \beta, \underline{\lambda}, \overline{\lambda})
$}}
\right] = \sum_{i=1}^{N_l} \sum_{k=1}^{N_{\Yspace}} \int_{V^{ik}} \Phi^{i,k}(\xvec; \theta, \alpha, \beta, \underline{\lambda}, \overline{\lambda}) \,\dx\Ppr_{\Xspace}(\xvec).
\end{equation}
Note that $V^{ik}$ changes depending on the parameters $(\theta, \alpha, \beta, \overline{\lambda}, \underline{\lambda})$. To evaluate a derivative with respect to one of these parameters, then, we need to differentiate under the integral sign. Applying Reynolds' Transport Theorem, we obtain that 
\begin{equation}
\frac{\partial}{\partial \alpha} \sum_{i=1}^{N_l} \sum_{k=1}^{N_{\Yspace}} \int_{V^{ik}} \Phi^{i,k}(\xvec;\theta, \alpha, \beta, \underline{\lambda}, \overline{\lambda}) \,\dx\Ppr_{\Xspace}(\xvec) = \sum_{i=1}^{N_l} \sum_{k=1}^{N_{\Yspace}} \int_{V^{ik}} \frac{\partial}{\partial \alpha} \Phi^{i,k}(\xvec;\theta, \alpha, \beta, \underline{\lambda}, \overline{\lambda}) \,\dx\Ppr_{\Xspace}(\xvec),
\end{equation}
and the same holds for the other parameters $\theta, \beta, \overline{\lambda}, \underline{\lambda}$.\footnote{Reynold's Theorem specifies terms that are boundary integrals for the boundaries of the sets $V^{ik}$. In our case, these terms sum to zero, as almost every boundary point $\xvec \in \operatorname{int} \Xspace$ is shared between exactly two sets $V^{ik}$, $ V^{i^{\prime} k^{\prime}}$ and the integrands at $\xvec$ for the corresponding boundary integrals exactly cancel.}

The exact forms for the derivatives of the dual objective
are given in the appendix (\S\ref{sec:appendix-dual-gradients}). To simplify notation further, we define 
\begin{equation}
\label{eq:big-phi}
\Phi(\xvec;\theta, \alpha, \beta, \underline{\lambda}, \overline{\lambda}) = \max_{\substack{k \in \{1, \dots, N_{\Yspace}\}\\i \in \{1, \dots, N_l\}}} \Phi^{i,k}(\xvec; \theta, \alpha, \beta, \underline{\lambda}, \overline{\lambda}).
\end{equation}
We can optimize for the optimal dual parameters $\theta, \alpha, \beta, \underline{\lambda}, \overline{\lambda}$ by sampling $\xvec^1, \ldots, \xvec^{N_b}$ from $\Ppr_\Xspace$ and computing gradients of $\Phi(\xvec^j; \theta, \alpha, \beta, \underline{\lambda}, \overline{\lambda})$ with respect to the dual variables while maintaining constraints. This approach is summarized in Algorithm \ref{alg:sgd}. 

\begin{algorithm}[h]
\caption{SGD for distributionally robust learning with unlabeled data}
\label{alg:sgd}
\begin{algorithmic}
\STATE {\bf Given}: $\varepsilon \geq 0$, $\overline{\pvec}_{\Yspace}, \underline{\pvec}_{\Yspace} \in [0, 1]^{N_{\Yspace}}$, $\theta_0 \in \Theta$, step size $\gamma > 0$, batch size $N_b$.
\STATE $\theta \gets \theta_0$, $\alpha, \beta, \overline{\lambda}, \underline{\lambda} \gets \mathbf{0}$.
\WHILE{not converged}
\STATE \# \COMMENT{Computation of subgradients of $\Phi$ is described in Appendix \ref{sec:appendix-dual-gradients}.}
\STATE Sample $\xvec^1, \dots, \xvec^{N_b} \sim \Ppr_{\Xspace}$.
\STATE $\theta \gets \Proj_{\Theta}\left[\theta - \frac{\gamma}{N_b} \sum_{j=1}^{N_b} \nabla_\theta \Phi(\xvec^j; \theta, \alpha, \beta, \underline{\lambda}, \overline{\lambda}) \right]$.
\STATE $\alpha \gets \max\left(0, \alpha - \gamma \left[\varepsilon + \frac{1}{N_b} \sum_{j=1}^{N_b} \nabla_\alpha \Phi(\xvec^j; \theta, \alpha, \beta, \underline{\lambda}, \overline{\lambda})\right]\right)$.
\STATE $\beta \gets \beta - \gamma \left[\frac{1}{N_l} + \frac{1}{N_b} \sum_{j=1}^{N_b} \nabla_\beta \Phi(\xvec^j; \theta, \alpha, \beta, \underline{\lambda}, \overline{\lambda})\right]$.
\STATE $\overline{\lambda} \gets \max\left(\mathbf{0}, \overline{\lambda} - \gamma \left[\overline{\pvec}_{\Yspace} + \frac{1}{N_b} \sum_{j=1}^{N_b} \nabla_{\overline{\lambda}} \Phi(\xvec^j; \theta, \alpha, \beta, \underline{\lambda}, \overline{\lambda})\right]\right)$.
\STATE $\underline{\lambda} \gets \max\left(\mathbf{0}, \underline{\lambda} - \gamma \left[-\underline{\pvec}_{\Yspace} + \frac{1}{N_b} \sum_{j=1}^{N_b} \nabla_{\underline{\lambda}} \Phi(\xvec^j; \theta, \alpha, \beta, \underline{\lambda}, \overline{\lambda})\right]\right)$.
\ENDWHILE 
\end{algorithmic}
\end{algorithm}

\subsection{A computable performance guarantee}
\label{sec:performance-guarantee}

An attractive feature of traditional Wasserstein DRL is that the optimal value of the objective upper-bounds the true expected risk $\expect^{\Ppr} \ell(h_{\theta}(\Xrv), \Yrv)$, provided that the adversary's decision set contains the true data distribution $\Ppr$. 

The proposed formulation using unlabeled data provides a similar guarantee. Specifically, for all $\theta \in \Theta$, $(\alpha, \beta, \phi, \overline{\lambda}, \underline{\lambda}) \in \Lambda_{\theta}$, we have
\begin{equation}
\label{eq:dual-problem-empirical}
\expect^{\Ppr} \ell(h_{\theta}(\Xrv), \Yrv) \leq \alpha \varepsilon + \frac{1}{N_l} \sum_{i=1}^{N_l} \beta^i + \sum_{k=1}^{N_{\Yspace}} \left(\overline{\lambda}^k \overline{\pvec}_{\Yspace}^k - \underline{\lambda}^k \underline{\pvec}_{\Yspace}^k \right) + \expect^{\hat{\Ppr}_{\Xspace}}\left[\Phi(\xvec)\right] + \epsilon(N_u),
\end{equation}
with high probability, where $\hat{\Ppr}_{\Xspace}$ is an empirical sample on $N_u$ points from the distribution on unlabeled data $\Ppr_\Xspace$.
In other words, the objective value for the dual problem provides a computable guarantee on the generalization error of the learned predictor $h_\theta$.

This result follows from weak duality as the value of \eqref{eq:dual-problem-empirical} upper bounds $\expect^{\Ppr} \ell(h_{\theta}(\Xrv), \Yrv)$ if $\Ppr$ is feasible, and an application of the Berry--Esseen theorem  \citep{berry1941accuracy,esseen1942liapunoff} implies
\begin{equation*}
\expect^{\Ppr_{\Xspace}} \Phi(\Xrv) \leq \expect^{\hat{\Ppr}_{\Xspace}} \Phi(\Xrv) + \epsilon(N_u).
\end{equation*}
with $\epsilon(N_u) = \Ord(\frac{1}{\sqrt{N_u}})$. 
This result gives a guarantee on the generalization error of $h_\theta$, provided that $\Ppr \in \Pset$. In \S\ref{sec:empirical-performance-of-drl-with-unlabeled}, we compute this bound for a number of datasets and compare it to the equivalent bound from traditional DRL (Figure \ref{fig:true-radius-fval}). Since the bound relies on the true distribution $\Ppr$ being included in the adversary's decision set, the choice of the radius $\varepsilon$ of the Wasserstein ball around the labeled data distribution $\hat{\Ppr}_l$ is very important. We comment on the impact of $\varepsilon$ in \S\ref{sec:how-important-is-radius}. 

\section{Empirical results}

In this section, we investigate the empirical performance of our proposed formulation of distributionally robust learning in the particular case of logistic regression. First, we demonstrate an important limitation of the previously-proposed distributionally robust logistic regression \citep{abadeh2015distributionally}: There is often no choice of the radius $\varepsilon$ of the adversary's decision set that yields a classifier that is both robust and non-trivial. We then demonstrate that the formulation proposed here, which uses unlabeled data to restrict the adversary, can yield non-trivial classifiers with non-vacuous bounds on the generalization error.

\subsection{How important is the choice of $\varepsilon$?}
\label{sec:how-important-is-radius}

In practice, we do not know the radius necessary to include the true data distribution $\Ppr$ in the Wasserstein ball $\ball_{\varepsilon}(\hat{\Ppr}_l)$. Standard practice for DRL is to choose $\varepsilon$ by cross-validation, attempting to maximize a proxy for the out-of-sample performance. Implicitly, however, doing so relies on a \emph{regularization} effect of traditional DRL, documented by \citet{gao2017wasserstein}, which generates an inverted U-shaped
out-of-sample performance curve with respect to $\varepsilon$. Maximizing cross-validation performance does not necessarily yield a robust classifier in the DRL sense: As we demonstrate in Section \ref{sec:empirical-performance-of-drl-with-unlabeled}, for some datasets there is \emph{no choice of $\varepsilon$} that both includes $\Ppr$ in the adversary's decision set and yields a non-trivial classifier using traditional DRL. In other words, the $\varepsilon$ that maximizes generalization performance is much smaller than the distance between the labeled data $\hat{\Ppr}_l$ and the true data distribution $\Ppr$.

Here, we verify that the choice of $\varepsilon$ matters critically for robustness in the sense of traditional DRL, meaning that a learned classifier that is robust to distributions within an $\varepsilon$-ball $\ball_{\varepsilon}(\hat{\Ppr}_l)$ is not robust to distributions even slightly outside the $\varepsilon$-ball. In this sense, choosing $\varepsilon$ by cross-validation in traditional DRL can yield a classifier that is not robust to perturbations on the order of the distance between the labeled data $\hat{\Ppr}_l$ and the true data distribution $\Ppr$.

We make two empirical observations:
\begin{enumerate}
\item The generalization performance curve for traditional DRL has an inverted U shape, with a maximum at a much smaller radius $\varepsilon$ than is required to include the true data distribution $\Ppr$ in the adversary's decision set. This is shown for several datasets in Figure \ref{fig:likelihood-and-confidence-vs-radius} and further in Appendix \ref{sec:appendix-regular-drl-generalization-vs-radius}.
\item Wasserstein distributional robustness out to radius $\varepsilon$ does not confer robustness to distributions even slightly outside $\ball_{\varepsilon}(\hat{\Ppr}_l)$, at distance $\varepsilon + \Delta$, in the sense that there exists a data distribution in the ball $\ball_{\varepsilon + \Delta}(\hat{\Ppr}_l)$ that yields poor performance for the traditional Wasserstein DRL predictor trained with radius of robustness $\varepsilon$. This is shown for several datasets in Figure \ref{fig:worst-case-likelihood-vs-radius-plus-delta} and further in Appendix \ref{sec:appendix-beyond-decision-set}.
\end{enumerate}

\subsubsection{Choosing $\varepsilon$ in the proposed method}

\begin{figure}
\centering
\begin{subfigure}[b]{0.49\textwidth}
\includegraphics[width=\textwidth]{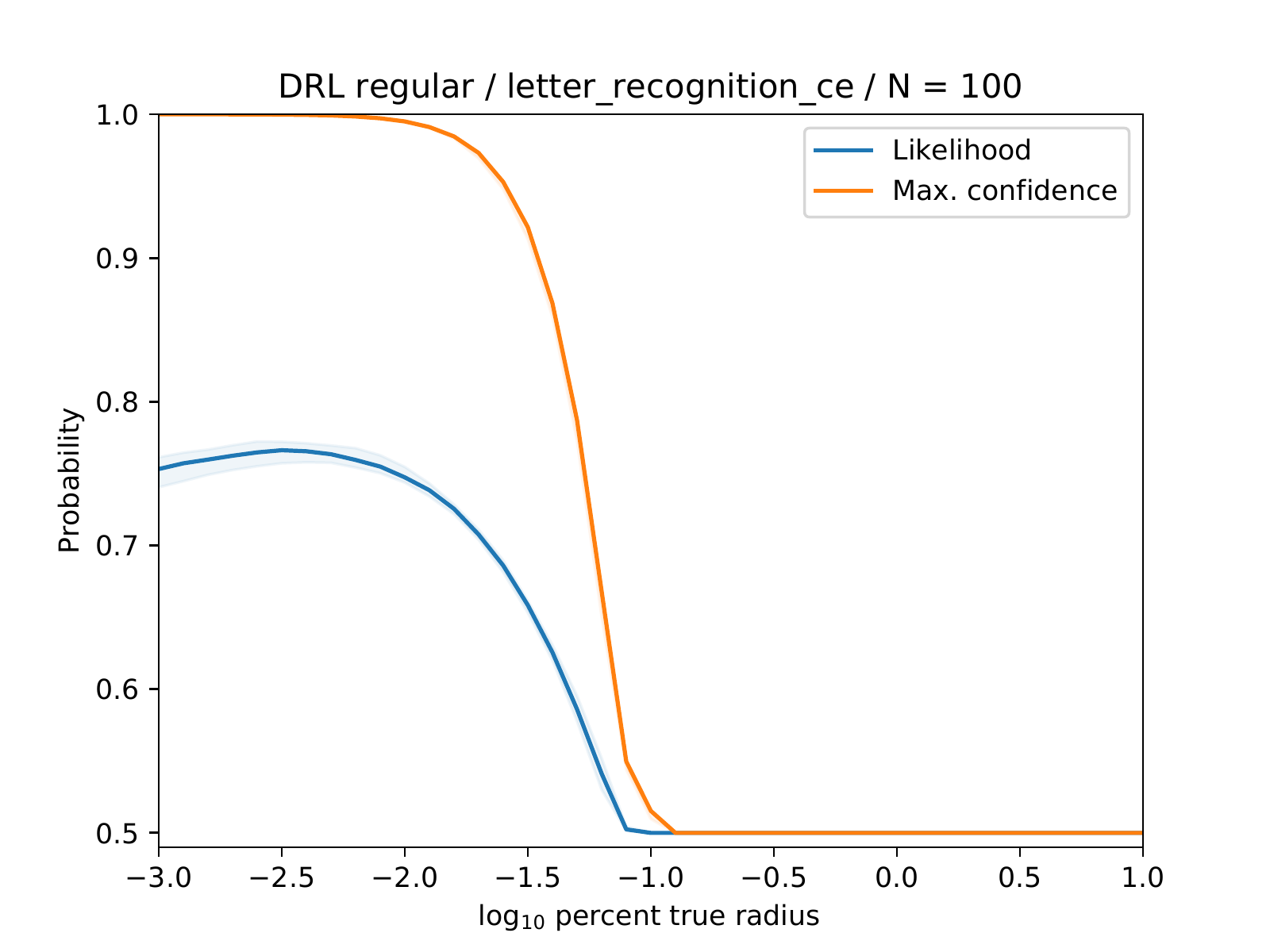}
\caption{Letter recognition (C-E)}
\end{subfigure}
\begin{subfigure}[b]{0.49\textwidth}
\includegraphics[width=\textwidth]{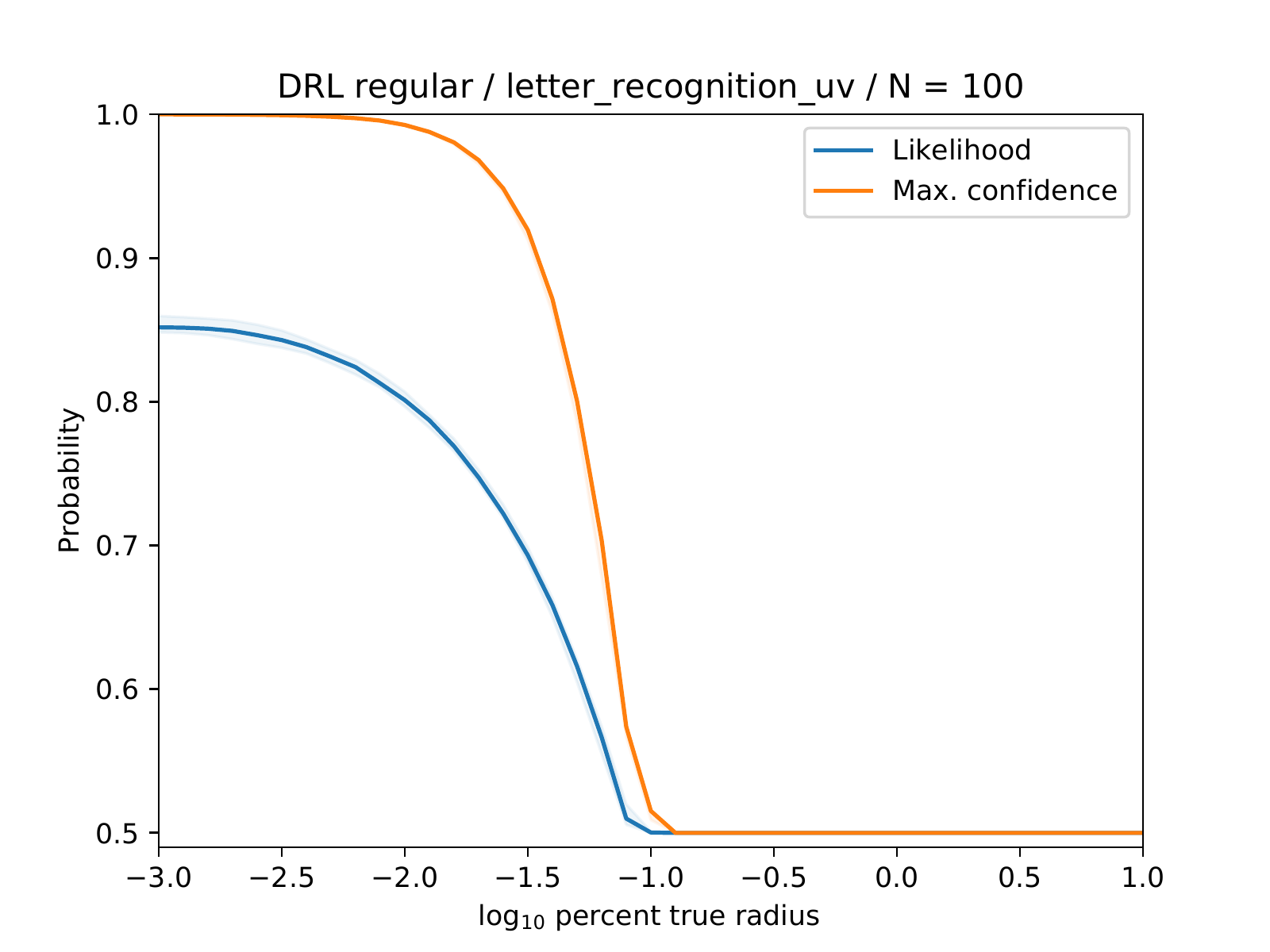}
\caption{Letter recognition (U-V)}
\end{subfigure}
\begin{subfigure}[b]{0.49\textwidth}
\includegraphics[width=\textwidth]{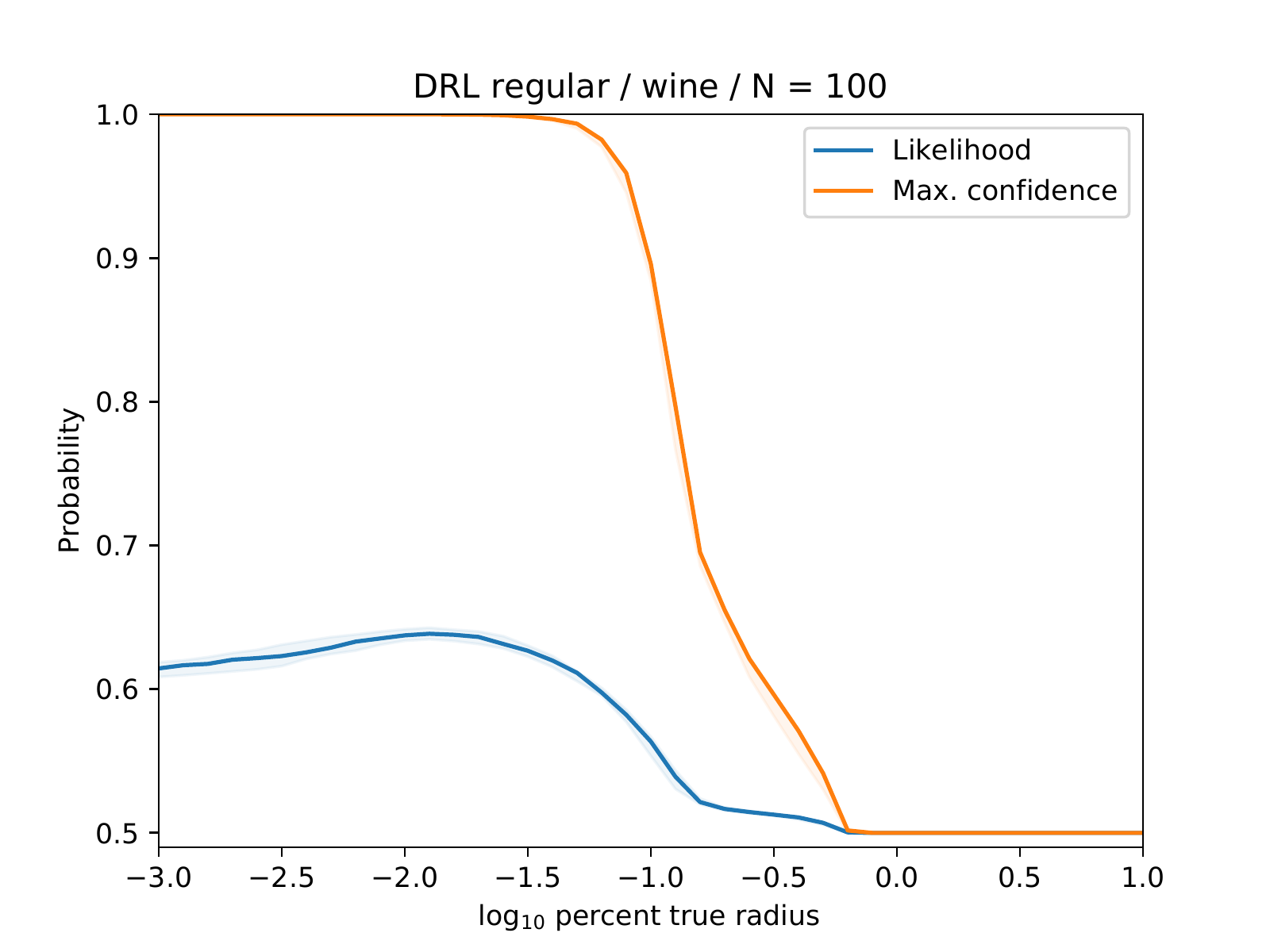}
\caption{Wine}
\end{subfigure}
\begin{subfigure}[b]{0.49\textwidth}
\includegraphics[width=\textwidth]{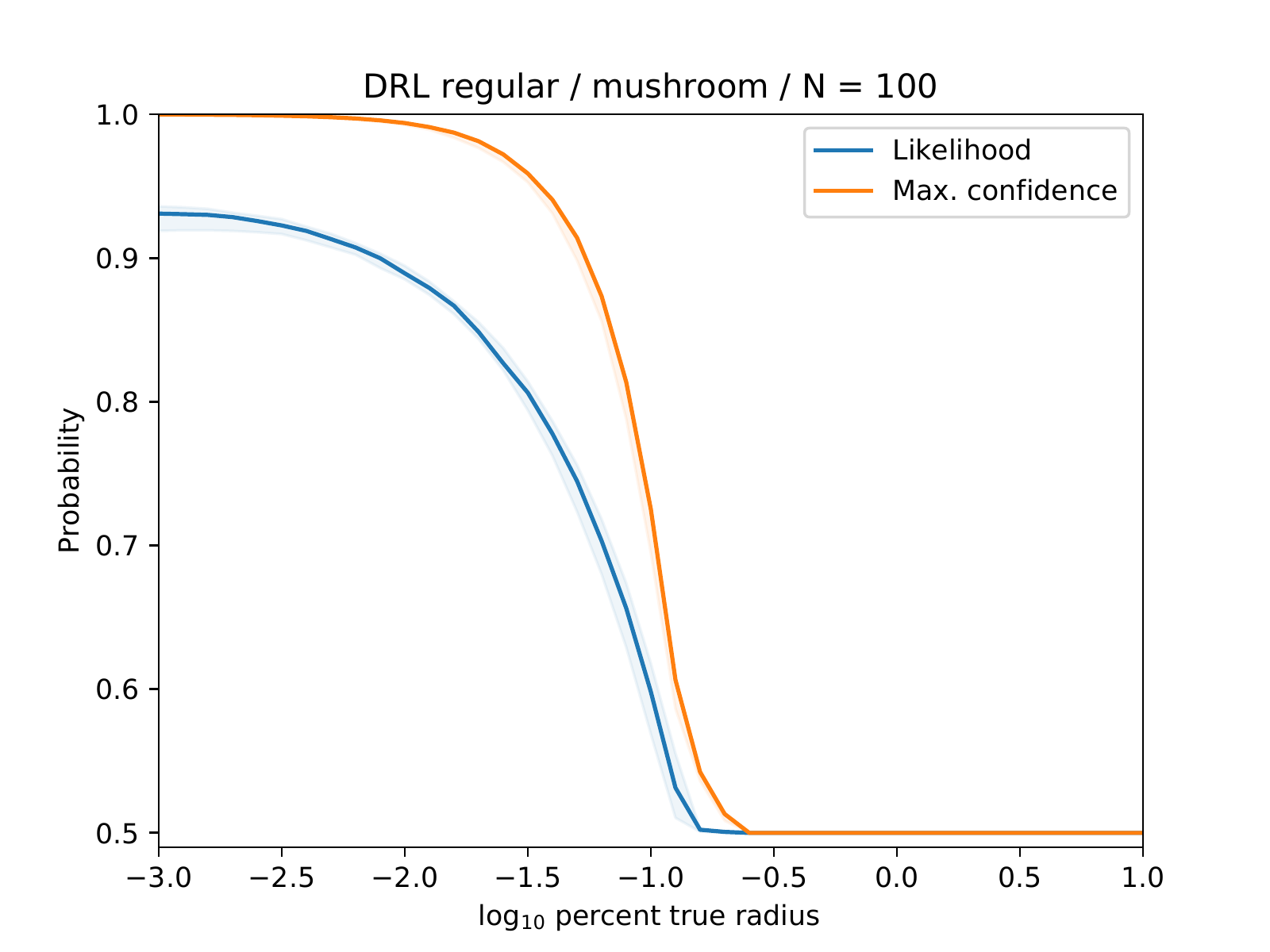}
\caption{Mushroom}
\end{subfigure}
\begin{subfigure}[b]{0.49\textwidth}
\includegraphics[width=\textwidth]{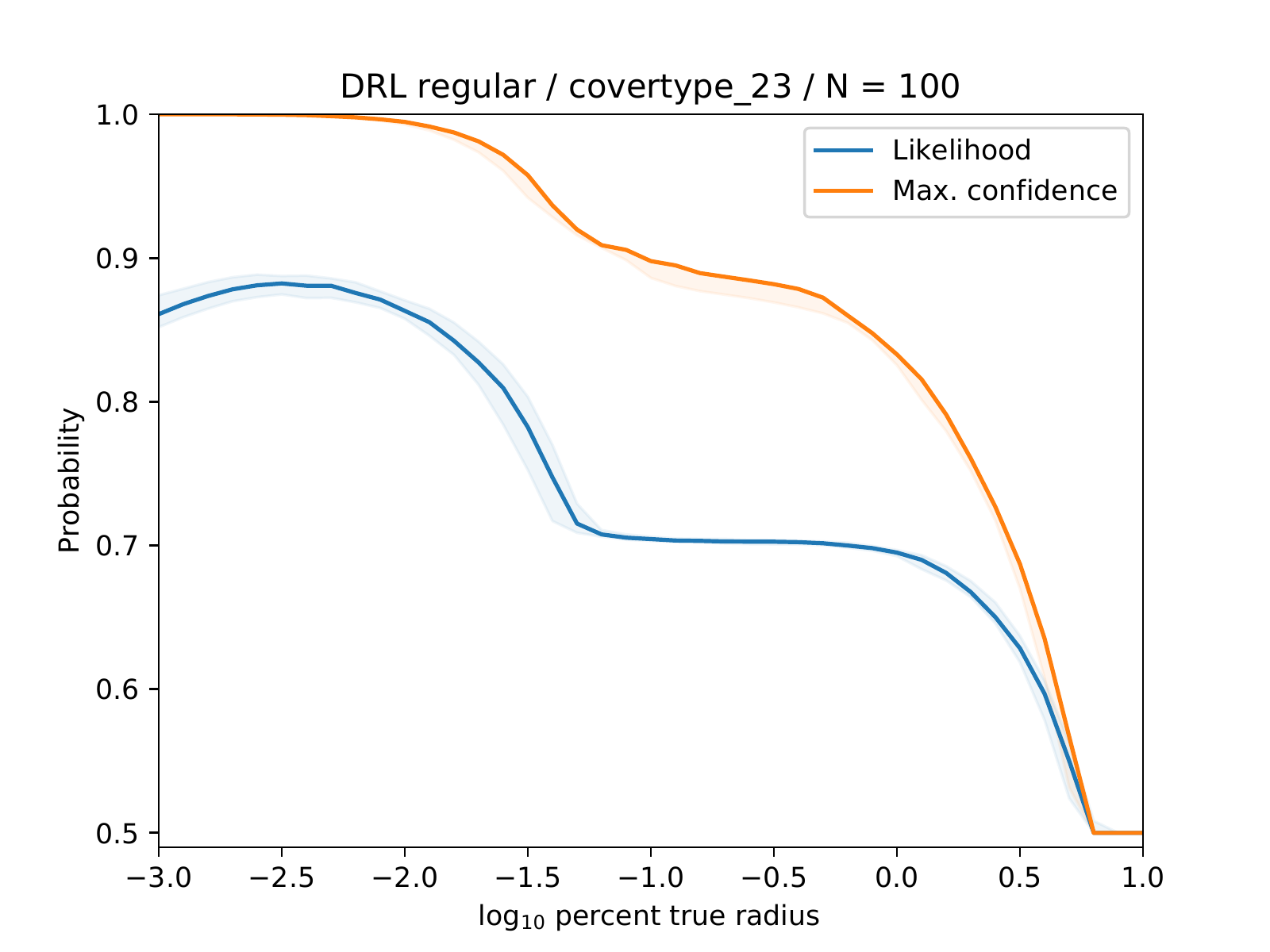}
\caption{Cover type (2-3)}
\end{subfigure}
\begin{subfigure}[b]{0.49\textwidth}
\includegraphics[width=\textwidth]{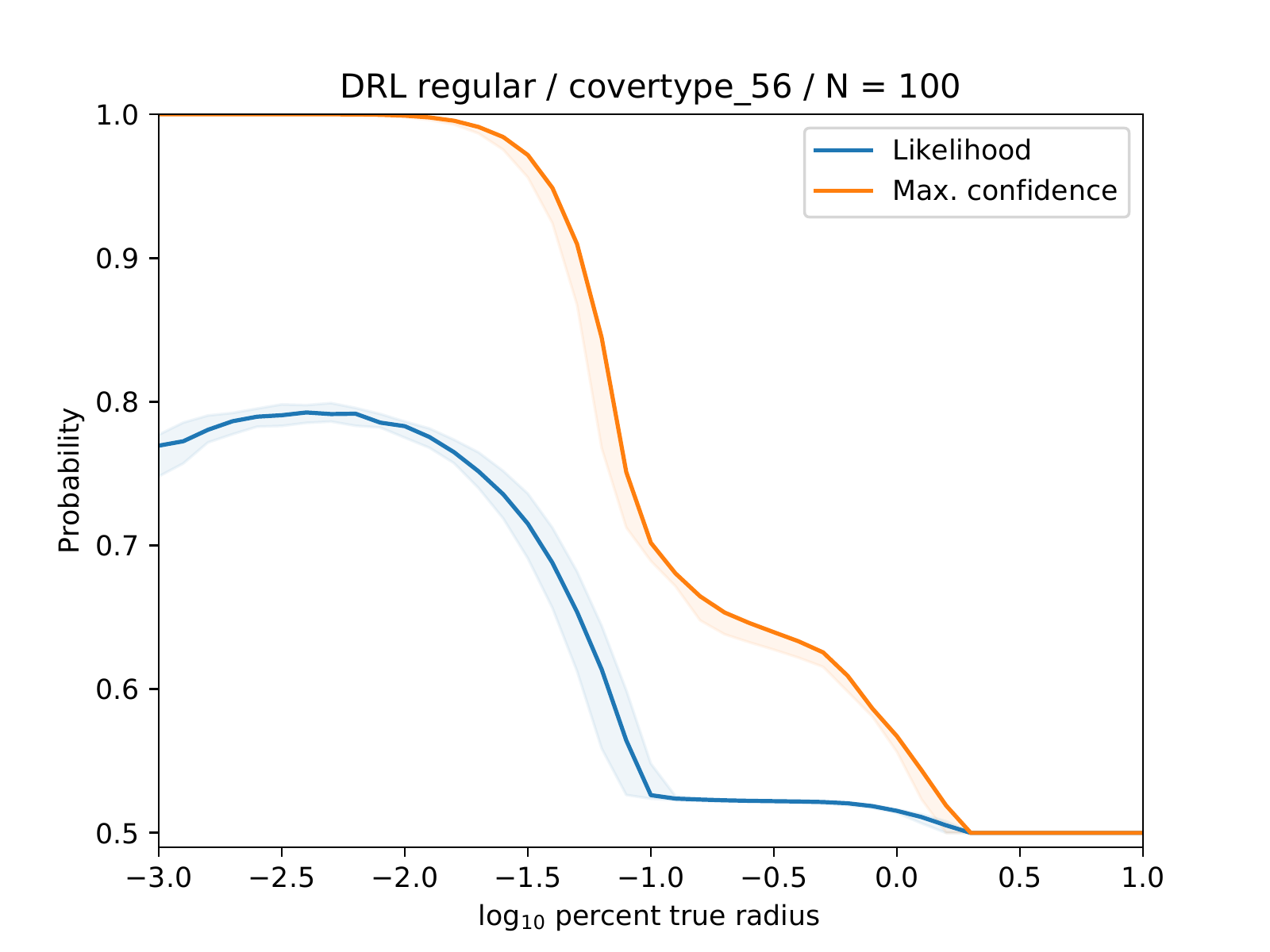}
\caption{Cover type (5-6)}
\end{subfigure}
\caption{Traditional Wasserstein DRL. Out-of-sample performance (likelihood) and maximum confidence vs. radius of robustness $\varepsilon$ as a percentage of the distance to the true data distribution $\Ppr$. Performance shows a bias-variance tradeoff with peak at $\varepsilon$ much smaller than the distance to $\Ppr$. Confidence often drops sharply at a radius much smaller than the distance to $\Ppr$.}
\label{fig:likelihood-and-confidence-vs-radius}
\end{figure}

\begin{figure}
\centering
\begin{subfigure}[b]{0.49\textwidth}
\includegraphics[width=\textwidth]{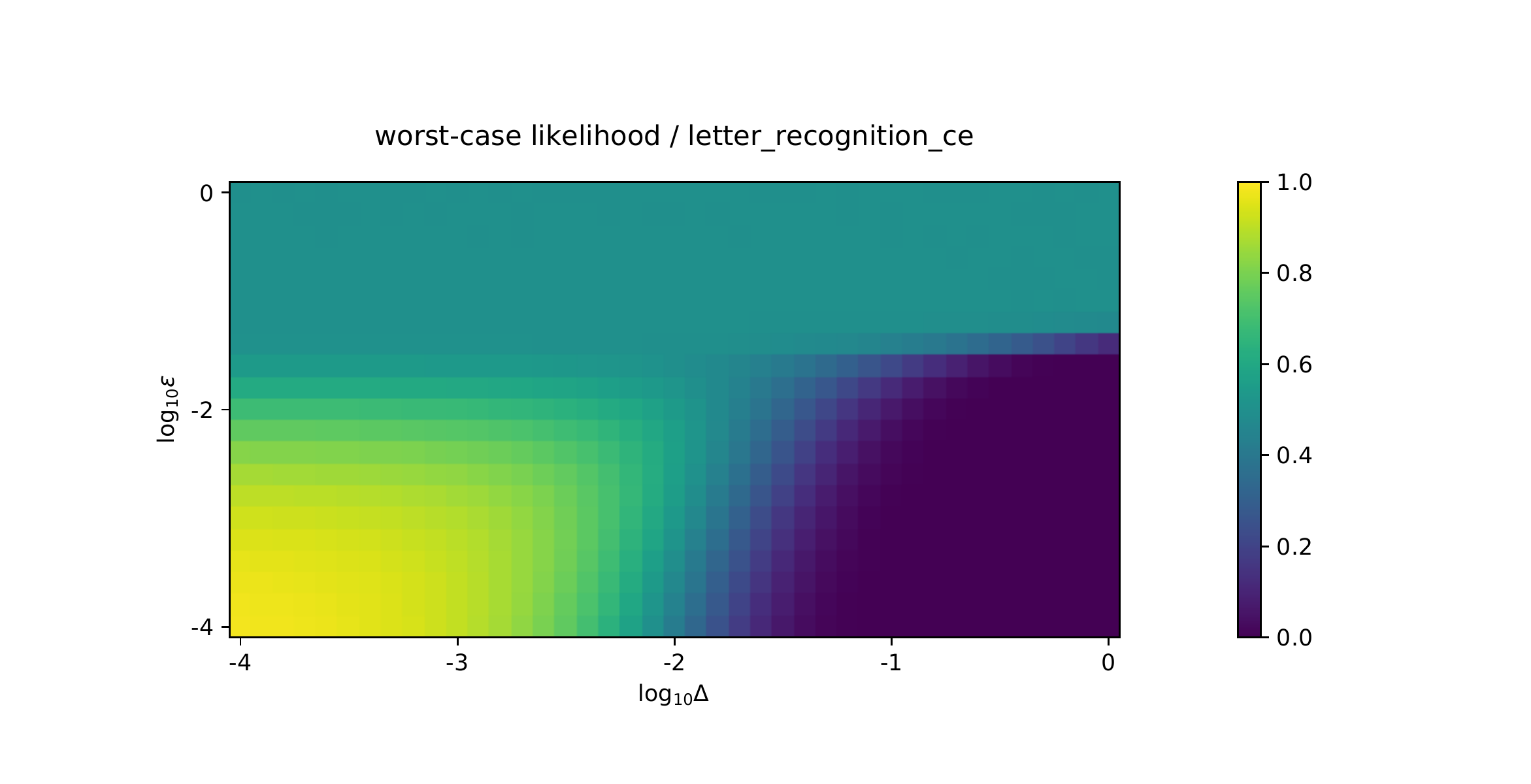}
\caption{Letter recognition (C-E)}
\end{subfigure}
\begin{subfigure}[b]{0.49\textwidth}
\includegraphics[width=\textwidth]{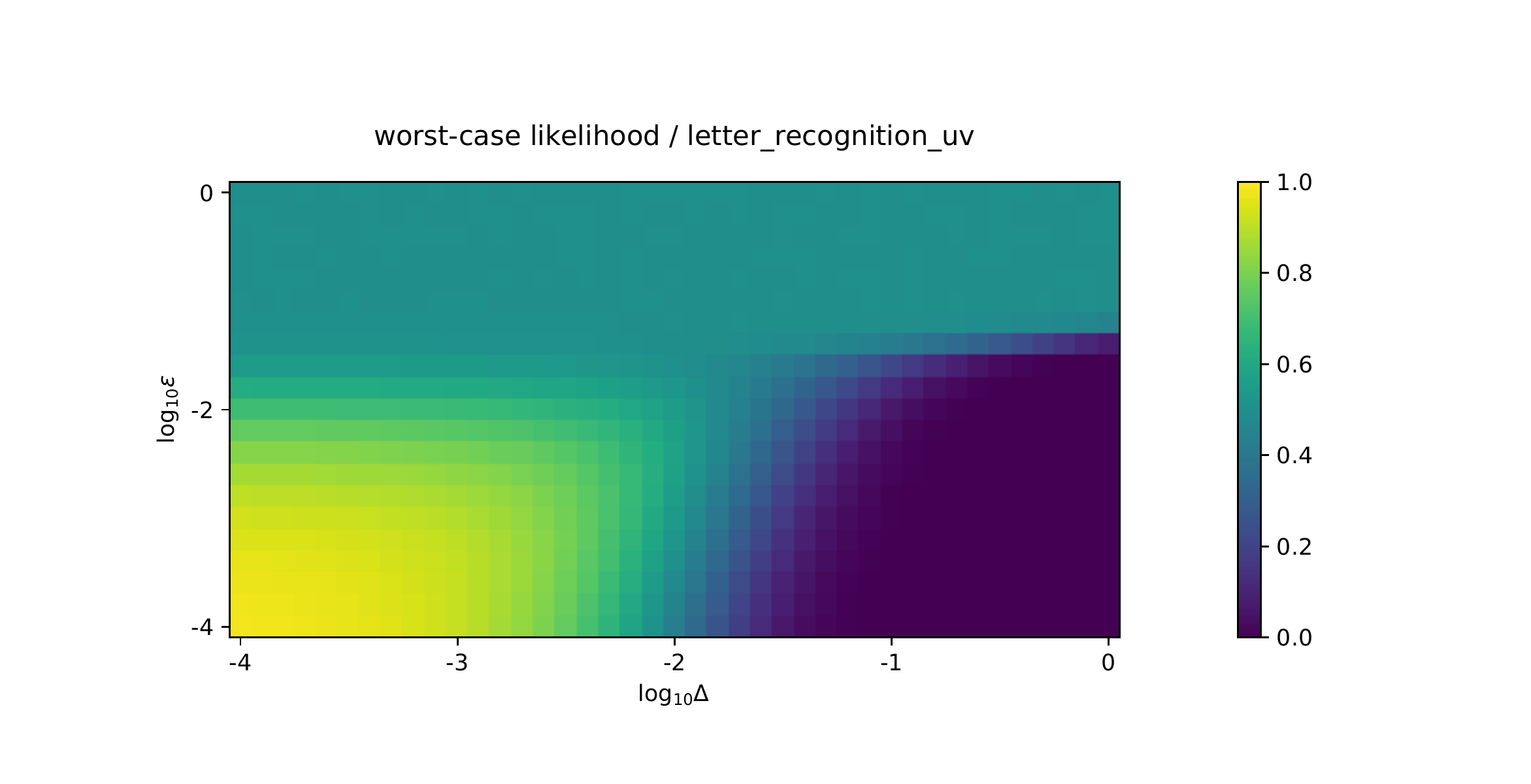}
\caption{Letter recognition (U-V)}
\end{subfigure}
\begin{subfigure}[b]{0.49\textwidth}
\includegraphics[width=\textwidth]{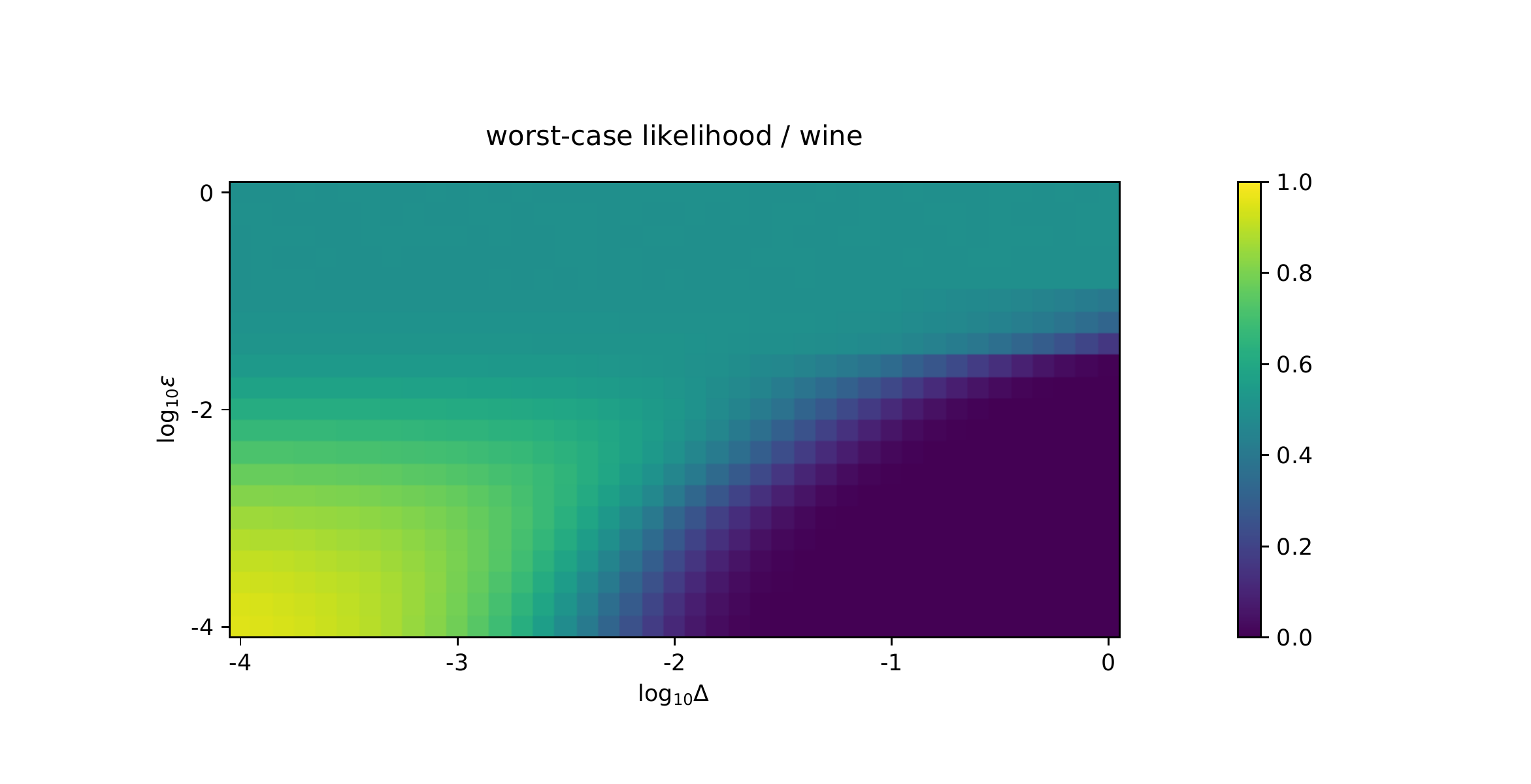}
\caption{Wine}
\end{subfigure}
\begin{subfigure}[b]{0.49\textwidth}
\includegraphics[width=\textwidth]{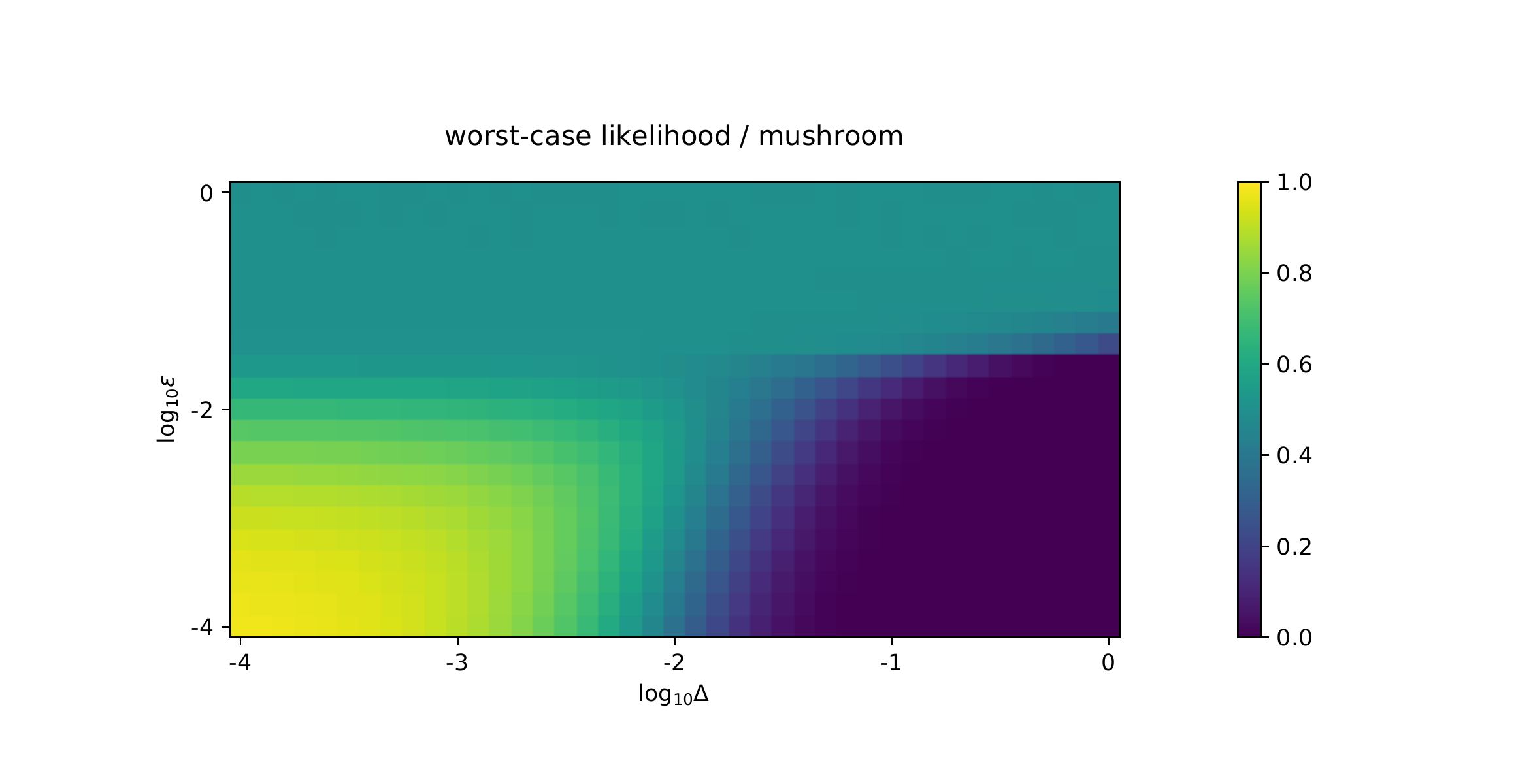}
\caption{Mushroom}
\end{subfigure}
\begin{subfigure}[b]{0.49\textwidth}
\includegraphics[width=\textwidth]{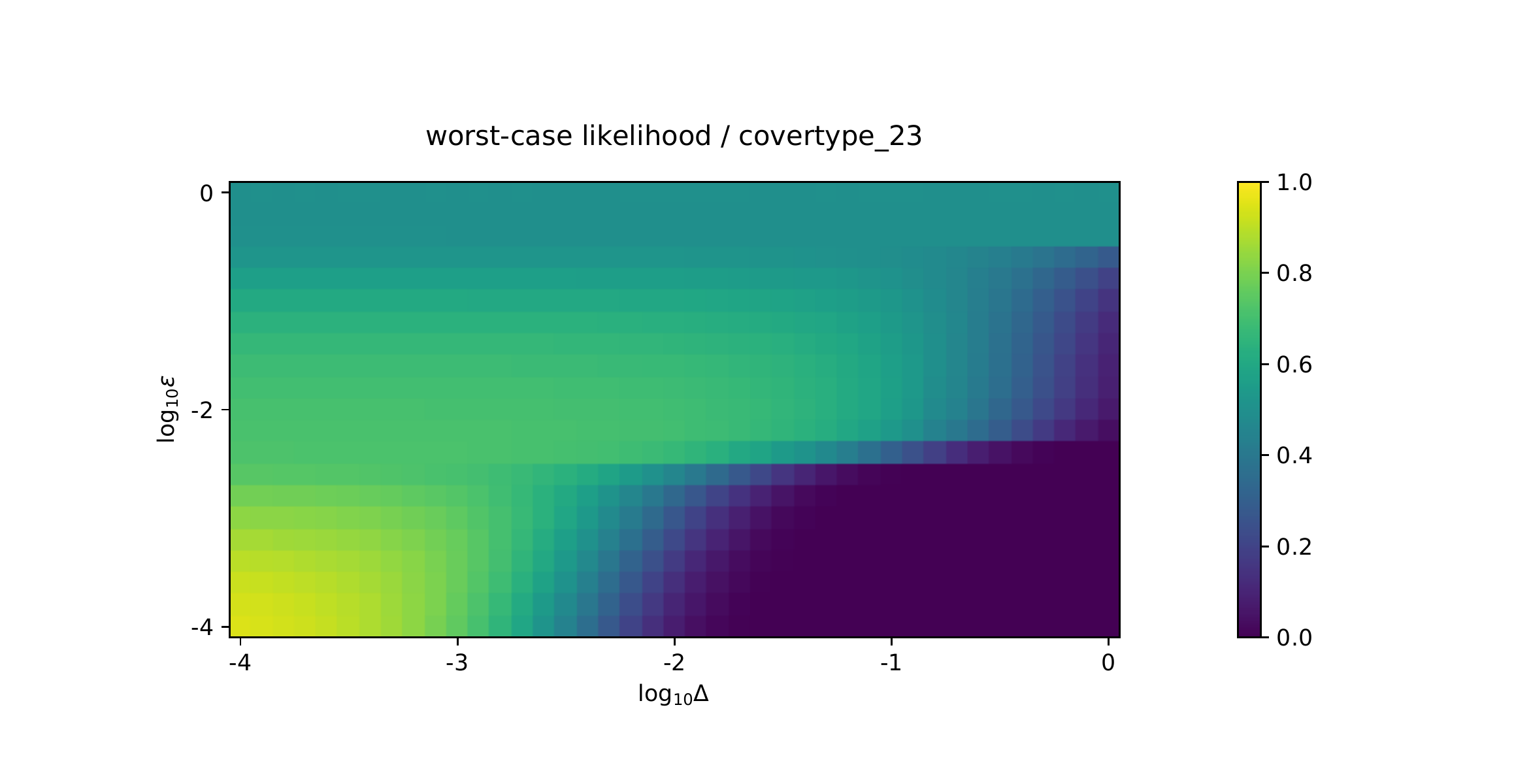}
\caption{Cover type (2-3)}
\end{subfigure}
\begin{subfigure}[b]{0.49\textwidth}
\includegraphics[width=\textwidth]{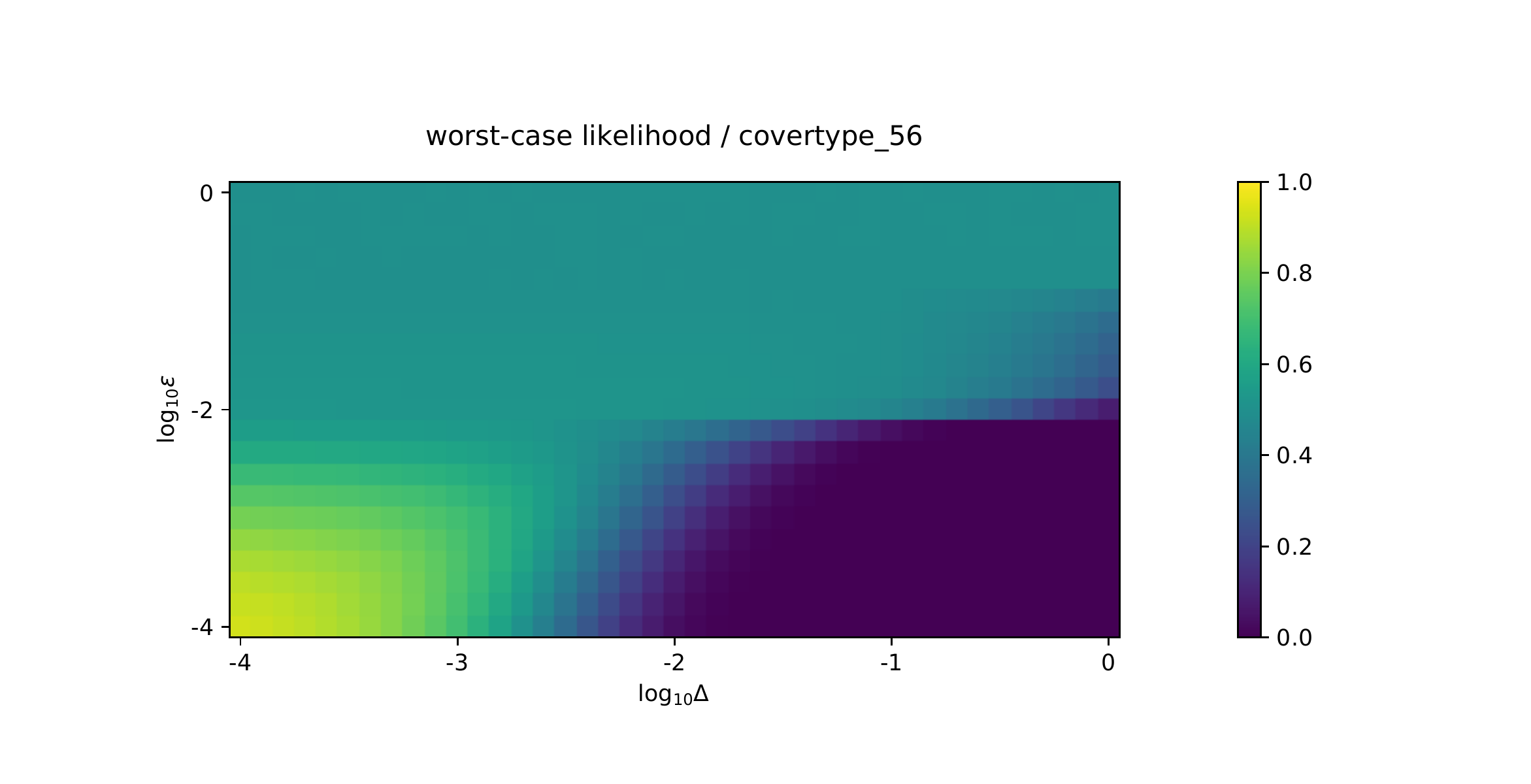}
\caption{Cover type (5-6)}
\end{subfigure}
\caption{Traditional Wasserstein DRL. Worst-case performance (likelihood) vs. radius of robustness $\varepsilon$ and test-time data radius $\varepsilon + \Delta$. Yellow indicates perfectly correct prediction (likelihood $1$), blue perfectly incorrect (likelihood $0$), and green perfectly indecisive prediction (likelihood $0.5$). Training with radius $\varepsilon$ confers little robustness beyond $\varepsilon$.}
\label{fig:worst-case-likelihood-vs-radius-plus-delta}
\end{figure}

\begin{figure}
\centering
\begin{subfigure}[b]{0.49\textwidth}
\includegraphics[width=\textwidth]{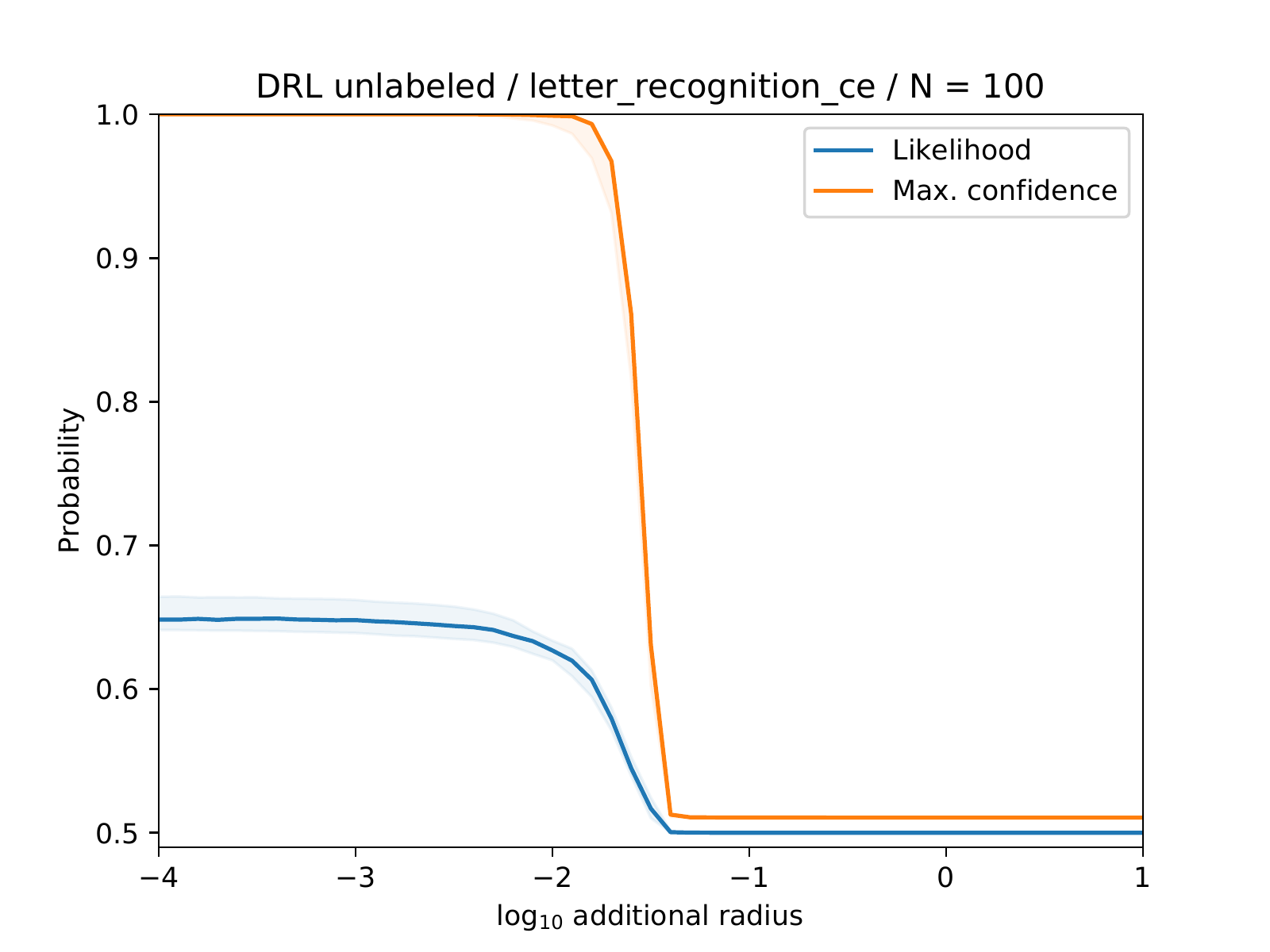}
\caption{Letter recognition (C-E)}
\end{subfigure}
\begin{subfigure}[b]{0.49\textwidth}
\includegraphics[width=\textwidth]{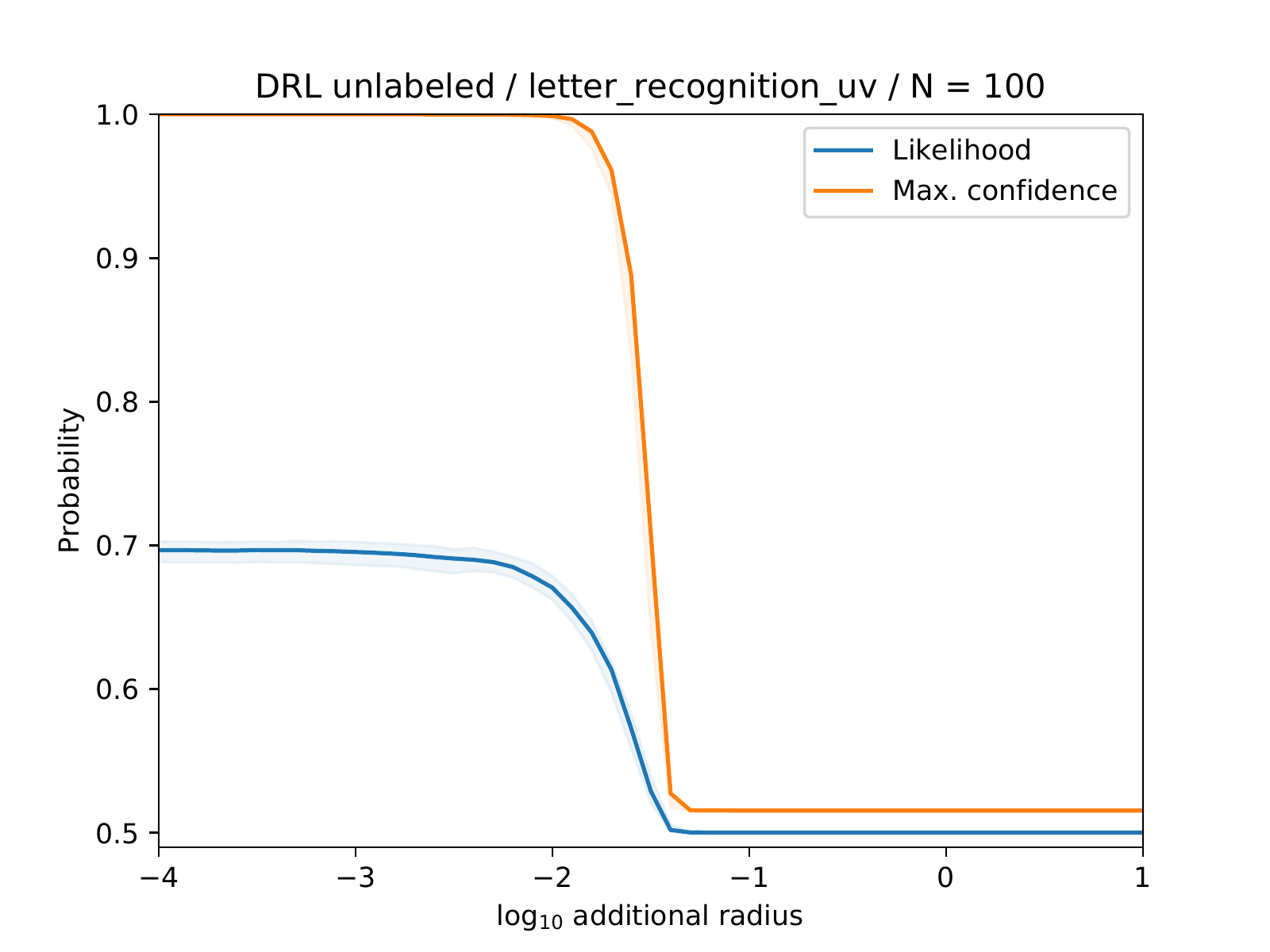}
\caption{Letter recognition (U-V)}
\end{subfigure}
\begin{subfigure}[b]{0.49\textwidth}
\includegraphics[width=\textwidth]{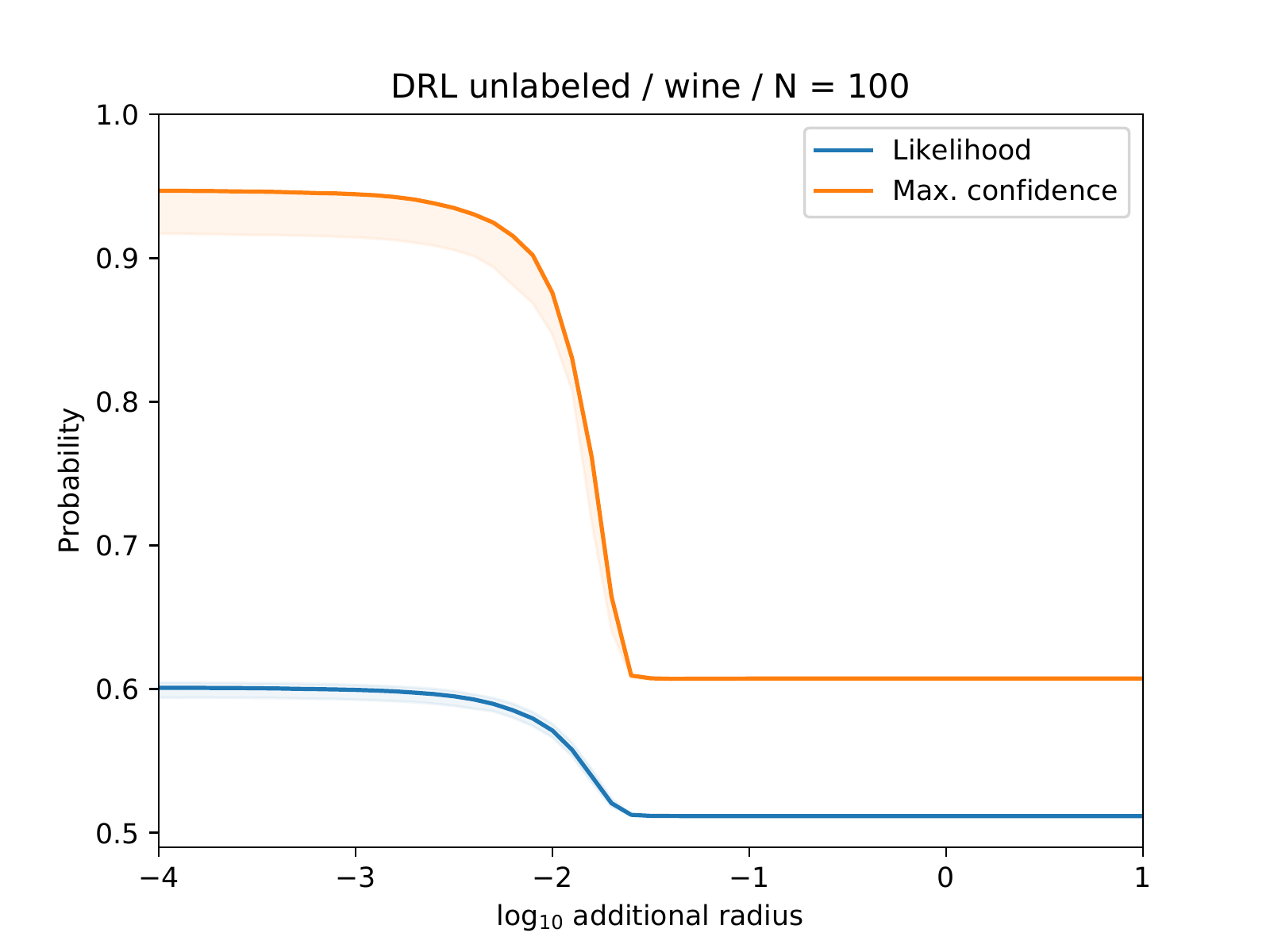}
\caption{Wine}
\end{subfigure}
\begin{subfigure}[b]{0.49\textwidth}
\includegraphics[width=\textwidth]{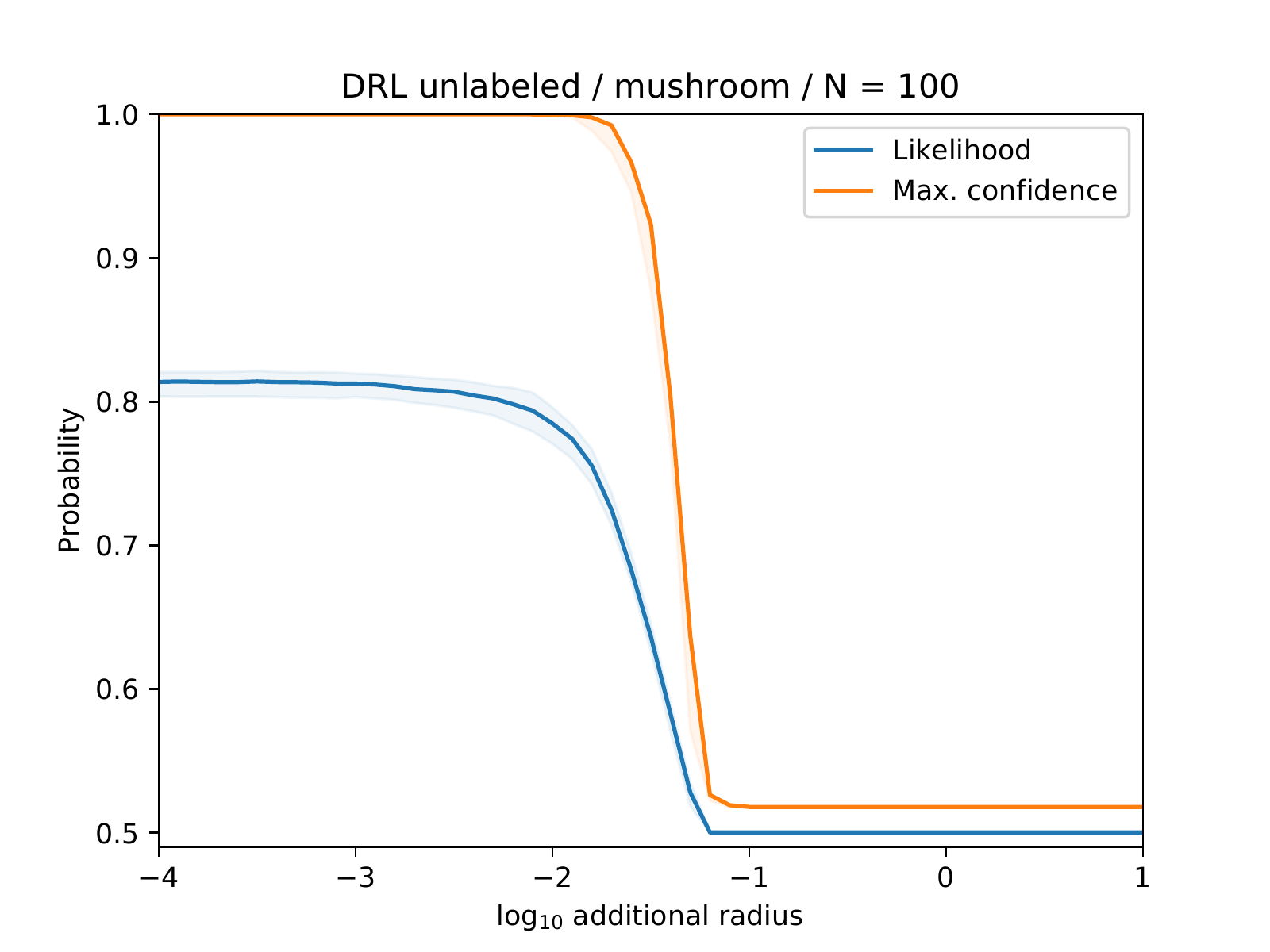}
\caption{Mushroom}
\end{subfigure}
\begin{subfigure}[b]{0.49\textwidth}
\includegraphics[width=\textwidth]{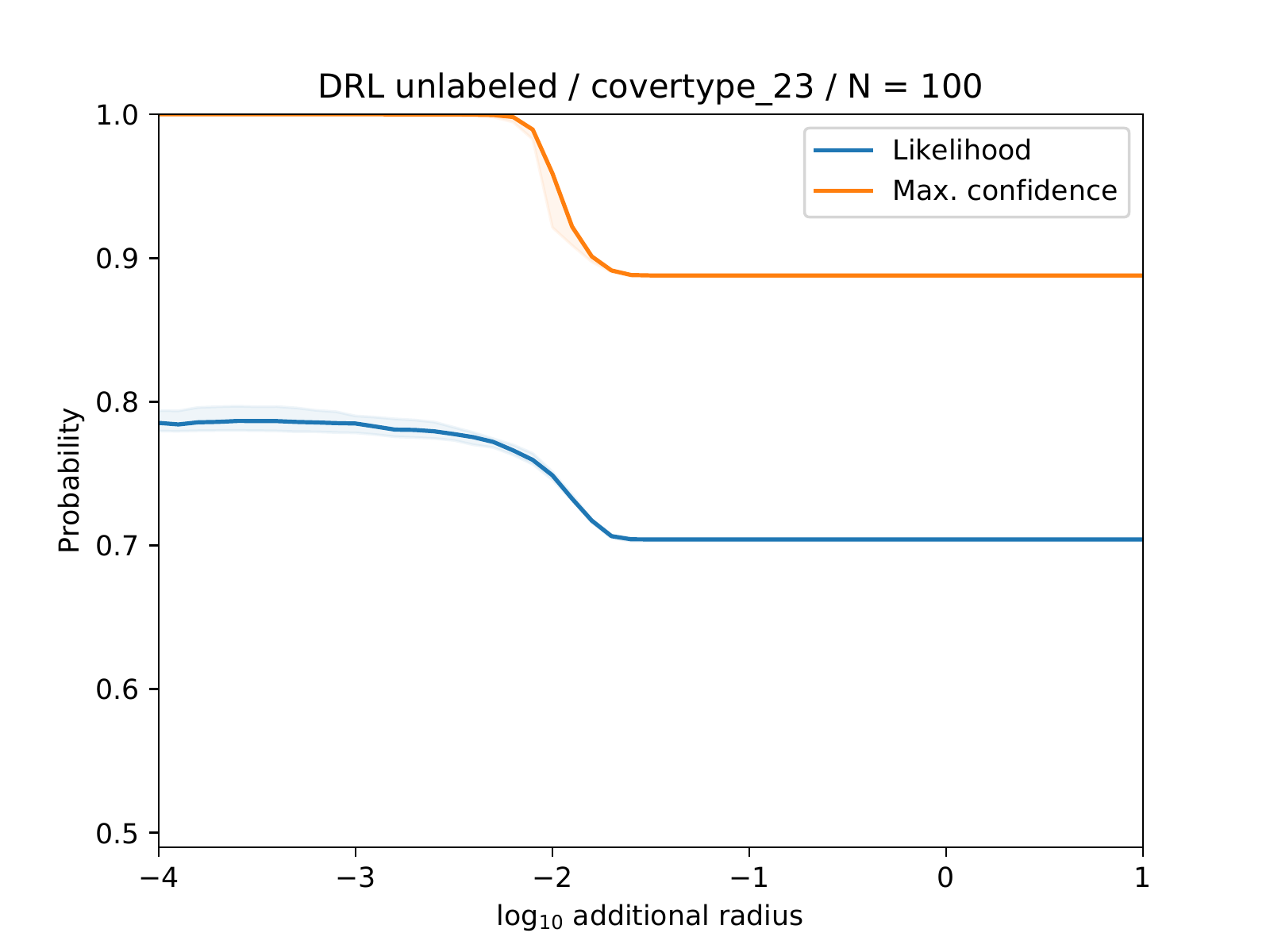}
\caption{Cover type (2-3)}
\end{subfigure}
\begin{subfigure}[b]{0.49\textwidth}
\includegraphics[width=\textwidth]{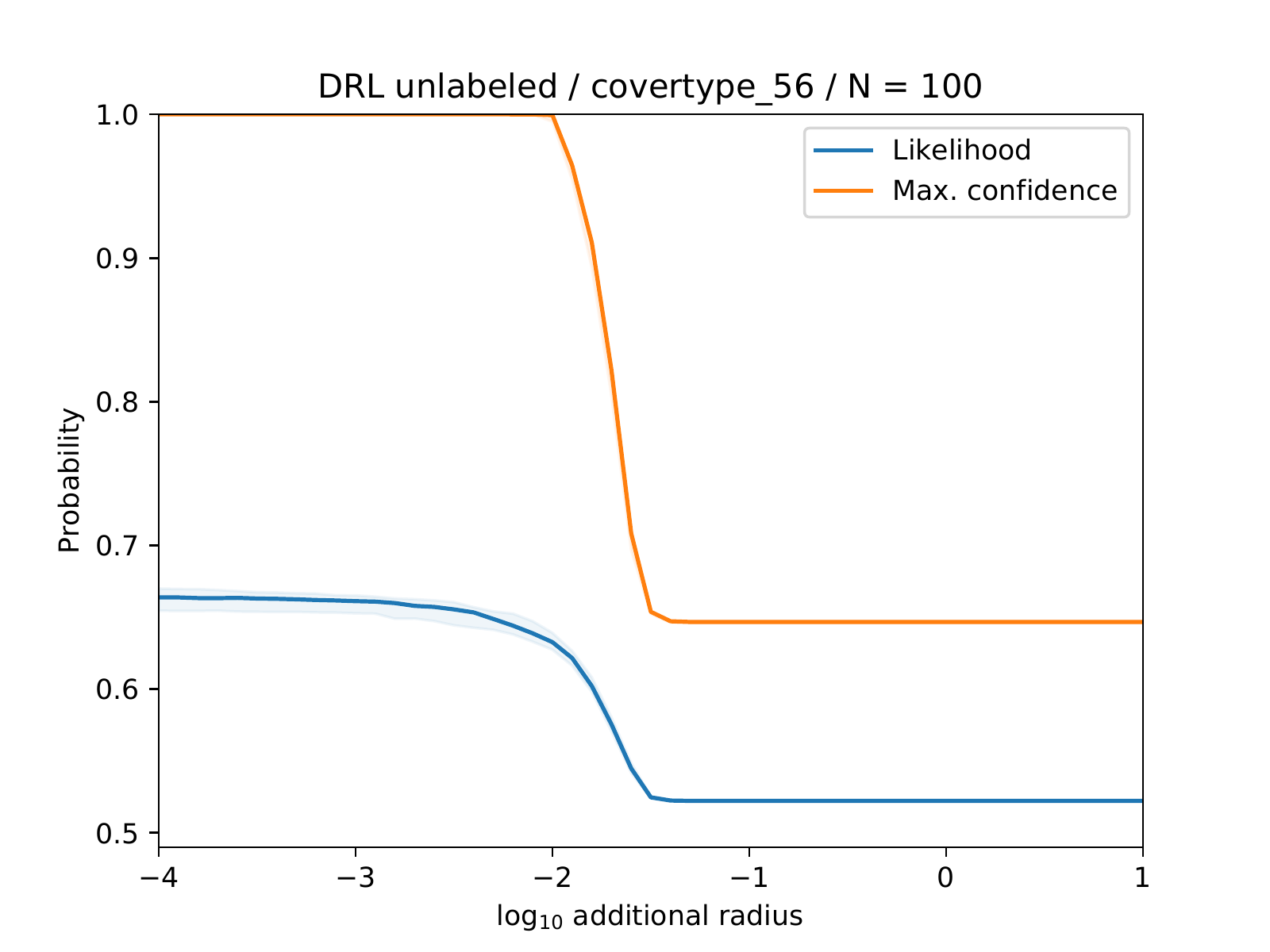}
\caption{Cover type (5-6)}
\end{subfigure}
\caption{DRL with unlabeled data. Out-of-sample performance (likelihood) and maximum confidence vs. difference between the radius of robustness $\varepsilon$ and the minimal radius necessary for the decision set to be nonempty. Unlike with traditional Wasserstein DRL, here there is no apparent bias-variance tradeoff. Performance is flat out to a radius at which the confidence drops sharply.}
\label{fig:likelihood-and-confidence-vs-additional-radius}
\end{figure}

The choice of $\varepsilon$ for the proposed method is constrained by the fact that {\bf the feasible set is empty} for $\varepsilon$ below a threshold, as there might be no distribution in the ball $\ball_{\varepsilon}(\hat{\Ppr}_l)$ having the desired marginals $\Ppr_{\Xspace}$ and $\Ppr_{\Yspace}$. This situation is easily detected in practice, as the value of the dual $g(\theta)$ becomes unbounded below.

Empirically, with the proposed method, we find no evidence of a bias-variance tradeoff as the radius $\varepsilon$ is varied, unlike traditional Wasserstein DRL. Figure \ref{fig:likelihood-and-confidence-vs-additional-radius} shows out-of-sample performance as we vary the difference between the radius $\varepsilon$ and the minimal such radius for which the feasible set is nonempty. The performance is flat out to a radius beyond which the confidence of the learner decreases quickly. Appendix \ref{sec:appendix-no-bias-variance} contains further examples.

This last observation suggests suggests a criterion for choosing $\varepsilon$ under the proposed DRL model: One chooses the maximum $\varepsilon$ such that the confidence of the learned classifier is above a threshold. This is the {\bf as-robust-as-possible} selection, as opposed to the maximum-cross-validation-performance selection often used in traditional DRL. There are multiple possible implementations of this criterion. In our experiments, for example, thresholding the median confidence on the unlabeled set at $0.7$ often suffices to ensure that $\varepsilon$ is large enough to ensure $\Ppr \in \ball_{\varepsilon}(\hat{\Ppr}_l)$, for reasonable values of $N_l$ (Figure \ref{fig:true-radius-conf}).

\subsection{Empirical performance of learning with unlabeled data}
\label{sec:empirical-performance-of-drl-with-unlabeled}

In this section, we demonstrate the impact of the proposed method for constraining the adversary's decision set using unlabeled data. We evaluate the performance guarantee offered by the previously-proposed distributionally robust logistic regression model \citep{abadeh2015distributionally} on several binary classification datasets,\footnote{Datasets are from the UCI machine learning repository \citep{dua2019:uci}.} and we compare it to the guarantee offered by our model under the assumption that the radius $\varepsilon$ of the Wasserstein ball $\ball_{\varepsilon}(\hat{\Ppr}_l)$ defining the adversary's decision set is chosen to include the true data distribution $\Ppr$.

For each dataset, we sample a small number $N_l$ of labeled examples and compute the radius $\varepsilon$ that is required to include the true (empirical) data distribution $\Ppr$ in the Wasserstein ball $\ball_{\varepsilon}(\hat{\Ppr}_l)$. This is the smallest $\varepsilon$ for which the performance guarantee from DRL holds. We use the labeled examples to compute the distributionally robust logistic regression under the traditional model \citep{abadeh2015distributionally} and additionally use the set of unlabeled examples to compute the same regression under the proposed model. We compare the performance guarantee (i.e.\ the dual objective value) computed under each DRL model. Identical values of $\varepsilon$ are used for both methods, but a different value of $\varepsilon$ is computed for each sampled set of labeled examples $\hat{\Zspace}_l$.

We examine two settings for the proposed method. The first assumes a {\bf strong prior} that specifies the exact (true) label probabilities, such that $\underline{\pvec}_{\Yspace} = \overline{\pvec}_{\Yspace}$. In practice such a strong prior might come from auxiliary data, such as in ecological inference or with domain knowledge. The second setting assumes a {\bf weak prior} that specifies only $95\%$ confidence intervals for the label probabilities, estimated directly from the from labeled data $\hat{\Zspace}_l$ \citep{clopper1934use}. 

We vary the number of labeled examples and examine the computed performance guarantee, shown in Figure \ref{fig:true-radius-fval}, as well as the median confidence of the learned predictor, shown in Figure \ref{fig:true-radius-conf}. The former is the worst-case guarantee \eqref{eq:worst-case-primal} and not the actual generalization performance of the learned classifier. We make three observations:
\begin{enumerate}
\item For all but one of the datasets, the performance bound computed by traditional DRL is vacuous (guaranteeing only likelihood greater than or equal to $0.5$), while the learned classifier is trivial (assigning equal probability to both classes), for all tested numbers of labeled examples (maximum $N_l = 1000$).
\item For all datasets, the proposed DRL using unlabeled data and either a strong prior and a weak prior on the label probabilities yields a non-vacuous performance bound and a non-trivial classifier, for $N_l$ at which traditional DRL is vacuous.
\item The strong prior on label probabilities can yield highly non-trivial performance bounds, for smaller numbers of labeled examples $N_l$ than the weak prior.
\end{enumerate}
We have chosen $\varepsilon$ as small as possible while ensuring the computed performance guarantee holds, and the performance bound computed by either algorithm gets monotonically worse as $\varepsilon$ increases.

\begin{figure}
\centering
\begin{subfigure}[b]{0.49\textwidth}
\includegraphics[width=\textwidth]{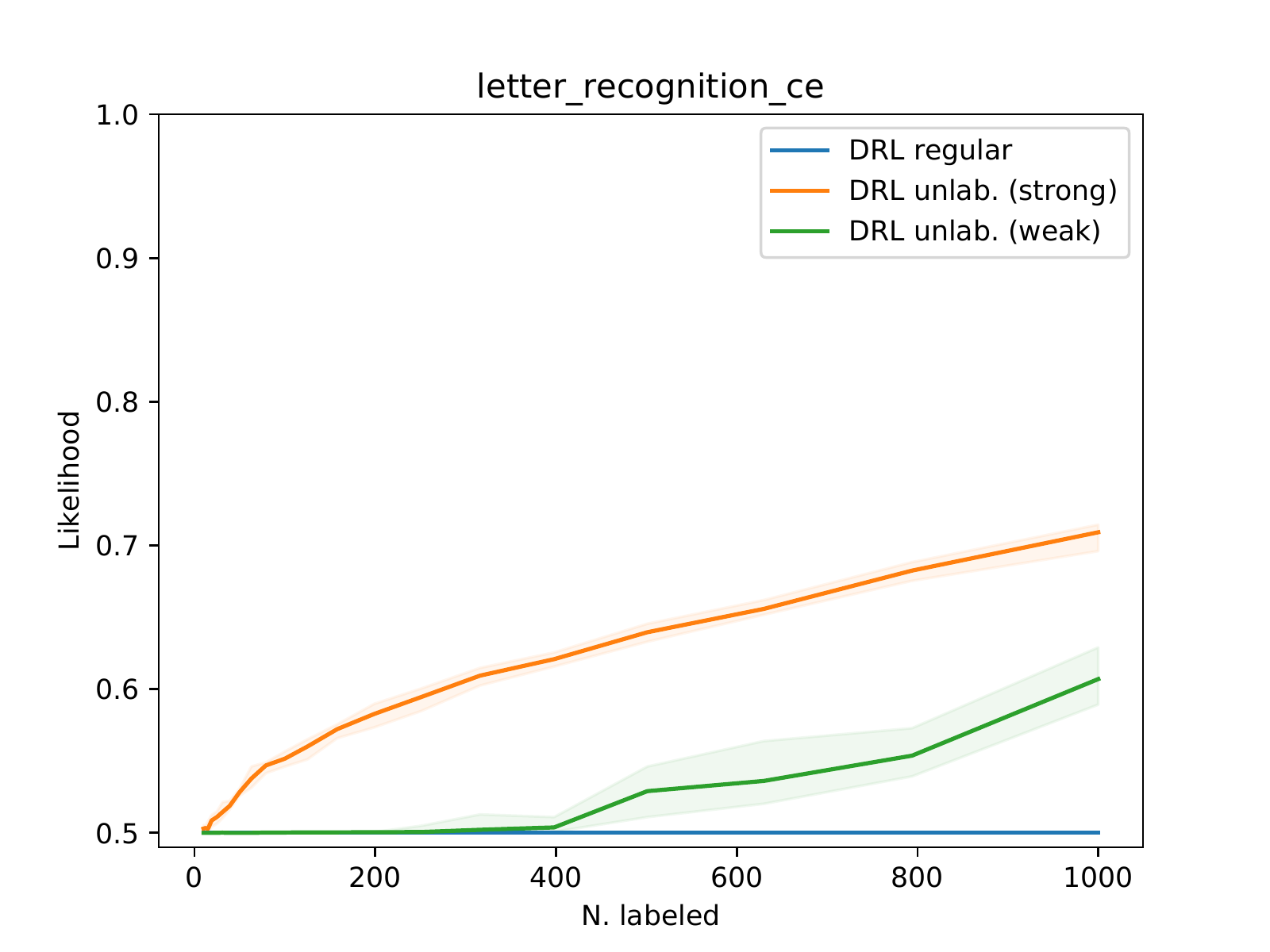}
\caption{Letter recognition (C-E)}
\end{subfigure}
\begin{subfigure}[b]{0.49\textwidth}
\includegraphics[width=\textwidth]{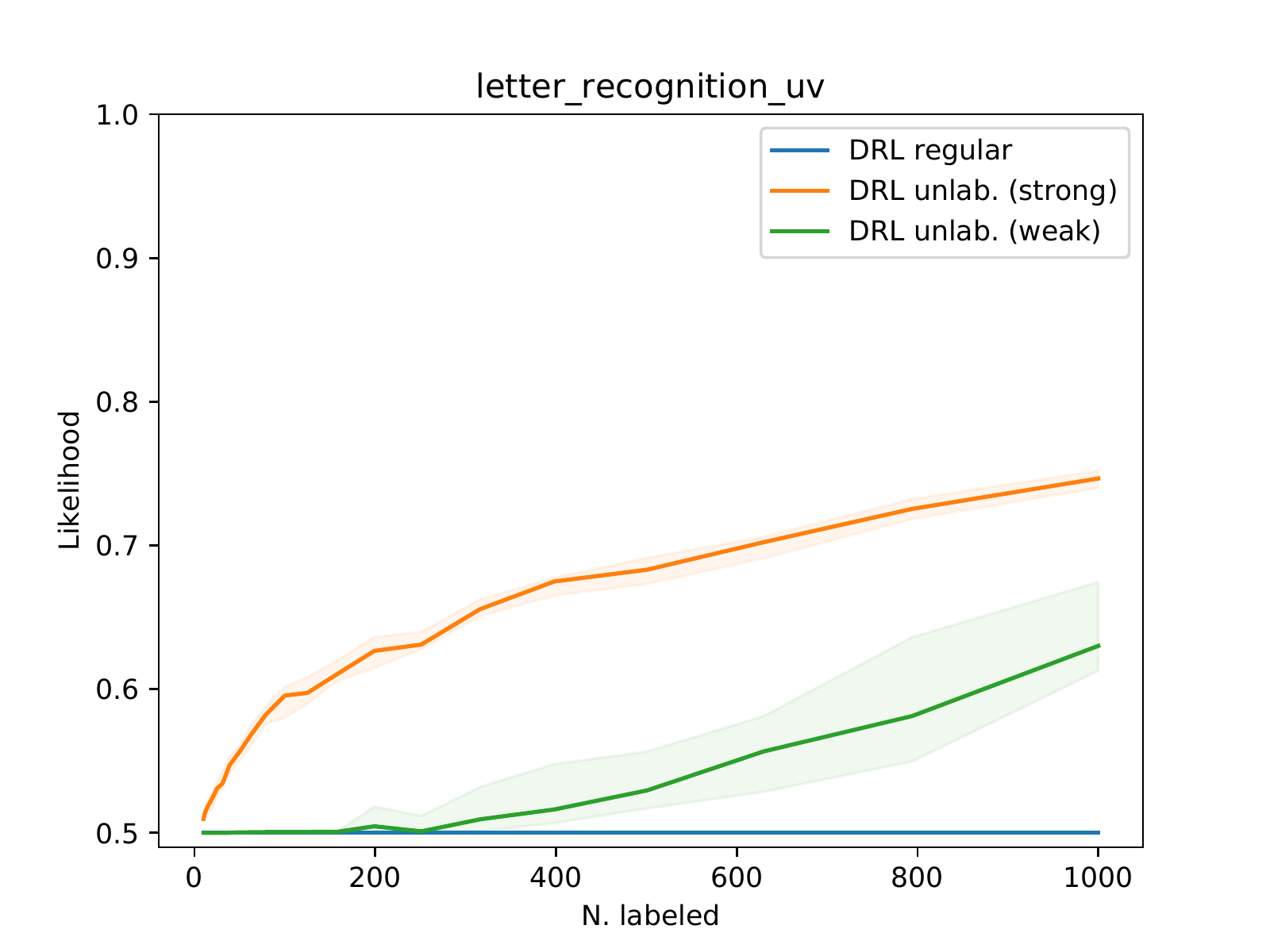}
\caption{Letter recognition (U-V)}
\end{subfigure}
\begin{subfigure}[b]{0.49\textwidth}
\includegraphics[width=\textwidth]{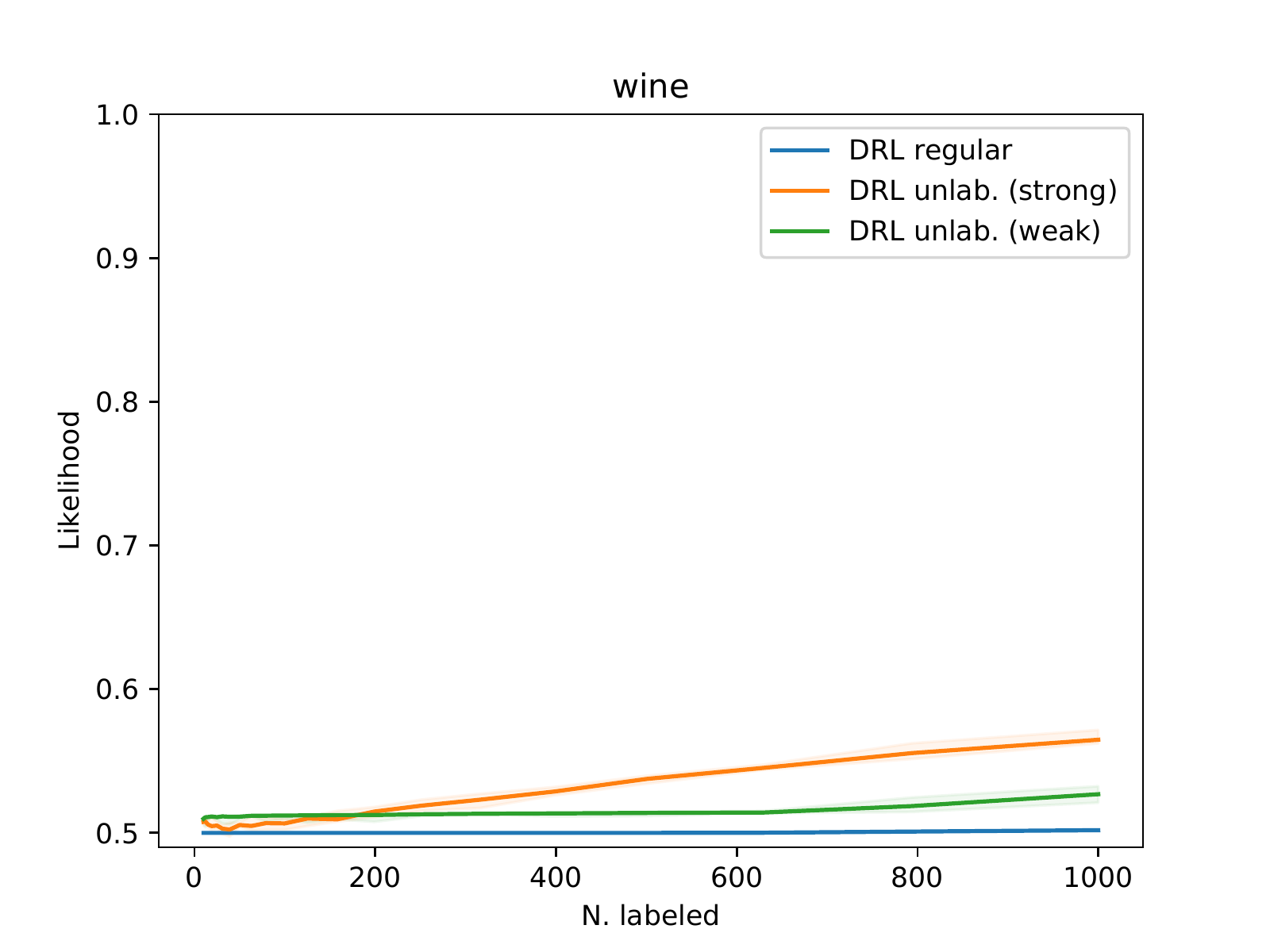}
\caption{Wine}
\end{subfigure}
\begin{subfigure}[b]{0.49\textwidth}
\includegraphics[width=\textwidth]{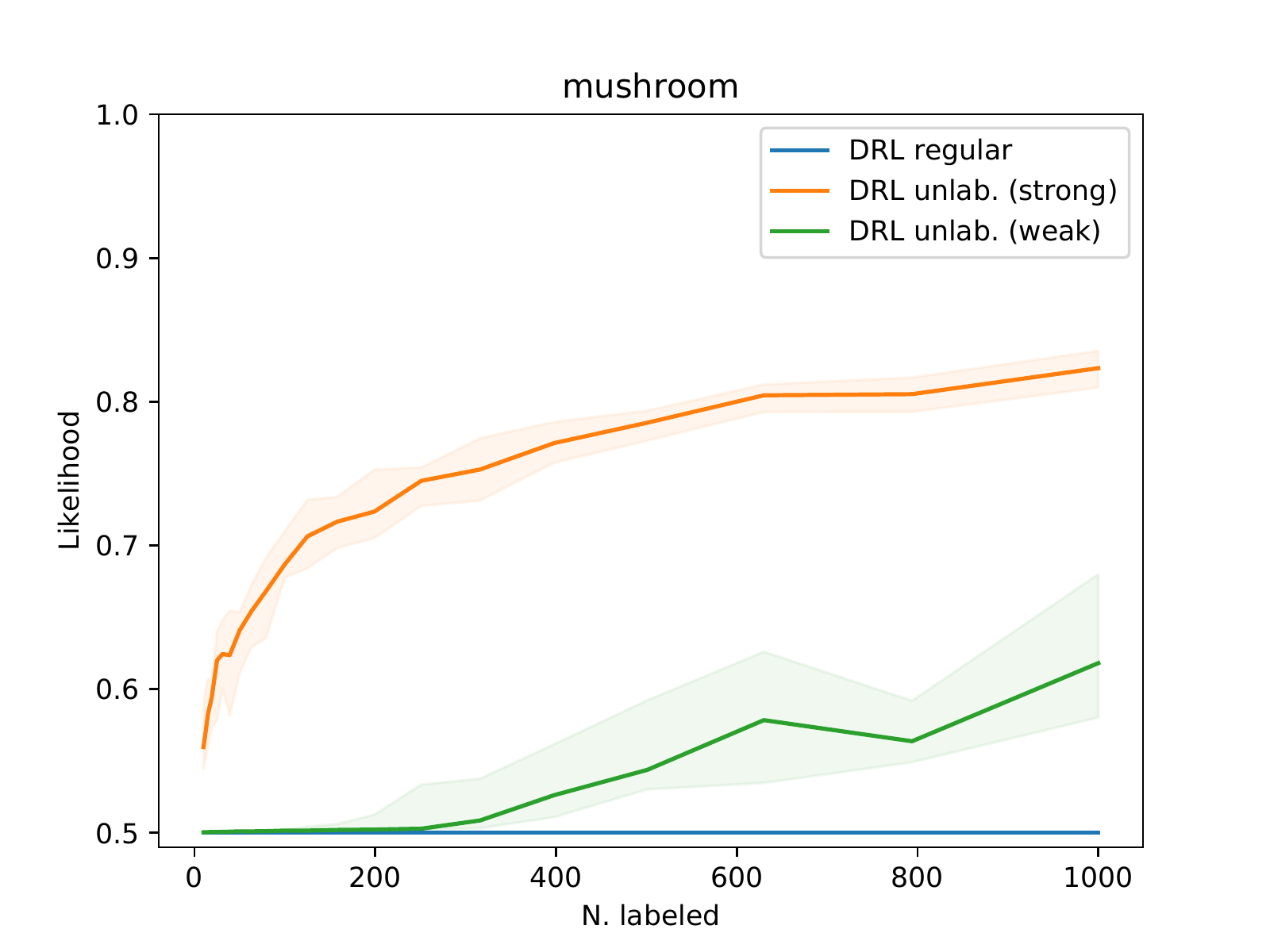}
\caption{Mushroom}
\end{subfigure}
\begin{subfigure}[b]{0.49\textwidth}
\includegraphics[width=\textwidth]{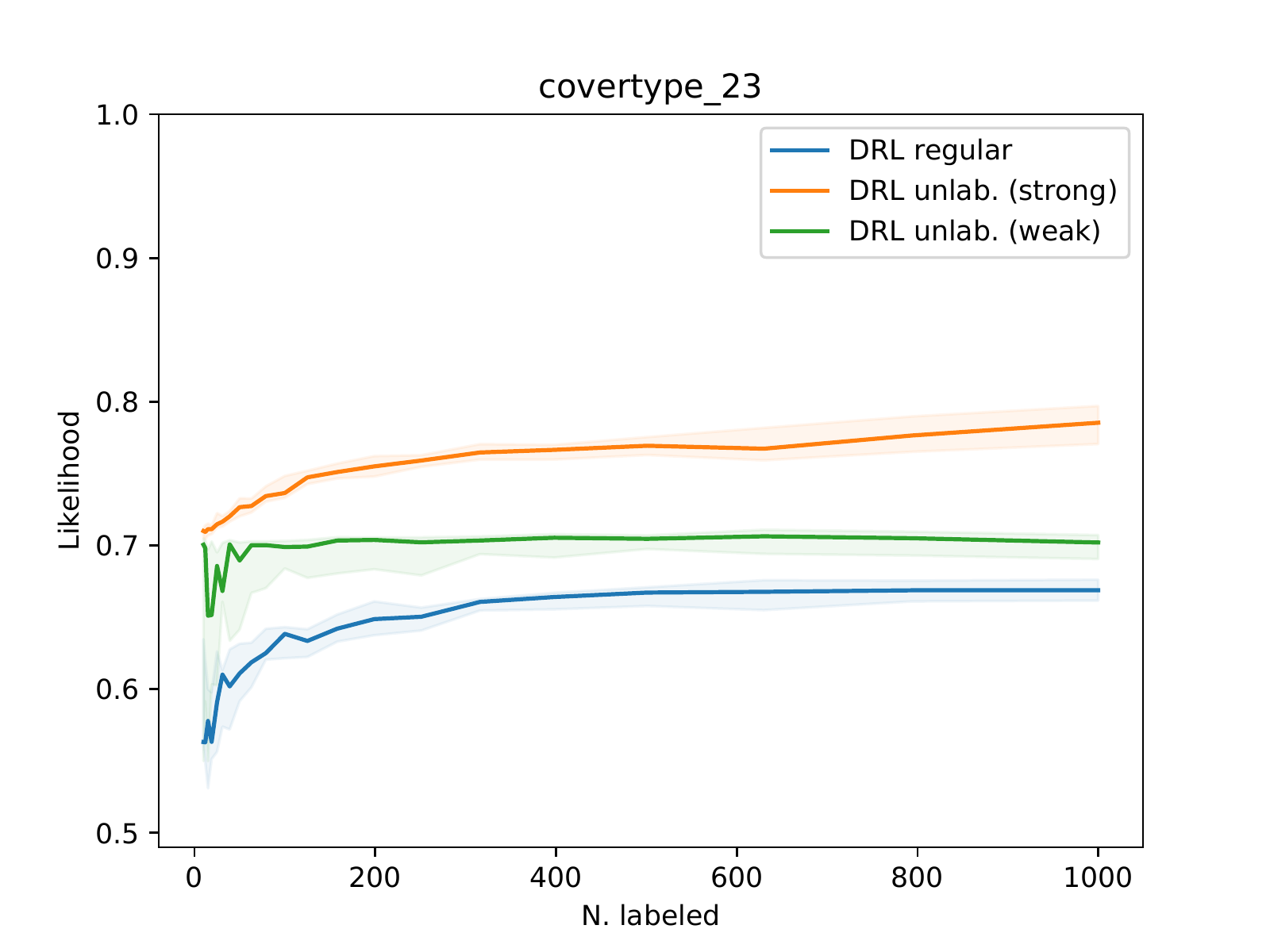}
\caption{Cover type (2-3)}
\end{subfigure}
\begin{subfigure}[b]{0.49\textwidth}
\includegraphics[width=\textwidth]{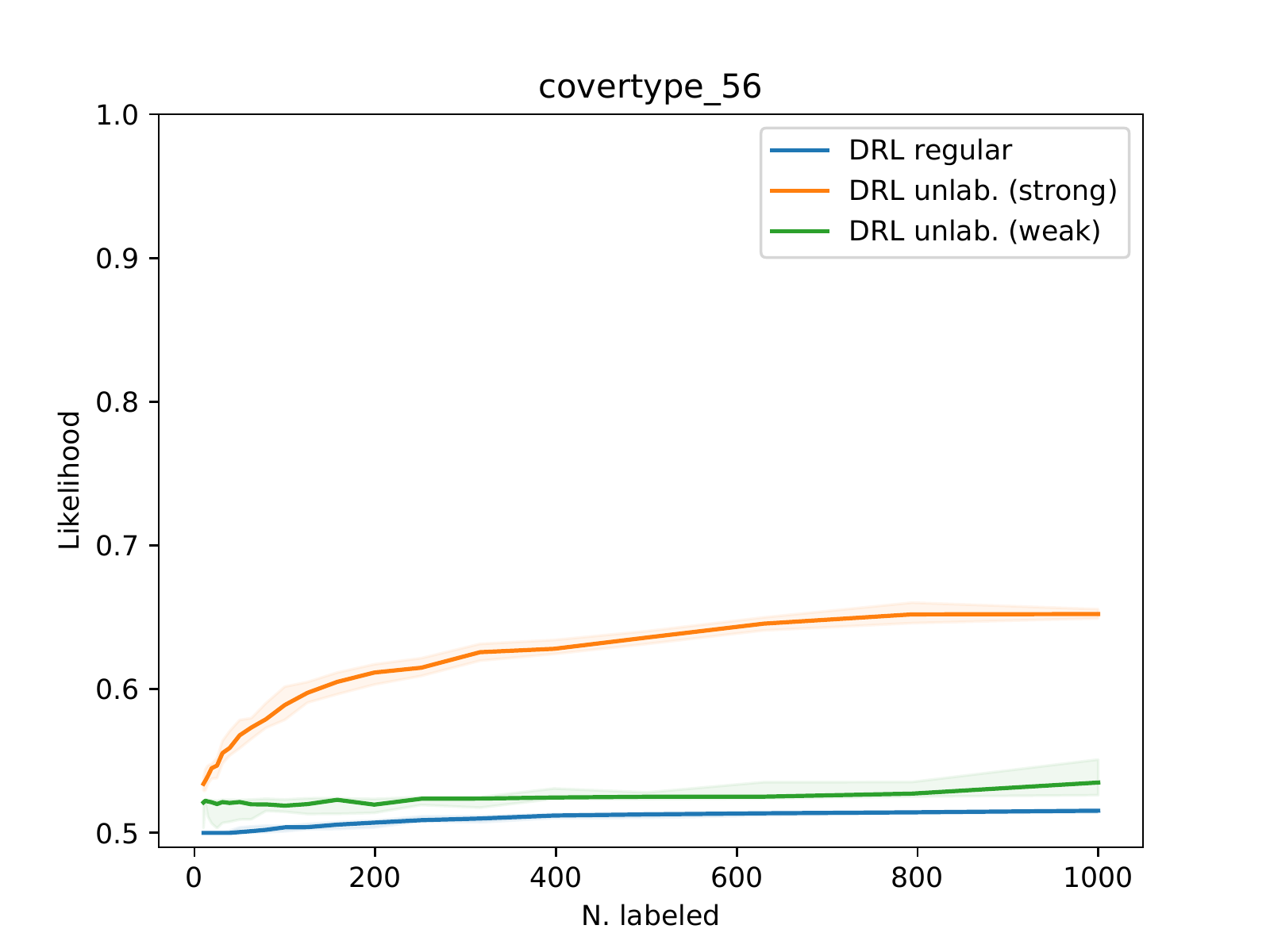}
\caption{Cover type (5-6)}
\end{subfigure}
\caption{Worst-case performance bound (likelihood) vs. number of labeled data, setting $\varepsilon$ to include the true test distribution. The regular DRL bound is often vacuous through $N_l = 1000$ while both DRL methods with unlabeled data yield non-vacuous bounds.}
\label{fig:true-radius-fval}
\end{figure}

\begin{figure}
\centering
\begin{subfigure}[b]{0.49\textwidth}
\includegraphics[width=\textwidth]{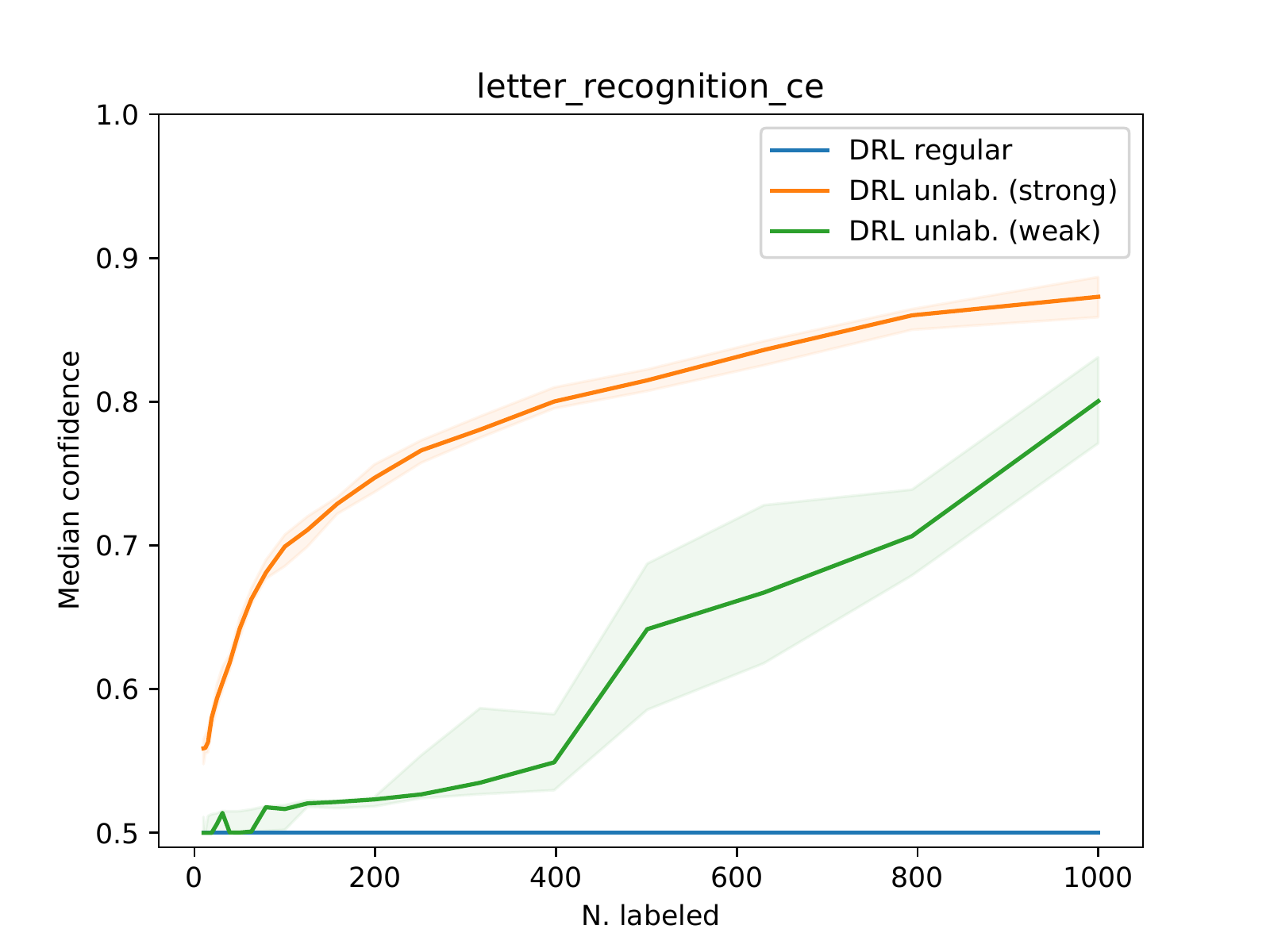}
\caption{Letter recognition (C-E)}
\end{subfigure}
\begin{subfigure}[b]{0.49\textwidth}
\includegraphics[width=\textwidth]{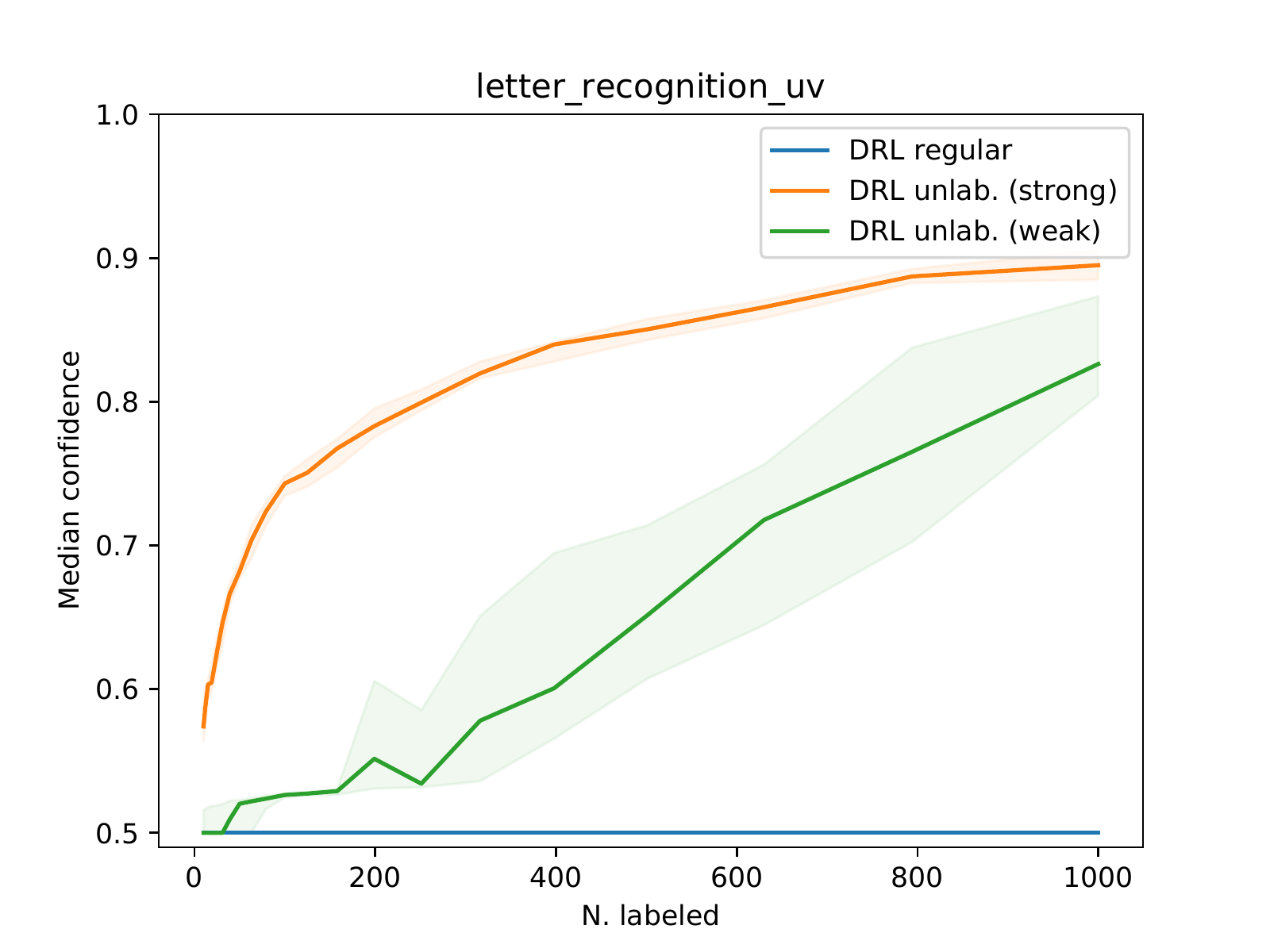}
\caption{Letter recognition (U-V)}
\end{subfigure}
\begin{subfigure}[b]{0.49\textwidth}
\includegraphics[width=\textwidth]{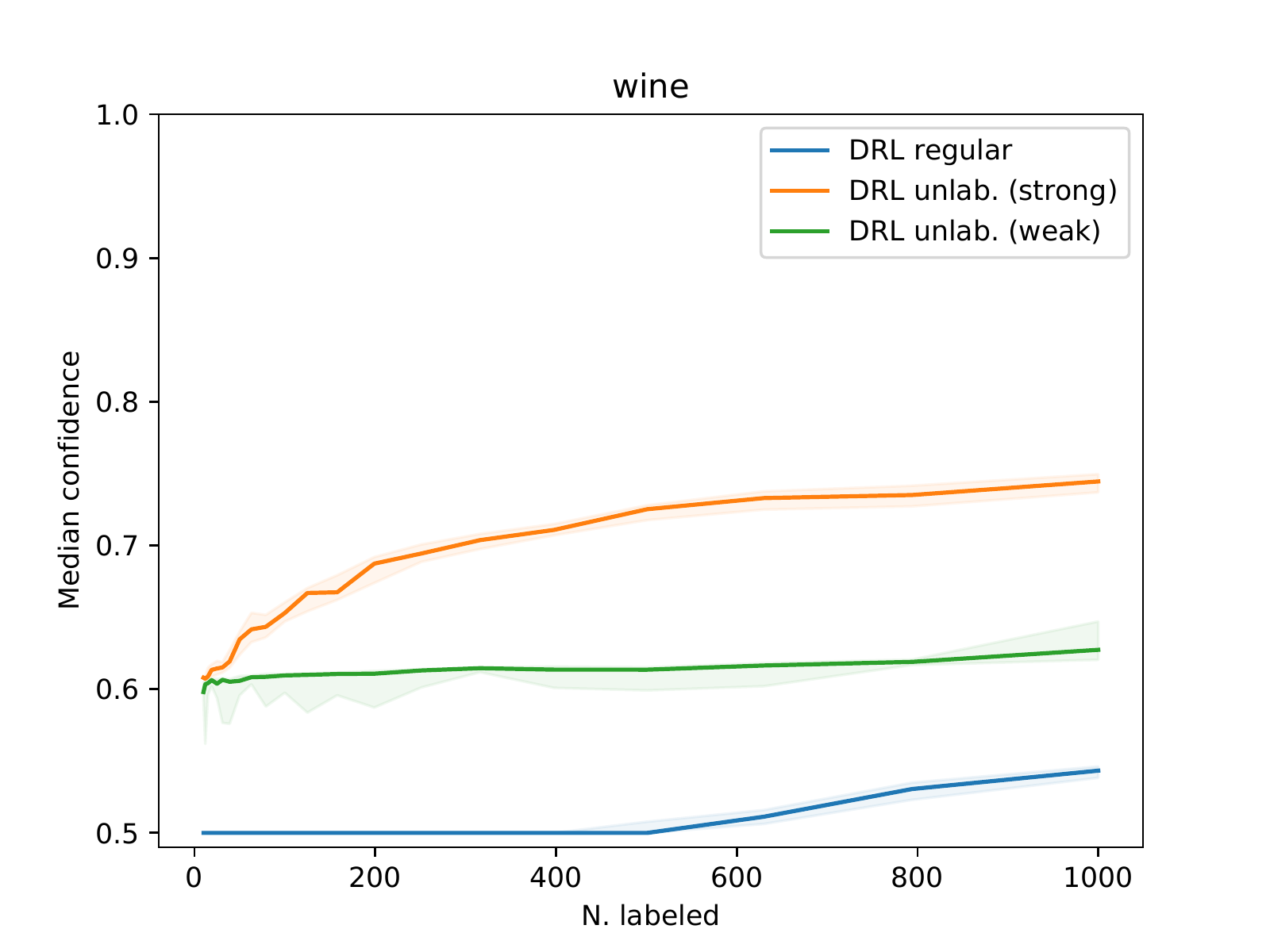}
\caption{Wine}
\end{subfigure}
\begin{subfigure}[b]{0.49\textwidth}
\includegraphics[width=\textwidth]{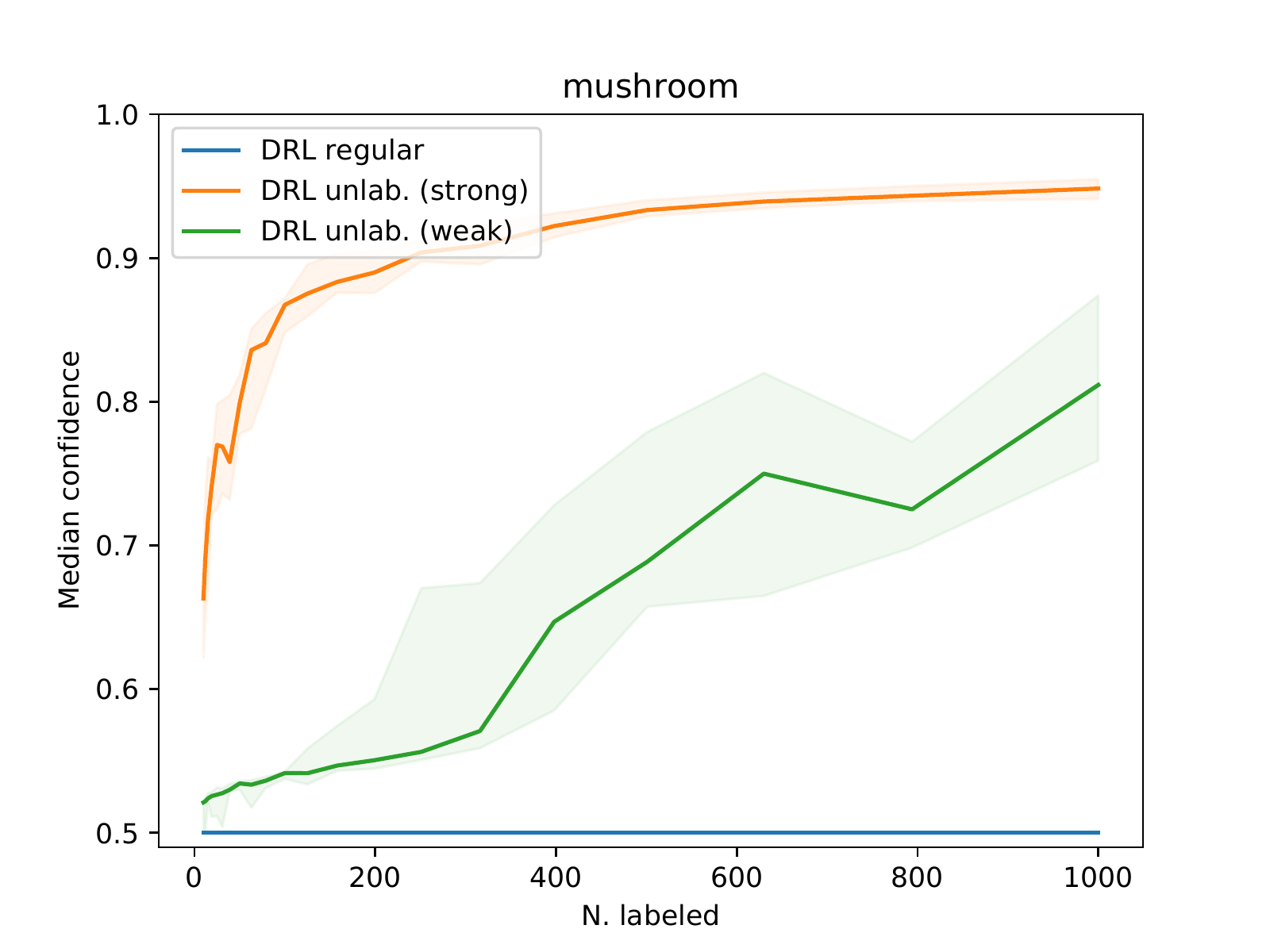}
\caption{Mushroom}
\end{subfigure}
\begin{subfigure}[b]{0.49\textwidth}
\includegraphics[width=\textwidth]{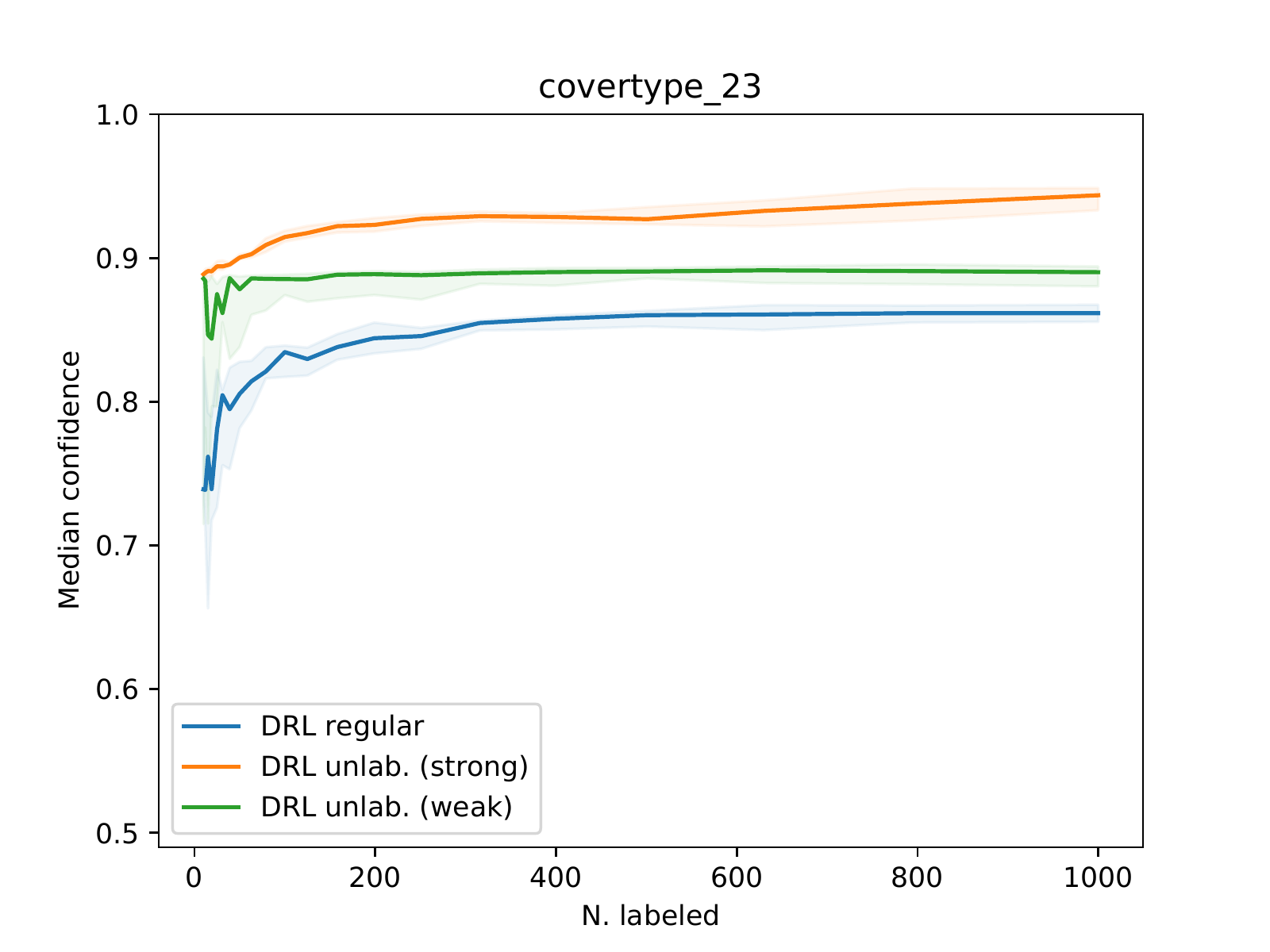}
\caption{Cover type (2-3)}
\end{subfigure}
\begin{subfigure}[b]{0.49\textwidth}
\includegraphics[width=\textwidth]{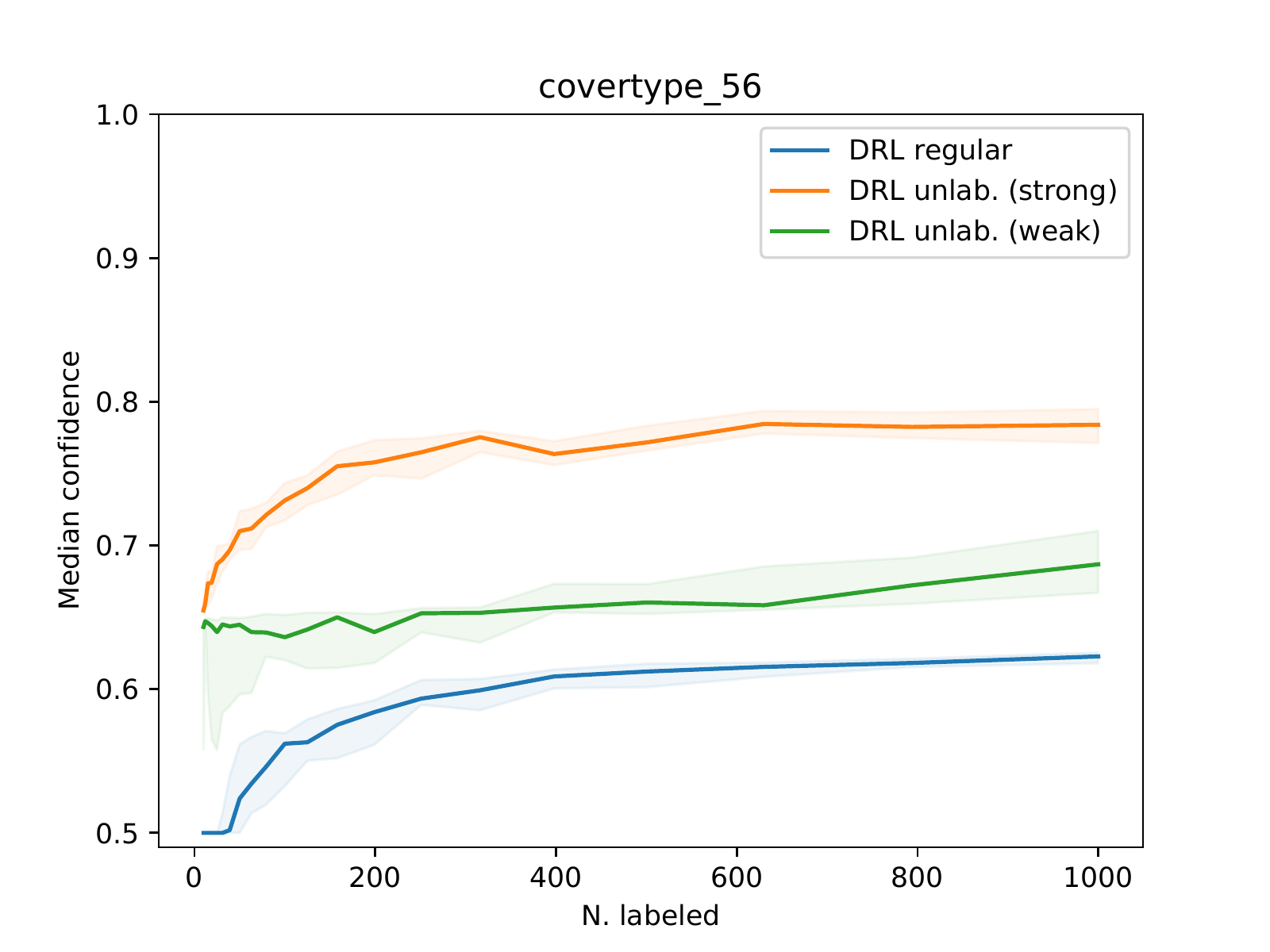}
\caption{Cover type (5-6)}
\end{subfigure}
\caption{Median confidence vs. number of labeled data, setting $\varepsilon$ to include the true test distribution. The regular DRL predictor often has confidence close to $0.5$ in settings where both DRL methods using unlabeled data yield non-trivial predictors.}
\label{fig:true-radius-conf}
\end{figure}

\subsection{Discussion}
The overwhelming size of an adversary's decision set is a weakness of Wasserstein DRL that prevents a reasonable tradeoff between robustness and confidence of the learned predictor. To circumvent this problem, we use unlabeled data to further constrain the decision set. Empirically, the proposed DRL problem produces non-trivial predictors having non-vacuous performance guarantees in cases where traditional Wasserstein DRL fails.

One topic we have not addressed is computational complexity. The proposed DRL is computed via stochastic gradient descent. Each gradient computation scales linearly in the number of labeled examples $N_l$ and this scaling might prohibit application to large labeled datasets. The key bottleneck is computing membership in the sets $V^{ik}$ in \eqref{eq:voronoi-sets}, which relies on a maximization over labeled examples. This computation might be a fruitful target for performance improvement, possibly via parallelization or by leveraging the fact that the cost function $c$ is a power of a metric.

\section{Application: Distributionally-robust active learning}
\label{sec:active}

Key to the learning algorithm of Section \ref{sec:optimization-by-sgd} is a mechanism for optimizing an objective over the intersection of a Wasserstein ball $\ball_{\varepsilon}(\hat{\Ppr}_l)$ with the set of distributions $\Uset(\Ppr_{\Xspace}, \overline{\pvec}_{\Yspace}, \underline{\pvec}_{\Yspace})$ that have prescribed marginals in $\Xspace$ and $\Yspace$. Learning a classifier is just one possible application of this mechanism, however. In this section, we demonstrate another application, to active learning.

\subsection{Model change heuristics}
\label{sec:active-a-class-of-heuristic}

Given a set $\hat{\Zspace}_l$ of labeled data and a set $\hat{\Xspace}_u$ of unlabeled data, an active learner attempts to choose the most beneficial example from $\hat{\Xspace}_u$ for which to acquire a label. The goal of the active learner is to reduce the out-of-sample error of the predictor trained on $\hat{\Zspace}_l$ as rapidly as possible. Many active learning methods assign a score to each unlabeled example, indicating its predicted impact on the learned classifier if we choose to acquire its label. This score might represent various properties, such as model uncertainty, expected error reduction, or expected model change. In the current work we focus on {\bf model change} criteria \citep{settles2008multiple,freytag2014selecting,cai2017active}, which are popular and often effective in practice \citep{yang2018benchmark}.

In model change criteria, we define an {\bf impact function} $f : \Xspace \times \Yspace \rightarrow \reals$, which is large when acquiring the label $\yvec$ for point $\xvec \in \hat{\Xspace}_u$ leads to a large change in the model parameters. Most often this is a norm of the parameter gradient \citep{yang2018benchmark},
$$ f(\xvec, \yvec) = \|\nabla_{\theta} \ell(h_{\theta}(\xvec), \yvec)\|, $$
for $\| \cdot \|$ a norm and $h_{\theta}$ the hypothesis trained on $\hat{\Zspace}_l$.
The active learning heuristic selects $\xvec_{\ast} \in \hat{\Xspace}_u$ that maximizes an estimate of the anticipated impact across possible labels at point $\xvec$. This might be the conditional expectation according to the model distribution at $\xvec$, the minimum over labels, or the maximum over labels:
\begin{itemize}
    \item {\bf Expected impact}: $\xvec_{\ast} = \argmax_{\xvec \in \hat{\Xspace}_u} \sum_{k=1}^{N_{\Yspace}} h_{\theta}(\xvec)_k f(\xvec, \yvec^k)$, with $h_{\theta}$ trained on $\hat{\Zspace}_l$.
    \item {\bf Optimistic}: $\xvec_{\ast} = \argmax_{\xvec \in \hat{\Xspace}_u} \max_{\yvec \in \Yspace} f(\xvec, \yvec)$.
    \item {\bf Conservative}: $\xvec_{\ast} = \argmax_{\xvec \in \hat{\Xspace}_u} \min_{\yvec \in \Yspace} f(\xvec, \yvec)$.
\end{itemize}

A potential problem with the expected model change criterion, which it shares with a number of other standard heuristics \citep{yang2018benchmark}, is that it relies on the current hypothesis $h_{\theta}$ when predicting the impact of choosing a new point $\xvec \in \hat{\Xspace}_u$ to label. Specifically, $h_{\theta}(\xvec)$ is used in place of the conditional distribution over labels at the point $\xvec$. This is prone to error when the hypothesis is far from the true conditional distribution, incorrectly weighting the impact of obtaining labels at the points where the hypothesis is in error.

The ``optimistic'' and ``conservative'' estimates above are simple attempts to eliminate the hypothesis $h_{\theta}$ from the estimated impact. Notably, these ignore the labeled data $\Zspace_{\ell}$ entirely.

\subsection{A distributionally-robust approach}
\label{sec:active-distributionally-robust}

The machinery presented in Section \ref{sec:drl-with-unlabeled-data} provides an alternative way to eliminate the hypothesis $h_{\theta}$ from our estimate of the impact of labeling point $\xvec \in \Xspace_u$. We can formulate a distributionally-robust estimate of the impact, which computes a lower bound on the expected impact with respect to an entire set of plausible data distributions, rather than just the model distribution. This lower bound can be closer to the true expected impact (under $\Ppr$) than the na\"ive conservative estimate, as our set of plausible distributions need not include those that are unreasonably far from the training set.

More precisely, we can formulate the problem of choosing the next sample to label as
\begin{equation}
\label{eq:distributionally-robust-active-learning}
\maximize_{\xvec_{\ast} \in \hat{\Xspace}_u} \inf_{\mu \in \Pset} \frac{\expect^{\mu} \delta_{\xvec_{\ast}}(\Xrv) f(\Xrv, \Yrv)}{\expect^{\mu} \delta_{\xvec_{\ast}}(\Xrv)},
\end{equation}
with $\Pset = \ball_{\varepsilon}(\hat{\Ppr}_l) \cap \Uset(\Ppr_{\Xspace}, \overline{\pvec}_{\Yspace}, \underline{\pvec}_{\Yspace})$ as in Section \ref{sec:drl-with-unlabeled-data}, 
the intersection of a Wasserstein ball centered at the labeled data and the set of distributions having the prescribed marginals. Note that $\expect^{\mu} \delta_{\xvec_{\ast}}(\Xrv) = \expect^{\Ppr_{\Xspace}} \delta_{\xvec_{\ast}}(\Xrv)$ for all $\mu \in \Uset(\Ppr_{\Xspace}, \overline{\pvec}_{\Yspace}, \underline{\pvec}_{\Yspace})$, meaning that the objective in \eqref{eq:distributionally-robust-active-learning} is in fact linear in $\mu$. In practice, given the unlabeled data $\hat{\Xspace}_u$, we can approximate this term by density estimation. We will use the notation $\hat{\varphi}(\xvec_{\ast}) \approx \expect^{\Ppr_{\Xspace}} \delta_{\xvec_{\ast}}(\Xrv)$ for this approximation.

The inner problem in \eqref{eq:distributionally-robust-active-learning} estimates the impact of labeling the point $\xvec_{\ast} \in \Xspace_u$. Just as in the DRL problem formulated in Section \ref{sec:drl-with-unlabeled-data}, this is the optimization of an objective with respect to a probability measure constrained to the feasible set $\Pset = \ball_{\varepsilon}(\hat{\Ppr}_l) \cap \Uset(\Ppr_{\Xspace}, \overline{\pvec}_{\Yspace}, \underline{\pvec}_{\Yspace})$.\footnote{Note that the objective here, $-\mathbf{1}_{\xvec_{\ast}}(\xvec) f(\xvec, \yvec)$, is lower semicontinuous in $\xvec$, whereas Theorem \ref{thm:strong-duality} requires upper semicontinuity of $\ell(h_{\theta}(\xvec), \yvec)$ in $\xvec$. We nevertheless use the duality proved in Theorem \ref{thm:strong-duality}, as one can approximate our lower semicontinuous objective here arbitrarily well by a combination of continuous bump functions centered at $(\xvec_{\ast}, \yvec^k)$, for all $k$, and the strong duality holds for any such approximation.} Just as in Section \ref{sec:drl-with-unlabeled-data}, we can solve this via its dual,
\begin{equation}
\label{eq:active-dual}
g(\xvec_{\ast}) =
\left\{
\begin{array}{rl}
-\inf_{\alpha, \beta, \underline{\lambda}, \overline{\lambda}} & \alpha \varepsilon + \frac{1}{N_l} 
\sum_{i=1}^{N_l} \beta^i + \sum_{k=1}^{N_{\Yspace}} \left(\overline{\lambda}^k \overline{\pvec}_{\Yspace}^k - \underline{\lambda}^k \underline{\pvec}_{\Yspace}^k \right) + \expect^{\Ppr_{\Xspace}} \Psi(\Xrv;\xvec_{\ast}, \alpha, \beta, \overline{\lambda}, \underline{\lambda}) \\
\text{s.t.}
& \alpha, \underline{\lambda}^k, \overline{\lambda}^k \geq 0, \quad \forall k \in \{1, \dots, N_{\Yspace}\}
\end{array}
\right.
\end{equation}
with
\begin{equation}
\Psi(\xvec;\xvec_{\ast}, \alpha, \beta, \overline{\lambda}, \underline{\lambda}) = \max_{\substack{k\in\{1,\dots,N_{\Yspace}\},\\ i\in\{1,\dots,N_l\}}} -\frac{\mathbf{1}_{\xvec_{\ast}}(\xvec) f(\xvec, \yvec^k)}{\hat{\varphi}(\xvec_{\ast})} - \alpha c((\xvec, \yvec^k), \zvec_{\ell}^i) - \beta^i - (\overline{\lambda}^k - \underline{\lambda}^k)
\end{equation}
and $f$ the impact function from Section \ref{sec:active-a-class-of-heuristic}.

The infimum in \eqref{eq:active-dual} corresponds exactly to \eqref{eq:worst-case-dual} in Section \ref{sec:drl-problem-formulation} and we can likewise solve \eqref{eq:active-dual} via SGD, as shown in Algorithm \ref{alg:active-sgd}, with $\Psi(\cdot;\xvec_{\ast}, \alpha, \beta, \overline{\lambda}, \underline{\lambda})$ here corresponding to $\Phi(\cdot;\theta,\alpha,\beta,\overline{\lambda},\underline{\lambda})$ from Section \ref{sec:optimization-by-sgd}, replacing $\ell(h_{\theta}, (\xvec, \yvec))$ with $\frac{-\mathbf{1}_{\xvec_{\ast}}(\xvec) f(\xvec, \yvec)}{\hat{\varphi}(\xvec)}$. The relevant gradient computations for $\Psi(\xvec;\xvec_{\ast}, \alpha, \beta, \overline{\lambda}, \underline{\lambda})$ are identical to those for $\Phi(\xvec;\theta,\alpha,\beta,\overline{\lambda},\underline{\lambda})$ and are included in Appendix \ref{sec:appendix-dual-gradients}.

\begin{algorithm}[h]
\caption{SGD for distributionally robust active learning}
\label{alg:active-sgd}
\begin{algorithmic}
\STATE {\bf Given}: $\theta \in \Theta$, $\varepsilon \geq 0$, $\overline{\pvec}_{\Yspace}, \underline{\pvec}_{\Yspace} \in [0, 1]^{N_{\Yspace}}$, $\theta_0 \in \Theta$, step size $\gamma > 0$, batch size $N_b$.
\FOR{$\xvec_{\ast} \in \hat{\Xspace}_u$}
\STATE $\alpha, \beta, \overline{\lambda}, \underline{\lambda} \gets \mathbf{0}$.
\WHILE{not converged}
\STATE $\xvec^1, \dots, \xvec^{N_b} \sim \Ppr_{\Xspace}$.
\STATE $\alpha \gets \max\left(0, \alpha - \gamma \left[\varepsilon + \frac{1}{N_b} \sum_{j=1}^{N_b} \nabla_{\alpha} \Psi(\xvec^j;\xvec_{\ast}, \alpha, \beta, \overline{\lambda}, \underline{\lambda})\right]\right)$.
\STATE $\beta \gets \beta - \gamma \left[\frac{1}{N_l} + \frac{1}{N_b} \sum_{j=1}^{N_b} \nabla_{\beta} \psi(\xvec^j;\xvec_{\ast}, \alpha, \beta, \overline{\lambda}, \underline{\lambda})\right]$.
\STATE $\overline{\lambda} \gets \max\left(\mathbf{0}, \overline{\lambda} - \gamma \left[\overline{\pvec}_{\Yspace} + \frac{1}{N_b} \sum_{j=1}^{N_b} \nabla_{\overline{\lambda}} \psi(\xvec^j;\xvec_{\ast}, \alpha, \beta, \overline{\lambda}, \underline{\lambda})\right]\right)$.
\STATE $\underline{\lambda} \gets \max\left(\mathbf{0}, \underline{\lambda} - \gamma \left[-\underline{\pvec}_{\Yspace} + \frac{1}{N_b} \sum_{j=1}^{N_b} \nabla_{\underline{\lambda}} \psi(\xvec^j;\xvec_{\ast}, \alpha, \beta, \overline{\lambda}, \underline{\lambda})\right]\right)$.
\ENDWHILE
\STATE $\hat{g}[\xvec_{\ast}] \gets  -\left(\alpha \varepsilon + \frac{1}{N_l} \sum_{i=1}^{N_l} \beta^i + \sum_{k=1}^{N_{\Yspace}} \bigl(\overline{\lambda}^k \overline{\pvec}_{\Yspace}^k - \underline{\lambda}^k \underline{\pvec}_{\Yspace}^k \bigr) + \frac{1}{N_u} \sum_{j=1}^{N_u} \Psi(\xvec_u^j;\xvec_{\ast}, \alpha, \beta, \overline{\lambda}, \underline{\lambda})\right)$.
\ENDFOR
\STATE Choose $\argmax_{\xvec_{\ast} \in \hat{\Xspace}_u} \hat{g}[\xvec_{\ast}]$.
\end{algorithmic}
\end{algorithm}

\subsection{Empirical Results}
\label{sec:active-learning-empirical}

We evaluate active learning performance on the set of $14$ binary classification datasets used in Section \ref{sec:drl-with-unlabeled-data}. Given a set of labeled examples, a linear classifier is trained by $\ell^2$-regularized logistic regression, with the weight on the regularizer fixed a priori. Given this classifier and a set of unlabeled examples, the active learning algorithm selects the next example for which to acquire a label. The process is iterated, beginning with $20$ examples chosen uniformly at random (the same initial examples for all active learning methods, but different initial examples between trials), and terminating after $100$ labeled examples have been acquired.

After training the classifier at each step, we evaluate the error (the likelihood) on the combination of labeled and unlabeled data, to provide a score that is comparable between steps. This score has previously been proposed as a proxy for out-of-sample error in both the semi-supervised \citep{grandvalet2005semi} and active \citep{guo2008discriminative} learning settings.

We compare the proposed distributionally-robust active learning method to both random sampling and the existing model-change heuristics (described in Section \ref{sec:active-a-class-of-heuristic}). Specifically, we test five methods:
\begin{enumerate}
\item {\bf Random}: We choose the next example uniformly at random.
\item {\bf EMC}: We choose the example that maximizes the expected norm of the parameter gradient, under the hypothesis distribution \citep{settles2008multiple}.
\item {\bf Min. MC}: We choose the example that maximizes the minimum (over possible labels) norm of the parameter gradient (``conservative'' in Section \ref{sec:active-a-class-of-heuristic}).
\item {\bf Max. MC}: We choose the example that maximizes the maximum (over possible labels) norm of the parameter gradient (``optimistic'' in Section \ref{sec:active-a-class-of-heuristic}).
\item {\bf DR (strong)}: We choose the example that maximizes the proposed distributionally-robust lower bound on the expected norm of the parameter gradient (Section \ref{sec:active-distributionally-robust}). We use a strong prior on the label probabilities, being the true label probabilities.
\item {\bf DR (weak)}: We choose the example that maximizes the proposed distributionally-robust lower bound on the expected norm of the parameter gradient (Section \ref{sec:active-distributionally-robust}). We use a weak prior on the label probabilities, being $95\%$ confidence bounds estimated from the labeled data.
\end{enumerate}

Table \ref{tbl:active-aulc} shows the area under the likelihood curve (from samples $20$ through $100$) for each method and dataset, using the median likelihood over trials (i.e. initial training samples). We make several observations:
\begin{enumerate}
\item The existing model change heuristics yield inconsistent performance, with EMC, Min. MC, and Max. MC underperforming random sampling on $10$ of $14$ datasets.
\item The proposed distributionally-robust heuristics only underperformed random sampling on $5$ of $14$ datasets.
\item The proposed distributionally-robust heuristics outperform the other model change-based heuristics (EMC, Min. MC, and Max. MC) on $8$ of $14$ datasets.
\item The distributionally-robust heuristics performed similarly to one another, using a strong prior and a weak prior.
\end{enumerate}

\begin{table}
  \caption{Active learning: $100 \times$ area under the likelihood curve (median over trials).}
  \label{tbl:active-aulc}
  \centering
  \begin{tabular}{rcccccc}
    \toprule
    Dataset & Random & EMC & Min MC & Max MC & DR (strong) & DR (weak) \\
    \midrule
    Abalone & $68.7$ & $66.6$ & $66.0$ & $64.2$ & {\color{blue}$\mathbf{69.8}$} & $69.8$ \\
    Bank & $92.6$ & $94.5$ & $94.5$ & $86.0$ & $95.3$ & {\color{blue}$\mathbf{95.4}$} \\
    Cover (2/3) & {\color{blue}$\mathbf{79.9}$} & $78.0$ & $79.4$ & $74.0$ & $78.4$ & $78.4$ \\
    Cover (5/6) & $72.7$ & $72.3$ & $71.4$ & $66.3$ & {\color{blue}$\mathbf{73.0}$} & $72.9$ \\
    Isolet & $60.8$ & $58.2$ & $60.0$ & $54.8$ & $63.3$ & {\color{blue}$\mathbf{63.8}$} \\
    Letter (C/E) & $71.6$ & $67.6$ & $67.6$ & $64.4$ & $73.2$ & {\color{blue}$\mathbf{73.6}$} \\
    Letter (U/V) & $77.8$ & $73.2$ & $73.1$ & $67.1$ & {\color{blue}$\mathbf{82.3}$} & $82.1$  \\
    Magic & {\color{blue}$\mathbf{52.2}$} & $46.0$ & $46.3$ & $46.0$ & $43.1$ & $43.1$ \\
    Mushroom & $86.3$ & $91.0$ & $90.6$ & $77.9$ & {\color{blue}$\mathbf{92.2}$} & {\color{blue}$\mathbf{92.2}$} \\
    Pulsar & $76.9$ & {\color{blue}$\mathbf{79.7}$} & $79.4$ & $75.0$ & $79.6$ & $79.6$ \\
    Spam & $68.4$ & $68.0$ & $67.3$ & $63.1$ & {\color{blue}$\mathbf{69.9}$} & {\color{blue}$\mathbf{69.9}$} \\
    Thyroid & $79.1$ & {\color{blue}$\mathbf{80.7}$} & $80.6$ & $78.8$ & $78.4$ & $78.6$ \\
    Wine & {\color{blue}$\mathbf{56.7}$} & $53.0$ & $51.9$ & $51.3$ & $50.4$ & $50.5$ \\
    Yeast & {\color{blue}$\mathbf{54.7}$} & $47.8$ & $47.3$ & $53.2$ & $46.6$ & $46.8$ \\
    \bottomrule
  \end{tabular}
\end{table}

\subsection{Discussion}

The mechanism used to solve the Wasserstein DRL problem with unlabeled data (Section \ref{sec:drl-with-unlabeled-data}), which relies on the duality result in \ref{sec:drl-problem-formulation}, can have broader applications. We have demonstrated an application to the problem of active learning that yields a distributionally-robust model change heuristic that empirically often outperforms the existing model change heuristics.

Computational complexity is a potential impediment to deploying the proposed active learning method. For each unlabeled example that might be selected for labeling, the method requires solving a distributionally robust problem (Equation \ref{eq:active-dual}) by an iterative method (such as in Algorithm \ref{alg:active-sgd}). This in our experiments required on the order of $50000$ iterations per example considered, with the complexity of each iteration scaling linearly with the number of training example $N_l$ (identically to the DRL method in Section \ref{sec:drl-with-unlabeled-data}). As in Section \ref{sec:drl-with-unlabeled-data}, this complexity might be a productive target for future work.

\section{Conclusion}

We have explored an alternative to Wasserstein distributionally robust learning that incorporates unlabeled data to restrict the adversary's decision set. In particular, we proposed to intersect the standard Wasserstein ball constraint with the set of probability measures having specified marginals in both feature space and label space. This latter constraint adds some complexity to the derivation of a tractable algorithm (Section \ref{sec:drl-problem-formulation}), which follows the standard DRL framework of dualizing the problem but requires some care as the dual is now infinite-dimensional, due to specifying the feature-space marginal in the primal. We prove a strong duality theorem (Theorem \ref{thm:strong-duality}) that guarantees we can solve our proposed Wasserstein DRL problem via a dual formulation. This dual problem we can treat as the minimization of an expectation with respect to the pre-specified feature-space marginal, which is amenable to stochastic gradient methods (Algorithm \ref{alg:sgd}). Critically, such methods rely only on sampling unlabeled data from the feature-space marginal of the data distribution. Therefore the resulting SGD algorithm is tractable whenever we have access to plentiful unlabeled data, which is frequently the case in machine learning settings.

The motivation for exploring this alternative approach is an empirical observation that in standard Wasserstein DRL the adversary's decision set grows overly large very quickly as the radius of robustness $\varepsilon$ is increased. As a result, choosing any radius sufficiently large for the adversary's decision set to contain the true data distribution will yield a trivial classifier (predicting equal probability for every class). Moreover, the generalization performance guarantee implied by distributionally robust methods only holds when the decision set contains the true data distribution. This performance guarantee is one major motivation for using DRL methods and here we have shown that there is a gap between theory and practice.

Restricting the adversary's decision set even more than we have done here might be a profitable avenue for further research. There are likely other ways to incorporate side information into the problem that can be applicable in a variety of practical settings. Moreover, the practical application of the method proposed here is currently somewhat constrained by the computational complexity of the SGD iterations, which scale linearly in the number of labeled examples used. It is possible there are significant speedups to be obtained via parallelization and further assumptions about the structure of the transport cost $c$.

Finally, we have proved the strong duality theorem (Theorem \ref{thm:strong-duality}) only for feature spaces $\Xspace$ that are compact. This is a reasonable assumption from a pragmatic standpoint, as no data distribution in practice will have unbounded support, but we also expect an equivalent theorem to hold for non-compact $\Xspace$. Proof techniques from duality theorems for optimal transport, as in \citep[Theorem 1.3]{villani2003topics} and \citep[Theorem 4.6.14]{rachev1998mass}, and distributional model risk assessment \citep[Theorem 1]{blanchet2019quantifying} might be applicable here.

\subsubsection*{Acknowledgments}

The authors acknowledge the generous support of Army Research Office grant W911NF1710068, Air Force Office of Scientific Research award FA9550-19-1-031, of National Science Foundation grant IIS-1838071, from the MIT-IBM Watson AI Laboratory, from the Toyota-CSAIL Joint Research Center, from the QCRI–CSAIL Computer Science Research Program, from the MIT CSAIL Systems that Learn initiative, and from a gift from Adobe Systems. Any opinions, findings, and conclusions or recommendations expressed in this material are those of the authors and do not necessarily reflect the views of these organizations. The authors also thank Nestor Guillen for helping them to understand duality over spaces of measures.

\newpage
\appendix

\section{Proof of strong duality}
\label{sec:appendix-strong-duality}

To show strong duality, we will make use of a fundamental convex analysis result: the Fenchel duality theorem \citep[Theorem 4.4.3]{borwein2005techniques}.

\begin{theorem}[Fenchel duality]
\label{thm:fenchel_duality}
Let $\Xi, \Gamma$ be Banach spaces, with convex functions $\gamma: \Xi \to \mathbb{R} \cup \{+ \infty \}$ and $\chi: \Gamma \to \mathbb{R} \cup \{+ \infty \}$, and a continuous linear map $A: \Xi \to \Gamma$. Then
\begin{align*}
    \inf_{\xi \in \Xi} \gamma(\xi) + \chi(A\xi) \geq \sup_{u \in \Gamma^*} -\gamma^*(A^*u) - \chi^*(-u).
\end{align*}
If $\gamma$ and $\chi$ are lower semicontinuous and $A \text{ dom } \gamma \cap \text{ cont } \chi \neq \emptyset$, then equality holds above and the supremum on the right-hand side is attained.
\end{theorem}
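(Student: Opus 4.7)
The inequality portion requires no regularity and follows directly from the Fenchel--Young inequality. For any $\xi \in \Xi$ and $u \in \Gamma^*$, the definitions of the conjugates give $\gamma(\xi) + \gamma^*(A^*u) \geq \langle A^*u, \xi\rangle = \langle u, A\xi \rangle$ and $\chi(A\xi) + \chi^*(-u) \geq \langle -u, A\xi \rangle$. Adding the two and cancelling the pairing terms produces
\begin{equation*}
\gamma(\xi) + \chi(A\xi) \geq -\gamma^*(A^*u) - \chi^*(-u),
\end{equation*}
after which I take the infimum over $\xi$ on the left and supremum over $u$ on the right to obtain the weak duality inequality.

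\textbf{Strong duality via the value function.} For the reverse direction under the hypotheses, I would introduce the perturbation function $v(y) := \inf_{\xi \in \Xi}\bigl[\gamma(\xi) + \chi(A\xi + y)\bigr]$ on $\Gamma$. A short computation shows that $v$ is convex, that $v(0)$ equals the primal value, and that its Fenchel conjugate is $v^*(u) = \gamma^*(A^*u) + \chi^*(-u)$, so the dual value equals $\sup_u -v^*(u) = v^{**}(0)$. Strong duality therefore reduces to the biconjugation identity $v(0) = v^{**}(0)$, which by the Fenchel--Moreau theorem holds whenever $v$ is proper, convex, and lower semicontinuous at $0$.

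\textbf{Exploiting the constraint qualification and attainment.} The hypothesis $A(\operatorname{dom}\gamma) \cap \operatorname{cont}\chi \neq \emptyset$ provides some $\xi_0 \in \operatorname{dom}\gamma$ with $A\xi_0$ in the interior of the continuity region of $\chi$. Then the pointwise bound $v(y) \leq \gamma(\xi_0) + \chi(A\xi_0 + y)$ shows that $v$ is majorized on a neighborhood of $0$ by a function continuous at $0$, hence $v$ is bounded above near $0$. For a proper convex function on a Banach space, local upper-boundedness at an interior point of the domain implies continuity there, and a continuous convex function is subdifferentiable at that point. Any $u_* \in \partial v(0)$ then satisfies $v(0) + v^*(u_*) = \langle u_*, 0\rangle = 0$, which simultaneously gives $v(0) = -v^*(u_*) = \sup_u -v^*(u)$ and exhibits $-u_*$ as a maximizer of the dual problem.

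\textbf{Main obstacle.} The serious content is the passage from the constraint qualification to continuity of $v$ at $0$, together with the subsequent subdifferentiability statement on an infinite-dimensional Banach space; both rely on a geometric Hahn--Banach separation of the epigraph of $v$ from a point below it, which is where lower semicontinuity of $\gamma$ and $\chi$ enters to ensure $v$ is itself lsc at $0$ (so that Fenchel--Moreau applies and no duality gap arises from an unclosed epigraph). Once these convex-analytic ingredients are in place, the conjugation formula for $v^*$ and the subdifferential computation above deliver both equality and attainment in one stroke.
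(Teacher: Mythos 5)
The paper does not actually prove this statement: Theorem~\ref{thm:fenchel_duality} is imported verbatim as a known result, cited to Theorem~4.4.3 of \citet{borwein2005techniques}, and is then used as the engine for the proof of Theorem~\ref{thm:strong-duality}. So there is no in-paper argument to compare against; what you have written is a self-contained proof of the cited black box. Your argument is the standard perturbation-function proof and is essentially correct: the weak-duality half via two applications of Fenchel--Young and the adjoint identity $\langle A^*u,\xi\rangle=\langle u,A\xi\rangle$ is fine, and the strong-duality half correctly reduces everything to $\partial v(0)\neq\emptyset$, which the constraint qualification delivers because $v$ is convex and majorized near $0$ by the continuous function $y\mapsto\gamma(\xi_0)+\chi(A\xi_0+y)$, hence continuous and subdifferentiable at $0$; the identity $v^*(u_*)=-v(0)$ for $u_*\in\partial v(0)$ then gives equality and dual attainment simultaneously.

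Two small imprecisions are worth fixing. First, your closing remark misattributes the role of lower semicontinuity: infimal projections such as $v$ are \emph{not} in general lsc even when $\gamma$ and $\chi$ are, and your proof does not actually need $v$ to be lsc or the Fenchel--Moreau theorem at all --- the subdifferentiability of $v$ at $0$ (a consequence of the continuity constraint qualification alone) already yields $v(0)=v^{**}(0)$ directly. The lsc hypotheses matter for other branches of the Borwein--Zhu statement (e.g.\ the core-type constraint qualification), not for the one you use. Second, the argument tacitly assumes $v(0)>-\infty$; if $v(0)=-\infty$ then $v^*\equiv+\infty$, equality holds trivially, but the subdifferential is empty and attainment is vacuous, so the attainment claim should be read as conditional on a finite value (as it is in the cited source). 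There is also a harmless sign slip: with $v(y)=\inf_\xi[\gamma(\xi)+\chi(A\xi+y)]$ one gets $v^*(u)=\gamma^*(-A^*u)+\chi^*(u)$ rather than the formula you wrote, but since the supremum ranges over all of $\Gamma^*$ this only relabels the maximizer ($-u_*$ versus $u_*$) and does not affect the conclusion.
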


{
\renewcommand{\thetheorem}{\ref{thm:strong-duality}}
\begin{theorem}[Strong duality]
Let $\Xspace$ be a compact Polish space and $\Yspace = \{\yvec^k\}_{k=1}^{N_{\Yspace}}$ any finite set. Let $\Ppr_{\Xspace}$ be a probability measure over $\Xspace$ and $\hat{\Ppr}_l = \frac{1}{N_l} \sum_{i=1}^{N_l} \delta_{\zvec_{\ell}^i}$ an empirical probability measure over $\Zspace = \Xspace \times \Yspace$, and define intervals $[\underline{\pvec}_{\Yspace}^k, \overline{\pvec}_{\Yspace}^k] \subseteq [0, 1]$, $k \in \{1, \dots, N_{\Yspace}\}$. Let the transportation cost $c : \Zspace \times \Zspace \rightarrow [0, +\infty)$ be nonnegative and upper semicontinuous with $c(\zvec, \zvec^{\prime}) = 0 \Leftrightarrow \zvec = \zvec^{\prime}$. Assume $\ell(h_{\theta}(\cdot), \cdot) : \Zspace \rightarrow \reals$ is upper semicontinuous. Define $f$ as in \eqref{eq:worst-case-primal} and $g$ as in \eqref{eq:worst-case-dual}. If $\Uset(\Ppr_{\Xspace}, \underline{\pvec}_{\Yspace}, \overline{\pvec}_{\Yspace}) \cap \ball_{\varepsilon}(\hat{\Ppr}_l) \neq \emptyset$, then
\begin{equation}
f(\theta) = g(\theta),\quad \forall \theta \in \Theta.
\end{equation}
If $\operatorname{relint} (\Uset(\Ppr_{\Xspace}, \underline{\pvec}_{\Yspace}, \overline{\pvec}_{\Yspace}) \cap \ball_{\varepsilon}(\hat{\Ppr}_l)) \neq \emptyset$, then there exists a minimizer $(\alpha_{\ast}, \beta_{\ast}, \overline{\lambda}_{\ast}, \underline{\lambda}_{\ast}) \in \reals_+ \times \reals^{N_l} \times \reals_+^{N_{\Yspace}} \times \reals_+^{N_{\Yspace}}$ attaining the infimum in \eqref{eq:worst-case-dual}.
\end{theorem}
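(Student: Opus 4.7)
The plan is to apply Fenchel duality (Theorem \ref{thm:fenchel_duality}) with the transport program \eqref{eq:worst-case-primal-transport} set up as the Fenchel primal, so that its Fenchel dual reproduces \eqref{eq:worst-case-dual}. This orientation is convenient because Fenchel's theorem yields attainment on the dual side, which matches the claimed attaining minimizer in \eqref{eq:worst-case-dual}.

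Concretely, take $\Xi = \Mset(\Zspace \times \Zspace)$ under the total variation norm (a Banach space, since $\Zspace = \Xspace\times\Yspace$ is compact Polish), and let $\gamma(\pi) = -\int \ell(h_\theta(\xvec),\yvec)\,\dx\pi + \iota_{\Mset_+}(\pi)$, which is convex and lower semicontinuous by boundedness and upper semicontinuity of $\ell \circ h_\theta$ on the compact $\Zspace$. Set $\Gamma = \reals \times \reals^{N_l} \times \reals^{N_{\Yspace}} \times \reals^{N_{\Yspace}} \times \Mset(\Xspace)$, let $A\pi$ return the tuple of constraint quantities appearing in \eqref{eq:worst-case-primal-transport} (the transport cost $\int c\,\dx\pi$, the second-marginal masses at each $\zvec_\ell^i$, the first $\Yspace$-marginal masses at each $\yvec^k$ used twice, and the first $\Xspace$-marginal of $\pi$), and let $\chi$ be the indicator of the corresponding product of half-lines, singletons, intervals, and the singleton $\{\Ppr_\Xspace\}$. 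Then $-f(\theta) = \inf_\pi (\gamma(\pi) + \chi(A\pi))$.

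Unpacking the Fenchel dual, the dual variable splits as $(\alpha,\beta,\overline\lambda,\underline\lambda,\phi)\in\Gamma^*$; $\chi^*(-\cdot)$ produces the affine terms $\alpha\varepsilon + N_l^{-1}\sum_i \beta^i + \sum_k(\overline\lambda^k\overline{\pvec}_{\Yspace}^k - \underline\lambda^k\underline{\pvec}_{\Yspace}^k) + \int\phi\,\dx\Ppr_\Xspace$, together with the sign constraints $\alpha,\overline\lambda,\underline\lambda\geq 0$, while $\gamma^*(A^*\cdot)$ yields the pointwise condition $\phi(\xvec)\geq\max_{i,k}\Phi^{i,k}(\xvec)$, because nonnegativity of $\pi$ forces the Lagrangian to be one-sided. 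Minimizing the affine objective over $\phi$ subject to this inequality drives $\int\phi\,\dx\Ppr_\Xspace$ down to $\expect^{\Ppr_\Xspace}[\max_{i,k}\Phi^{i,k}]$, reproducing precisely \eqref{eq:worst-case-dual}. Under the relint hypothesis, the Fenchel qualification $A\operatorname{dom}\gamma\cap\operatorname{cont}\chi\neq\emptyset$ (in its relative-interior form suitable for equality constraints in Banach spaces) is satisfied, producing both $f(\theta)=g(\theta)$ and attainment of the infimum in \eqref{eq:worst-case-dual}. For the weaker equality-only claim under mere nonemptiness of $\Uset \cap \ball_\varepsilon(\hat{\Ppr}_l)$, I would recover $f(\theta)=g(\theta)$ by a perturbation-and-limit argument: enlarge $\varepsilon$ and relax the label intervals slightly to gain a relative-interior point, apply the strong-duality result, and let the perturbation vanish, using upper semicontinuity of $f$ and lower semicontinuity of $g$ in these parameters.

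The main technical obstacle is that $\Mset(\Xspace)^*\supsetneq C(\Xspace)$, so the Fenchel dual variable $\phi$ is a priori more general than a continuous function, while $\max_{i,k}\Phi^{i,k}$ is only upper semicontinuous (as a finite maximum of upper semicontinuous functions, given the hypotheses on $\ell$ and $c$). To handle this, I would restrict $\phi$ to bounded Borel functions and approximate $\max_{i,k}\Phi^{i,k}$ from above by continuous functions via Lusin's theorem on the compact Polish $\Xspace$; this leaves the infimum value unchanged and confirms that the Fenchel pairing $\int\phi\,\dx\Ppr_\Xspace$ simplifies to the expectation in \eqref{eq:worst-case-dual}.
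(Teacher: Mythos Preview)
Your orientation is the reverse of the paper's, and that reversal is where the argument breaks. The paper sets up the \emph{dual} LP \eqref{eq:dual-lp} as the Fenchel infimum, taking $\Xi = \reals \times C_b(\Zspace)\times C_b(\Xspace)\times C_b(\Yspace)\times C_b(\Yspace)$ and $\Gamma = C_b(\Zspace\times\Zspace)$, and then computes the Fenchel dual, which by Riesz representation lands back in $\Mset(\Zspace\times\Zspace)$ and reproduces the primal \eqref{eq:worst-case-primal-transport}. In that orientation the qualification $A\operatorname{dom}\tilde G\cap\operatorname{cont}\chi\neq\emptyset$ is trivial to verify: $\chi$ is the indicator of $\{u\in C_b:u\geq \ell(h_\theta,\cdot)\}$, which has nonempty interior in $C_b$ because $\ell(h_\theta,\cdot)$ is bounded on the compact $\Zspace$. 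The paper is explicit about why it dualizes in this direction: the dual of $\Mset$ is strictly larger than $C_b$, so starting from the measure side would not give a clean dual space, whereas starting from $C_b$ and using Riesz does.

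Your setup places the infinite-dimensional equality constraint ``first $\Xspace$-marginal $=\Ppr_\Xspace$'' inside $\chi$ as the indicator of the singleton $\{\Ppr_\Xspace\}\subset\Mset(\Xspace)$. That singleton has empty interior, so $\operatorname{cont}\chi=\emptyset$ and the Fenchel qualification in Theorem~\ref{thm:fenchel_duality} fails outright. The ``relative-interior form suitable for equality constraints in Banach spaces'' you invoke is not a standard result: relative-interior refinements of Fenchel--Rockafellar are available in finite dimensions, but in an infinite-dimensional component like $\Mset(\Xspace)$ a singleton equality constraint does not satisfy the usual core/quasi-relative-interior conditions either. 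So you do not get strong duality, and you certainly do not get attainment on the $\Gamma^*$ side, from Fenchel alone. Relatedly, your dual variable $\phi$ would live in $\Mset(\Xspace)^{*}$, not in any function space, and the Lusin patch you sketch does not address attainment---it at best shows the infimum value is unchanged when $\phi$ is restricted.

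Because of this, the paper does \emph{not} obtain dual attainment from Fenchel. Fenchel (in their orientation) yields the equality $f(\theta)=g(\theta)$ and existence of a primal maximizer $\pi_\ast$. Dual attainment is then proved separately: they eliminate $\phi$ by showing the pointwise-tight choice $\phi_{\theta,\alpha,\beta,\overline\lambda,\underline\lambda}(\xvec)=\max_{i,k}\Phi^{i,k}(\xvec)$ is optimal, and then use the relint hypothesis to exhibit strictly feasible $\pi$ (and nearby perturbations in a total-variation ball preserving the $\Xspace$-marginal) that make the reduced objective $G_\theta(\alpha,\beta,\overline\lambda,\underline\lambda)$ coercive, hence attaining its infimum. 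Your perturbation-and-limit plan for the equality-only claim is also unnecessary in the paper's route: the Fenchel qualification there holds regardless of primal feasibility, so equality is immediate once the feasible set is nonempty.
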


\begin{proof}
For convenience, in what follows we will write $\ell(h_{\theta}, \zvec) \triangleq \ell(h_{\theta}(\xvec), \yvec)$ for $\zvec = (\xvec, \yvec) \in \Zspace$. We first recall the primal problem \eqref{eq:worst-case-primal-transport} and the dual problem \eqref{eq:worst-case-dual} from the main text:

\begin{equation}
\label{eq:primal-constraints}
f(\theta)=
\left\{
\begin{array}{rll}
\sup_{\pi} & \int_{\Zspace \times \Zspace} \ell(h_{\theta}, \zvec) \,\dx\pi(\zvec, \zvec^{\prime}) \\
\text{s.t.}
 & \int_{\Zspace \times \Zspace} c(\zvec, \zvec^{\prime}) \,\dx\pi(\zvec, \zvec^{\prime}) \leq \varepsilon \\
& \int_{\Zspace \times \Zspace} \delta_{\zvec_{\ell}^i}(\zvec^{\prime}) d\pi(\zvec, \zvec^{\prime}) = \frac{1}{N_l}
&\forall i \in \{1, \dots, N_l\}, \\
& \pi((A \times \Yspace) \times \Zspace) = \Ppr_{\Xspace}(A) & \forall A \in \Bset(\Xspace), \\
& \int_{(\Xspace \times \Yspace) \times \Zspace} \delta_{\yvec^k}(\yvec) \,\dx\pi((\xvec, \yvec), \zvec^{\prime}) \leq \overline{\pvec}_{\Yspace}^k
& \forall k \in \{1, \dots, N_{\Yspace}\}, \\
& \int_{(\Xspace \times \Yspace) \times \Zspace} \delta_{\yvec^k}(\yvec) \,\dx\pi((\xvec, \yvec), \zvec^{\prime}) \geq \underline{\pvec}_{\Yspace}^k 
& \forall k \in \{1, \dots, N_{\Yspace}\}, \\
& \pi(A) \geq 0
& \forall A \in \Bset(\Zspace \times \Zspace).
\end{array}
\right.
\end{equation}
More succinctly, we may write the primal problem as:
\begin{equation}
\label{eq:primal-lp}
f(\theta) = \sup_{\pi \in \Phi} F_{\theta}(\pi).
\end{equation}
with $F_{\theta}$ the objective above and $\Phi$ the feasible set.

The Lagrangian dual to \eqref{eq:primal-lp} is
\begin{equation}
\label{eq:dual-lp}
\inf_{(\alpha, \beta, \phi, \overline{\lambda}, \underline{\lambda}) \in \Lambda_{\theta}} G(\alpha, \beta, \phi, \overline{\lambda}, \underline{\lambda}),
\end{equation}
with
\begin{equation}
\label{eq:dual-lp-components}
\begin{aligned}
G(\alpha, \beta, \phi, \overline{\lambda}, \underline{\lambda}) &= \alpha \varepsilon + \frac{1}{N_l} \sum_{i=1}^{N_l} \beta^i + \int_{\Xspace} \phi(\xvec) \,\dx\Ppr_{\Xspace}(\xvec) + \sum_{k=1}^{N_{\Yspace}} \left(\overline{\lambda}^k \overline{\pvec}_{\Yspace}^k - \underline{\lambda}^k \underline{\pvec}_{\Yspace}^k\right), \\
\Lambda_{\theta} &= \Bigl\{(\alpha, \beta, \phi, \overline{\lambda}, \underline{\lambda}) \in \reals \times \reals^{N_l} \times C_b(\Xspace) \times \reals^{N_{\Yspace}} \times \reals^{N_{\Yspace}} : \\
&\quad\quad\quad \phi(\xvec) \geq \ell(h_{\theta}, (\xvec, \yvec^k)) - \alpha c((\xvec, \yvec^k), \zvec_{\ell}^i) - \beta^i - (\overline{\lambda}^k - \underline{\lambda}^k), \\
&\quad\quad\quad \alpha, \overline{\lambda}^k, \underline{\lambda}^k \geq 0, \\
&\quad\quad\quad \forall \xvec \in \Xspace, k \in \{1, \dots, N_{\Yspace}\}, i \in \{1, \dots, N_l\}\Bigr\}.
\end{aligned}
\end{equation}
We make two claims:
\begin{enumerate}
\item $\sup_{\pi \in \Phi} F_{\theta}(\pi) = \inf_{(\alpha, \beta, \phi, \overline{\lambda}, \underline{\lambda}) \in \Lambda_{\theta}} G(\alpha, \beta, \phi, \overline{\lambda}, \underline{\lambda})$.
\item There exists a dual optimizer $(\alpha, \beta, \phi_{\theta, \alpha, \beta, \overline{\lambda}, \underline{\lambda}}, \overline{\lambda}, \underline{\lambda}) \in \reals \times \reals^{N_l} \times L^0(\Xspace, \Ppr_{\Xspace}) \times \reals^{N_{\Yspace}} \times \reals^{N_{\Yspace}}$ with
\begin{equation}
\label{eq:optimal-phi}
\phi_{\theta, \alpha, \beta, \overline{\lambda}, \underline{\lambda}}(\xvec) = \max_{k \in \{1, \dots, N_{\Yspace}\}} \ell(h_{\theta}, (\xvec, \yvec^k)) - \min_{i \in \{1, \dots, N_l\}} \left(\alpha c((\xvec, \yvec^k), \zvec_{\ell}^i) + \beta^i\right) - (\overline{\lambda}^k - \underline{\lambda}^k).
\end{equation}
If $\ell(h_{\theta}, \cdot)$ and $c(\cdot, \zvec)$ are continuous, then $(\alpha, \beta, \phi_{\theta, \alpha, \beta, \overline{\lambda}, \underline{\lambda}}, \overline{\lambda}, \underline{\lambda}) \in \Lambda_{\theta}$.
\end{enumerate}

We start with the first claim, and apply Fenchel duality to the proposed dual problem \eqref{eq:dual-lp-components} to show the desired equality. This strategy of dualizing the proposed dual mirrors the arguments of \cite{villani2003topics} in proving strong duality for regular optimal transport. On a compact Polish space, the dual of the space of finite, signed measures is larger than the space of continuous, bounded functions, necessitating this approach. On the other hand, the Riesz representation theorem allows us to move from the proposed dual to the primal in a rigorous manner. It tells us that the space of finite, signed measures is isomorphic to the dual of the space of continuous, bounded functions (on a compact Polish space). 

We will actually show the first claim holds for a slight generalization of the dual problem \eqref{eq:dual-lp}. Consider the spaces $\Xi = \mathbb{R}\times C_b(\Zspace) \times C_b(\Xspace) \times C_b(\Yspace) \times C_b(\Yspace)$ and $\Gamma = C_b(\Zspace\times \Zspace)$, which are Banach spaces. The norm on $\Xi$ is given  by $\|(\alpha, \beta, \phi, \overline{\lambda}, \underline{\lambda})\| \triangleq |\alpha| + \|\beta\|_{\infty} + \|\phi\|_{\infty} + \|\overline{\lambda}\|_{\infty} + \|\underline{\lambda}\|_{\infty}$, while the norm on $\Gamma$ is $\|\cdot\|_{\infty}$.

Let $\nu_{\Zspace} \in \Mset(\Zspace), \nu_{\Xspace} \in \Mset(\Xspace), \overline{\nu}_{\Yspace} \in \Mset(\Yspace), \underline{\nu}_{\Yspace} \in \Mset(\Yspace)$. The dual problem we will rewrite as
\begin{equation} 
\label{eq:dual-fenchelized}
\inf_{\xi \in \Xi} \tilde{G}(\xi) + \chi(A \xi),
\end{equation}
with
\begin{equation}
\label{eq:dual-fenchelized-components}
\begin{aligned}
\tilde{G} &: \xi = (\alpha, \beta, \phi, \overline{\lambda}, \underline{\lambda}) \in \Xi \mapsto \bigbrace{\alpha \varepsilon + \langle \nu_{\Zspace}, \beta \rangle + \langle \nu_{\Xspace}, \phi \rangle + \langle \overline{\nu}_{\Yspace}, \overline{\lambda} \rangle - \langle \underline{\nu}_{\Yspace}, \underline{\lambda} \rangle & \alpha, \overline{\lambda}, \underline{\lambda} \geq 0 \\ +\infty & \otherwise}, \\
\chi &: u \in \Gamma \mapsto \bigbrace{0 & u(\zvec, \zvec^{\prime}) \geq \ell(h_{\theta}, \zvec)\ \forall \zvec, \zvec^{\prime} \in \Zspace \\
+\infty & \otherwise},
\end{aligned}
\end{equation}
and $A : \Xi \rightarrow \Gamma$ a linear operator defined by
\begin{equation}
\label{eq:dual-fenchelized-linear}
(A \xi)(\zvec, \zvec^{\prime}) = \alpha c(\zvec, \zvec^{\prime}) + \beta(\zvec^{\prime}) + \phi(\xvec) + \overline{\lambda}(\yvec) - \underline{\lambda}(\yvec),
\end{equation}
where $\zvec = (\xvec, \yvec)$. Optimization problem \eqref{eq:dual-fenchelized} is identical to the dual problem \eqref{eq:dual-lp} when $\nu_{\Zspace} = \hat{\Ppr}_l$, $\nu_{\Xspace} = \Ppr_{\Xspace}$, $\overline{\nu}_{\Yspace} = \overline{\Ppr}_{\Yspace}$, and $\underline{\nu}_{\Yspace} = \underline{\Ppr}_{\Yspace}$.

$\tilde{G}$ is convex and lower semi-continuous as a function of $(\alpha, \beta, \phi, \overline{\lambda}, \underline{\lambda})$, because it is linear on a closed, convex domain. $A$ is clearly continuous and $\chi$ is convex and lower semi-continuous as the indicator of a closed, convex domain. Note also that $A \dom \tilde{G} \cap \text{cont} \chi$ is nonempty as $\ell(h_{\theta}, \cdot)$ is an upper semi-continuous function on a compact domain and is bounded. In particular, $\ell(h_{\theta}, \cdot) < M$ for some $M \in \reals$, so by choosing $\beta = M$ with $\alpha = \phi = \overline{\lambda} = \underline{\lambda} = 0$ we have that $A(\alpha, \beta, \phi, \overline{\lambda}, \underline{\lambda}) \in \text{cont} \chi$.

As the underlying domain is compact, the topological duals of $\Xi$ and $\Gamma$ are $\Xi' = \mathbb{R}\times \mathcal{M}(\Zspace) \times \mathcal{M}(\Xspace) \times \mathcal{M}(\Yspace) \times \mathcal{M}(\Yspace)$ and $\Gamma' = \mathcal{M}(\Zspace \times \Zspace)$. This duality allows us to define an adjoint for the operator $A$, given by $A^{\ast} : \Gamma^{\prime} \rightarrow \Xi^{\prime}$, with
\begin{align*}
    A^*(\pi) &= \left(\int c \, \dx\pi, \pi_{Z'}, \pi_X, \pi_Y, -\pi_Y \right),
\end{align*}
such that
\begin{equation}
\begin{aligned}
A^*(\pi)(\alpha, \beta, \phi, \overline{\lambda}, \underline{\lambda}) &= \int_{\Zspace \times \Zspace} (\alpha c((\xvec, \yvec),\zvec') + \beta(\zvec') + \phi(\xvec) + \overline{\lambda}(\yvec) - \underline{\lambda}(\yvec))d\pi((\xvec, \yvec), \zvec^{\prime}) \\
&= \int_{\Zspace \times \Zspace} A(\alpha, \beta, \phi, \overline{\lambda}, \underline{\lambda})\,\dx\pi.
\end{aligned}
\end{equation}
Here, $\pi_X$, $\pi_Y$ denote $\Xspace$- and $\Yspace$-marginals of the first marginal of $\pi$, while $\pi_{Z^{\prime}}$ is the second marginal of $\pi$.

We can compute the convex conjugates of $\tilde{G}$ and $\chi$.
\begin{align*}
    \tilde{G}^*(a,\sigma,\tau, \zeta, \omega) &= \sup_{(\alpha, \beta, \phi, \overline{\lambda}, \underline{\lambda}) \in \Xi} \langle (a,\sigma,\tau, \zeta, \omega) , (\alpha, \beta, \phi, \overline{\lambda}, \underline{\lambda}) \rangle - \tilde{G}(\alpha, \beta, \phi, \overline{\lambda}, \underline{\lambda}) \\
    &= \sup_{(\alpha, \beta, \phi, \overline{\lambda}, \underline{\lambda}) \in \Xi} \bigbrace{(a-\varepsilon)\alpha + \int_{\Zspace} \beta \, (\dx\sigma - \dx\nu_{\Zspace}) + \int_{\Xspace} \phi \, (\dx\tau - \dx\nu_{\Xspace}) \\
    \qquad \qquad + \int_{\Yspace} \overline{\lambda} \, (\dx\zeta - \dx\overline{\nu}_{\Yspace}) + \int_{\Yspace} \underline{\lambda} \, (\dx\omega + \dx\underline{\nu}_{\Yspace}) \\
    \qquad \qquad \text{if}\ \alpha, \overline{\lambda}, \underline{\lambda} \geq 0 \\
    -\infty\ \ \text{otherwise}} \\
    &= \begin{cases}
    0 &\text{if } a \leq \varepsilon, \dx\sigma = \dx\nu_{\Zspace}, \dx\tau = \dx\nu_{\Xspace}, \dx\zeta \leq \dx\overline{\nu}_{\Yspace}, \dx\omega \leq -\dx\underline{\nu}_{\Yspace} \\
    +\infty&\text{otherwise}.
    \end{cases}\\
    \chi^*(\pi) &= \sup_{u \in \Gamma} \langle \pi, u \rangle - \chi(u) \\
    & = \sup_{u \geq l(h_\theta(\cdot),\cdot)} \int_{\Zspace \times \Zspace} u \, \dx\pi \\
    &= \begin{cases}
    \int_{\Zspace} l(h_\theta, \zvec) \, d\pi(\zvec) &\text{if } \,\dx\pi \leq 0 \\
    +\infty&\text{otherwise}.
    \end{cases}
\end{align*}
Above we have again used the fact that $l(h_\theta,\cdot)$ is bounded as an upper-semicontinuous function on a compact set. The resulting optimization problem, given by $\sup_{\pi \in \Gamma^{\ast}} -\tilde{G}^{\ast}(A^*\pi) - \xi^{\ast}(-\pi)$, is the primal problem \eqref{eq:primal-lp} with $-\tilde{G}^*(A^*\pi)$ expressing the Wasserstein and marginal constraints, and $-\chi^*(-\pi)$ expressing the objective and the positivity constraints on $\pi$. Strong duality follows from direct application of Theorem \ref{thm:fenchel_duality}, which also gives us existence of a primal maximizer $\pi^*$ to \eqref{eq:primal-lp}.

The second claim states that there exists a dual minimizer $(\alpha, \beta, \phi_{\theta, \alpha, \beta, \overline{\lambda}, \underline{\lambda}}, \overline{\lambda}, \underline{\lambda}) \in \reals \times \reals^{N_l} \times L^0(\Xspace, \Ppr_{\Xspace}) \times \reals^{N_{\Yspace}} \times \reals^{N_{\Yspace}}$ to \eqref{eq:dual-lp-components} with
\begin{equation*}
\phi_{\theta, \alpha, \beta, \overline{\lambda}, \underline{\lambda}}(\xvec) = \max_{k \in \{1, \dots, N_{\Yspace}\}, i \in \{1, \dots, N_l\}} \ell(h_{\theta}, (\xvec, \yvec^k)) - \alpha c((\xvec, \yvec^k), \zvec_{\ell}^i) - \beta^i - (\overline{\lambda}^k - \underline{\lambda}^k).
\end{equation*}
We start by noting that the function $\phi_{\theta, \alpha, \beta, \overline{\lambda}, \underline{\lambda}}$ is a pointwise maximum over a finite collection of functions $\ell(h_{\theta}, \cdot) - \alpha c(\cdot, \zvec_{\ell}^i)$, plus a constant term. If both $\ell(h_{\theta}, (\cdot, \yvec^k))$ and $c((\cdot, \yvec^k), \zvec_{\ell}^i)$, for all $k, i$, are measurable with respect to $\Ppr_{\Xspace}$, then $\phi_{\theta, \alpha, \beta, \overline{\lambda}, \underline{\lambda}} \in L^0(\Xspace, \Ppr_{\Xspace})$ as well.
Moreover, if $\ell(h_{\theta}, \cdot)$ and $c(\cdot, \zvec_{\ell}^i)$ are continuous, then they are bounded (under the assumption that $\Xspace$ is compact) and so $\phi_{\theta, \alpha, \beta, \overline{\lambda}, \underline{\lambda}} \in C_b(\Xspace)$. In that case, for any finite $(\alpha, \beta, \overline{\lambda}, \underline{\lambda})$ satisfying $\alpha, \overline{\lambda}, \underline{\lambda} \geq 0$, the element $(\alpha, \beta, \phi_{\theta, \alpha, \beta, \overline{\lambda}, \underline{\lambda}}, \overline{\lambda}, \underline{\lambda})$ is in $\Lambda_{\theta}$.

For any probability measure $\pi \in \Qset(\Zspace \times \Zspace)$ whose first marginal satisfies $\pi((A \times \Yspace) \times \Zspace) = \Ppr_{\Xspace}(A)$, $\forall A \in \Bset(\Xspace)$, and whose second marginal satisfies $\supp \pi \subseteq \Zspace \times \hat{\Zspace}_l$, the following holds:
\begin{equation}
\label{eq:expectation-phi-lower-bound-arbitrary-measure}
\expect^{\Ppr_{\Xspace}} \phi_{\theta, \alpha, \beta, \overline{\lambda}, \underline{\lambda}}(\Xrv) \geq \expect^{\pi} [\ell(h_{\theta}, \Zrv) - \alpha c(\Zrv, \Zrv^{\prime}) - \beta(\Zrv^{\prime}) - (\overline{\lambda}(\Yrv) - \underline{\lambda}(\Yrv))],
\end{equation}
with $(\Zrv, \Zrv^{\prime}) \sim \pi$ and $\Zrv = (\Xrv, \Yrv)$. Here we abuse notation slightly and write $\beta$ as a function on $\Zspace$, with $\beta(\zvec_{\ell}^i) \triangleq \beta^i$, and $\overline{\lambda}, \underline{\lambda}$ as functions on $\Yspace$, with $\overline{\lambda}(\yvec^k) \triangleq \overline{\lambda}^k$ and $\underline{\lambda}(\yvec^k) \triangleq \underline{\lambda}^k$. The inequality holds necessarily because $\expect^{\Ppr_{\Xspace}} \phi_{\theta, \alpha, \beta, \overline{\lambda}, \underline{\lambda}}(\xvec)$ is exactly the maximal value of the righthand side of the inequality, over $\pi$ satisfying the above constraints.

Define $\Lambda_{\theta, \ast} = \{(\alpha, \beta, \overline{\lambda}, \underline{\lambda}) \in \reals \times \reals^{N_l} \times \reals^{N_{\Yspace}} \times \reals^{N_{\Yspace}} : \alpha, \overline{\lambda}, \underline{\lambda} \geq 0\}$ and 
\begin{equation}
G_{\theta} : (\alpha, \beta, \overline{\lambda}, \underline{\lambda}) \in \Lambda_{\theta, \ast} \mapsto G(\alpha, \beta, \phi_{\theta, \alpha, \beta, \overline{\lambda}, \underline{\lambda}}, \overline{\lambda}, \underline{\lambda}).
\end{equation}
It is clear from statement of the dual problem in \eqref{eq:dual-lp-components} that for any $(\alpha, \beta, \phi, \overline{\lambda}, \underline{\lambda}) \in \Lambda_{\theta}$, $G_{\theta}(\alpha, \beta, \overline{\lambda}, \underline{\lambda}) \leq G(\alpha, \beta, \phi, \overline{\lambda}, \underline{\lambda})$. This is because $\phi_{\theta, \alpha, \beta, \overline{\lambda}, \underline{\lambda}}$ is the smallest function that satisfies the constraints in \eqref{eq:dual-lp-components}. Moreover, $G_{\theta}(\alpha, \beta, \overline{\lambda}, \underline{\lambda}) \geq \sup_{\pi \in \Phi} F_{\theta}(\pi)$ (i.e. the optimal value of the primal), as Fenchel duality guarantees existence of a primal optimizer $\pi_{\ast}$ and the above fact bounding $\expect^{\Ppr_{\Xspace}} \phi_{\theta, \alpha, \beta, \overline{\lambda}, \underline{\lambda}}$ guarantees
\begin{equation}
\label{eq:dual-expectation-lower-bound}
\begin{aligned}
&G_{\theta}(\alpha, \beta, \overline{\lambda}, \underline{\lambda}) \\
&\quad\quad\quad\quad \geq \expect^{\pi_{\ast}} \Biggl[\alpha (\varepsilon - c(\Zrv, \Zrv^{\prime})) + \sum_{i=1}^{N_l} \left(\frac{1}{N_l} - \delta_{\zvec_{\ell}^i}(\Zrv^{\prime})\right) \beta(\zvec_{\ell}^i) \\
&\quad\quad\quad\quad\quad\quad\quad + \ell(h_{\theta}, \Zrv) + \sum_{k=1}^{N_{\Yspace}} (\overline{\pvec}_{\Yspace}^k - \delta_{\yvec^k}(\Yrv)) \overline{\lambda}(\yvec^k) + (\delta_{\yvec^k}(\Yrv) - \underline{\pvec}_{\Yspace}^k) \underline{\lambda}(\yvec^k)\Biggr] \\
&\quad\quad\quad\quad \geq \expect^{\pi_{\ast}} \ell(h_{\theta}, Z) \\
&\quad\quad\quad\quad = F_{\theta}(\pi_{\ast}).
\end{aligned}
\end{equation}
The second inequality follows from feasibility of $\pi_{\ast}$, meaning that it satisfies the constraints \eqref{eq:primal-constraints}. As we have shown, though, $\sup_{\pi \in \Phi} F_{\theta}(\pi) = \inf_{(\alpha, \beta, \phi, \overline{\lambda}, \underline{\lambda}) \in \Lambda_{\theta}} G(\alpha, \beta, \phi, \overline{\lambda}, \underline{\lambda})$, so it must be that $\sup_{\pi \in \Phi} F_{\theta}(\pi) = \inf_{(\alpha, \beta, \overline{\lambda}, \underline{\lambda}) \in \Lambda_{\theta, \ast}} G_{\theta}(\alpha, \beta, \overline{\lambda}, \underline{\lambda})$.

The claim therefore reduces to existence of a finite minimizer of $G_{\theta}$. Suppose $\overline{\pvec}_{\Yspace}^k > \underline{\pvec}_{\Yspace}^k$ for all $k$ and choose any $\pi \in \operatorname{relint} \Phi$, meaning that
\vspace{-0.1cm}
\begin{equation}
\begin{aligned}
\expect^{\pi} c(\Zrv, \Zrv^{\prime}) &< \varepsilon, \\ \expect^{\pi} \delta_{\zvec_{\ell}^i}(\Zrv^{\prime}) &= \frac{1}{N_l},\ \ \forall i \in \{1, \dots, N_l\}, \\
\expect^{\pi} \delta_{\yvec^k}(\Yrv) &\in (\underline{\pvec}_{\Yspace}^k, \overline{\pvec}_{\Yspace}^k),\ \ \forall k \in \{1, \dots, N_{\Yspace}.
\end{aligned}
\end{equation}
\vspace{-0.1cm}
Substituting $\pi$ for $\pi_{\ast}$, the bound in \eqref{eq:dual-expectation-lower-bound} still holds. Strict feasibility of $\pi$ therefore implies that $G_{\theta}$ is lower bounded by a function linear in $(\alpha, \beta, \overline{\lambda}, \underline{\lambda})$ that is increasing in $\alpha, \overline{\lambda}, \underline{\lambda}$. Since these variables are constrained to be nonnegative, $G_{\theta}$ is therefore coercive in $\alpha$, $\overline{\lambda}$, and $\underline{\lambda}$.

Unfortunately, any feasible $\pi$ yields a lower bound that is independent of $\beta$. We can remedy this, however, as follows. Let $\{\alpha, \beta, \overline{\lambda}, \underline{\lambda}\} \in \Lambda_{\theta, \ast}$. Note that $G_{\theta}$ is invariant under shifts of $\beta$ by a constant, so we can assume $\sum_{i=1}^{N_l} \beta^i = 0$. Therefore for any $\beta \neq 0$ there exists at least one pair of indices $i, i^{\prime}$ such that $\operatorname{sgn} \beta^i \neq \operatorname{sgn} \beta^{i^{\prime}}$.

We will define a set of probability measures $\pi_+$ that have first marginal identical to that of $\pi$ but that have a very small amount of mass shifted so as to alter the second marginal. Call this set $\Psi[\pi]$,
\begin{equation}
\Psi[\pi] = \ball_a(\pi) \cap \{ \pi_+ \in \Qset(\Zspace \times \Zspace) :  \pi_+(A \times \Zspace) = \pi(A \times \Zspace),\ \forall A \in \Bset(\Xspace) \},
\end{equation}
with $\ball_a(\pi)$ the total variation norm ball of radius $a < \frac{\varepsilon - \expect^{\pi} c(\Zrv, \Zrv^{\prime})}{C}$, where $C = \max_{\zvec, \zvec^{\prime} \in \Zspace} c(\zvec, \zvec^{\prime})$. Note that $C$ is finite due to compactness of $\Xspace$, as we assumed $c$ is semicontinuous. And $a$ is a radius sufficiently small to guarantee $\expect^{\pi_+} c(\Zrv, \Zrv^{\prime}) < \varepsilon$, for all $\pi_+ \in \ball_a(\pi)$, despite the shifted mass. $a$ is positive due to strict feasibility of $\pi$.

For any $\pi_+ \in \Psi[\pi]$, the lower bound stated in \eqref{eq:dual-expectation-lower-bound} still holds for $\pi_+$ in place of $\pi^{\ast}$. This means that
\begin{equation}
\label{eq:dual-lower-bound-supremum}
\begin{aligned}
&G_{\theta}(\alpha, \beta, \overline{\lambda}, \underline{\lambda}) \\
&\quad\quad\quad\quad \geq \sup_{\pi_+ \in \Psi[\pi]} \expect^{\pi_+} \Biggl[\alpha (\varepsilon - c(\Zrv, \Zrv^{\prime})) + \sum_{i=1}^{N_l} \left(\frac{1}{N_l} - \delta_{\zvec_{\ell}^i}(\Zrv^{\prime})\right) \beta(\zvec_{\ell}^i) \\
&\quad\quad\quad\quad\quad\quad\quad\quad + \ell(h_{\theta}, \Zrv) + \sum_{k=1}^{N_{\Yspace}} (\overline{\pvec}_{\Yspace}^k - \delta_{\yvec^k}(\Yrv)) \overline{\lambda}(\yvec^k) + (\delta_{\yvec^k}(\Yrv) - \underline{\pvec}_{\Yspace}^k) \underline{\lambda}(\yvec^k)\Biggr],
\end{aligned}
\end{equation}
for all $(\alpha, \beta, \overline{\lambda}, \underline{\lambda}) \in \Lambda_{\theta, \ast}$. This lower bound is a pointwise supremum over linear functions in $(\alpha, \beta, \overline{\lambda}, \underline{\lambda})$, induced by measures $\pi_+ \in \Psi[\pi]$. Importantly, the bound is still increasing in $\alpha, \overline{\lambda}, \underline{\lambda}$, by definition of $\Psi[\pi]$. And for any $\beta$, there exists $\pi_+ \in \Psi[\pi]$ such that $\operatorname{sgn} (\frac{1}{N_l} - \expect^{\pi_+} \delta_{\zvec_{\ell}^i}(\Zrv^{\prime})) = \operatorname{sgn} \beta^i$, for all $i$. This pointwise supremum is increasing in $\alpha, \overline{\lambda}, \underline{\lambda}$, and increasing in $|\beta^i|$ for all i. So the lower bound is coercive and therefore $G_{\theta}$ is coercive. This suffices for existence of a finite $(\alpha, \beta, \overline{\lambda}, \underline{\lambda}) \in \Lambda_{\theta, \ast}$ that optimizes $G_{\theta}$.

The above holds when $\overline{\pvec}_{\Yspace}^k > \underline{\pvec}_{\Yspace}^k$ for all $k$. Suppose now that there exists $k$ such that $\overline{\pvec}_{\Yspace}^k = \underline{\pvec}_{\Yspace}^k$. Then we face the same problem as we did with $\beta$, with the linear lower bound defined by substituting strictly feasible $\pi$ for $\pi_{\ast}$ in \eqref{eq:dual-expectation-lower-bound} now independent of the $\overline{\lambda}^k$ and $\underline{\lambda}^k$ terms. We will deal with this problem analogously to the approach above.

We again start with $\pi \in \operatorname{relint} \Phi$. Now, however, we relax the constraints defining $\Psi[\pi]$, to allow small shifts of mass that preserve only the first $\Xspace$-marginal. We define
\begin{equation}
\Psi_{\Xspace}[\pi] = \ball_a(\pi) \cap \{\pi_+ \in \Qset(\Zspace \times \Zspace) : \pi_+((A \times \Yspace) \times \Zspace) = \pi(A)\ \forall A \in \Bset(\Xspace)\},
\end{equation}
with ball $\ball_a(\pi)$ having radius $a < \min\left\{\frac{\varepsilon - \expect^{\pi} c(\Zrv, \Zrv^{\prime})}{C}, (\overline{\pvec}_{\Yspace}^k - \expect^{\pi} \delta_{\yvec^k}(\Yrv)), (\expect^{\pi} \delta_{\yvec^k}(\Yrv) - \underline{\pvec}_{\Yspace}^k)\right\}$ with $k$ ranging over the indices such that $\overline{\pvec}_{\Yspace}^k \neq \underline{\pvec}_{\Yspace}^k$ and $C$ defined as above.

For any $\pi_+ \in \Psi_{\Xspace}[\pi]$, the lower bound in \eqref{eq:dual-expectation-lower-bound} still holds, when we  substitute $\pi_+$ for $\pi_{\ast}$. So $G_{\theta}$ is lower bounded by a pointwise supremum over linear functions in $(\alpha, \beta, \overline{\lambda}, \underline{\lambda})$, identically to \eqref{eq:dual-lower-bound-supremum}, substituting $\Psi_{\Xspace}[\pi]$ for $\Psi[\pi]$. For each $k$ with $\overline{\pvec}_{\Yspace}^k = \underline{\pvec}_{\Yspace}^k \triangleq \pvec_{\Yspace}^k$, however, the linear terms depending on $\overline{\lambda}^k, \underline{\lambda}^k$ simplify slightly and each one becomes $(\pvec_{\Yspace}^k - \expect^{\pi_+} \delta_{\yvec^k}(\Yrv)) (\overline{\lambda}^k - \underline{\lambda}^k)$. Although $\overline{\lambda}^k$ and $\underline{\lambda}^k$ are nonnegative, their difference is unconstrained. $G_{\theta}$ is independent of shifts of $\overline{\lambda}^k, \underline{\lambda}^k$ by a constant, so we will assume $\min\{\overline{\lambda}^k, \underline{\lambda}^k\} = 0$ and discuss the single unconstrained variable $\lambda^k = \overline{\lambda}^k - \underline{\lambda}^k$.

If there exists at least one $k^{\prime}$ such that $\overline{\pvec}_{\Yspace}^{k^{\prime}} \neq \underline{\pvec}_{\Yspace}^{k^{\prime}}$ then for any $(\alpha, \beta, \overline{\lambda}, \underline{\lambda}) \in \Lambda_{\theta, \ast}$ it is possible to choose $\pi_+ \in \Psi_{\Xspace}[\pi]$ such that $\operatorname{sgn} (\pvec^k - \expect^{\pi_+} \delta_{\yvec^k}(\Yrv)) = \operatorname{sgn} \lambda^k$, for all $k$ such that $\overline{\pvec}_{\Yspace}^k = \underline{\pvec}_{\Yspace}^k$, by shifting mass between the set $\Xspace \times \{\yvec^{k^{\prime}} : \overline{\pvec}_{\Yspace}^{k^{\prime}} \neq \underline{\pvec}_{\Yspace}^{k^{\prime}}\} \times \Zspace$ and the set $\Xspace \times \{\yvec^k : \overline{\pvec}_{\Yspace}^k = \underline{\pvec}_{\Yspace}^k\} \times \Zspace$, such that the corresponding terms have the correct sign. The radius $a$ for $\Psi_{\Xspace}[\pi]$ was chosen explicitly so that this movement of mass (combined with that described above for $\beta$) will yield $\pi_+ \in \Psi_{\Xspace}[\pi]$ that is increasing in all of $\alpha$, $\overline{\lambda}^{k^{\prime}}$, and $\underline{\lambda}^{k^{\prime}}$, $\overline{\pvec}_{\Yspace}^{k^{\prime}} \neq \underline{\pvec}_{\Yspace}^{k^{\prime}}$, while possessing the correct sign for $\beta^i$ and $\lambda^k$, $\overline{\pvec}_{\Yspace}^k = \underline{\pvec}_{\Yspace}^k$. Therefore the supremum over $\Psi_{\Xspace}[\pi]$ lower bounds $G_{\theta}$ and is coercive in $\alpha, \beta, \overline{\lambda}, \underline{\lambda}$.

If $\overline{\pvec}_{\Yspace}^k = \underline{\pvec}_{\Yspace}^k$ for all $k$, then there is an additional symmetry: We can shift $\lambda^k$ by a constant without impacting $G_{\theta}$. Therefore we can assume $\sum_{k=1}^{N_{\Yspace}} \lambda^k = 0$. The rest of the proof proceeds identically to that for $\beta$.

Note that this proof of the second claim relies on the existence of $\pi \in \operatorname{relint} \Phi$. Suppose no such $\pi$ exists, meaning that for all $\pi \in \Phi$ either $\expect^{\pi} c(\Zrv, \Zrv^{\prime}) = \varepsilon$ or $\expect^{\pi} \delta_{\yvec^k}(\Yrv) \in \{\overline{\pvec}_{\Yspace}^k, \underline{\pvec}_{\Yspace}^k\}$, for some $k$ (or both). Suppose the latter is the case. Assuming the pairs $\{(\overline{\pvec}_{\Yspace}^k, \underline{\pvec}_{\Yspace}^k)\}_{k=1}^{N_{\Yspace}}$ are not degenerate in the sense that there exists only one probability vector $\pvec_{\Yspace} \in \simplex^{N_{\Yspace}}$ that satisfies the constraints $\pvec_{\Yspace}^k \in [\underline{\pvec}_{\Yspace}^k, \overline{\pvec}_{\Yspace}^k]$, for all $k$, we can shift any small amount of mass $\delta$ in $\pi$ between the set $\Xspace \times \{\yvec^k : \expect^{\pi} \delta_{\yvec^k}(\Yrv) = \overline{\pvec}_{\Yspace}^k \neq \underline{\pvec}_{\Yspace}^k\} \times \Zspace$ and the set $\Xspace \times \{\yvec^k : \expect^{\pi} \delta_{\yvec^k}(\Yrv) < \overline{\pvec}_{\Yspace}^k \neq \underline{\pvec}_{\Yspace}^k\} \times \Zspace$ and likewise between the set $\Xspace \times \{\yvec^k : \expect^{\pi} \delta_{\yvec^k}(\Yrv) = \underline{\pvec}_{\Yspace}^k \neq \overline{\pvec}_{\Yspace}^k\} \times \Zspace$ and the set $\Xspace \times \{\yvec^k : \expect^{\pi} \delta_{\yvec^k}(\Yrv) > \underline{\pvec}_{\Yspace}^k \neq \underline{\pvec}_{\Yspace}^k\} \times \Zspace$, with the resulting altered probability measure now strictly feasible for radius $\varepsilon$ increased by $C \delta$ ($C$ defined as above, the maximal value of $c$). Moreover, if there is no such $k$ with $\expect^{\pi} \delta_{\yvec^k}(\Yrv) \in \{\overline{\pvec}_{\Yspace}^k, \underline{\pvec}_{\Yspace}^k\}$, then $\expect^{\pi} c(\Zrv, \Zrv^{\prime}) = \varepsilon$ and $\pi$ will be strictly feasible for any radius $\varepsilon$ that is increased by $\delta > 0$. So, assuming nondegenerate $\{(\underline{\pvec}_{\Yspace}^k, \overline{\pvec}_{\Yspace}^k)\}_{k=1}^{N_{\Yspace}}$, there exists at most one value of $\varepsilon$ for which $\Phi \neq \emptyset$ but there is no $\pi \in \operatorname{relint} \Phi$.
\end{proof}

\section{Subgradients of dual program}
\label{sec:appendix-dual-gradients}
Recall that $\Phi(\xvec;\theta,\alpha,\beta,\overline{\lambda},\underline{\lambda})$ is defined in \eqref{eq:big-phi} as 
\[
    \Phi(\xvec;\theta,\alpha,\beta,\overline{\lambda},\underline{\lambda}) = \max_{\substack{k \in \{1, \dots, N_{\Yspace}\}\\i \in \{1, \dots, N_l\}}} \ell(h_{\theta}, (\Xrv, \yvec^k)) -  \left(\alpha c\left((\Xrv, \yvec^k), \zvec_{\ell}^i\right) + \beta^i\right) - (\overline{\lambda}^k - \underline{\lambda}^k).
\]
For fixed $\xvec \in \Xspace$, we can view $\Phi(\xvec;\theta,\alpha,\beta,\overline{\lambda},\underline{\lambda})$ as a function of the dual variables. Algorithm \ref{alg:sgd} relies on computing a subderivative of $\Phi(\xvec;\theta,\alpha,\beta,\overline{\lambda},\underline{\lambda})$ with respect to the variables $\theta, \alpha, \beta, \underline{\lambda}, \overline{\lambda}$. Assuming that $\ell(h_\theta, \cdot)$  admits a subderivative for any $\theta$, we can use the following expressions:
\begin{equation}
\begin{aligned}
\frac{\partial}{\partial \theta^j} \Phi(\xvec;\theta,\alpha,\beta,\overline{\lambda},\underline{\lambda}) &\in \sum_{i=1}^{N_l} \sum_{k=1}^{N_{\Yspace}} \mathbbm{1}_{V^{ik}}(\xvec) \frac{\partial}{\partial \theta^j} \ell(h_{\theta}, (\xvec, \yvec^k)), \\
\frac{\partial}{\partial \alpha} \Phi(\xvec;\theta,\alpha,\beta,\overline{\lambda},\underline{\lambda}) &= -\sum_{i=1}^{N_l} \sum_{k=1}^{N_{\Yspace}} \mathbbm{1}_{V^{ik}}(\xvec) c((\xvec, \yvec^k), \zvec_{\ell}^i), \\
\frac{\partial}{\partial \beta^i}\Phi(\xvec;\theta,\alpha,\beta,\overline{\lambda},\underline{\lambda})  &= -\sum_{k=1}^{N_{\Yspace}} \mathbbm{1}_{V^{ik}}(\xvec), \\
\frac{\partial}{\partial \overline{\lambda}^k} \Phi(\xvec;\theta,\alpha,\beta,\overline{\lambda},\underline{\lambda}) &= -\sum_{i=1}^{N_l} \mathbbm{1}_{V^{ik}}(\xvec), \\
\frac{\partial}{\partial \underline{\lambda}^k} \Phi(\xvec;\theta,\alpha,\beta,\overline{\lambda},\underline{\lambda}) &= \sum_{i=1}^{N_l} \mathbbm{1}_{V^{ik}}(\xvec).
\end{aligned}
\end{equation}
For $\xvec$ lying on the boundary between two of the sets $V^{ik}$, we can obtain a subgradient by arbitrarily selecting only one of these $V^{ik}$ to contain $\xvec$ when evaluating $\mathbbm{1}_{V^{ik}}(\xvec)$. In most practical settings, the boundaries should have lower dimension than $\Xspace$ and therefore $\Ppr_{\Xspace}$-measure zero.

\section{Datasets}

In the experiments described in Sections \ref{sec:empirical-performance-of-drl-with-unlabeled}, \ref{sec:how-important-is-radius}, and \ref{sec:active-learning-empirical}, we use $14$ real datasets, taken from the UCI repository \citep{dua2019:uci}. Datasets were chosen from amongst those of ``multivariate classification'' type having more than $1000$ examples. We attempted to select frequently-downloaded datasets from a variety of domains. We excluded datasets on which a $\ell^2$-regularized linear logistic regression using $100$ randomly selected training samples could not achieve likelihood greater than $0.55$.
Table \ref{tbl:datasets} shows the datasets along with the number of examples and class balance in each dataset.

In every experiment, the full dataset was first standardized by subtracting out the mean and dividing by the standard deviation, per-feature, then scaling the resulting data matrix by dividing out the maximum absolute value over all entries.

Some caveats apply:
\begin{itemize}
    \item {\bf Abalone}: We excluded all examples from classes $9$ and $10$, to ensure non-trivial classification performance was possible with a small number of labeled examples.
    \item {\bf Bank} and {\bf Cover}: We chose $10000$ examples uniformly without replacement.
    \item {\bf Cover}: We classified types $2$ vs $3$ and $5$ vs. $6$.
    \item {\bf Isolet}: We classified vowels vs.\ the rest.
    \item{\bf Letter recognition}: We classified ``C'' vs. ``E'' and ``U'' vs. ``V.''
    \item {\bf Bank}, {\bf Magic}, and {\bf Pulsar}: We chose examples uniformly without replacement from the larger class to achieve exact balance.
    \item {\bf Thyroid}: We classified ``3'' vs.\ the rest.
    \item {\bf Wine}: We excluded all examples with rating $6$, to ensure non-trivial classification performance was possible with a small number of labeled examples.
    \item {\bf Yeast}: We classified ``nuclear'' vs.\ the rest.
\end{itemize}

\begin{table}
\caption{Datasets used in this paper. All are from the UCI repository \citep{dua2019:uci}.}
\label{tbl:datasets}
    \centering
    \begin{tabular}{rcccl}
        \toprule
        Dataset & Full name & N. features & N. examples & \% positive \\
        \midrule
        Abalone & Abalone & 10 & 2854 & 50.7 \\
        Bank & Bank Marketing & 53 & 10000 & 50.0 \\
        Cover (2/3) & Cover Type & 54 & 10000 & 11.2 \\
        Cover (5/6) & Cover Type & 54 & 10000 & 64.7 \\
        Isolet & Isolet & 617 & 7797 & 19.2 \\
        Letter (C/E) & Letter Recognition & 256 & 1504 & 48.9 \\
        Letter (U/V) & Letter Recognition & 256 & 1577 & 51.6 \\
        Magic & MAGIC Gamma Telescope & 10 & 13376 & 50.0 \\
        Mushroom & Mushroom & 117 & 8124 & 48.2 \\
        Pulsar & HTRU2 & 8 & 3278 & 50.0 \\
        Spam & Spambase & 57 & 4601 & 39.4 \\
        Thyroid & Thyroid Disease & 21 & 7200 & 92.6 \\
        Wine & Wine & 11 & 2700 & 39.3 \\
        Yeast & Yeast & 8 & 1484 & 28.9 \\
        \bottomrule
    \end{tabular}
\end{table}

\section{Description of experiments and additional empirical results}

In all experiments we use the transport cost $c((\xvec, \yvec), (\xvec^{\prime}, \yvec^{\prime})) = \|\xvec - \xvec^{\prime}\|_2 + \frac{\kappa}{2} |\yvec - \yvec^{\prime}|$ with $\kappa = 1$ and $\yvec \in \{+1, -1\} \subset \reals$.

\subsection{Performance when the decision set contains the true data distribution}
\label{sec:appendix-true-radius}

The following describes Figures \ref{fig:true-radius-fval} and \ref{fig:true-radius-conf} in Section \ref{sec:empirical-performance-of-drl-with-unlabeled} as well as the supplementary Figures \ref{fig:appendix-true-radius-fval} and \ref{fig:appendix-true-radius-conf}. Figure \ref{fig:true-radius-fval} shows the worst-case likelihood bound output by each algorithm, varying the number of training examples. Specifically, we choose $\varepsilon$ to be the smallest radius such that the Wasserstein ball $\ball_{\varepsilon}(\hat{\Ppr}_l)$ contains the true data distribution $\Ppr$ (i.e. the empirical distribution defined by the full dataset), and compute a linear logistic regression under the traditional Wasserstein DRL model (Equation \eqref{eq:distributionally-robust-learning}, setting $\Pset = \ball_{\varepsilon}(\hat{\Ppr}_l)$) and under the proposed model (with $\Pset = \ball_{\varepsilon}(\hat{\Ppr}_l) \cap \Uset(\Ppr_{\Xspace}, \overline{\pvec}_{\Yspace}, \underline{\pvec}_{\Yspace})$). In both cases the problem can be written
\begin{equation}
\label{eq:appendix-drl-linear-logistic-regression}
\minimize_{\theta \in \reals^{q+1}} \sup_{\mu \in \Pset} \expect^{\mu} \Yrv \log(1 + \exp\{-\langle \Xrv, \theta \rangle\}) + (1 - \Yrv) \log(1 + \exp\{\langle \Xrv, \theta \rangle\}),
\end{equation}
with $\Xspace = \reals^q \times \{1\}$ the feature space and $\Yspace = \{0,1\}$ the label space. There are two settings for the intervals $[\overline{\pvec}_{\Yspace}^k, \underline{\pvec}_{\Yspace}^k]$ constraining the label probabilities. The first is the ``strong'' prior, in which we know the exact label marginal $\Ppr_{\Yspace} = \frac{1}{N_{\Yspace}} \sum_{k=1}^{N_{\Yspace}} \pvec_{\Yspace}^k \delta_{\yvec^k}$ and we set $\overline{\pvec}_{\Yspace}^k = \underline{\pvec}_{\Yspace}^k = \pvec_{\Yspace}^k$ for all $k$. The second setting is the ``weak'' prior, in which we estimate a $95\%$ confidence interval for the label probability, directly from the labeled sample $\hat{\Zspace}_l$, using the method of \citet{clopper1934use}.

We solve \eqref{eq:appendix-drl-linear-logistic-regression} via its dual (Section \ref{sec:drl-problem-formulation}),
using the Adam optimizer \citep{kingma2014adam} with $\beta_1 = 0.9$, $\beta_2 = 0.999$, $\epsilon = 10^{-8}$, and a batch size of $100$ and decreasing the learning rate by a factor of $8$ every $10000$ steps. The resulting dual objective value is used as the negative log of the worst-case likelihood bound shown in Figure \ref{fig:true-radius-fval}.

Figure \ref{fig:true-radius-conf} shows the median over unlabeled input examples of the confidence of the learned predictor evaluated on those examples. For example $\xvec \in \Xspace$ the confidence is $\max\{h_{\theta}(\xvec), 1 - h_{\theta}(\xvec)\}$. 

In both figures, the solid line is the median over $40$ independent trials and the shaded region is the $95\%$ interval of the median. Each trial represents a single independent sample of $N_l$ labeled examples from the given dataset, taken uniformly without replacement.

Figure \ref{fig:appendix-true-radius-fval} shows the worst-case likelihood bound for additional datasets and Figure \ref{fig:appendix-true-radius-conf} the median confidence.

\begin{figure}
\centering
\begin{subfigure}[b]{0.36\textwidth}
\includegraphics[width=\textwidth]{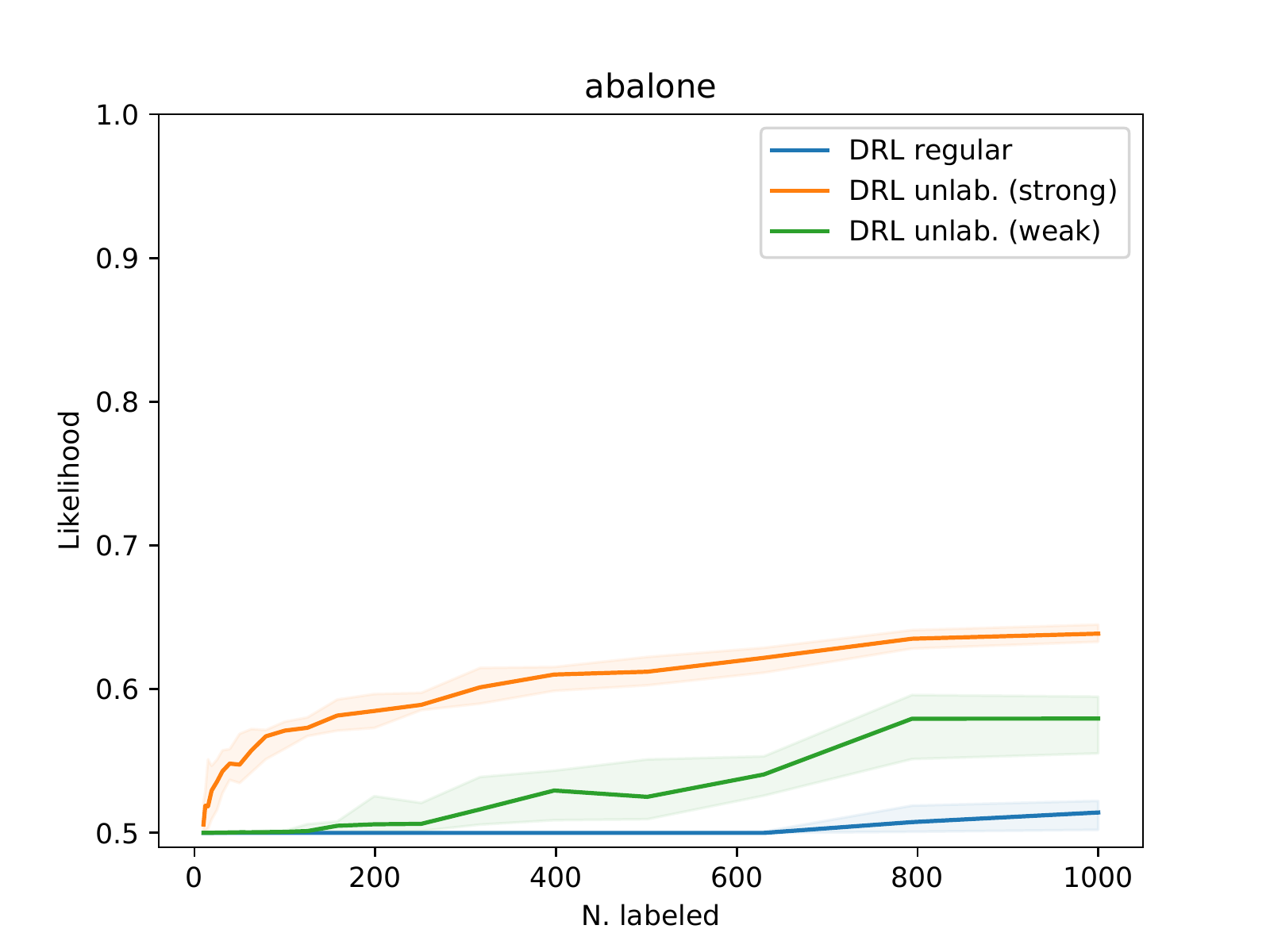}
\caption{Abalone}
\end{subfigure}
\begin{subfigure}[b]{0.36\textwidth}
\includegraphics[width=\textwidth]{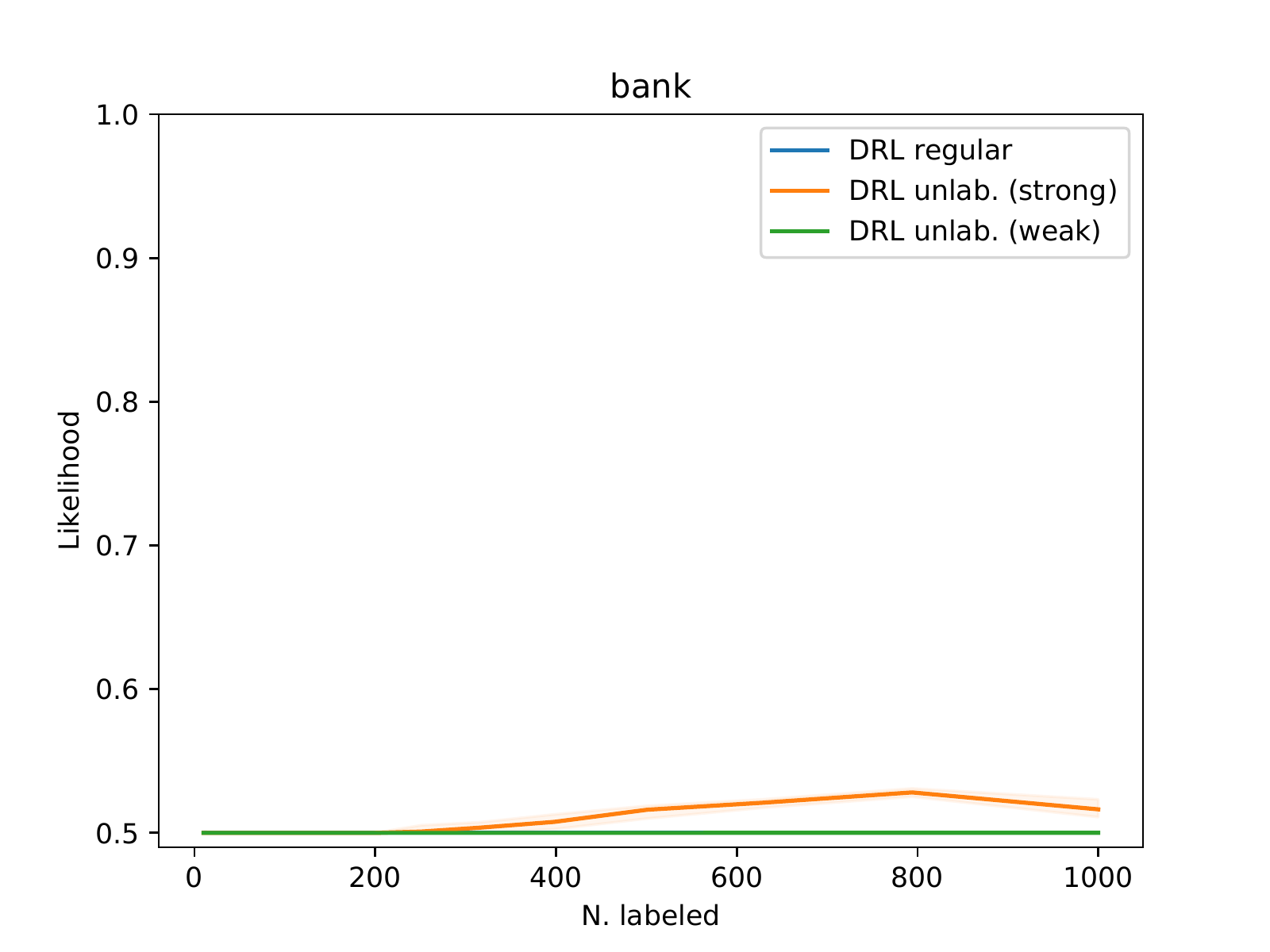}
\caption{Bank}
\end{subfigure}
\begin{subfigure}[b]{0.36\textwidth}
\includegraphics[width=\textwidth]{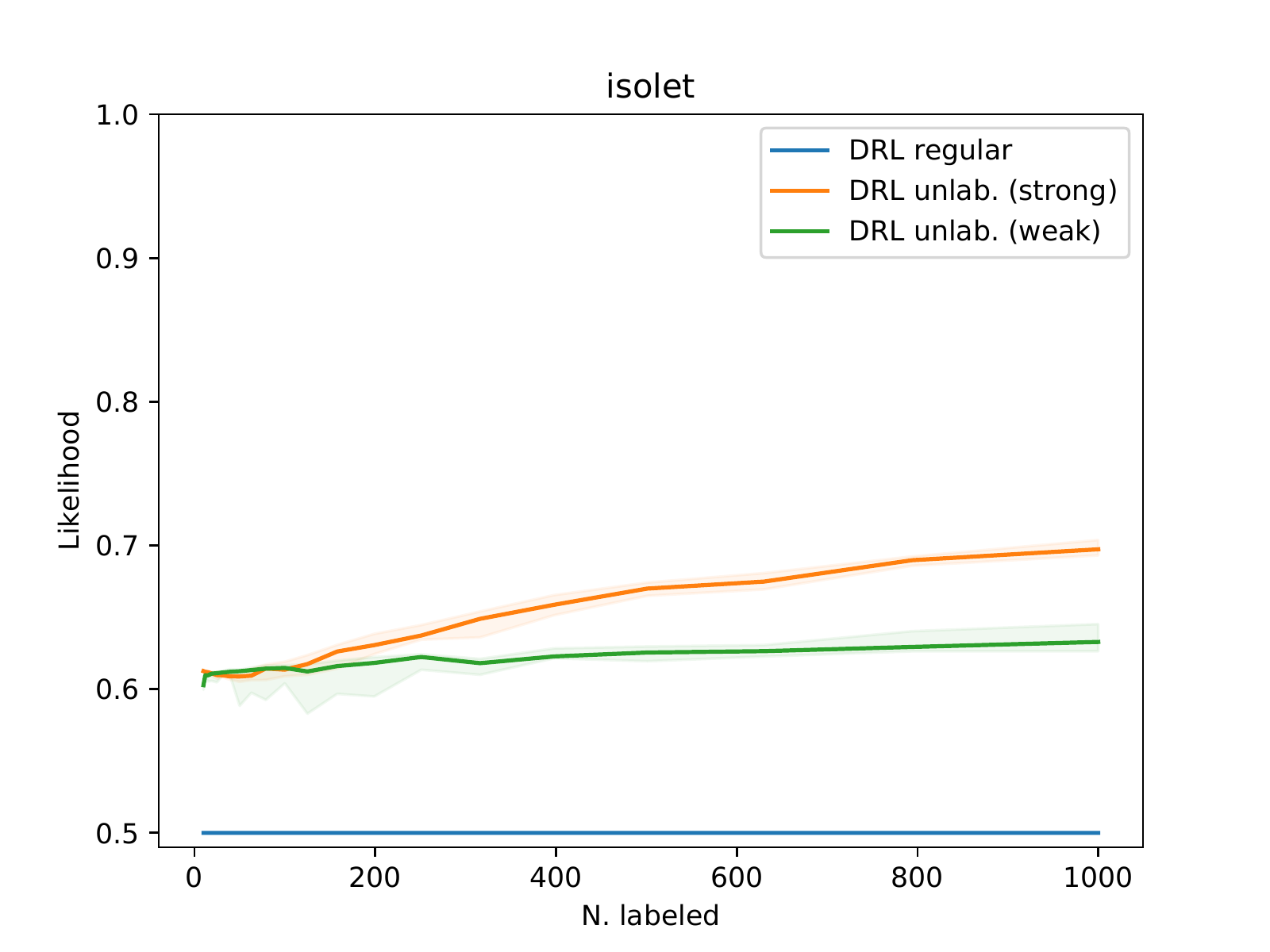}
\caption{Isolet}
\end{subfigure}
\begin{subfigure}[b]{0.36\textwidth}
\includegraphics[width=\textwidth]{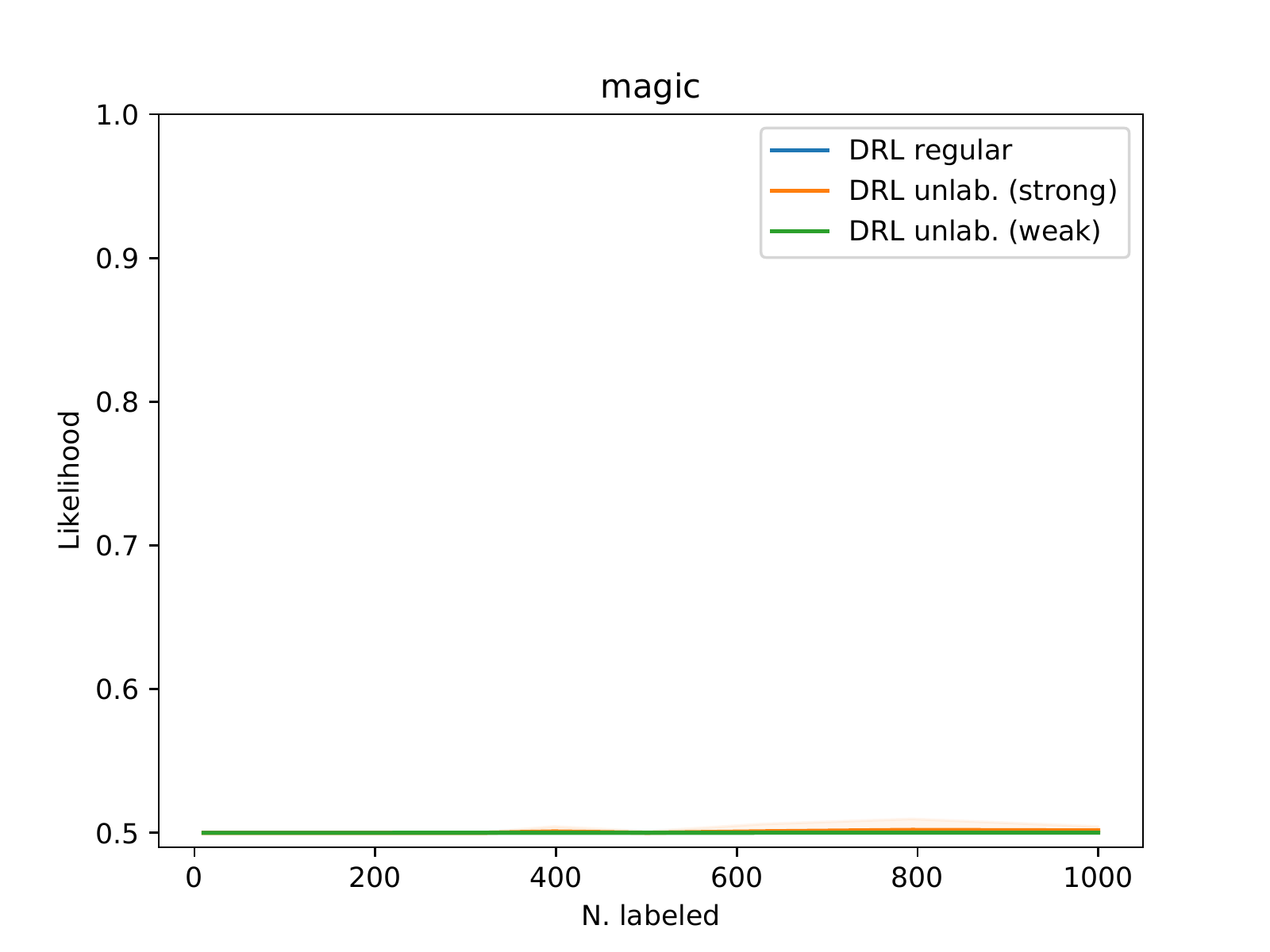}
\caption{Magic}
\end{subfigure}
\begin{subfigure}[b]{0.36\textwidth}
\includegraphics[width=\textwidth]{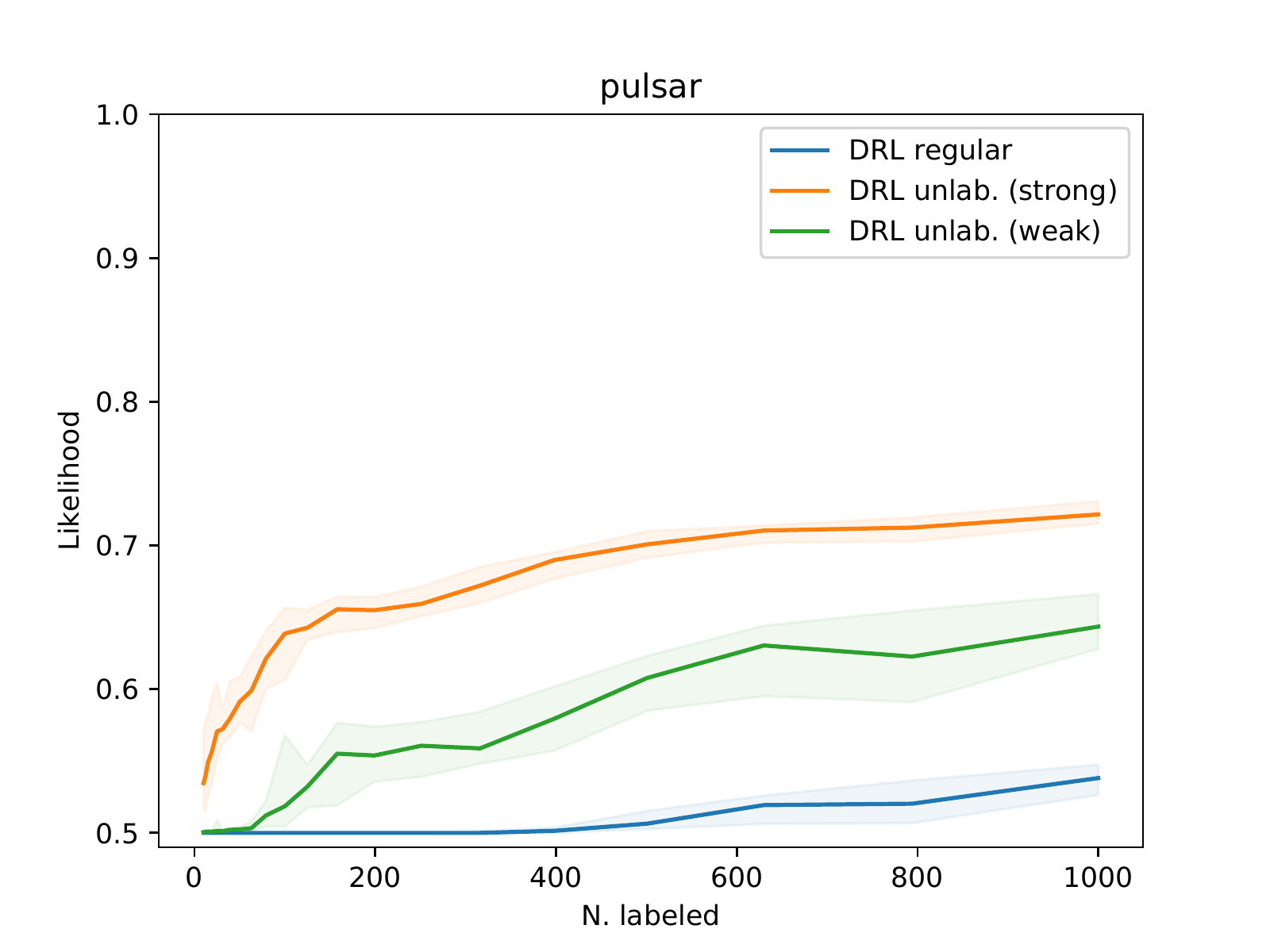}
\caption{Pulsar}
\end{subfigure}
\begin{subfigure}[b]{0.36\textwidth}
\includegraphics[width=\textwidth]{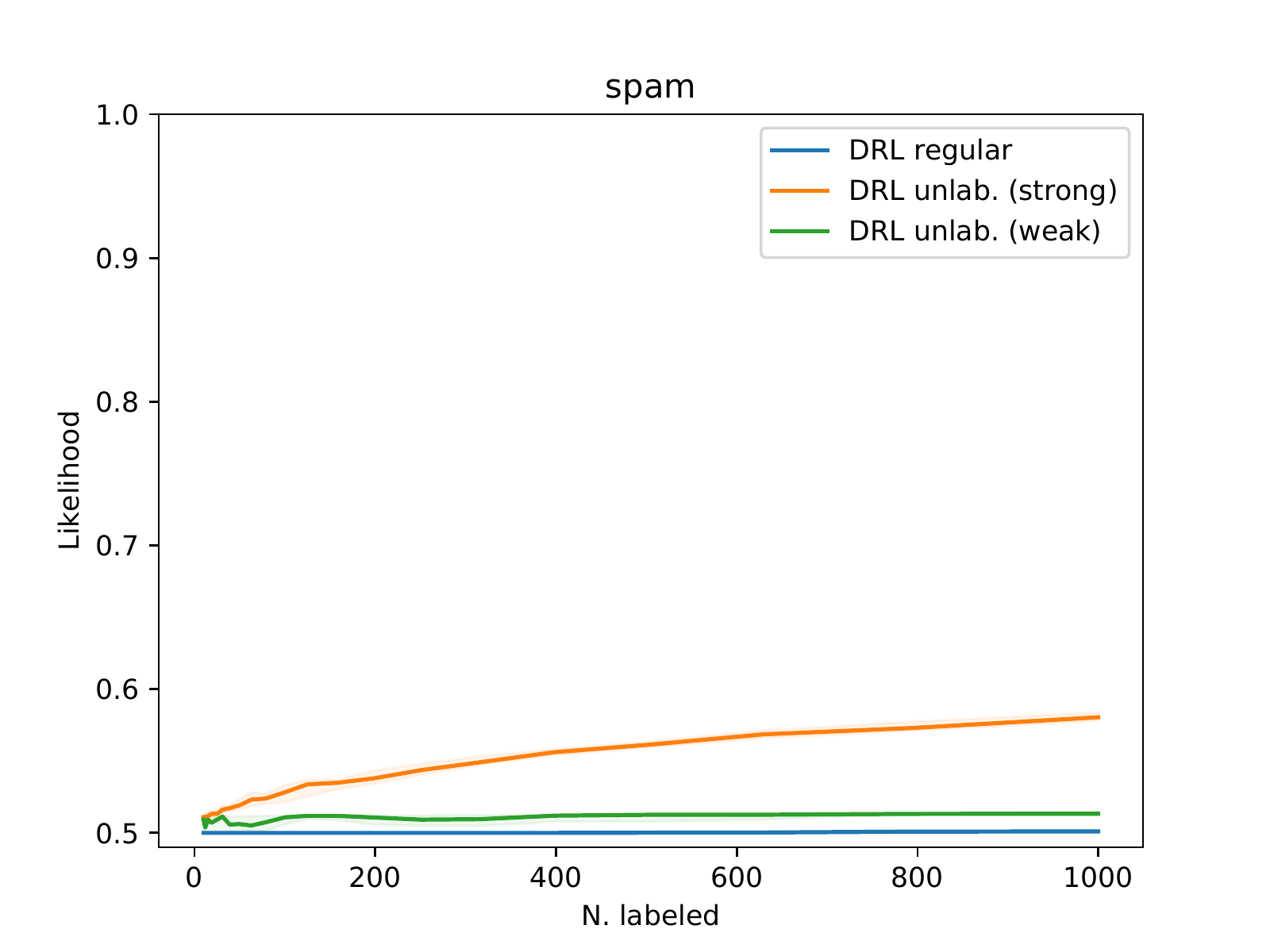}
\caption{Spam}
\end{subfigure}
\begin{subfigure}[b]{0.36\textwidth}
\includegraphics[width=\textwidth]{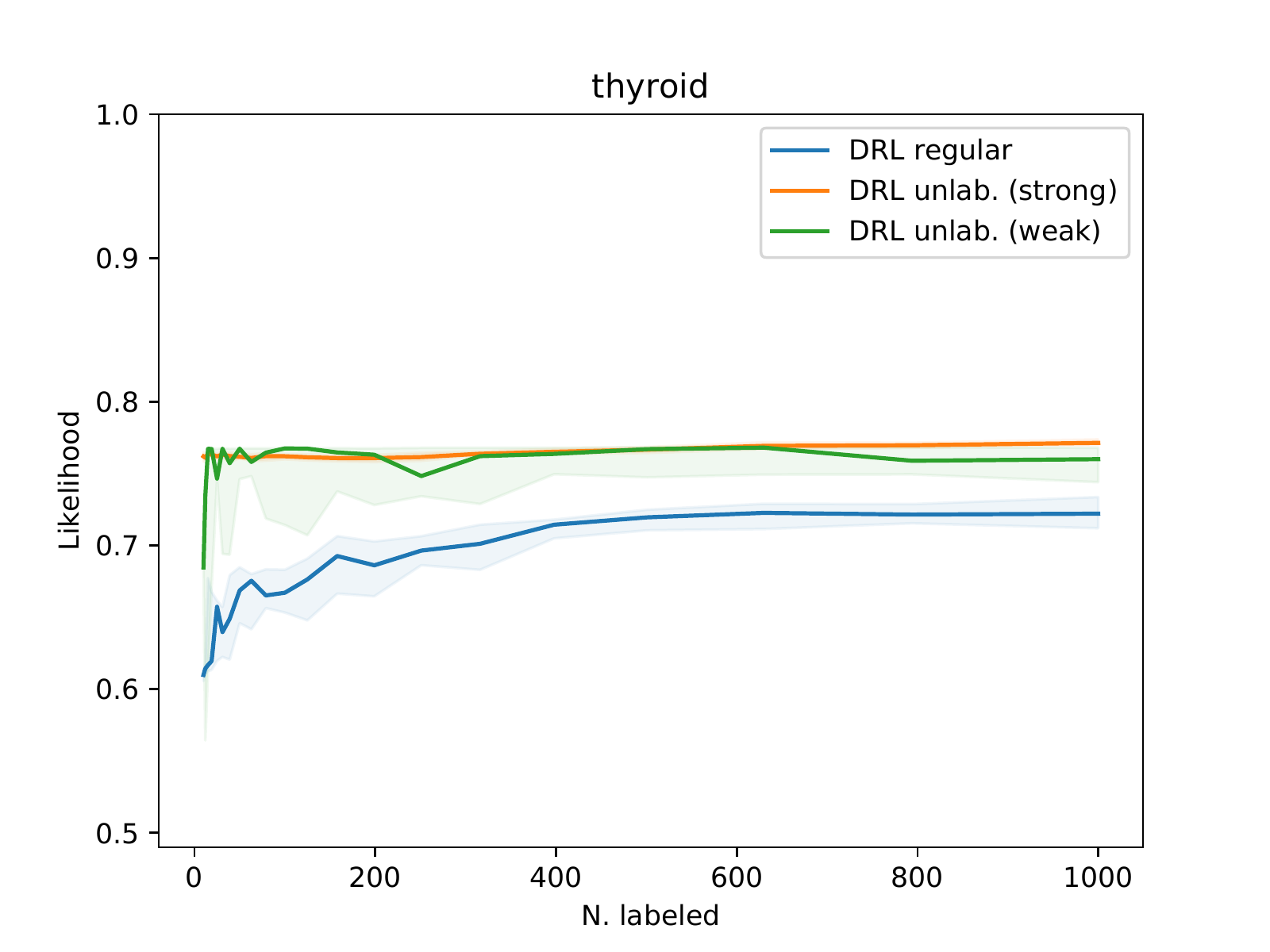}
\caption{Thyroid}
\end{subfigure}
\begin{subfigure}[b]{0.36\textwidth}
\includegraphics[width=\textwidth]{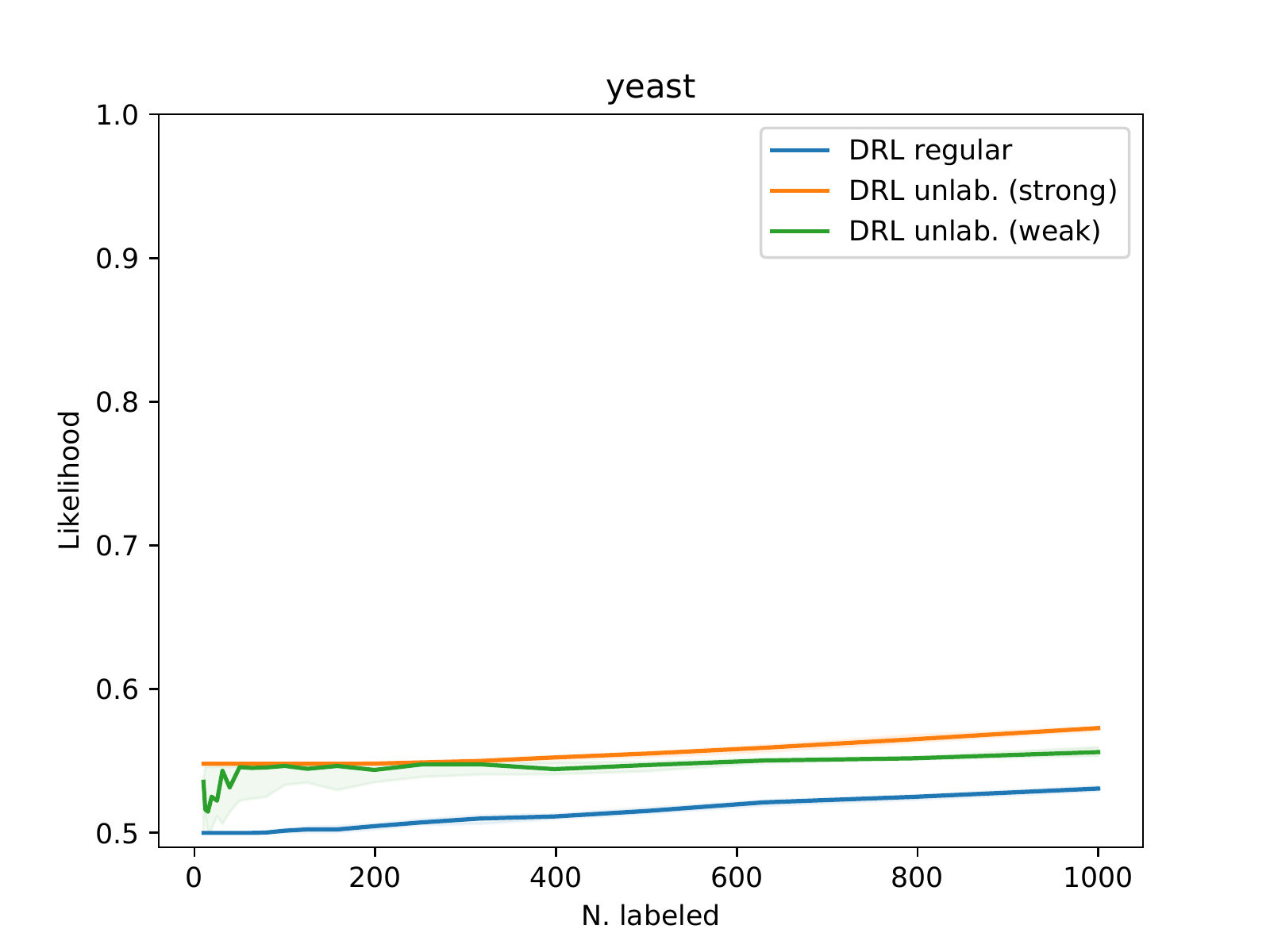}
\caption{Yeast}
\end{subfigure}
\caption{Worst-case performance bound (likelihood) vs. number of labeled data, setting $\varepsilon$ to include the true test distribution. The regular DRL bound is often vacuous through $N_l = 1000$ while both DRL methods with unlabeled data often yield non-vacuous bounds.}
\label{fig:appendix-true-radius-fval}
\end{figure}

\begin{figure}
\centering
\begin{subfigure}[b]{0.36\textwidth}
\includegraphics[width=\textwidth]{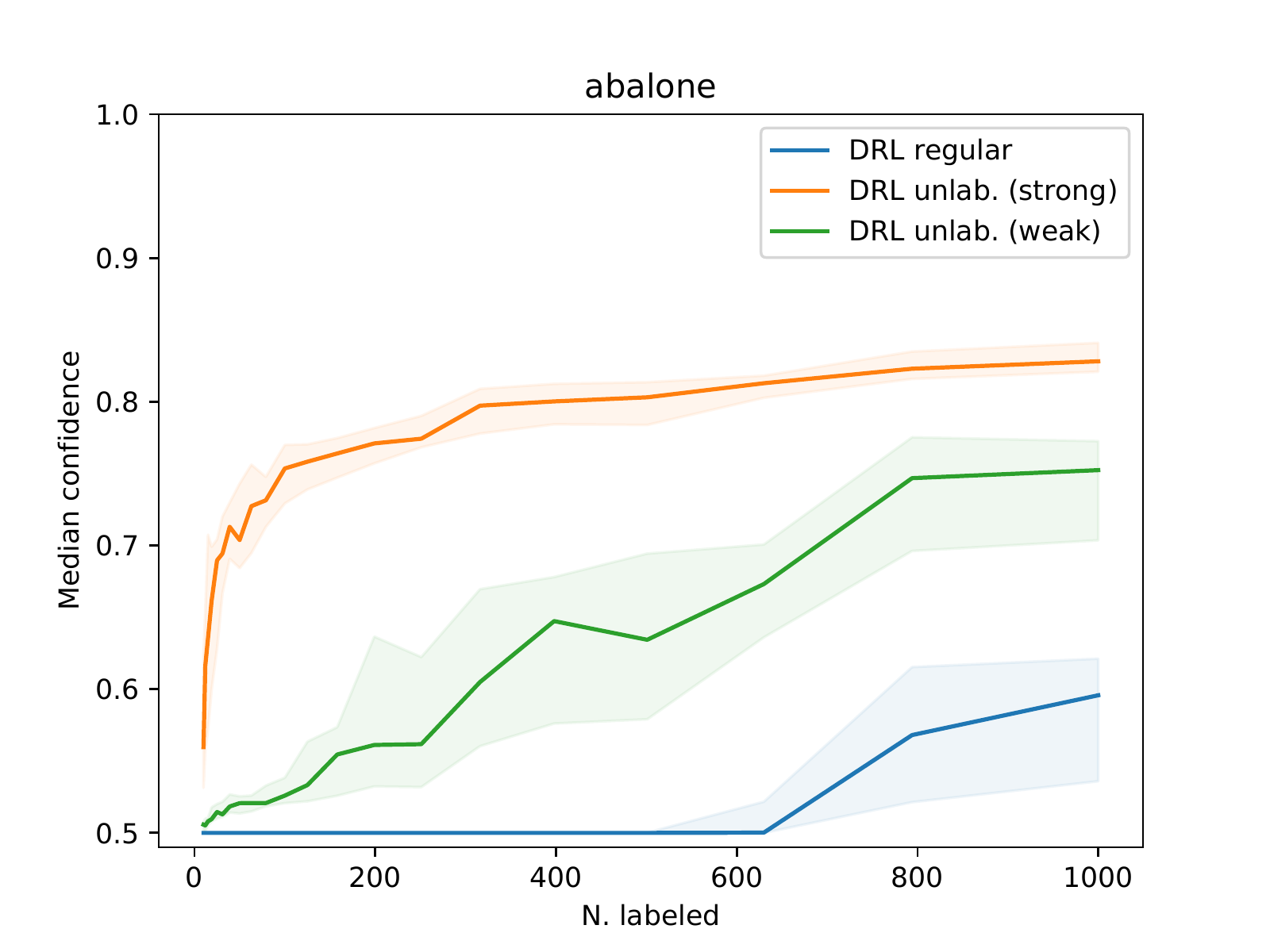}
\caption{Abalone}
\end{subfigure}
\begin{subfigure}[b]{0.36\textwidth}
\includegraphics[width=\textwidth]{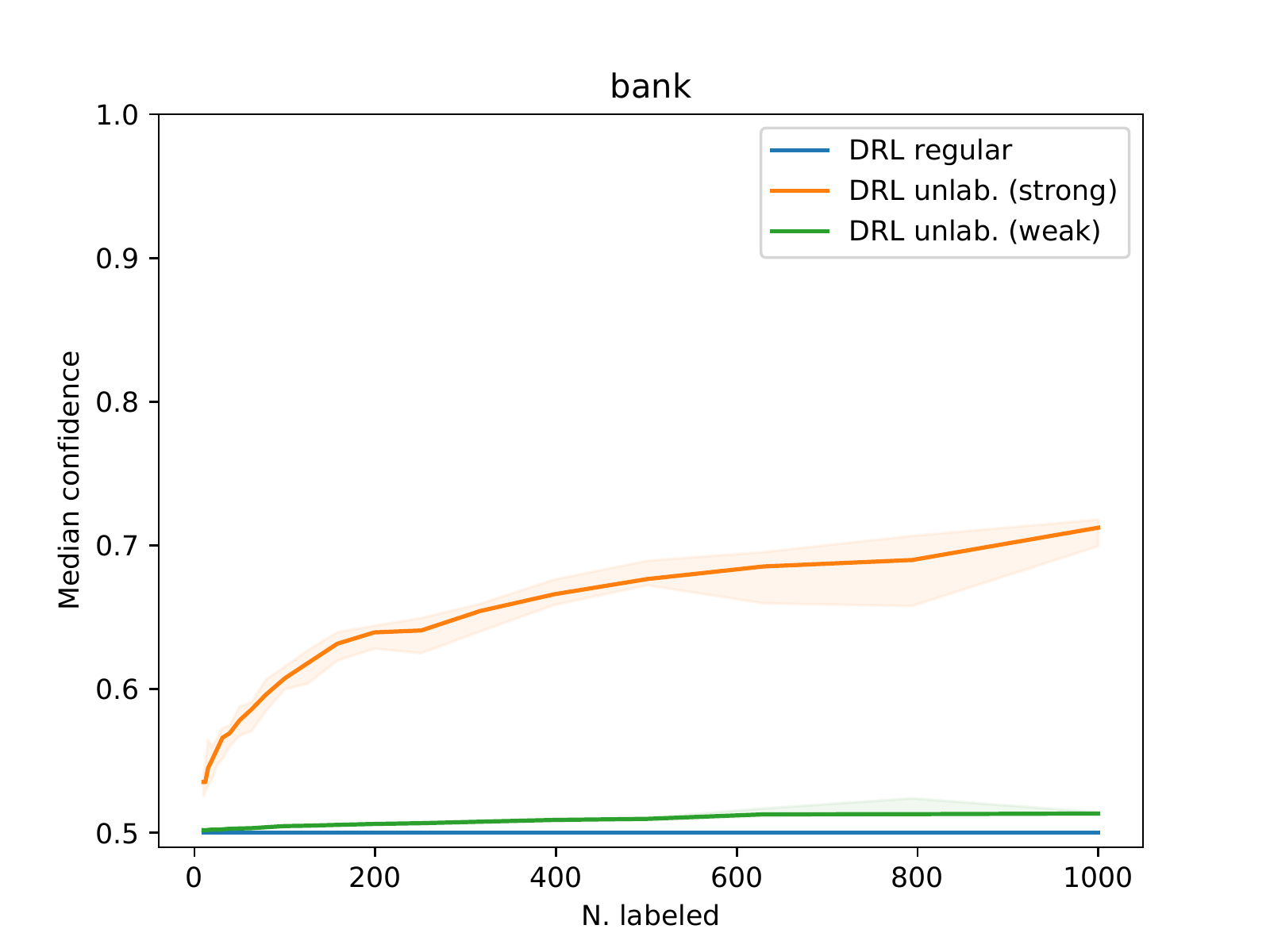}
\caption{Bank}
\end{subfigure}
\begin{subfigure}[b]{0.36\textwidth}
\includegraphics[width=\textwidth]{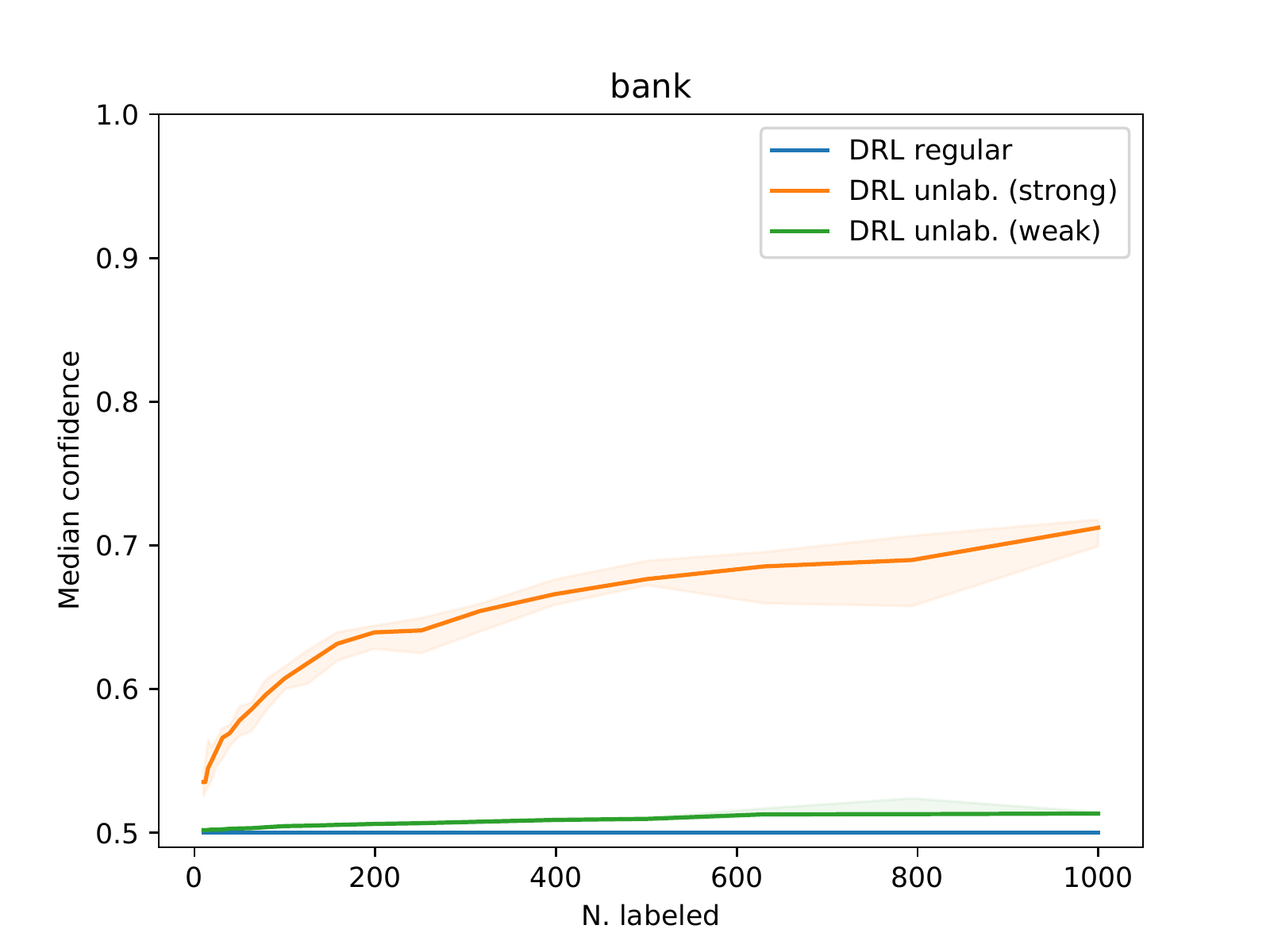}
\caption{Isolet}
\end{subfigure}
\begin{subfigure}[b]{0.36\textwidth}
\includegraphics[width=\textwidth]{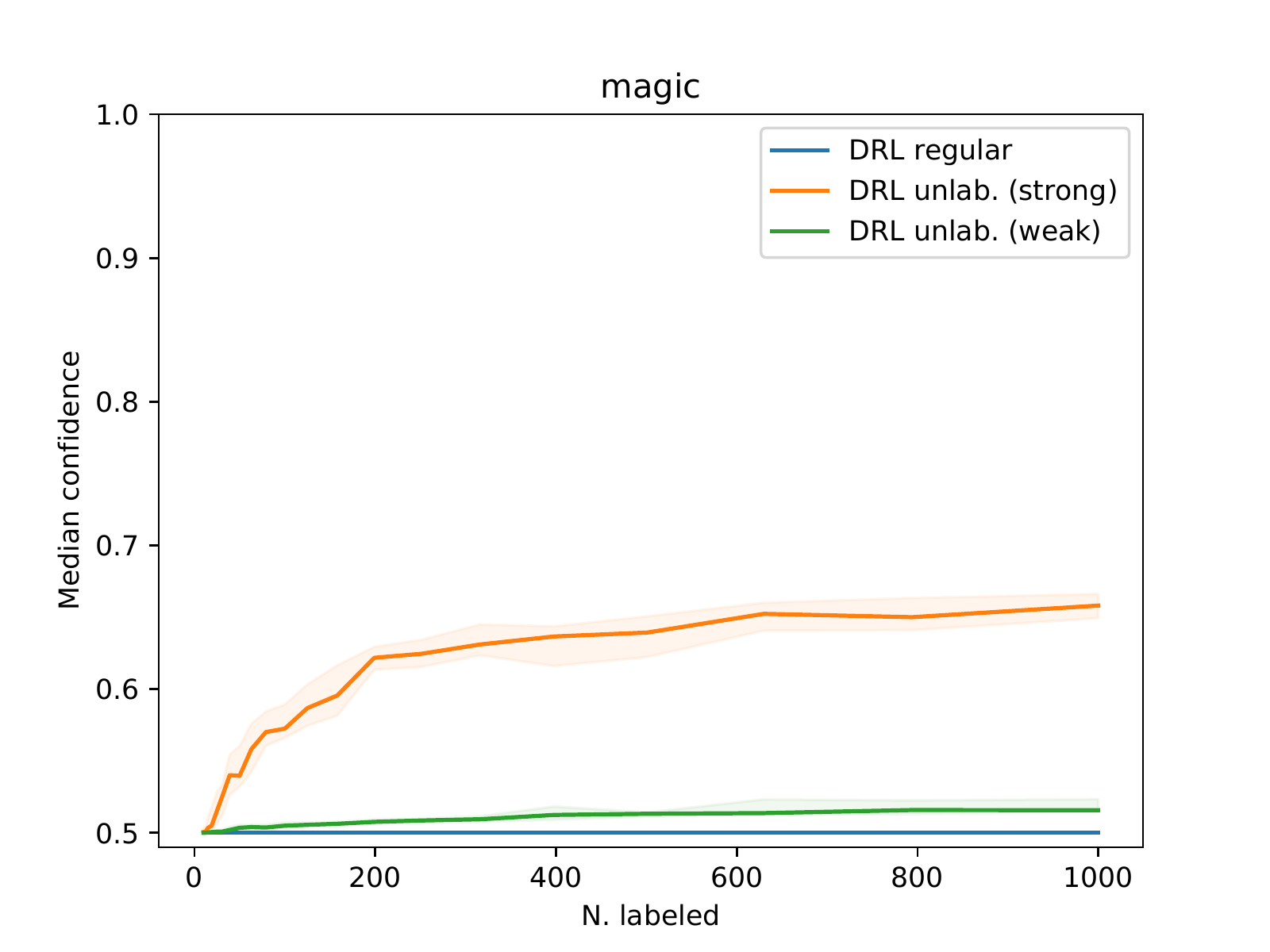}
\caption{Magic}
\end{subfigure}
\begin{subfigure}[b]{0.36\textwidth}
\includegraphics[width=\textwidth]{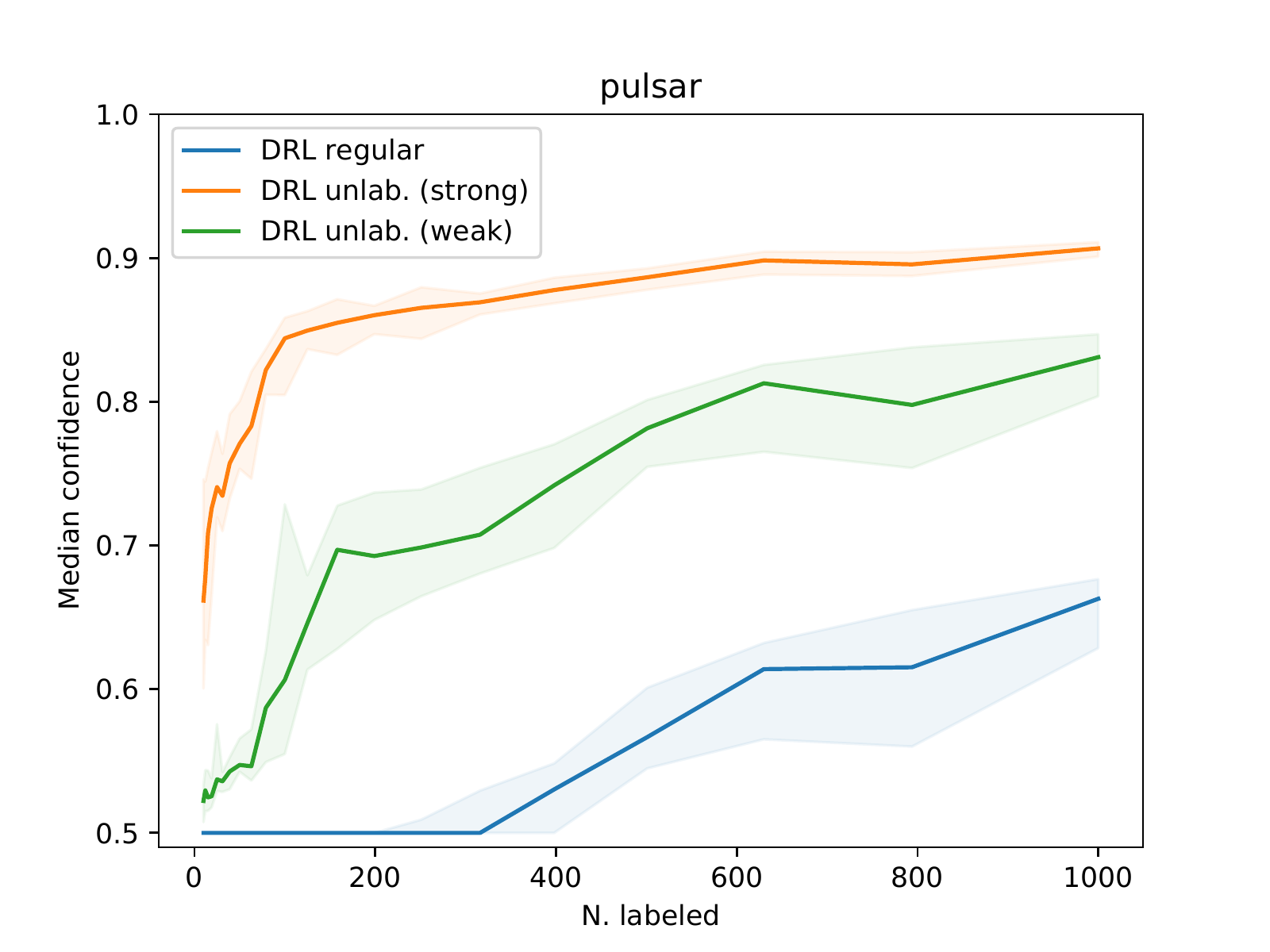}
\caption{Pulsar}
\end{subfigure}
\begin{subfigure}[b]{0.36\textwidth}
\includegraphics[width=\textwidth]{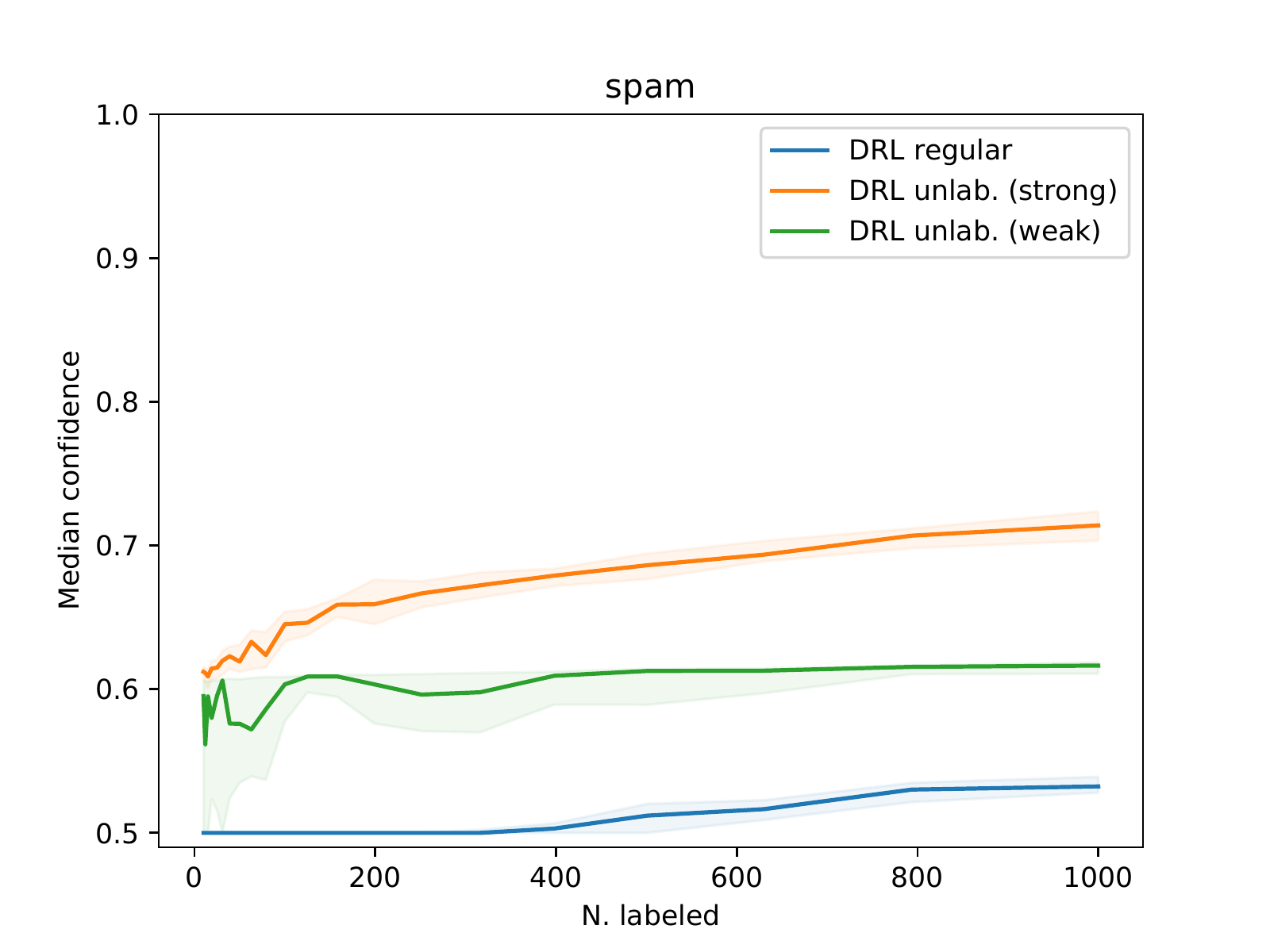}
\caption{Spam}
\end{subfigure}
\begin{subfigure}[b]{0.36\textwidth}
\includegraphics[width=\textwidth]{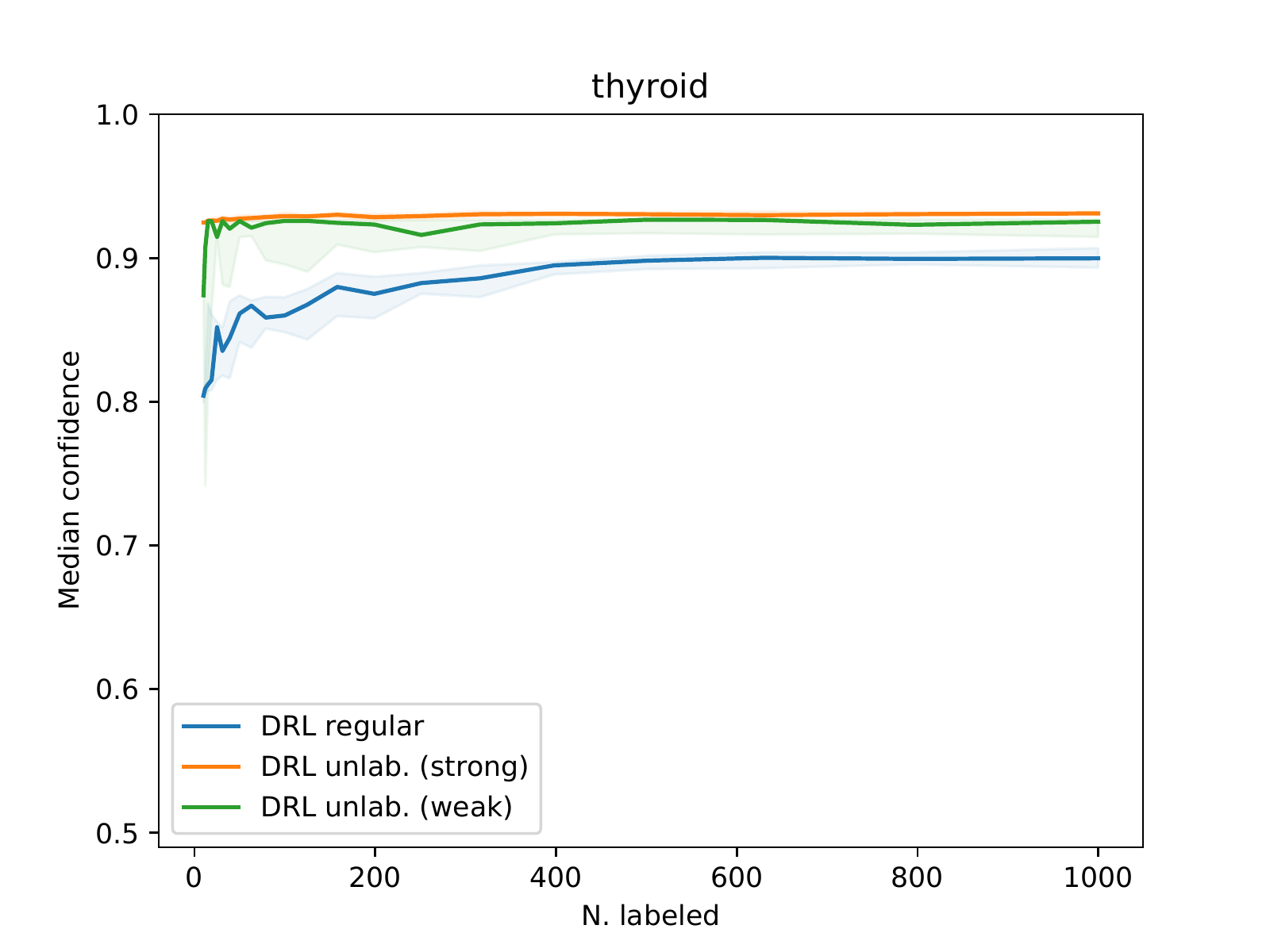}
\caption{Thyroid}
\end{subfigure}
\begin{subfigure}[b]{0.36\textwidth}
\includegraphics[width=\textwidth]{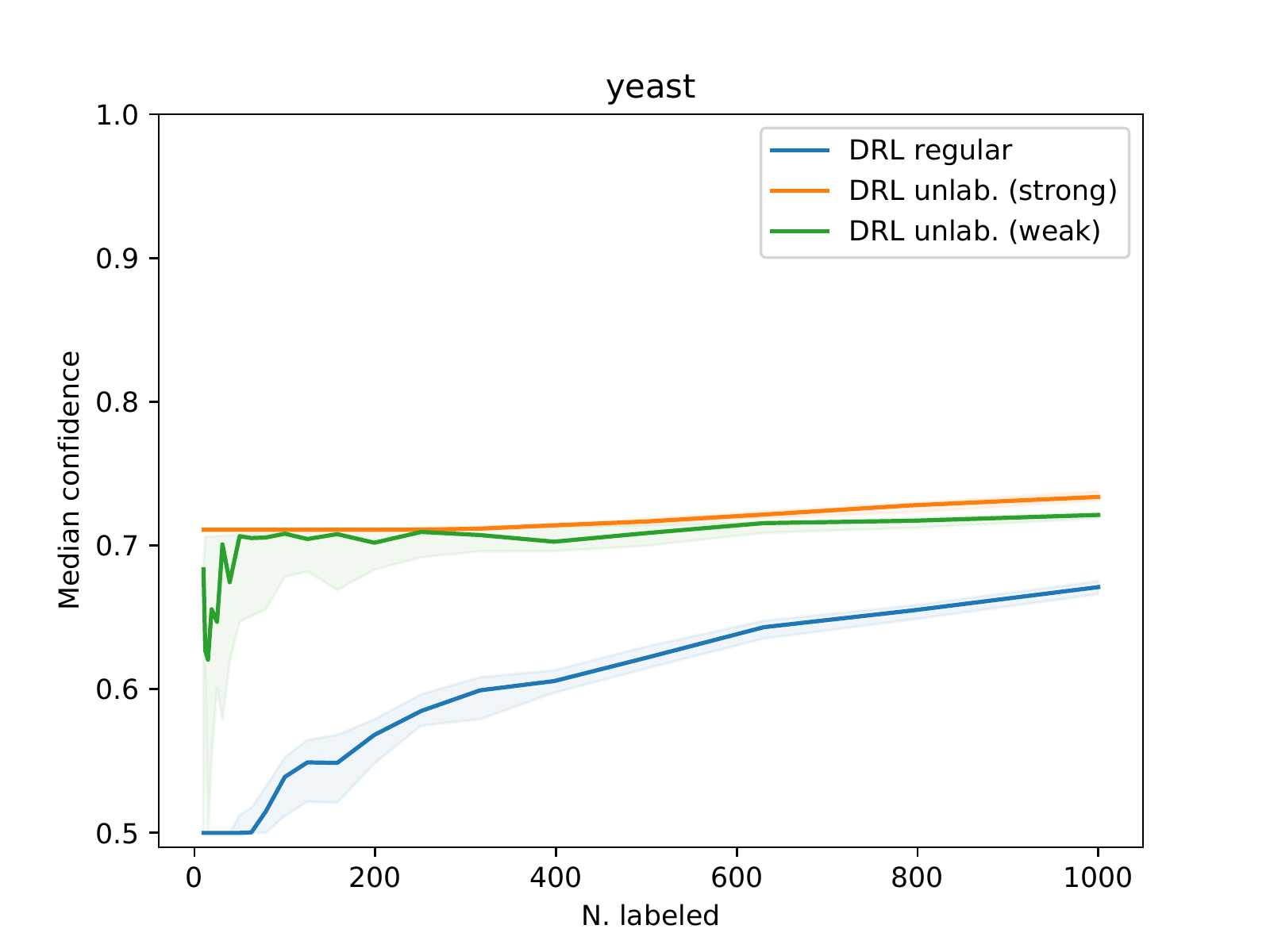}
\caption{Yeast}
\end{subfigure}
\caption{Median confidence vs. number of labeled data, setting $\varepsilon$ to include the true test distribution. The regular DRL predictor often has confidence close to $0.5$ in settings where both DRL methods using unlabeled data yield non-trivial predictors.}
\label{fig:appendix-true-radius-conf}
\end{figure}

\subsection{Traditional Wasserstein DRL performance as we vary $\varepsilon$}
\label{sec:appendix-regular-drl-generalization-vs-radius}

The following describes Figure \ref{fig:likelihood-and-confidence-vs-radius} in Section \ref{sec:how-important-is-radius} and Figure \ref{fig:appendix-likelihood-and-confidence-vs-radius} in the appendix. The figures show out-of-sample generalization performance as well as confidence of predictors trained using traditional Wasserstein DRL, as we vary the radius of robustness $\varepsilon$. In particular, for each trial, we sample a dataset of $N_l = 100$ labeled examples, which we use to compute the Wasserstein distributionally robust logistic regression \citep{abadeh2015distributionally}, setting the radius $\varepsilon$ to be a fixed percentage of the distance to the data distribution $\Ppr$, i.e., the empirical distribution of the full dataset. The log of this percentage is shown on the horizontal axis of the figure.

The figures show both the test set likelihood and the maximum over input examples of the confidence of the learned predictor, defined by $\max\{h_{\theta}(\xvec), 1 - h_{\theta}(\xvec)\}$ for each $\xvec \in \Xspace$. The solid line is the median over $100$ trials while the shaded region shows the $95\%$ interval of the median.

\begin{figure}
\centering
\begin{subfigure}[b]{0.36\textwidth}
\includegraphics[width=\textwidth]{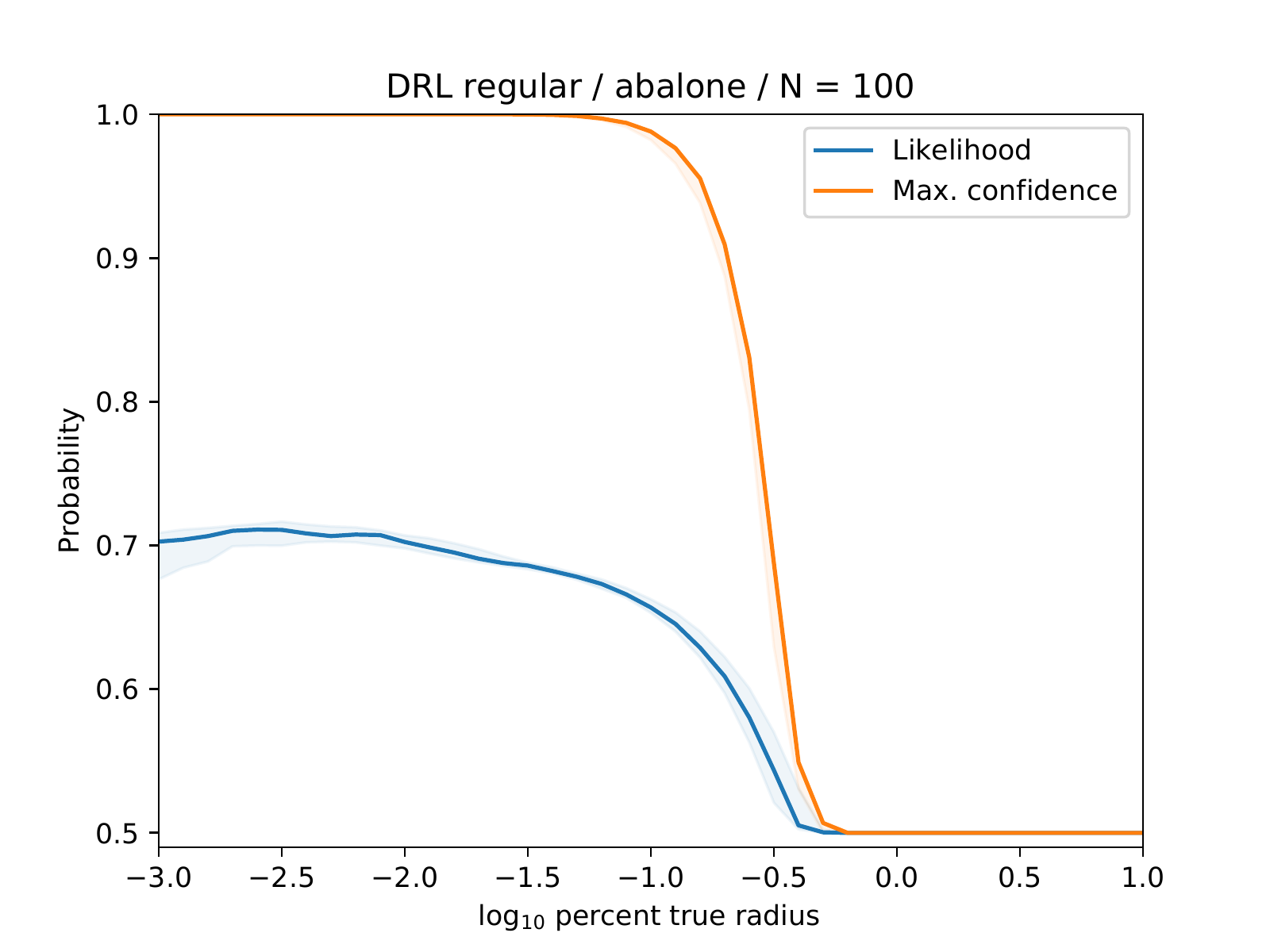}
\caption{Abalone}
\end{subfigure}
\begin{subfigure}[b]{0.36\textwidth}
\includegraphics[width=\textwidth]{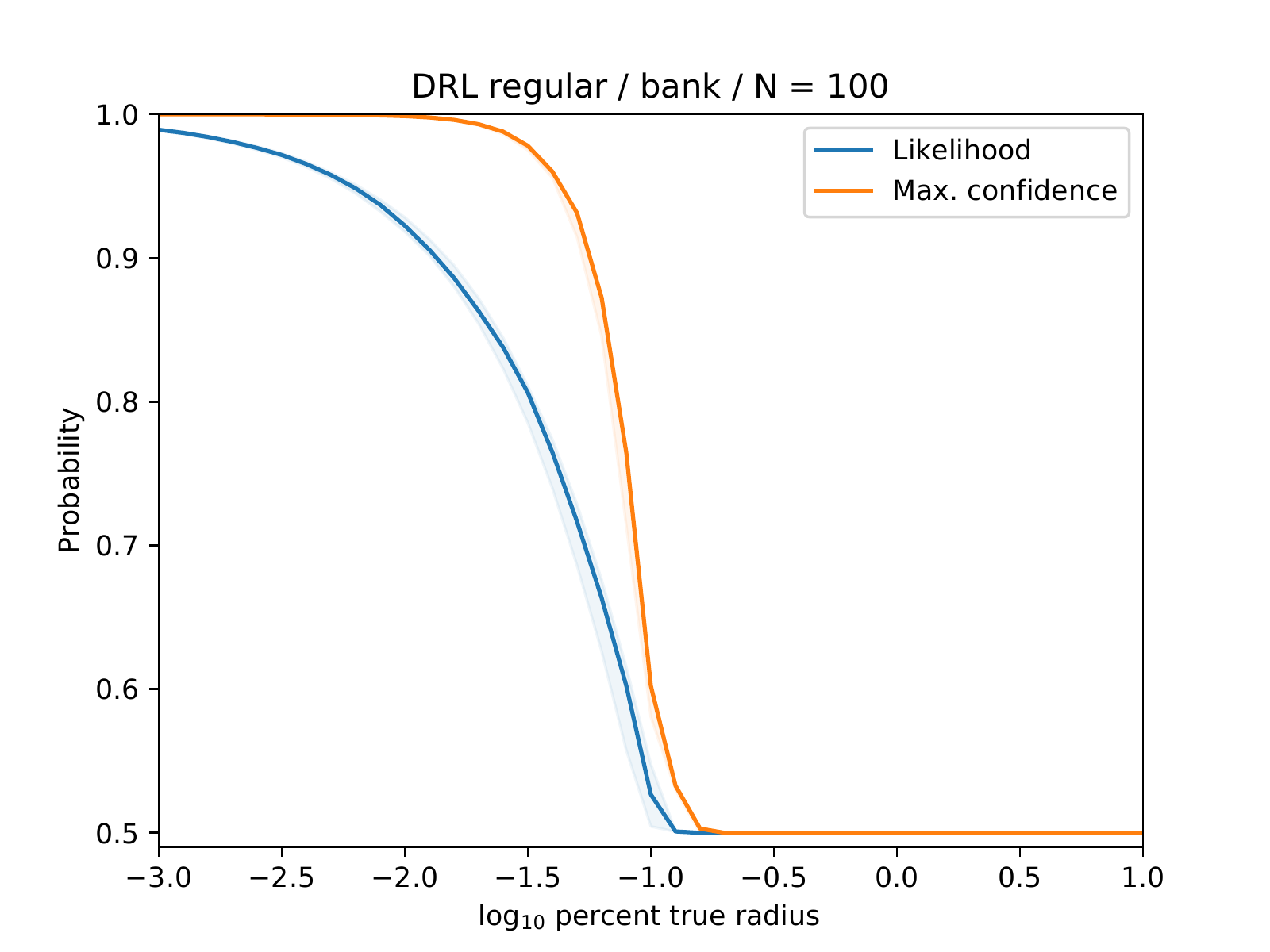}
\caption{Bank}
\end{subfigure}
\begin{subfigure}[b]{0.36\textwidth}
\includegraphics[width=\textwidth]{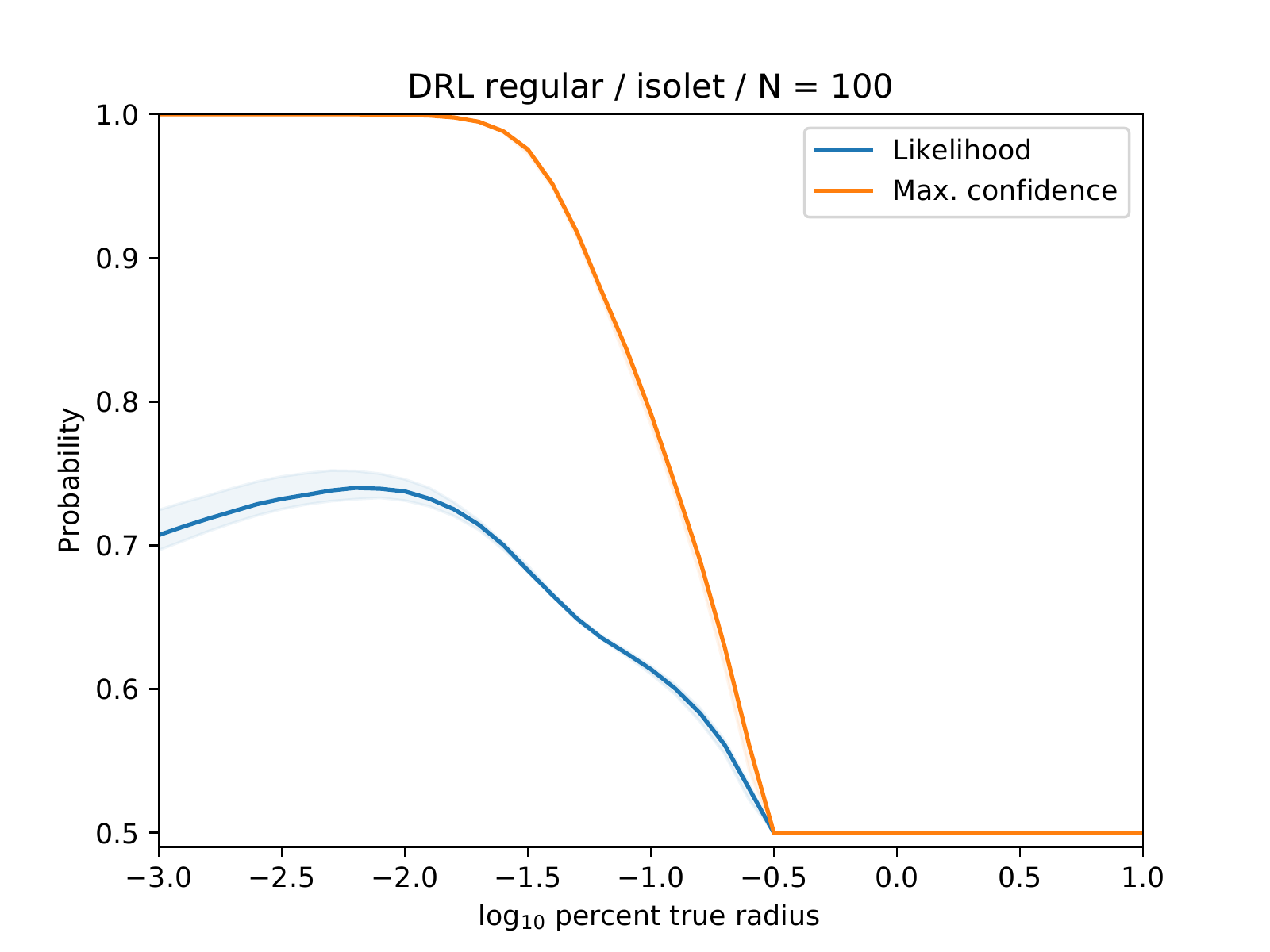}
\caption{Isolet}
\end{subfigure}
\begin{subfigure}[b]{0.36\textwidth}
\includegraphics[width=\textwidth]{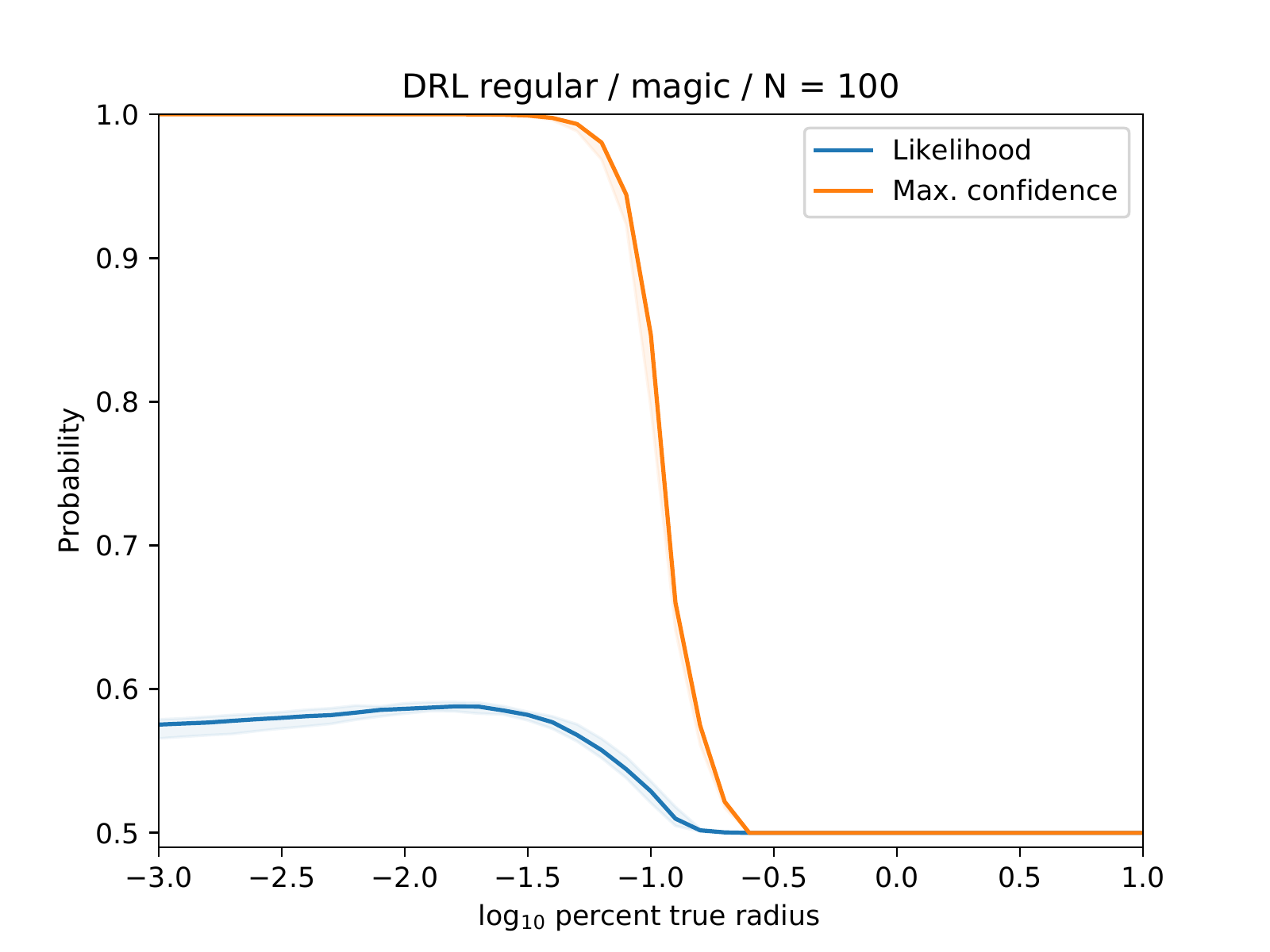}
\caption{Magic}
\end{subfigure}
\begin{subfigure}[b]{0.36\textwidth}
\includegraphics[width=\textwidth]{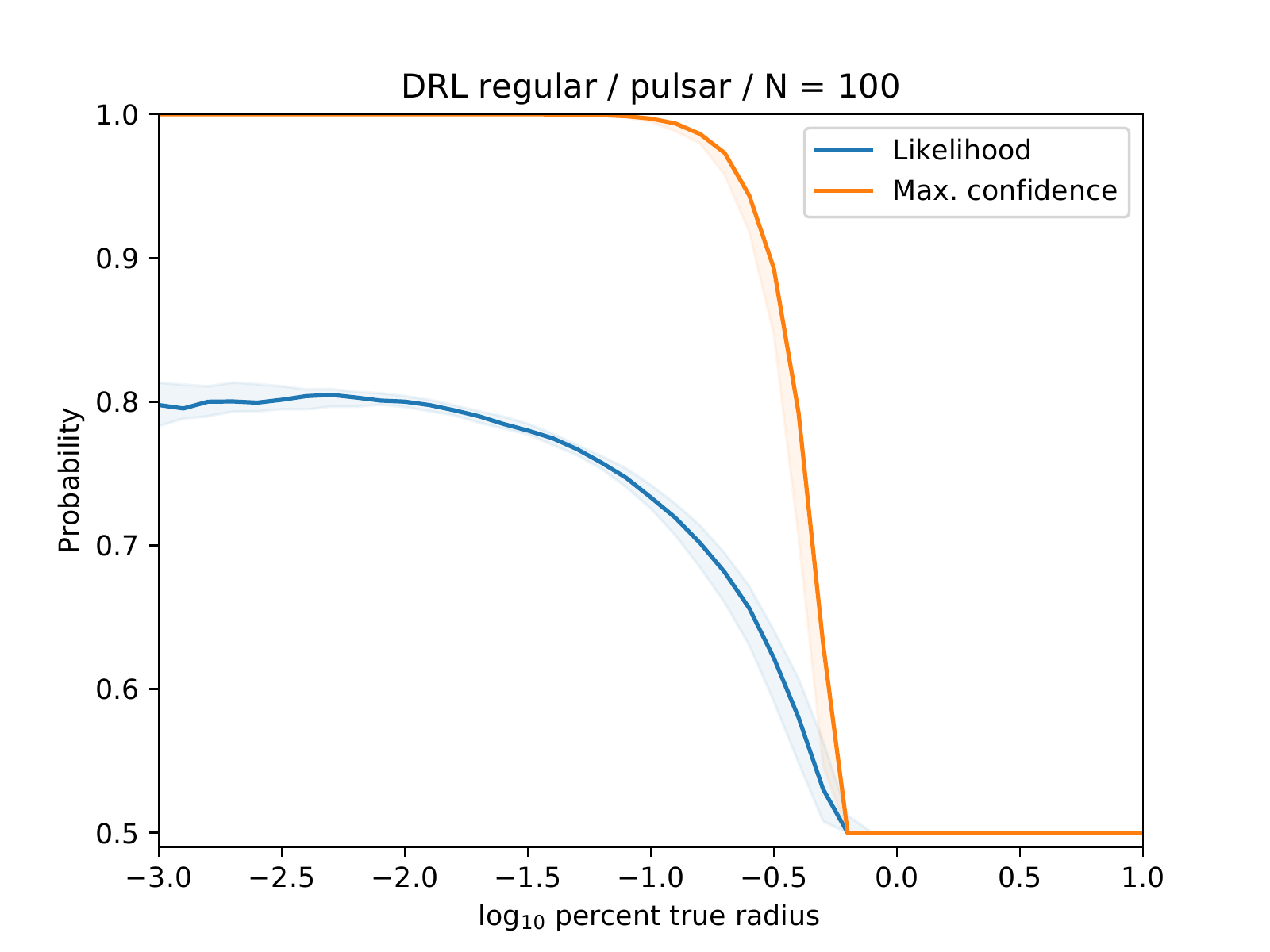}
\caption{Pulsar}
\end{subfigure}
\begin{subfigure}[b]{0.36\textwidth}
\includegraphics[width=\textwidth]{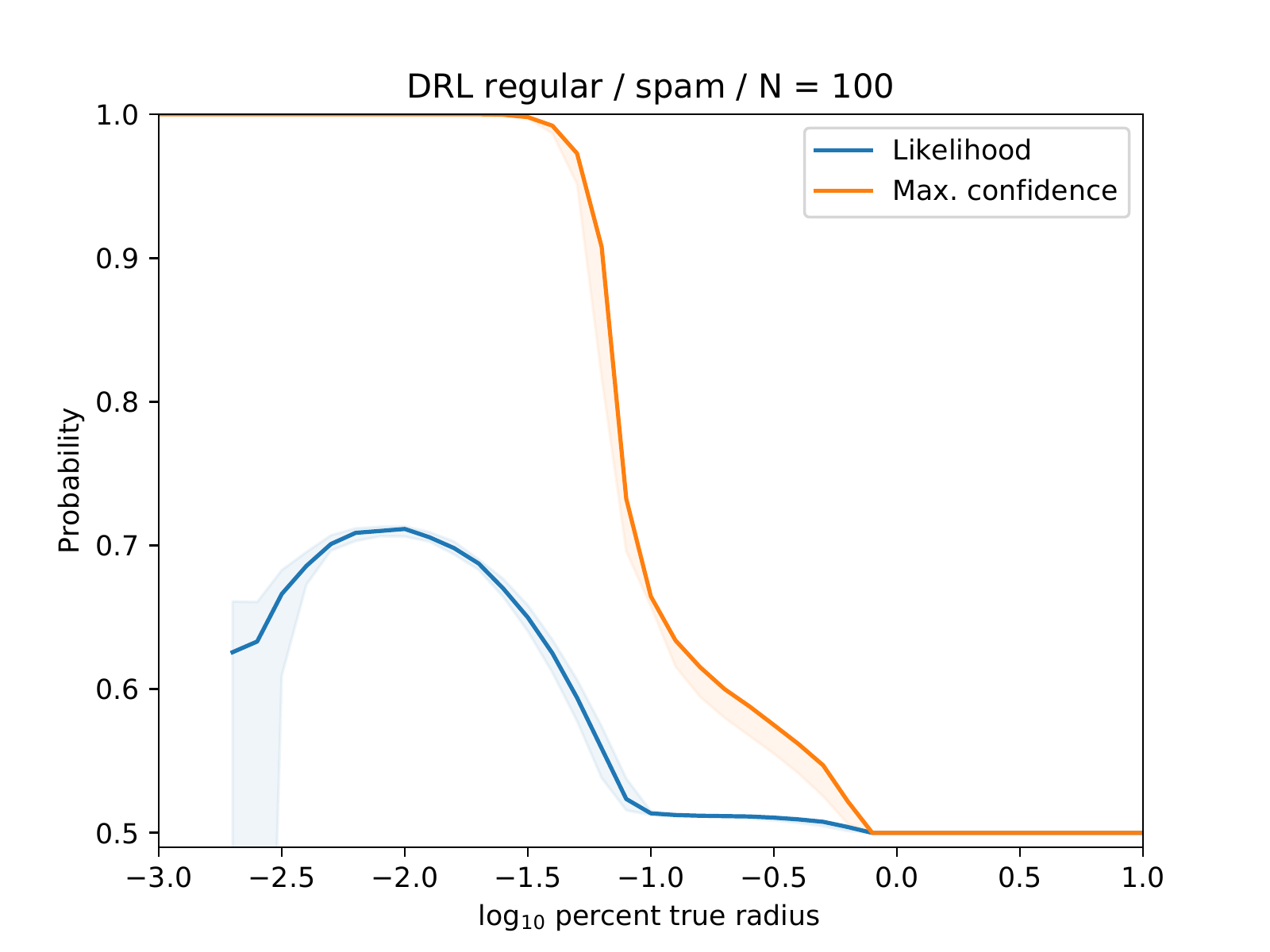}
\caption{Spam}
\end{subfigure}
\begin{subfigure}[b]{0.36\textwidth}
\includegraphics[width=\textwidth]{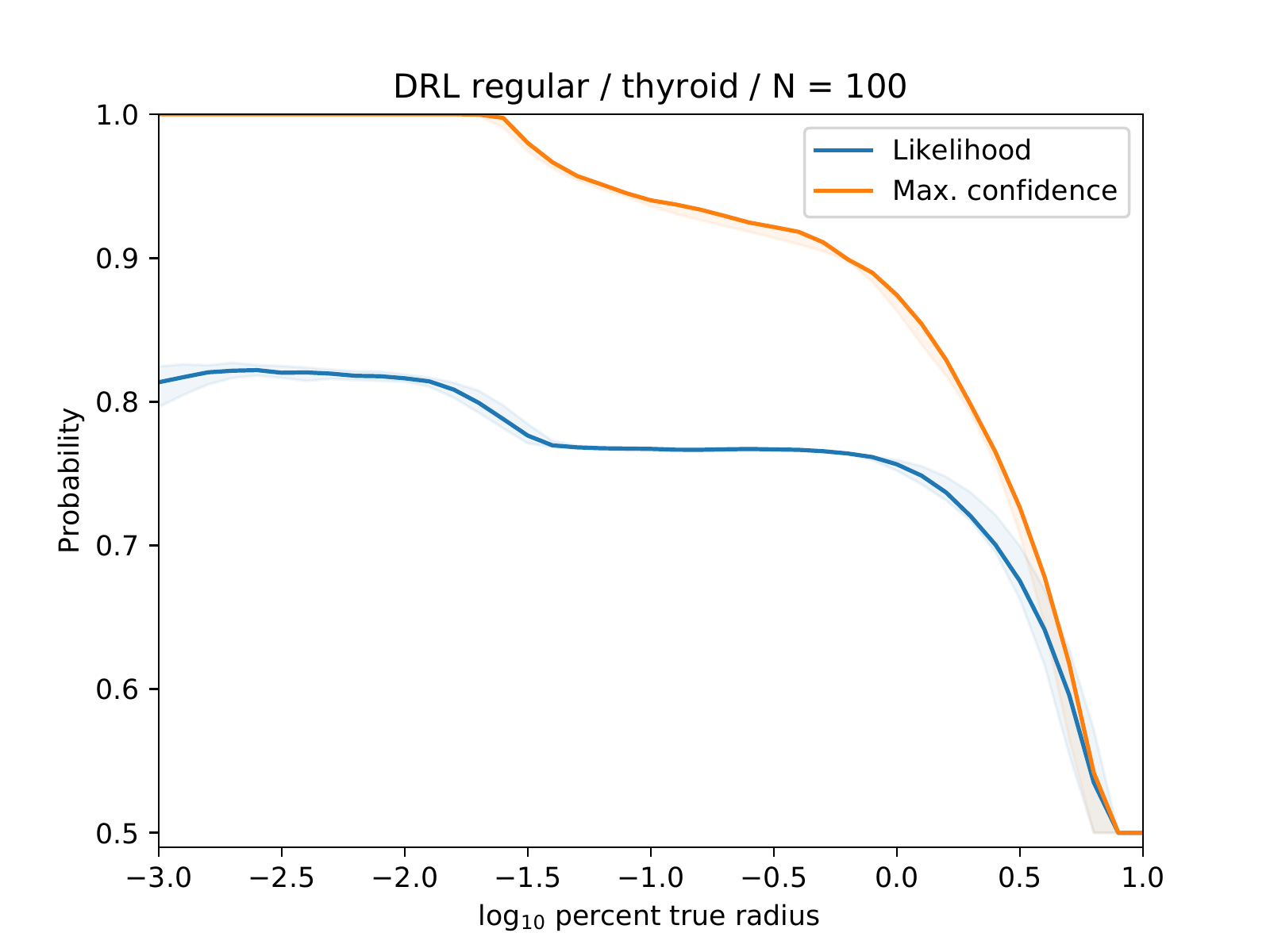}
\caption{Thyroid}
\end{subfigure}
\begin{subfigure}[b]{0.36\textwidth}
\includegraphics[width=\textwidth]{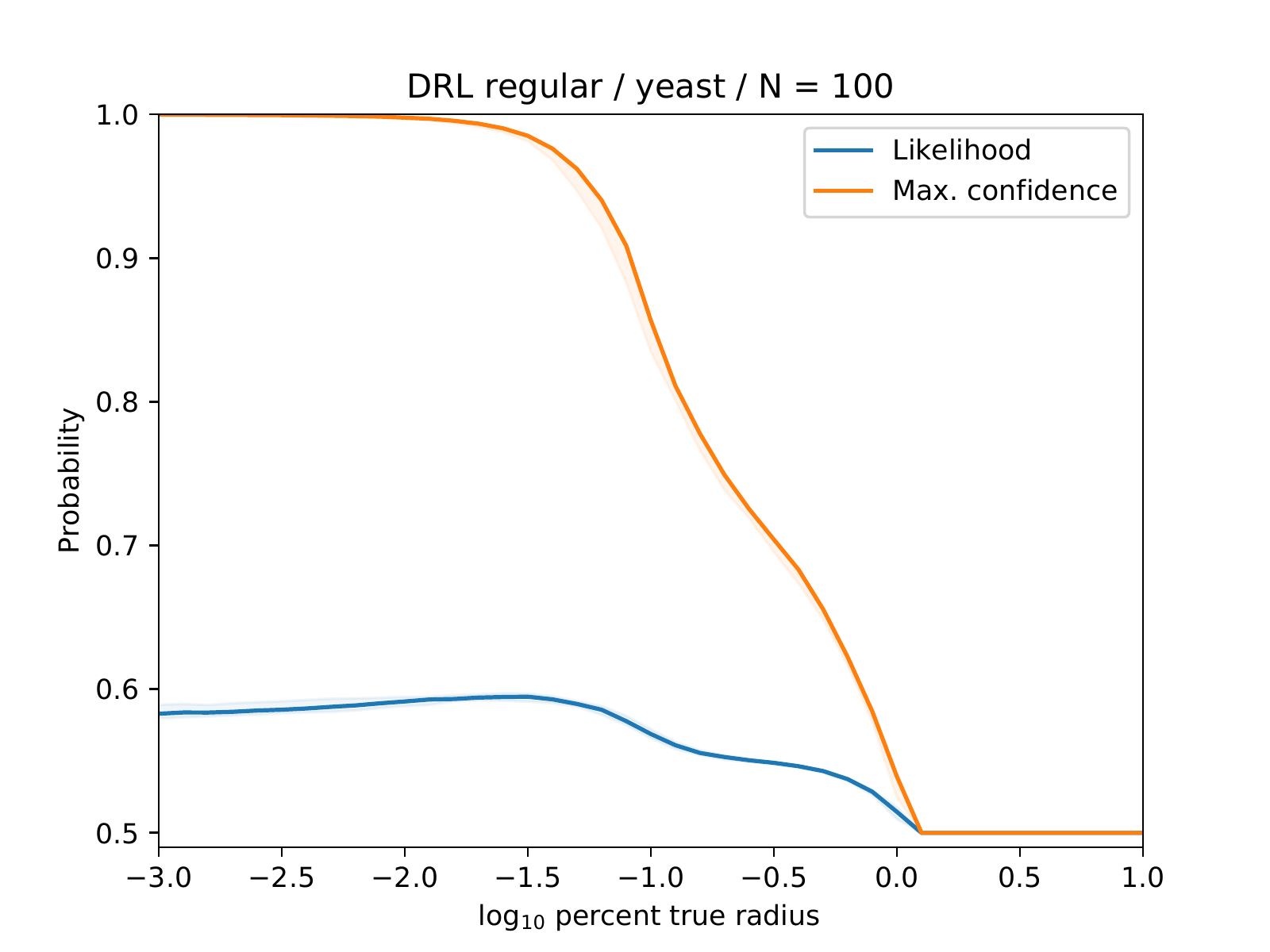}
\caption{Yeast}
\end{subfigure}
\caption{Traditional Wasserstein DRL. Out-of-sample performance (likelihood) and maximum confidence vs. radius of robustness $\varepsilon$ as a percentage of the distance to the true data distribution $\Ppr$. Performance shows a bias-variance tradeoff with peak at $\varepsilon$ much smaller than the distance to $\Ppr$. Confidence often drops sharply at a radius much smaller than the distance to $\Ppr$.}
\label{fig:appendix-likelihood-and-confidence-vs-radius}
\end{figure}

\subsection{Traditional Wasserstein DRL robustness beyond the decision set}
\label{sec:appendix-beyond-decision-set}

The following describes Figure \ref{fig:worst-case-likelihood-vs-radius-plus-delta} in Section \ref{sec:how-important-is-radius} and \ref{fig:appendix-worst-case-likelihood-vs-radius-plus-delta} in the appendix. The figures show the worst-case performance of a predictor trained by traditional Wasserstein DRL with a radius of robustness $\varepsilon$, when the test-time data distribution is allowed to come from a Wasserstein ball having the same center and slightly larger radius $\varepsilon + \Delta$. Specifically, for each trial, we sample a training set of $N_l = 100$ labeled examples, which we use to compute the Wasserstein distributionally robust logistic regression \citep{abadeh2015distributionally}, fixing the radius $\varepsilon$ of the underlying Wasserstein ball to a value between $10^{-3}$ and $10^0$. For each radius $\varepsilon$, we obtain a set of learned parameters $\hat{\theta}_{\varepsilon} \in \Theta$. We then evaluate the worst-case value of the negative log-likelihood, fixing these parameters $\hat{\theta}_{\varepsilon}$, but increasing the radius of the underlying Wasserstein ball to $\varepsilon + \Delta$, for $\Delta \in [10^{-3}, 10^0]$. This worst-case value can be written
\begin{equation}
f(\hat{\theta}_{\varepsilon}, \Delta) = \sup_{\mu \in \ball_{\varepsilon+\Delta}(\hat{\Ppr}_l)} \expect^{\mu} \Yrv \log(1 + \exp\{-\langle \Xrv, \hat{\theta}_{\varepsilon} \rangle\}) + (1 - \Yrv) \log(1 + \exp\{\langle \Xrv, \hat{\theta}_{\varepsilon} \rangle\}),
\end{equation}
with $\ball_{\varepsilon + \Delta}(\hat{\Ppr}_l)$ the Wasserstein ball of radius $\varepsilon + \Delta$ centered at the empirical distribution of the labeled data $\hat{\Ppr}_l$. This is exactly the inner problem of traditional Wasserstein DRL and is solved by the same mechanism, fixing the parameters $\hat{\theta}_{\varepsilon}$.

Each figure shows the median over trials of the resulting worst-case likelihood value, $\exp(-f(\hat{\theta}_{\varepsilon}, \Delta))$, as we vary $\varepsilon$ (vertical axis) and $\Delta$ (horizontal axis). Each axis shows the base-$10$ log of the respective value. The color encodes the likelihood value, with blue indicating value $0$, green value $0.5$, and yellow value $1$.

\begin{figure}
\centering
\begin{subfigure}[b]{0.36\textwidth}
\includegraphics[width=\textwidth]{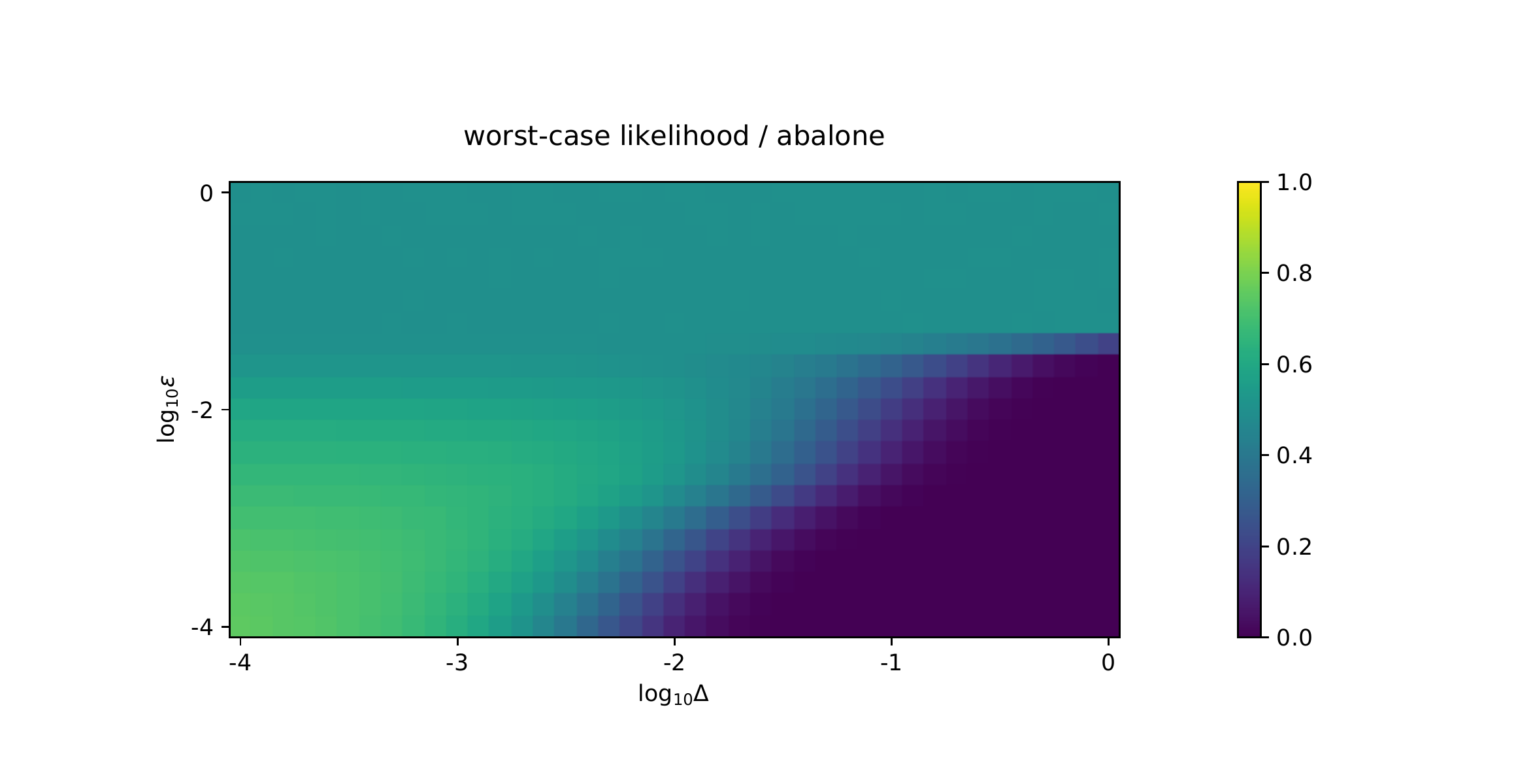}
\caption{Abalone}
\end{subfigure}
\begin{subfigure}[b]{0.36\textwidth}
\includegraphics[width=\textwidth]{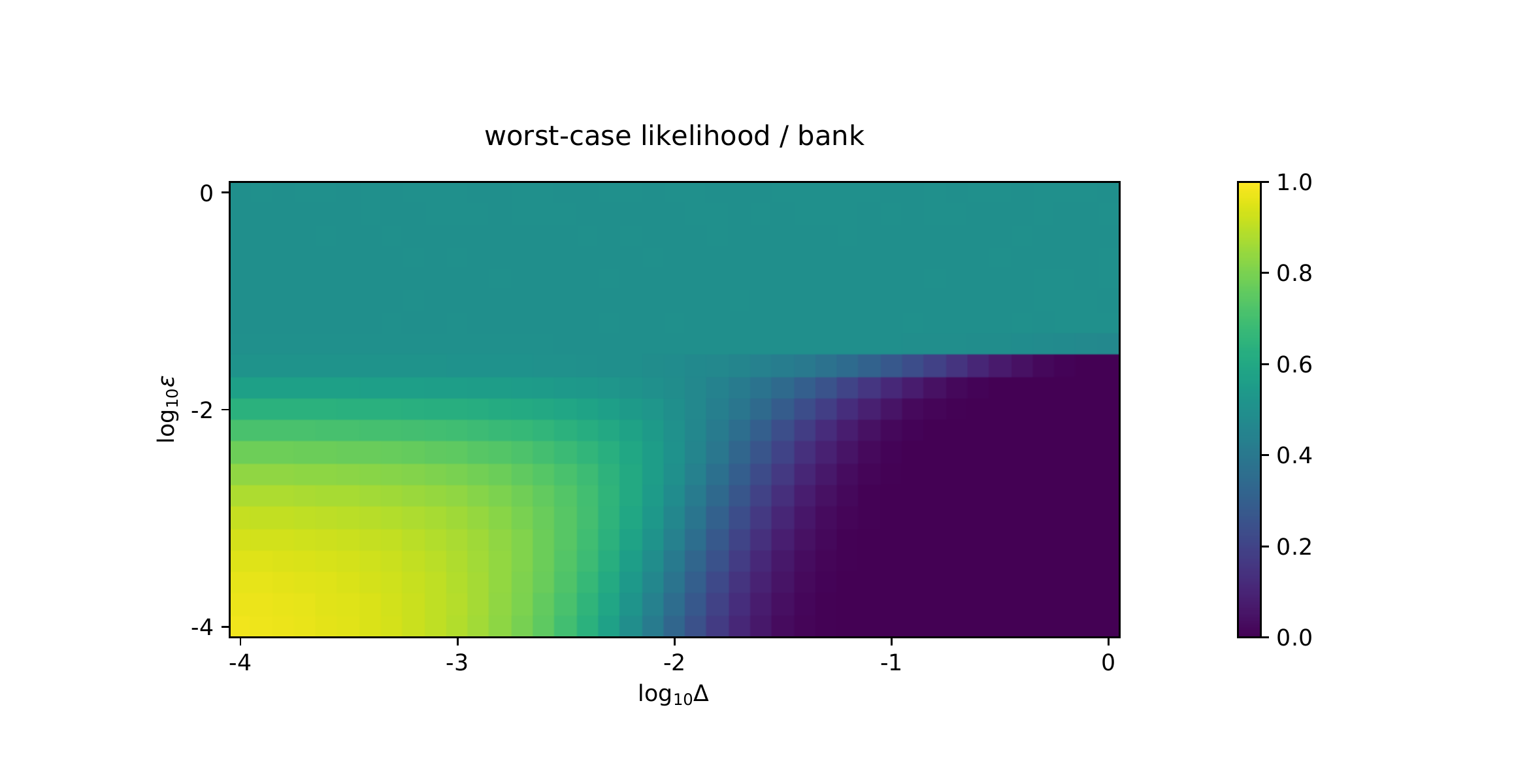}
\caption{Bank}
\end{subfigure}
\begin{subfigure}[b]{0.36\textwidth}
\includegraphics[width=\textwidth]{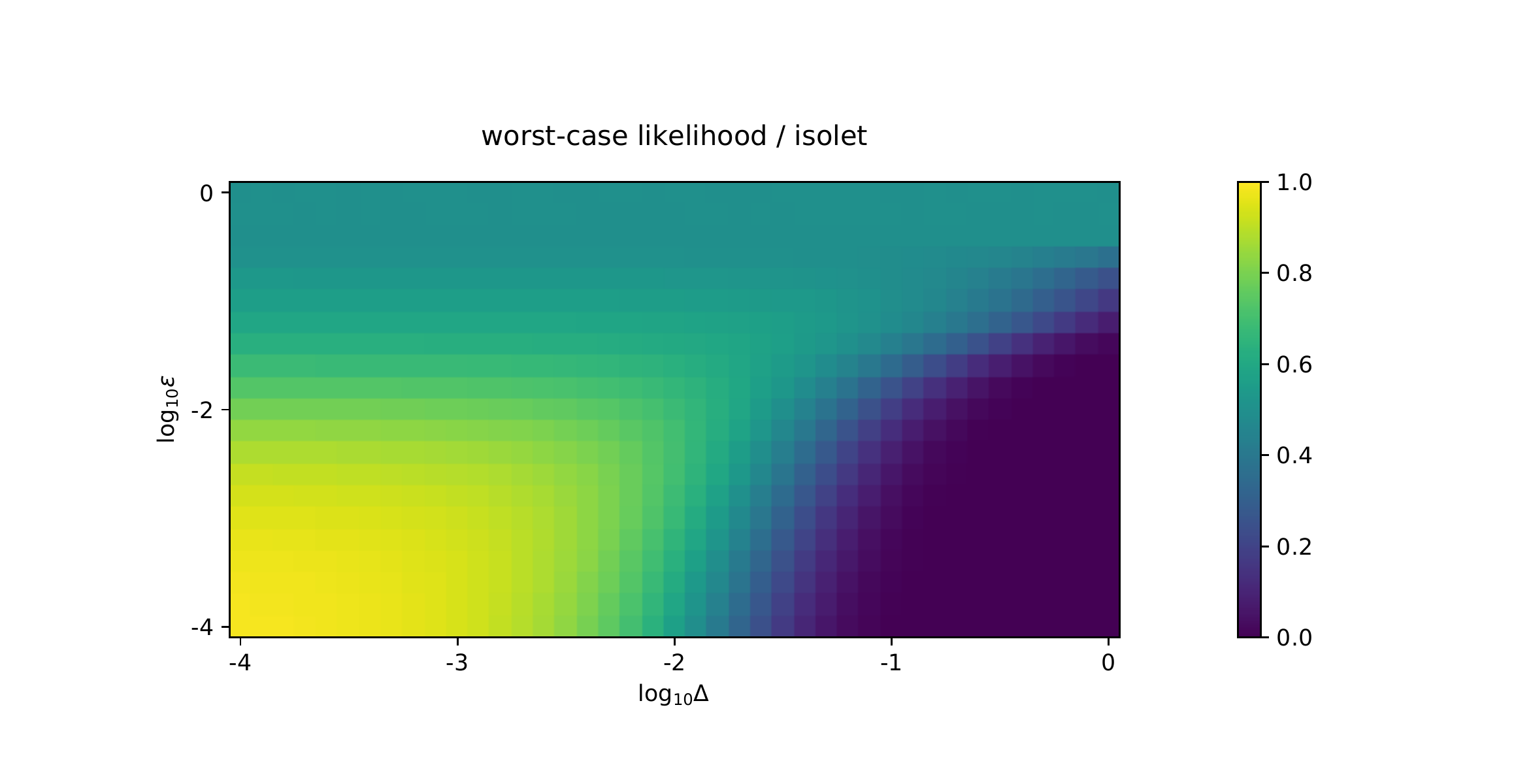}
\caption{Isolet}
\end{subfigure}
\begin{subfigure}[b]{0.36\textwidth}
\includegraphics[width=\textwidth]{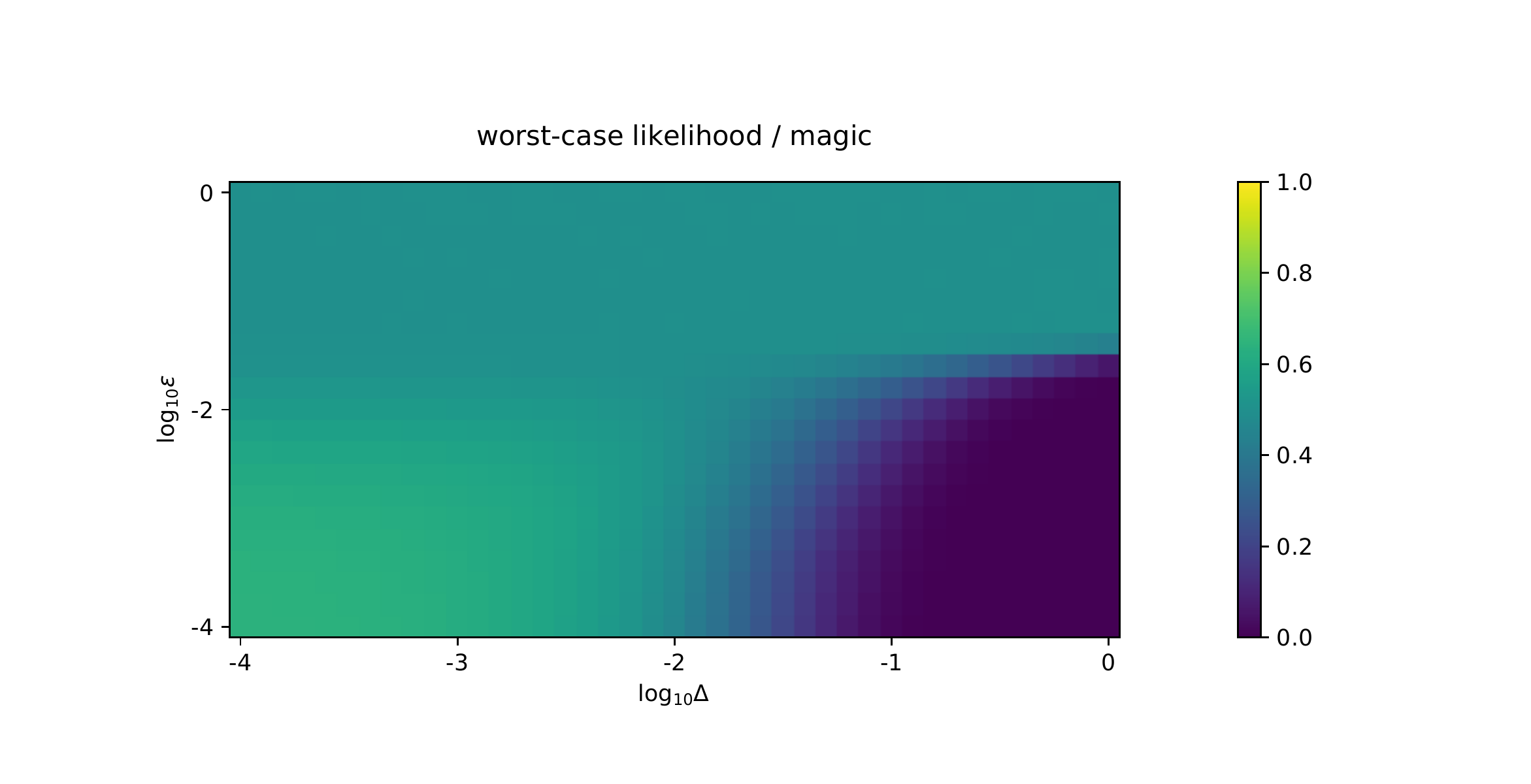}
\caption{Magic}
\end{subfigure}
\begin{subfigure}[b]{0.36\textwidth}
\includegraphics[width=\textwidth]{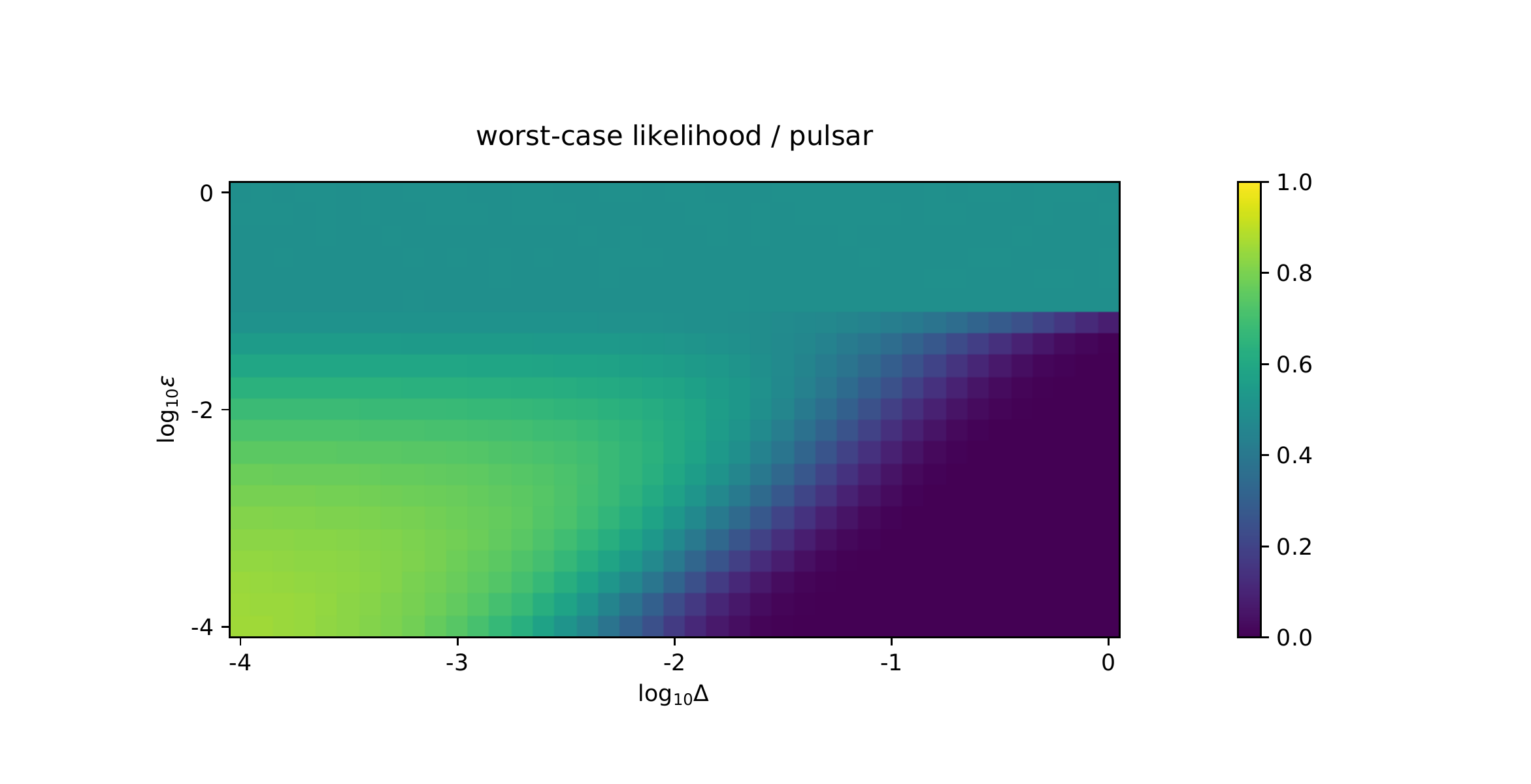}
\caption{Pulsar}
\end{subfigure}
\begin{subfigure}[b]{0.36\textwidth}
\includegraphics[width=\textwidth]{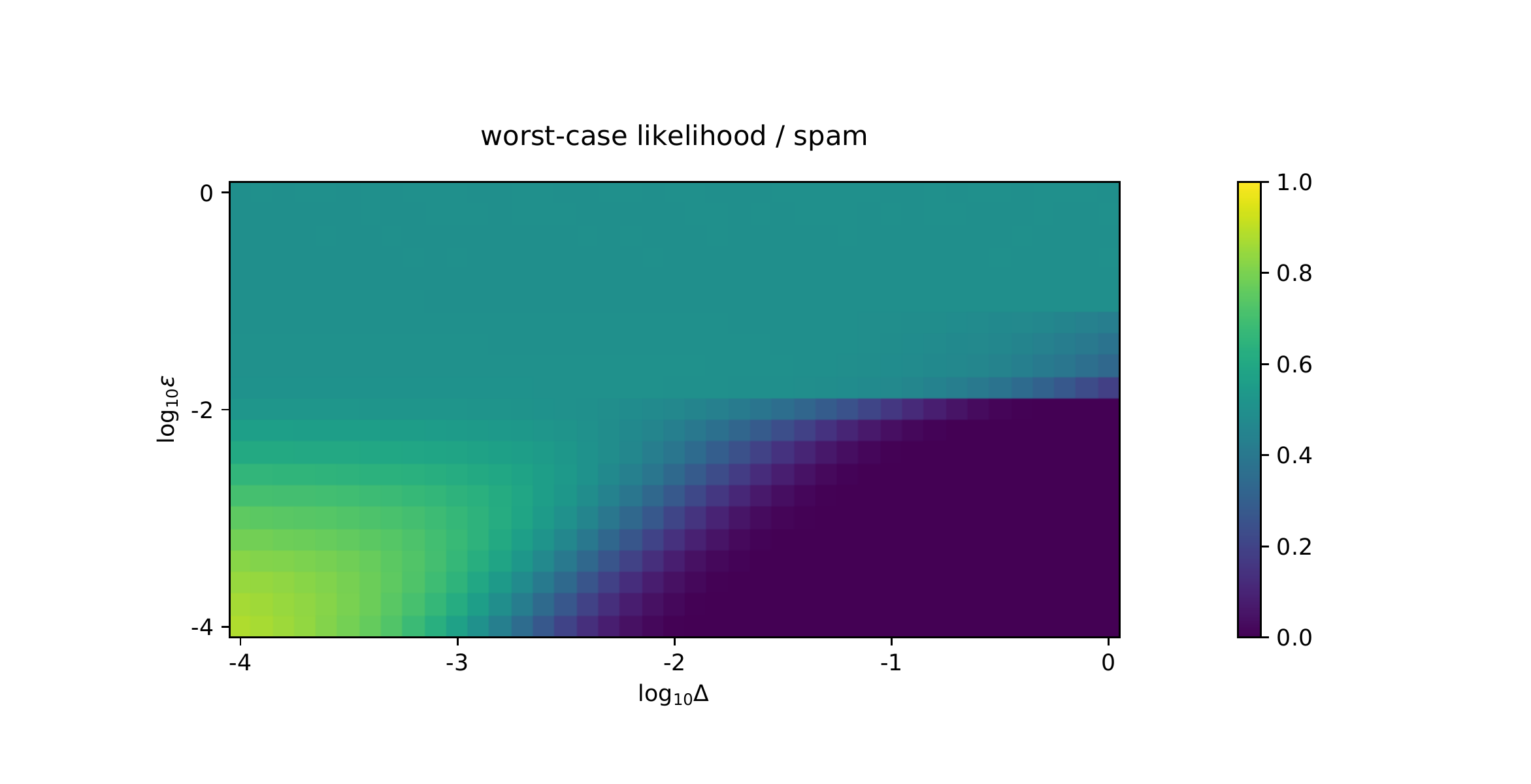}
\caption{Spam}
\end{subfigure}
\begin{subfigure}[b]{0.36\textwidth}
\includegraphics[width=\textwidth]{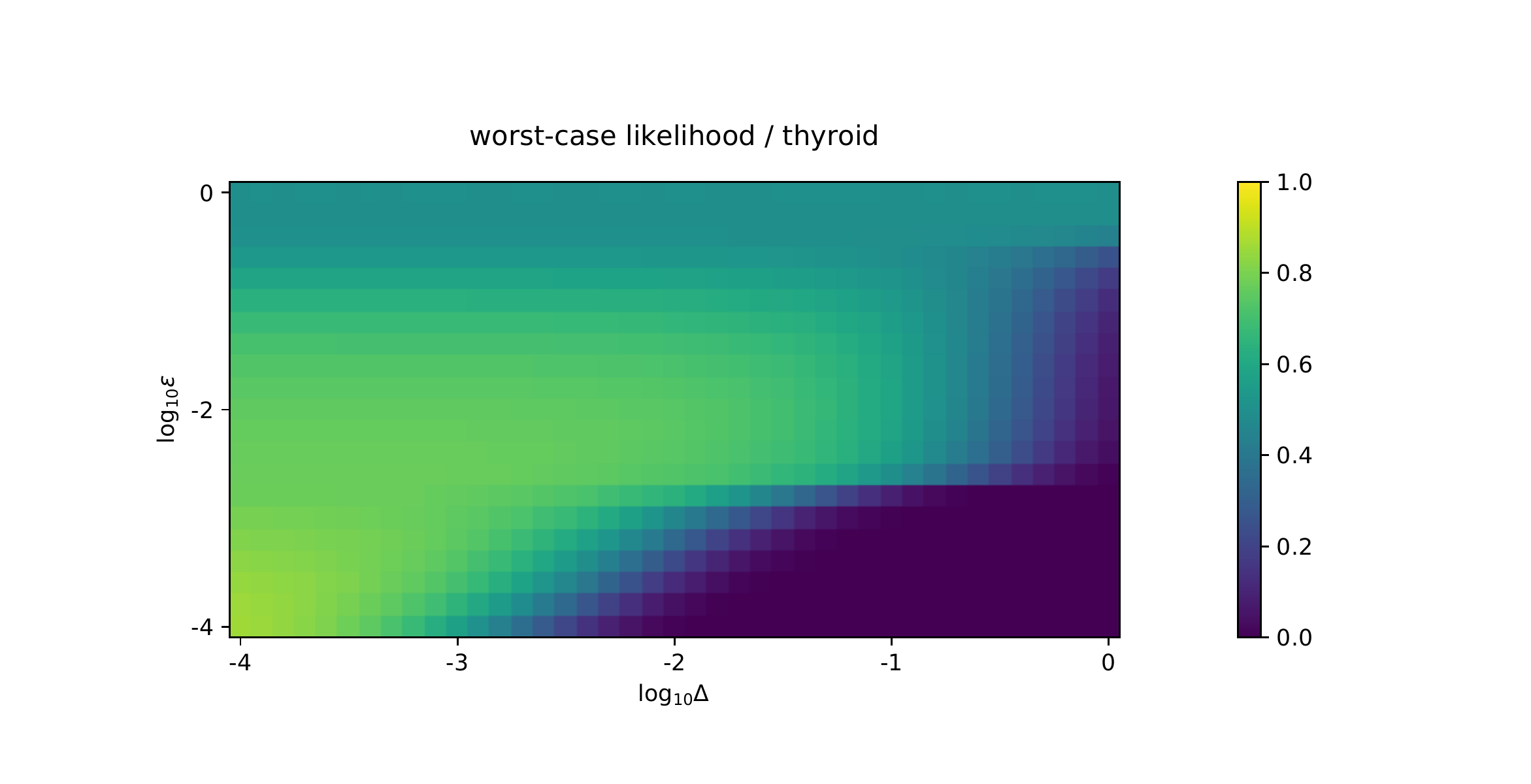}
\caption{Thyroid}
\end{subfigure}
\begin{subfigure}[b]{0.36\textwidth}
\includegraphics[width=\textwidth]{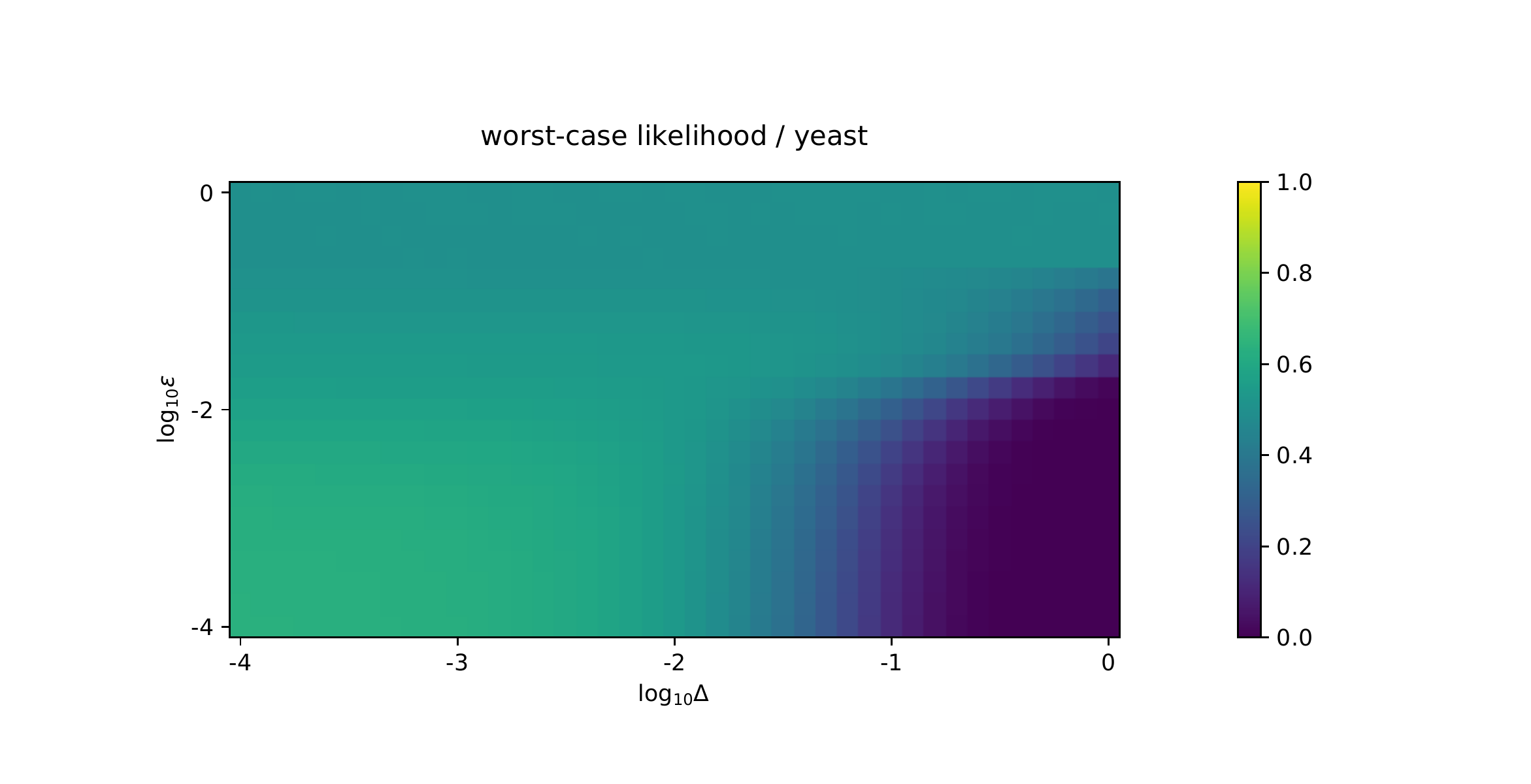}
\caption{Yeast}
\end{subfigure}
\caption{Traditional Wasserstein DRL. Worst-case performance (likelihood) vs. radius of robustness $\varepsilon$ and test-time data radius $\varepsilon + \Delta$. Yellow indicates perfectly correct prediction (likelihood $1$), blue perfectly incorrect (likelihood $0$), and green perfectly indecisive prediction (likelihood $0.5$). Training with radius $\varepsilon$ confers little robustness beyond $\varepsilon$.}
\label{fig:appendix-worst-case-likelihood-vs-radius-plus-delta}
\end{figure}

\subsection{DRL with unlabeled data, lack of bias-variance tradeoff}
\label{sec:appendix-no-bias-variance}

The following describes Figure \ref{fig:likelihood-and-confidence-vs-additional-radius} in Section \ref{sec:how-important-is-radius} as well as Figure \ref{fig:appendix-likelihood-and-confidence-vs-additional-radius} in the appendix. The figures show the out-of-sample performance of linear logistic regression models learned by the proposed DRL method (Section \ref{sec:drl-with-unlabeled-data}). Specifically, for each trial we sample $100$ examples uniformly from the full dataset, to form the training set $\hat{\Zspace}_l$. We then find a minimal radius $\varepsilon_0$ such that the feasible set $\Pset = \ball_{\varepsilon_0}(\hat{\Ppr}_l) \cap \Uset(\Ppr_{\Xspace}, \overline{\pvec}_{\Yspace}, \underline{\pvec}_{\Yspace})$ is non-empty, by doing a binary search for the minimal radius for which the value of the objective $g(\theta)$ (Section \ref{sec:drl-problem-formulation}) is nonnegative. Here, $\Ppr_{\Xspace}$ is the $\Xspace$-marginal of the true data distribution $\Ppr$, which is taken to be the empirical distribution of the full dataset. Given this radius $\varepsilon_0$, then we select radius $\varepsilon = \varepsilon_0 + \Delta$, for $\Delta \in [10^{-4}, 10^1]$, and solve the DRL problem
\begin{equation}
\minimize_{\theta \in \Theta} \sup_{\mu \in \ball_{\varepsilon}(\hat{\Ppr}_l)} \expect^{\mu} \Yrv \log(1 + \exp\{-\langle \Xrv, \theta \rangle\}) + (1 - \Yrv) \log(1 + \exp\{\langle \Xrv, \theta \rangle\}),
\end{equation}
with $\Xspace = \reals^q \times \{1\}$ the feature space and $\Yspace = \{0,1\}$ the label space. Here, and when finding $\varepsilon_0$, we choose $\overline{\pvec}_{\Yspace} = \underline{\pvec}_{\Yspace} = \pvec_{\Yspace}$, the true label probabilities from $\Ppr$. This DRL problem is solved as described in Appendix \ref{sec:appendix-true-radius}.

The solid lines in Figure \ref{fig:likelihood-and-confidence-vs-additional-radius} in Section \ref{sec:how-important-is-radius} and Figure \ref{fig:appendix-likelihood-and-confidence-vs-additional-radius} in the appendix show the median over $100$ trials of likelihood on the test set for the learned model as well as the median confidence, defined as in Appendix \ref{sec:appendix-true-radius}. The shaded regions are $95\%$ confidence intervals for the median. The horizontal axis shows the base-$10$ log of the excess radius $\Delta$ beyond $\varepsilon_0$.

\begin{figure}
\centering
\begin{subfigure}[b]{0.36\textwidth}
\includegraphics[width=\textwidth]{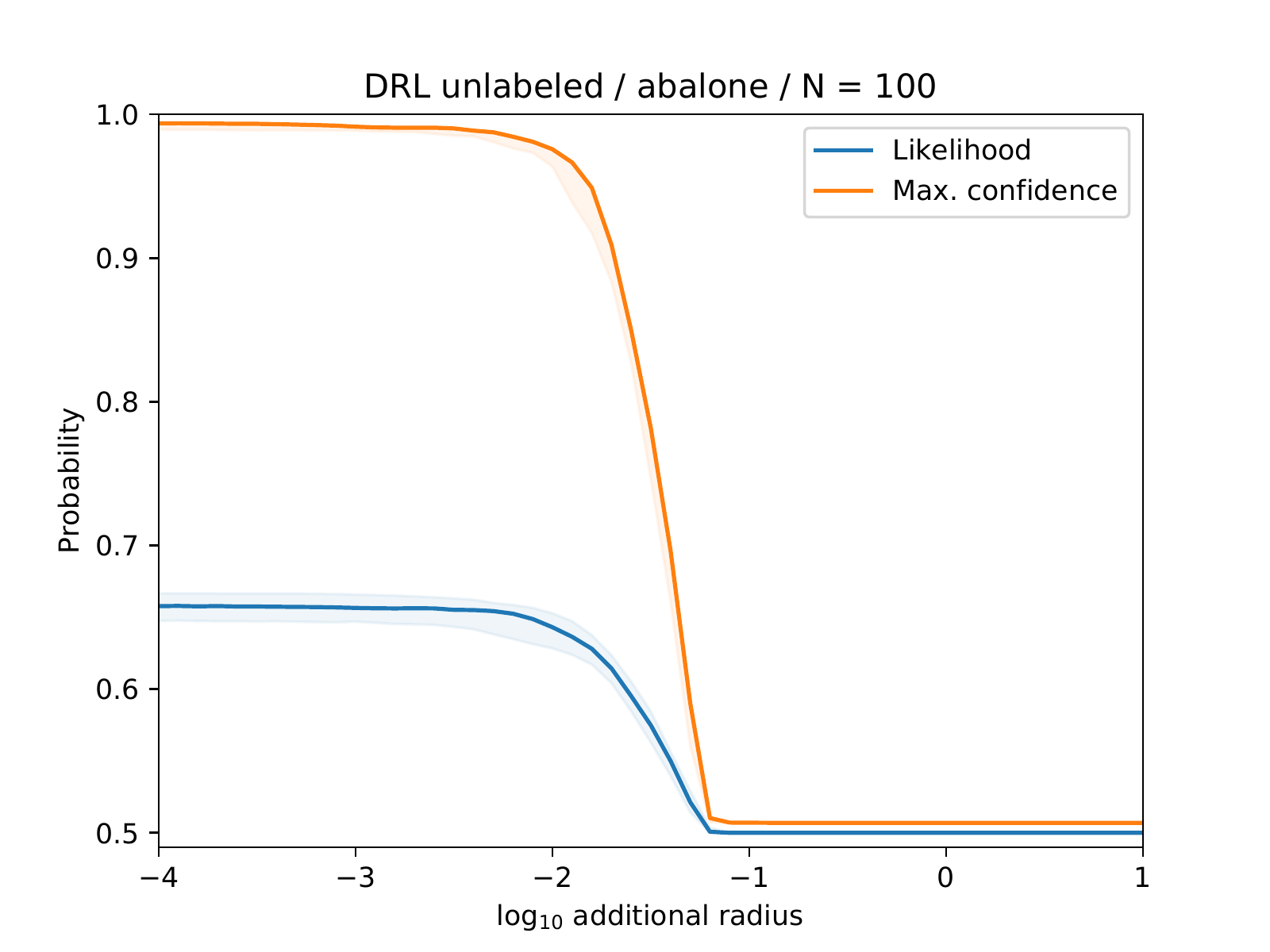}
\caption{Abalone}
\end{subfigure}
\begin{subfigure}[b]{0.36\textwidth}
\includegraphics[width=\textwidth]{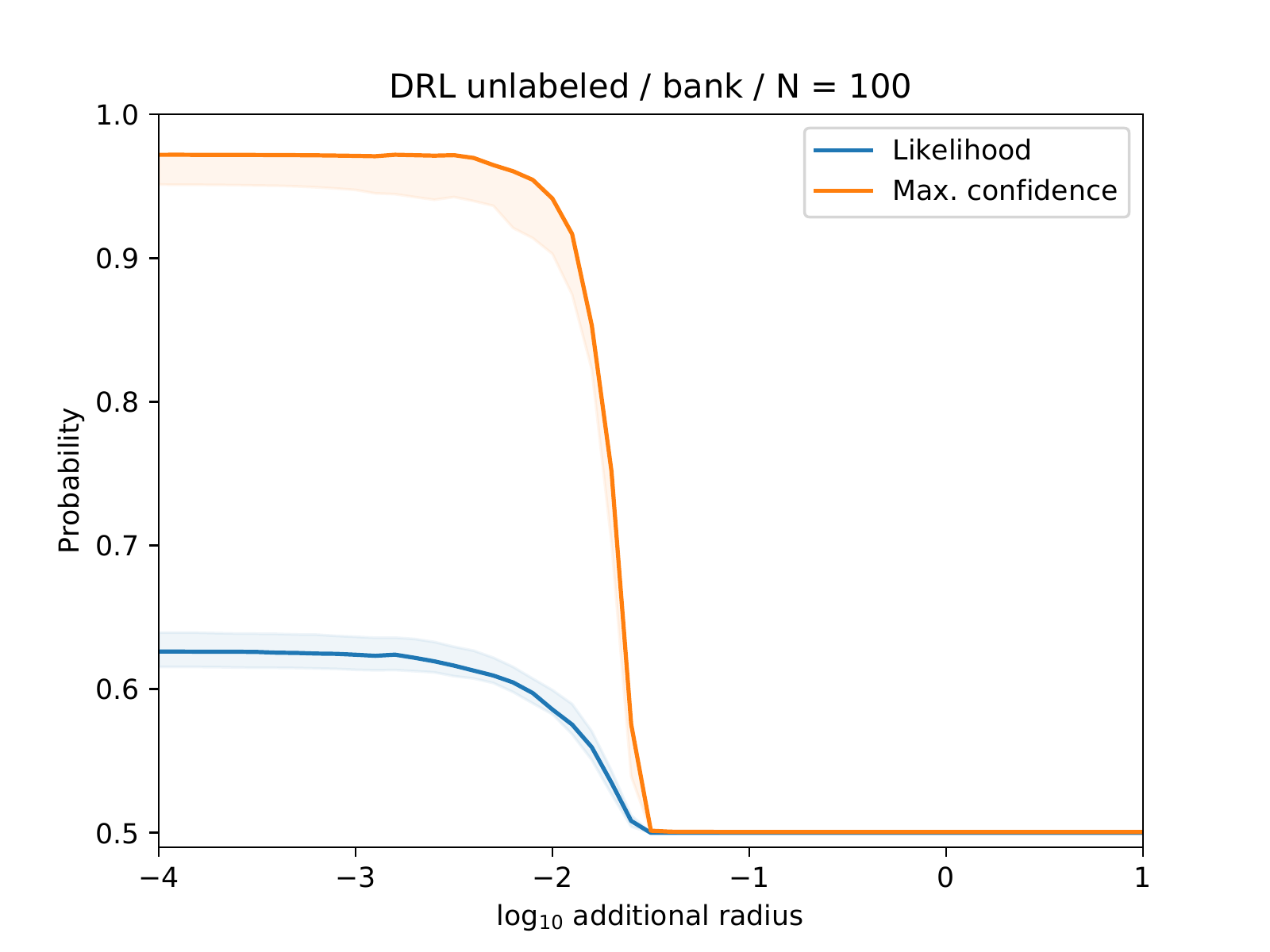}
\caption{Bank}
\end{subfigure}
\begin{subfigure}[b]{0.36\textwidth}
\includegraphics[width=\textwidth]{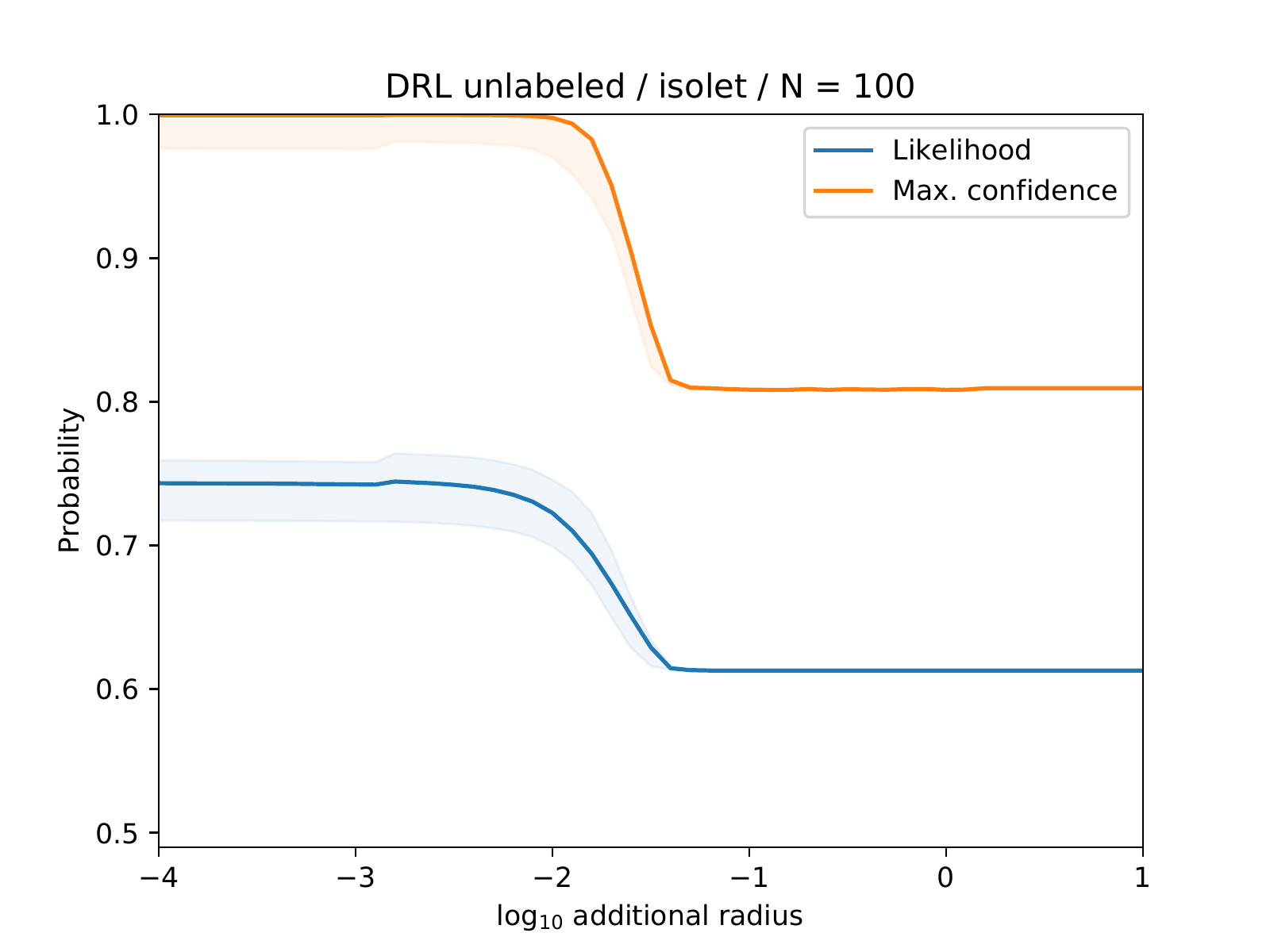}
\caption{Isolet}
\end{subfigure}
\begin{subfigure}[b]{0.36\textwidth}
\includegraphics[width=\textwidth]{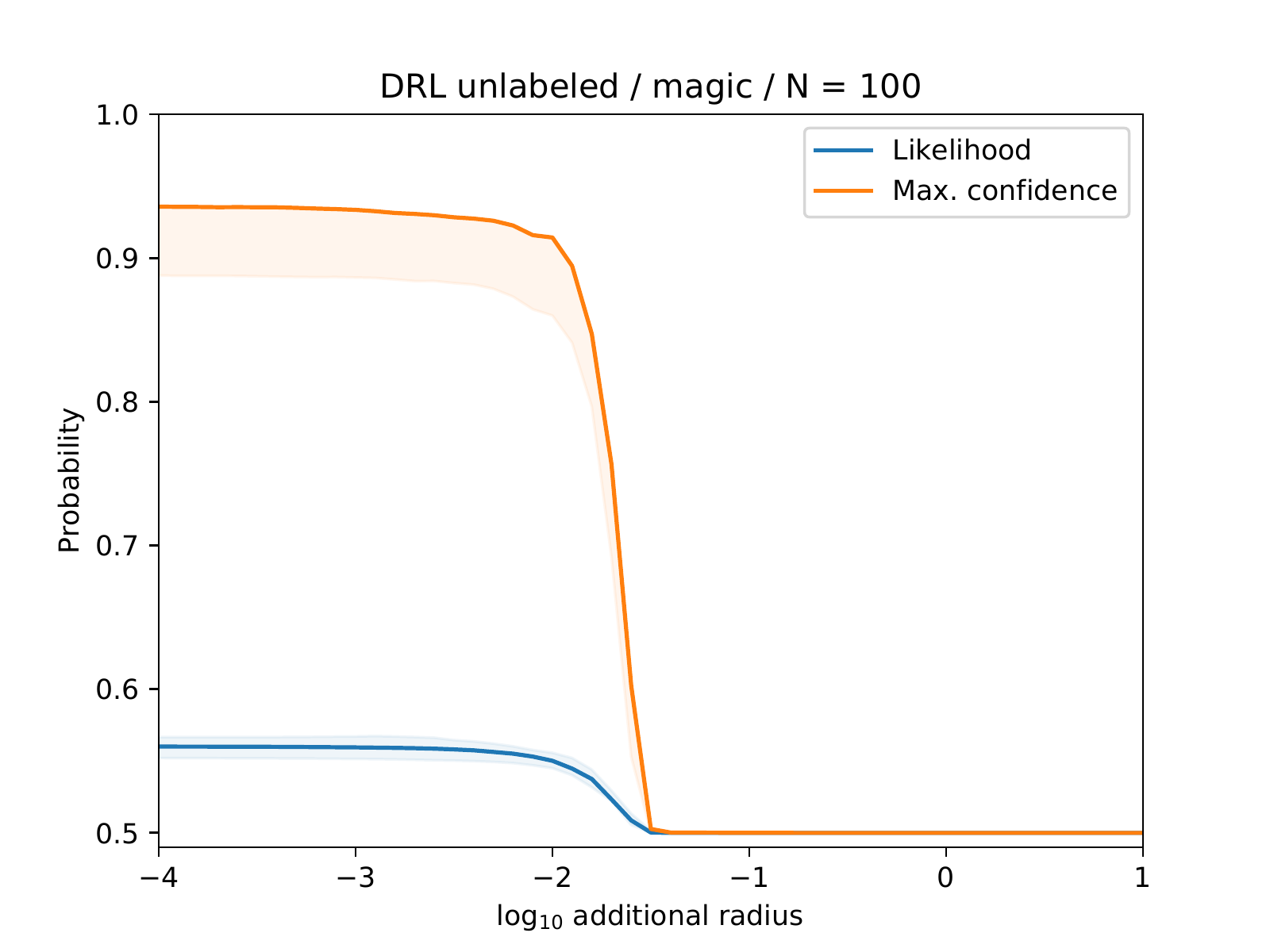}
\caption{Magic}
\end{subfigure}
\begin{subfigure}[b]{0.36\textwidth}
\includegraphics[width=\textwidth]{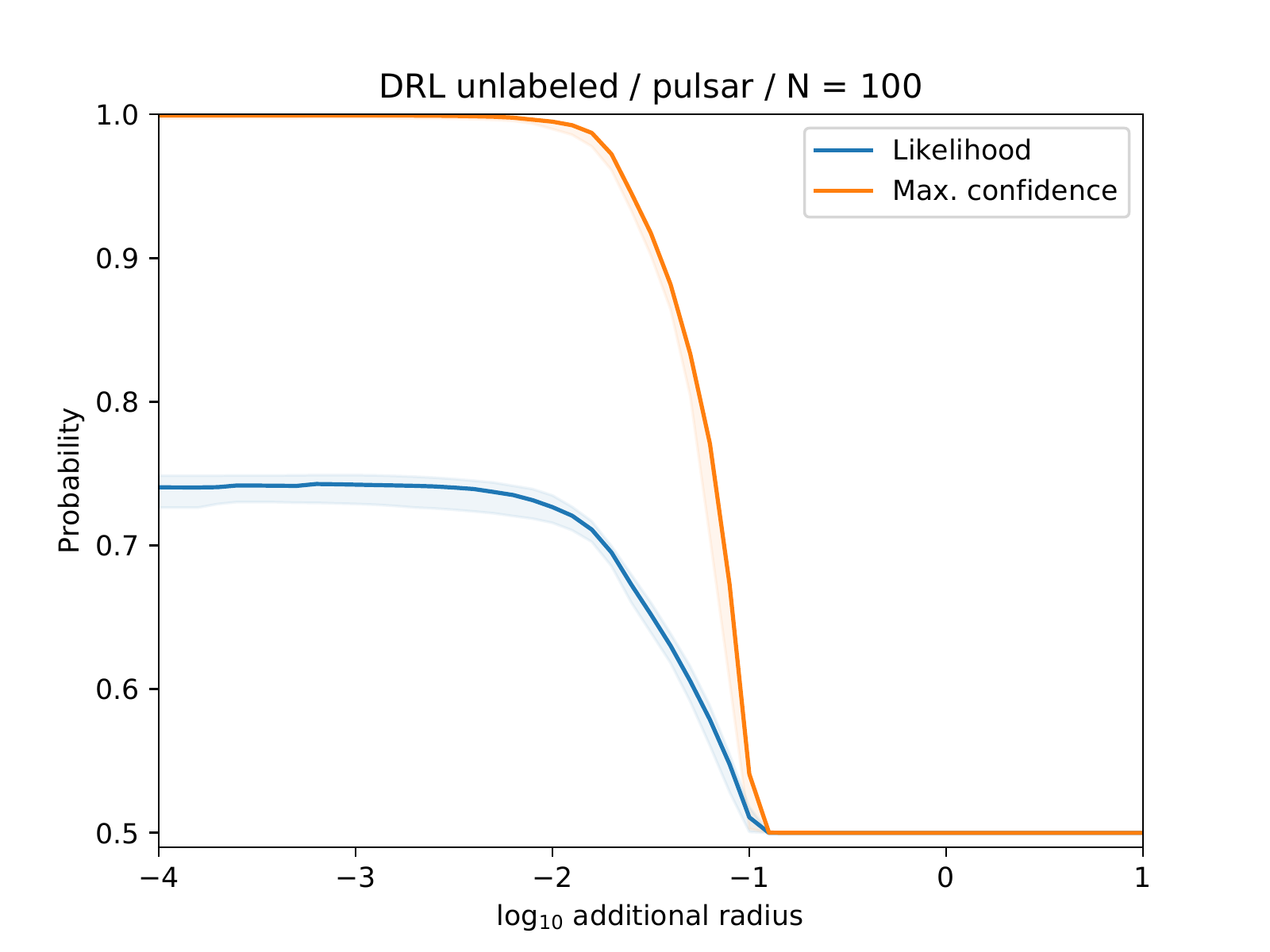}
\caption{Pulsar}
\end{subfigure}
\begin{subfigure}[b]{0.36\textwidth}
\includegraphics[width=\textwidth]{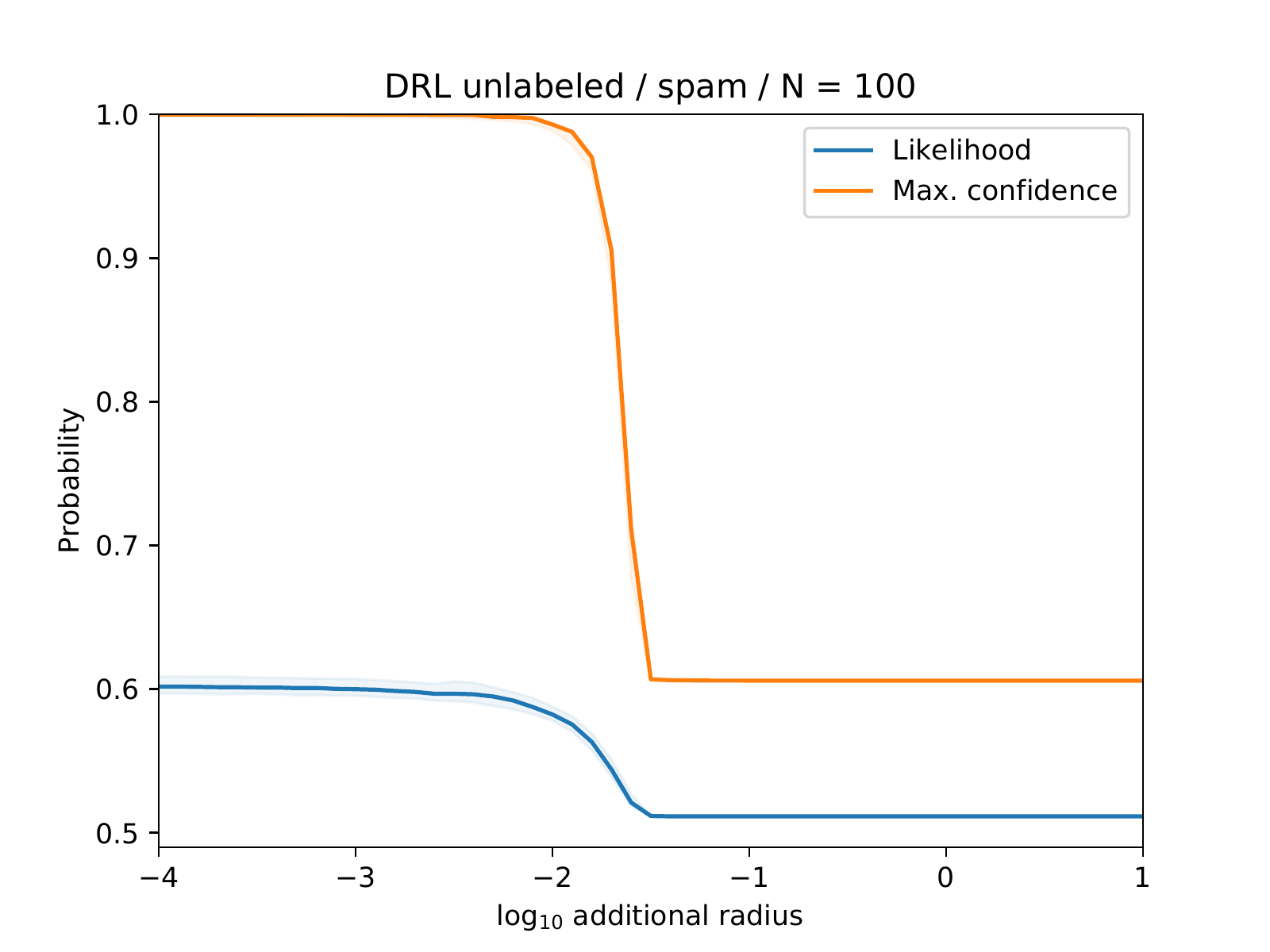}
\caption{Spam}
\end{subfigure}
\begin{subfigure}[b]{0.36\textwidth}
\includegraphics[width=\textwidth]{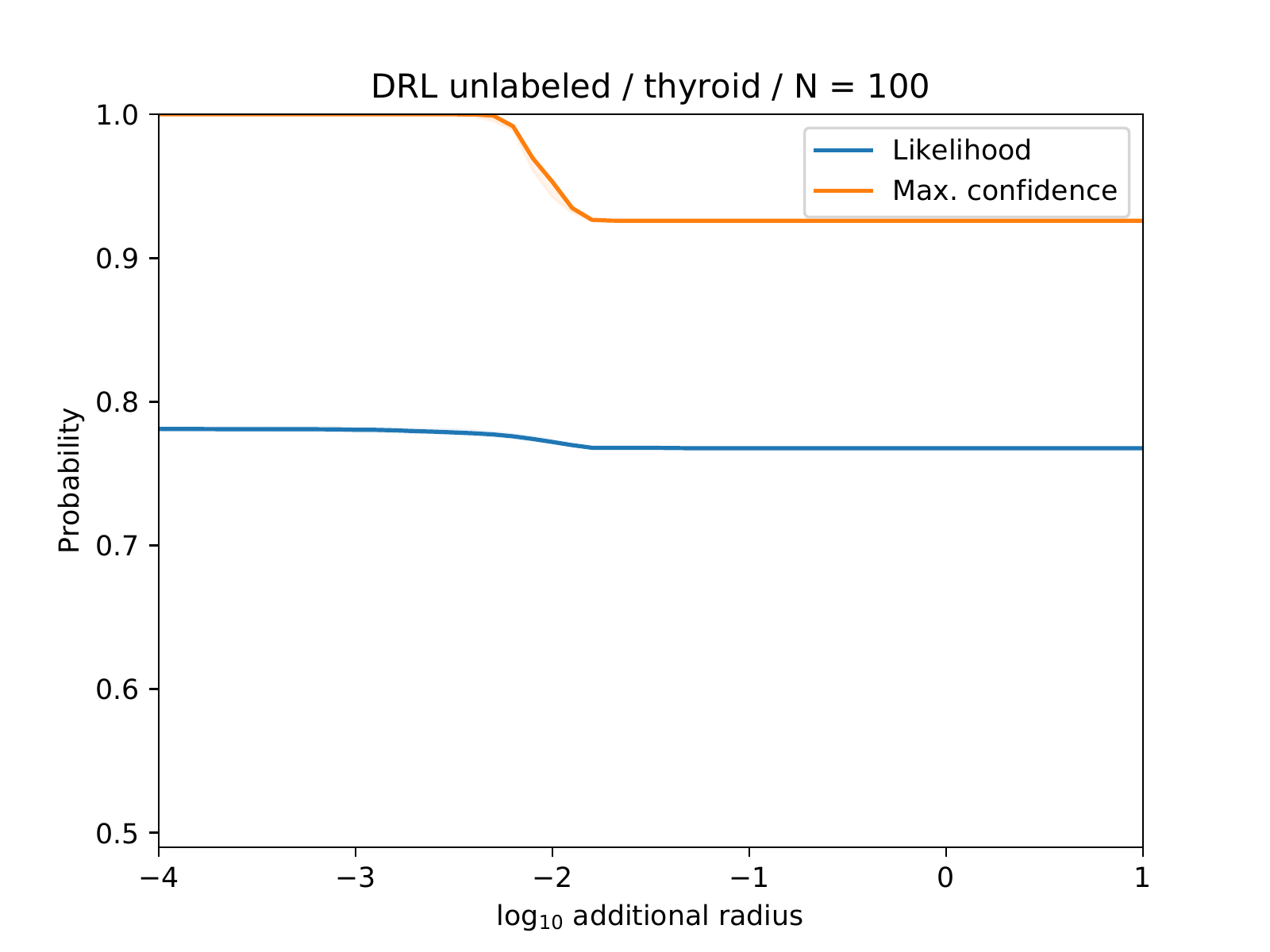}
\caption{Thyroid}
\end{subfigure}
\begin{subfigure}[b]{0.36\textwidth}
\includegraphics[width=\textwidth]{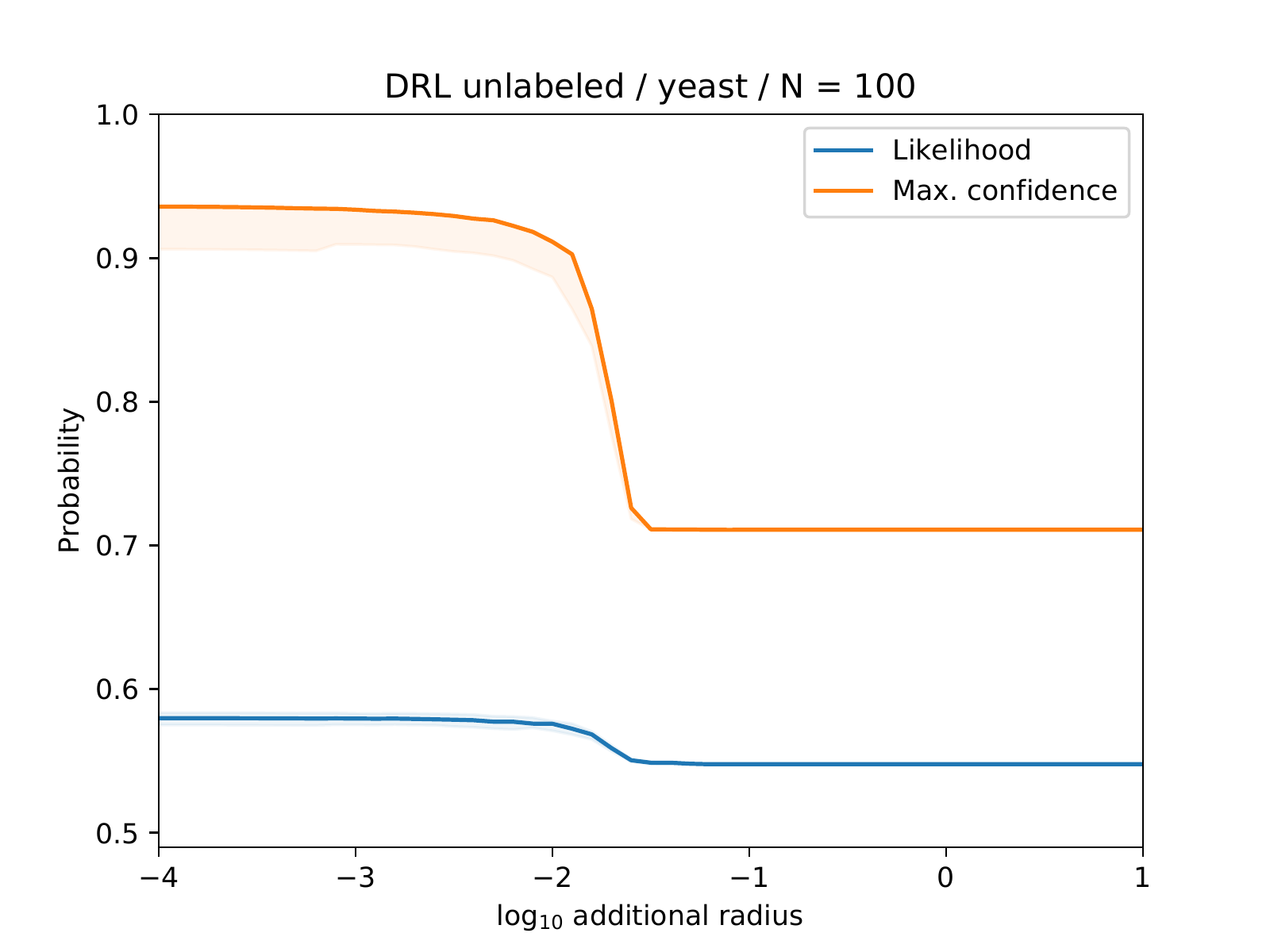}
\caption{Yeast}
\end{subfigure}
\caption{DRL with unlabeled data. Out-of-sample performance (likelihood) and maximum confidence vs. difference between the radius of robustness $\varepsilon$ and the minimal radius necessary for the decision set to be nonempty. Unlike with traditional Wasserstein DRL, here there is no apparent bias-variance tradeoff. Performance is flat out to a radius at which the confidence drops sharply.}
\label{fig:appendix-likelihood-and-confidence-vs-additional-radius}
\end{figure}

\subsection{Active learning}
\label{sec:supplement-active}

The following describes Table \ref{tbl:active-aulc} in Section \ref{sec:active-learning-empirical}. The table shows the area under the likelihood curve for model-change active learning heuristics and a random baseline applied to $14$ binary classification datasets. For each dataset and each trial, we sample an initial set $\hat{\Zspace}_l$ of $20$ labeled examples, with the remaining samples forming the unlabeled set $\hat{\Xspace}_u$. We then use this labeled set to learn a linear logistic regression model, solving
\begin{equation}
\hat{\theta} = \argmin_{\theta \in \Theta} \frac{1}{N_l} \sum_{i=1}^{N_l} \yvec_{\ell}^i \log(1 + \exp(-\langle \xvec_l^i, \theta\rangle)) + (1 - \yvec_{\ell}^i) \log(1 + \exp(\langle \xvec_l^i, \theta\rangle)) + \gamma \|\theta\|_2^2,
\end{equation}
where $\hat{\Zspace}_l = \{(\xvec_l^i, \yvec_{\ell}^i)\}_{i=1}^{N_l}$ and $\gamma = 0.001$.  Given $\hat{\theta}$, then, we apply each of the given active learning methods to select a new sample $\xvec_{\ast} \in \hat{\Xspace}_u$ to label. Specifically,
\begin{enumerate}
\item {\bf Random}. We sample $\xvec_{\ast}$ uniformly from $\hat{\Xspace}_u$.
\item {\bf EMC}. We compute for each $\xvec_u \in \hat{\Xspace}_u$
\begin{equation}
\hat{g}(\xvec_u) = \frac{2 \|\xvec_u\|_2}{(1 + \exp(-\langle \hat{\theta}, \xvec_u\rangle)) (1 + \exp(\langle \hat{\theta}, \xvec_u\rangle))},
\end{equation}
and select $\xvec_{\ast} = \argmax_{\xvec_u \in \hat{\Xspace}_u} \hat{g}(\xvec_u)$.
\item {\bf Min. MC}. We compute for each $\xvec_u \in \hat{\Xspace}_u$
\begin{equation}
\hat{g}(\xvec_u) = \min\left\{(1 + \exp(-\langle \xvec_u, \hat{\theta}\rangle))^{-1}, (1 + \exp(\langle \xvec_u, \hat{\theta}\rangle))^{-1}\right\},
\end{equation}
and select $\xvec_{\ast} = \argmax_{\xvec_u \in \hat{\Xspace}_u} \hat{g}(\xvec_u)$.
\item {\bf Max. MC}. We compute for each $\xvec_u \in \hat{\Xspace}_u$
\begin{equation}
\hat{g}(\xvec_u) = \max\left\{(1 + \exp(-\langle \xvec_u, \hat{\theta}\rangle))^{-1}, (1 + \exp(\langle \xvec_u, \hat{\theta}\rangle))^{-1}\right\}
\end{equation}
and select $\xvec_{\ast} = \argmax_{\xvec_u \in \hat{\Xspace}_u} \hat{g}(\xvec_u)$.
\item {\bf DR (strong)}. We take $100$ examples $\hat{\Xspace}_u^{\prime}$ uniformly at random from $\hat{\Xspace}_u$ and compute for each $\xvec_u \in \hat{\Xspace}_u^{\prime}$
\begin{equation}
\hat{g}(\xvec_u) = \inf_{\mu \in \Pset} \expect^{\mu} \frac{\delta_{\xvec_u}(\Xrv) \|\Xrv\|_2}{\hat{\varphi}(\Xrv) (1 + \exp(-\Yrv \langle \Xrv, \hat{\theta}\rangle))},
\end{equation}
with $\Pset = \ball_{\varepsilon}(\hat{\Ppr}_l) \cap \Uset(\Ppr_{\Xspace}, \overline{\pvec}_{\Yspace}, \underline{\pvec}_{\Yspace})$, $\hat{\varphi}(\xvec_u) = \frac{1}{N_u}$ for all $\xvec_u \in \hat{\Xspace}_u$, and $\overline{\pvec}_{\Yspace} = \underline{\pvec}_{\Yspace} = \pvec_{\Yspace}$ the true label probabilities from $\Ppr$. We select $\varepsilon$ via a binary search for the minimal radius at which the feasible set $\Pset$ is non-empty, setting $\varepsilon$ to be greater than this radius by a fixed margin $\Delta = 10^{-3}$. We  select $\xvec_{\ast} = \argmax_{\xvec_u \in \hat{\Xspace}_u} \hat{g}(\xvec_u)$.
\item {\bf DR (weak)}. We take $100$ examples $\hat{\Xspace}_u^{\prime}$ uniformly at random from $\hat{\Xspace}_u$ and compute for each $\xvec_u \in \hat{\Xspace}_u^{\prime}$
\begin{equation}
\hat{g}(\xvec_u) = \inf_{\mu \in \Pset} \expect^{\mu} \frac{\delta_{\xvec_u}(\Xrv) \|\Xrv\|_2}{\hat{\varphi}(\Xrv) (1 + \exp(-\Yrv \langle \Xrv, \hat{\theta}\rangle))},
\end{equation}
with $\Pset = \ball_{\varepsilon}(\hat{\Ppr}_l) \cap \Uset(\Ppr_{\Xspace}, \overline{\pvec}_{\Yspace}, \underline{\pvec}_{\Yspace})$, $\hat{\varphi}(\xvec_u) = \frac{1}{N_u}$ for all $\xvec_u \in \hat{\Xspace}_u$, and $[\underline{\pvec}_{\Yspace}^k, \overline{\pvec}_{\Yspace}^k]$ the $95\%$ confidence intervals estimated from the original set of $20$ labeled examples by the method of \citet{clopper1934use}. We select $\varepsilon$ as follows. We estimate $\varepsilon_{\ell}$ and $\varepsilon_h$ via a binary search for the minimal radius at which the feasible set $\Pset$ is non-empty, setting $\overline{\pvec}_{\Yspace}^k = \underline{\pvec}_{\Yspace}^k$, i.e. the lower bound of the interval, for $\varepsilon_{\ell}$, and vice versa, for $\varepsilon_h$. $\varepsilon$ is then $\max\{\varepsilon_{\ell}, \varepsilon_h\} + \Delta$ with $\Delta$ a fixed margin $\Delta = 10^{-3}$. We select $\xvec_{\ast} = \argmax_{\xvec_u \in \hat{\Xspace}_u} \hat{g}(\xvec_u)$.
\end{enumerate}
We then acquire the true label $\yvec_{\ast}$ corresponding to $\xvec_{\ast}$, remove $\xvec_{\ast}$ from $\hat{\Xspace}_u$, and add $(\xvec_{\ast}, \yvec_{\ast})$ to the labeled set $\hat{\Zspace}_l$. We repeat the process above (beginning by estimating $\hat{\theta}$) until $\Zspace_{\ell}$ contains $100$ samples.

Each time we estimate $\hat{\theta}$, we evaluate the performance of the learned classifier via the likelihood on the full dataset (including $\hat{\Zspace}_l$). Taking the median over $50$ trials, we obtain a likelihood curve (likelihood vs. number of labeled samples) for each active learning method, with the number of samples ranging from $20$ to $100$. The area under this curve is computed by the trapezoidal rule. The value shown in Table \ref{tbl:active-aulc} is $100$ times this area.

For the proposed distributionally robust heuristics, we solve the dual problem (Section \ref{sec:active-distributionally-robust}) using the Adam optimizer \citep{kingma2014adam}, setting $\beta_1 = 0.9$, $\beta_2 = 0.999$, $\epsilon = 10^{-8}$, and a batch size of $100$ and decreasing the learning rate by a factor of $10$ every $5000$ steps.

\vskip 0.2in
\bibliography{robust}

\end{document}